\documentclass{article}

\usepackage{iclr2027_conference,times}

\usepackage{amsmath,amssymb,amsthm,mathtools,mathrsfs,bm}
\usepackage{booktabs,tabularx,array}
\usepackage{enumitem}
\usepackage{microtype}
\usepackage{url}
\usepackage{algorithm}
\usepackage{algpseudocode}
\usepackage{tikz}
\usetikzlibrary{arrows.meta,calc,positioning}
\usepackage{wrapfig}
\usepackage[dvipsnames]{xcolor}
\usepackage[colorlinks, linkcolor=OliveGreen, urlcolor=BrickRed, citecolor=NavyBlue]{hyperref}
\usepackage{nicefrac}
\usepackage{titletoc}
\makeatletter
\newcommand\fs@compactruled{%
  \def\@fs@cfont{\bfseries}%
  \let\@fs@capt\floatc@ruled
  \def\@fs@pre{\hrule height.8pt depth0pt \kern2pt}%
  \def\@fs@post{\kern2pt\hrule\kern-.8em\relax}%
  \def\@fs@mid{\kern2pt\hrule\kern2pt}%
  \let\@fs@iftopcapt\iftrue}
\floatstyle{compactruled}
\restylefloat{algorithm}
\renewcommand{\theHALG@line}{\thealgorithm.\arabic{ALG@line}}
\makeatother
\usepackage[capitalise]{cleveref}
\crefname{equation}{}{}
\crefname{figure}{Fig.}{Figs.}
\crefname{section}{Sec.}{Secs.}
\crefname{appendix}{App.}{Apps.}
\crefname{table}{Tab.}{Tabs.}

\newtheorem{theorem}{Theorem}
\crefname{theorem}{Thm.}{Thms.}
\newtheorem{lemma}{Lemma}
\crefname{lemma}{Lem.}{Lems.}
\newtheorem{assumption}{Assumption}
\crefname{assumption}{Assump.}{Assumps.}
\newtheorem{proposition}{Proposition}
\crefname{proposition}{Prop.}{Props.}
\newtheorem{corollary}{Corollary}
\crefname{corollary}{Cor.}{Cors.}

\crefname{definition}{Def.}{Defs.}

\crefname{remark}{Rmk.}{Rmks.}
\crefname{algocf}{Alg.}{Algs.}
\crefname{algorithm}{Alg.}{Algs.}

\makeatletter
\newcommand{\appendixcontents}{%
  \section*{Table of Contents}%
  \setcounter{tocdepth}{3}%
  \@starttoc{toc}%
}
\makeatother

\newcommand{\R}{\mathbb R}
\newcommand{\E}{\mathbb E}
\newcommand{\Law}{\operatorname{Law}}
\newcommand{\TV}{\operatorname{TV}}
\newcommand{\KL}{\operatorname{KL}}

\newcommand{\cN}{\mathcal N}
\newcommand{\eps}{\varepsilon}
\renewcommand{\d}{\mathrm{d}}
\newcommand{\e}{\mathrm{e}}
\newcommand{\tint}{\mathop{\textstyle\int}}

\newcommand{\norm}[1]{\lVert#1\rVert}
\newcommand{\abs}[1]{\lvert#1\rvert}
\newcommand{\ip}[2]{\langle#1,#2\rangle}
\newcommand{\Cov}{\operatorname{Cov}}
\newcommand{\Var}{\operatorname{Var}}
\newcolumntype{Y}{>{\raggedright\arraybackslash}X}
\renewcommand{\top}{\mathrm{T}} 
\renewcommand{\paragraph}[1]{\textbf{#1}\quad}

\title{Reflected Proximal Dynamics for Accelerated High-Accuracy Log-Concave Sampling}

\author{%
Bo Yuan \quad Wei Guo \quad Yongxin Chen\\
Georgia Institute of Technology\\
\texttt{\{byuan48, wei.guo, yongchen\}@gatech.edu}
}

\iclrfinalcopy 
\begin{document}

\maketitle

\begin{abstract}
We study sampling from $\pi(x)\propto\mathrm{e}^{-V(x)}$ on $\mathbb{R}^d$, where $V$ is $\alpha$-strongly convex and $\beta$-smooth with condition number $\kappa=\beta/\alpha$.
For this problem, the proximal sampler preserves the target exactly and recovers overdamped Langevin dynamics in the small-step limit, but its Gaussian resampling step discards directional information. Underdamped Langevin dynamics retains such information through momentum to accelerate exploration, motivating a proximal construction that combines target preservation with directional persistence.
We introduce \textit{reflected proximal dynamics}, which combines proximal sampling with an exact reflection of the auxiliary variable, conditional kinetic motion, and controlled refreshment.
We develop two realizations: a flux-resampled underdamped half-turn and a persistent-momentum quarter-turn.
From a warm start with constant $\chi^2$ divergence, both methods achieve total variation error $\varepsilon$ using $\widetilde O(\sqrt{\kappa}d^{1/4})$ expected oracle queries.
We further remove the supplied warm-start assumption by combining either sampler with existing posterior samplers in a proximal initialization stage with an optimized coarse step size.
Accounting for both initialization and sampling yields $\widetilde O(\kappa^{6/7}d^{1/4})$ expected oracle queries.
In both regimes, the dependence on the target accuracy $\varepsilon$ is $\mathrm{polylog}(1/\varepsilon)$, making reflected proximal dynamics a promising framework for accelerated high-accuracy log-concave sampling.

\end{abstract}

\section{Introduction}
\label{sec:introduction}

We study sampling from a distribution \(\pi(x)\propto \e^{-V(x)}\) on \(\R^d\), where \(V\) is \(\alpha\)-strongly convex and \(\beta\)-smooth, with condition number \(\kappa=\beta/\alpha\). This problem appears in fields including Bayesian inference, machine learning, and statistical physics \citep{gelman1995bayesian,ghosal2017fundamentals,newman1999monte,chewi2023log}, and a classical approach is Markov chain Monte Carlo (MCMC).
An MCMC method constructs a Markov chain with stationary distribution \(\pi\) and, under suitable conditions on the transition kernel and target distribution, converges to \(\pi\).
Such samples allow expectations under \(\pi\) to be approximated even when direct integration and the normalizing constant of \(\pi\) are unavailable.
Two workhorse continuous-time processes are (overdamped) Langevin dynamics (LD) and underdamped Langevin dynamics (ULD). LD is a gradient flow of the potential energy function \(V\) combined with Brownian noise, and by Euler discretization, it leads to the Langevin Monte Carlo (LMC, \citet{dalalyan2017theoretical,durmus2019analysis,vempala2019rapid,chewi2022analysis}) algorithm. ULD introduces momentum, allowing the process to retain its direction rather than move diffusively \citep{cheng2018underdamped,dalalyan2020sampling,ma2021is,betancourt2019hamiltonian}. This persistence can accelerate exploration, but discretizing the coupled position--momentum dynamics requires more sophisticated numerical schemes \citep{shen2019randomized,zhang2023improved}.

The proximal sampler is a discrete-time alternative to LD that preserves the target exactly \citep{lee2021structured,chen2022improved}.  It augments the position \(X\sim\pi\) with an auxiliary variable \(Y\sim\cN(X,\eta I_d)\), then alternates a Gaussian draw of \(Y\mid X\) with a localized draw of \(X\mid Y\), called a restricted Gaussian oracle (RGO) draw.  In the $\eta\to0$ limit, the chain reduces to LD \citep{mitra2025fast}.  This construction also exposes a gap: redrawing the Gaussian displacement \(Y-X\) at every iteration discards directional memory, whereas ULD retains precisely this information through momentum.

We bridge this gap with \textbf{reflected proximal dynamics}.  Instead of redrawing the auxiliary variable, we set \(Y\gets2X-Y\), which leaves the position fixed and reverses the current displacement.  Because the Gaussian displacement is symmetric about zero, this reflection preserves the joint distribution of \((X,Y)\).  Reflection alone does not move \(X\), so most iterations follow it with a conditional transition that turns the retained displacement into persistent motion; occasional independent conditional draws remove correlations that this persistent update may leave behind.  The resulting framework (\cref{alg:framework-conditional-line}) brings the directional persistence of ULD into a target-preserving construction.

We instantiate this framework with two implementable realizations: a \textit{flux-resampled half-turn} (\cref{sec:flux}) and a \textit{persistent quarter-turn} (\cref{sec:persistent}).  The former uses a damped conditional half-turn and then discards its temporary momentum; the latter rotates position information into momentum and carries it across outer iterations. Both use the same reflection--refresh framework but store directional information differently.  When a warm start is not supplied, we prepend a proximal warm-up with an optimized coarse step size and account for its cost together with final sampling (\cref{sec:warm-start-removal}).  In either regime, geometric contraction makes the ideal mixing horizon logarithmic in \(1/\eps\), and the implementable complexity depends on the target accuracy only through polylogarithmic factors.

Concurrent work on the proximal bouncy particle sampler (PBPS, \citet{chen2026accelerated}) uses the same Gaussian augmentation and outer reflection and matches our warm-start oracle complexity.  Its conditional dynamics is an undamped harmonic half-turn with deterministic momentum reflections, whereas ours are a damped flux-resampled half-turn and a persistent position--momentum quarter-turn.  Unlike PBPS and our flux method, the persistent method uses an independent RGO reference only on rare refresh steps; \cref{sec:related-work} gives a detailed comparison.

We summarize our main \textbf{contributions} as follows:
\begin{itemize}[leftmargin=*,itemsep=0.1em,topsep=0.1em]
\item We introduce reflected proximal dynamics and prove that it preserves the target distribution if the accelerated branch preserves the conditional distribution sampled by the RGO (\cref{sec:framework}).
\item We provide two realizations of the framework and prove accelerated contraction: both achieve \(\widetilde O(\sqrt\kappa d^{1/4})\) expected oracle complexity from a constant-\(\chi^2\) warm start (\cref{sec:flux,sec:persistent}).
\item We remove the warm start requirement with an optimized warm-up stage, yielding an end-to-end expected oracle complexity of \(\widetilde O(\kappa^{6/7}d^{1/4})\) (\cref{sec:warm-start-removal}).
\end{itemize}

\section{Preliminaries}
\label{sec:preliminaries}

\paragraph{Notations.}
We write \(\Law(X)\) for the law of a random variable \(X\).
For probability densities \(\nu\) and \(\pi\), write the total variation distance as
\(\TV(\nu,\pi):=\sup_A\abs{\nu(A)-\pi(A)}\).  When \(\nu\ll\pi\), the
order-\(r\) R\'enyi divergence for \(r>1\) is
\(D_r(\nu\Vert\pi):=\frac{1}{r-1}\log\int
(\frac{\nu(x)}{\pi(x)})^r\pi(x)\,\d x\).  In
particular, the $\chi^2$ divergence is
\(\chi^2(\nu\Vert\pi):=\int(\frac{\nu(x)}{\pi(x)}-1)^2\pi(x)\,\d x
=\e^{D_2(\nu\Vert\pi)}-1\), and
\(\TV(\nu,\pi)\le\tfrac12\sqrt{\chi^2(\nu\Vert\pi)}\).  We call \(\nu\)
a constant-\(D_2\) warm start of \(\pi\) if
\(D_2(\nu\Vert\pi)=O(1)\). The notations \(\widetilde O(\cdot), \widetilde\Theta(\cdot)\) suppress polylogarithmic factors in the displayed problem parameters. 
\([a]_+:=\max\{a,0\}\).

\begin{assumption}[Target distribution]
\label{ass:target}
We assume that the target density is
\(\pi(x)\propto\e^{-V(x)}\), where the potential
\(V:\R^d\to\R\) is \(\alpha\)-strongly convex and \(\beta\)-smooth,
i.e., \(\tfrac{\alpha}{2}\norm{x-x'}^2\le
V(x)-V(x')-\ip{\nabla V(x')}{x-x'}\le
\tfrac{\beta}{2}\norm{x-x'}^2\) for all \(x,x'\in\R^d\).  We denote the condition
number by \(\kappa:=\beta/\alpha\).
\end{assumption}

\paragraph{Query conventions.}
We assume oracle access to \(V\) and \(\nabla V\), and let \(N_{\mathrm{qry}}\) denote the total number of oracle evaluations used by an algorithm. This count includes rejected proposals, event-rate evaluations, mode-finding steps, and all work internal to an approximate RGO.

\subsection{Overdamped and Underdamped Langevin Dynamics}
\label{sec:preliminaries-ld}
For $d$-dimensional standard Brownian motion \((B_t)_{t\ge0}\), (overdamped) Langevin dynamics (LD) follows
\[\d X_t=-\nabla V(X_t)\,\d t+\sqrt2\,\d B_t,\]
whereas underdamped Langevin dynamics (ULD) introduces momentum
and follows
\[\d X_t=P_t\,\d t \quad \text{and} \quad
\d P_t=-\nabla V(X_t)\,\d t-\gamma P_t\,\d t+\sqrt{2\gamma}\,\d B_t.\]
Their stationary distributions are \(\pi\) and \(\pi\otimes\gamma_d\),
respectively, where \(\gamma_d=\cN(0,I_d)\).  The distinction relevant to this work is how the two
dynamics move through space.  In LD, Brownian noise acts directly on
position, so motion over short time scales is diffusive and carries no
separate directional state from one interval to the next; ULD instead
stores direction in the momentum: before friction substantially changes
\(P_t\), the position moves approximately as
\(X_{t+h}\approx X_t+hP_t\).  This short-time transport can cross a region
more coherently than a position-only diffusion.  More precisely, let
\(\mu_t=\Law(X_t)\) for LD and
\(\widehat\mu_t=\Law(X_t,P_t)\) for ULD.
For initial laws with finite \(\chi^2\) divergence from their respective stationary distributions, under \cref{ass:target}, there are universal constants \(c,C>0\) such that
$\chi^2(\mu_t\Vert\pi)
 \le \e^{-2\alpha t}\chi^2(\mu_0\Vert\pi),
 \chi^2(\widehat\mu_t\Vert\pi\otimes\gamma_d)
 \le C\e^{-c\sqrt\alpha t}
       \chi^2(\widehat\mu_0\Vert\pi\otimes\gamma_d),
$
where the ULD bound uses the low-friction choice
\(\gamma\asymp\sqrt\alpha\)
\citep{vempala2019rapid,chewi2023log,chen2026exact}.  Thus, after
normalizing \(\beta=1\), the two continuous-time complexities are
respectively \(O(\kappa)\) and \(O(\sqrt\kappa)\).

\subsection{The Proximal Sampler}
\label{sec:preliminaries-ps}
The proximal sampler \citep{lee2021structured,chen2022improved} augments \(X\sim\pi\) with an auxiliary random variable \(Y\) under the joint distribution $\Pi_\eta(x,y) \propto
\exp({-V(x)-\norm{x-y}^2/(2\eta)}),
$
where \(\eta>0\) is the \textit{step size}.  Its \(X\)-marginal is \(\pi\), and
the conditional distributions are
\begin{align}
Y\mid X=x\sim\cN(x,\eta I_d),
\qquad
X\mid Y=y\sim q_y(x)\propto\exp(-V(x)-\norm{x-y}^2/(2\eta)).
\label{eq:rgo}
\end{align}
An exact independent draw from \(q_y\) is called an ideal \textbf{restricted
Gaussian oracle (RGO)} draw.  The standard proximal sampler is the Gibbs
chain that draws
\(Y_{k+1}\sim\cN(X_k,\eta I_d)\) and then
\(X_{k+1}\sim q_{Y_{k+1}}\). $q_y$ has condition number
\((\beta+\nicefrac{1}{\eta})/(\alpha+\nicefrac{1}{\eta})\),
which approaches $1$ as \(\eta\to0\).  The small-\(\eta\) limit
recovers LD, while the ideal Gibbs chain preserves \(\pi\) for every
\(\eta\) \citep{mitra2025fast}.

\subsection{Poisson Process and Poisson Thinning}
\label{sec:preliminaries-poisson-process}
A \textit{Poisson process} is a random clock that determines when events occur.
Its intensity \((\lambda_t)_{t\ge0}\) describes how frequently the clock should ring near time \(t\).
Conditional on the history up to time \(t\), an event occurs during the next short interval \([t,t+\d t)\) with probability \(\lambda_t\,\d t+o(\d t)\);
equivalently, the next event time \(\tau\) after \(t\) satisfies
\(\Pr(\tau>s \mid (\lambda_r)_{r\in[t,s]})=\exp(-\int_t^s\lambda_r\,\d r)\), $\forall s\ge t$.
However, directly sampling this clock can be difficult because the integrated intensity $\int_t\lambda_r\,\d r$ generally has no closed-form inverse.
\textit{Poisson thinning} avoids this inversion. Suppose \(\lambda_s\le\bar\lambda\) for all $s\ge t$. One first proposes candidate times from a homogeneous Poisson process of rate \(\bar\lambda\), by sampling the waiting time to the next event from the exponential density \(u\mapsto\bar\lambda\e^{-\bar\lambda u}1_{u>0}\).
At a candidate time \(s\), accept it as an event with probability \(\lambda_s/\bar\lambda\); otherwise, reject it and continue the trajectory.
The accepted times thus have the desired state-dependent intensity \((\lambda_t)_{t\ge 0}\).
\section{Reflected Proximal Dynamics}
\label{sec:framework}

The standard proximal sampler redraws the auxiliary variable \(Y_{k+1}\sim\cN(X_k,\eta I_d)\) at
every iteration.  This makes each update simple, but it also discards the
direction of the previous move.  Reflected proximal dynamics modifies this
outer update while preserving the augmented target.  We first present
the framework with state \((X,Y)\); the momentum-augmented version is
introduced in \cref{sec:persistent}.

\subsection{The Framework}

\begin{algorithm}[htbp]
\refstepcounter{algorithm}
\vspace{0.35em}
\label{alg:reflected-proximal-overview}
{\raggedright\textbf{Algorithm \thealgorithm: Ideal reflected proximal dynamics}\par}
\vspace{0.3em}
\hrule
\vspace{0.35em}
\begin{algorithmic}[1]
\Require Step size \(\eta\), initial states \((X_0,Y_0)\), total steps $N$, refresh probability \(\rho\), and kernels \(M_y\) preserving \(q_y\) for any \(y\in\R^d\)
\For{\(k=0,\ldots,N-1\)}
  \State \(Y_{k+1}\gets2X_k-Y_k\) \Comment{reflect}
  \State Draw \(U_k\sim\operatorname{Unif}[0,1]\)
  \If{\(U_k\le\rho\)} \Comment{RGO branch}
    \State Draw \(X_{k+1}\sim q_{Y_{k+1}}\) independently
  \Else \Comment{accelerated branch}
    \State \label{alg:framework-conditional-line}Draw
    \(X_{k+1}\sim M_{Y_{k+1}}(X_k,\cdot)\)
  \EndIf
\EndFor
\State \Return \(X_N\)
\end{algorithmic}
\vspace{0.1em}
\end{algorithm}

Recall the conditional distribution \(q_y(x)\) in \cref{eq:rgo}.  For each fixed \(y\), let \(M_y\) be a Markov kernel that preserves \(q_y\).
Unlike an RGO draw which depends only on \(Y_{k+1}\) when sampling \(X_{k+1}\), the kernel \(M_{Y_{k+1}}\) may also use the previous iterate \(X_k\).  The alternating updates of the ideal framework are given in \cref{alg:reflected-proximal-overview}.
The algorithm is \textit{ideal} because the RGO branch draws exactly from \(q_y\).
The realizations in \cref{sec:flux,sec:persistent} replace the required exact
conditional draws by controlled finite constructions, yielding
implementable algorithms.  We call the
second branch \textit{accelerated} because it is where directional persistence is
introduced; invariance of an arbitrary \(M_y\) alone does not imply
acceleration.

\begin{proposition}[Exactness of the framework]
\label{prop:framework-exactness}
If \(M_y\) preserves \(q_y\) for every \(y\),
then
\cref{alg:reflected-proximal-overview} preserves \(\Pi_\eta\).  Its
position marginal therefore has invariant law \(\pi\).
\end{proposition}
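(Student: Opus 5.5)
We show that one iteration of \cref{alg:reflected-proximal-overview} leaves \(\Pi_\eta\) invariant. The iteration is a composition of two kernels: the deterministic reflection \(R:(x,y)\mapsto(x,2x-y)\), followed by a position update that keeps \(Y_{k+1}\) fixed. The position update is the mixture \(\rho K_{\mathrm{RGO}}+(1-\rho)K_{\mathrm{acc}}\), where
\[K_{\mathrm{RGO}}((x,y),\cdot)=\delta_y\otimes q_y\quad\text{and}\quad K_{\mathrm{acc}}((x,y),\cdot)=\delta_y\otimes M_y(x,\cdot).\]
Since \(U_k\) is independent of the state, it suffices to show that \(R\), \(K_{\mathrm{RGO}}\) and \(K_{\mathrm{acc}}\) each preserve \(\Pi_\eta\). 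Then any convex combination and any composition of them does too.

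\textbf{Reflection.} The map \(R\) is an involution, and it is linear with Jacobian determinant \(\pm1\). Hence the pushforward density of \(\Pi_\eta\) under \(R\) is \(\Pi_\eta(R^{-1}(x,y))=\Pi_\eta(x,2x-y)\). The density depends on \(y\) only through \(\norm{x-y}^2\), and
\[\norm{x-(2x-y)}^2=\norm{y-x}^2,\]
so \(\Pi_\eta(x,2x-y)=\Pi_\eta(x,y)\). Equivalently, conditional on \(X=x\), the variable \(Y-x\sim\cN(0,\eta I_d)\) is symmetric about zero, so it has the same law as \(x-Y\).

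\textbf{Position updates.} We disintegrate \(\Pi_\eta\) along \(y\): write \(\Pi_\eta(\d x,\d y)=m(\d y)\,q_y(\d x)\), where \(m\) is the \(Y\)-marginal and \(q_y\) is the conditional law from \cref{eq:rgo}. Both kernels leave \(y\) unchanged, so the \(Y\)-marginal stays \(m\) and we only need to check the conditional law of \(X\) given \(Y=y\).

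For \(K_{\mathrm{RGO}}\), the new position is \(q_y\)-distributed given \(y\) regardless of the old \(x\). The output law is therefore \(m(\d y)\,q_y(\d x')=\Pi_\eta\).

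For \(K_{\mathrm{acc}}\), the output law is
\[\int m(\d y)\,q_y(\d x)\,M_y(x,\d x')\,\delta_y(\d y')=m(\d y')\,(q_{y'}M_{y'})(\d x')=m(\d y')\,q_{y'}(\d x').\]
The last equality is exactly the hypothesis that \(M_y\) preserves \(q_y\).

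To make this step rigorous, the only requirement is that \((y,x)\mapsto M_y(x,A)\) be jointly measurable. We take this as implicit in calling \(\{M_y\}\) a kernel family.

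\textbf{Conclusion.} Since \(R\) preserves \(\Pi_\eta\), we have \((X_k,Y_{k+1})\sim\Pi_\eta\) whenever \((X_k,Y_k)\sim\Pi_\eta\). The mixture of position updates then gives \((X_{k+1},Y_{k+1})\sim\Pi_\eta\). The \(X\)-marginal of \(\Pi_\eta\) is
\[\int\Pi_\eta(x,y)\,\d y\propto \e^{-V(x)}\int \e^{-\norm{x-y}^2/(2\eta)}\,\d y\propto \e^{-V(x)},\]
because the Gaussian integral does not depend on \(x\). This marginal is \(\pi\), which proves the second claim.

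None of these steps is a genuine obstacle. The one point that needs care is the ordering: the reflection uses the current \(X_k\), so we must verify invariance of the pair \((X_k,Y_{k+1})\) before the conditional update.
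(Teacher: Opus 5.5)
Your proof is correct and follows the same route as the paper. Reflection preserves $\Pi_\eta$ because the Gaussian displacement is symmetric, the $y$-fixed position update preserves $q_y$, and hence the composition preserves $\Pi_\eta$, whose $X$-marginal is $\pi$. The differences are cosmetic: you check the RGO and accelerated branches separately and then take the convex combination, whereas the paper checks the mixed kernel $\overline M_y=(1-\rho)M_y+\rho\, q_y$ directly. You also make explicit the joint measurability of $(y,x)\mapsto M_y(x,A)$, which the paper leaves implicit.
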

\begin{proof}
Let \(\varphi_\eta:=\cN(0,\eta I_d)\), so that \(\Pi_\eta(x,y)=\pi(x)\varphi_\eta(y-x)\).  For every \(f\),
\begin{align*}
    \tint f(x,2x-y)\,\pi(x)\varphi_\eta(y-x)\,\d x\,\d y
    =\tint f(x,y')\,\pi(x)\varphi_\eta(x-y')\,\d x\,\d y'
    =\tint f\,\d\Pi_\eta.
\end{align*}
Hence the reflection preserves \(\Pi_\eta\).  Now fix \(y\) and let
\(\overline M_y(x,x')=(1-\rho)M_y(x,x')+\rho q_y(x')\) be the mixture distribution of the accelerated and RGO branches.  Since \(M_y\) preserves \(q_y\),
\begin{align*}
  \tint q_y(x)M_y(x,x')\,\d x = q_y(x')
  \implies
  \tint[\tint f(x',y)\overline M_y(x,x')\,\d x']q_y(x)\,\d x
 =\tint f(x',y)q_y(x')\,\d x'.
\end{align*}
Thus the second update preserves the conditional law \(q_y\) for every \(y\).  Composing it with the reflection preserves \(\Pi_\eta\), whose position marginal is \(\pi\).
\end{proof}

\subsection{Why Reflection and Refreshment?}

The standard proximal sampler redraws \(Y\mid X\) independently at every iteration.  This update preserves the target, but it also replaces the current displacement \(Y-X\) by fresh Gaussian noise and therefore discards the direction of the previous move.  To retain this information, reflected proximal dynamics instead sets \(Y^+=2X-Y\).  The reflection reverses the displacement without changing its magnitude and, by \cref{prop:framework-exactness}, preserves the augmented target \(\Pi_\eta\).

Reflection alone leaves \(X\) unchanged, so the subsequent conditional update must turn the retained displacement into motion.  The accelerated branch does this by starting from the current position and applying a \(q_{Y^+}\)-invariant kernel.  Unlike an independent RGO draw, its output can remain correlated with its input; this correlation allows the direction retained by reflection to persist across iterations. Successive position increments tend to point in the same direction, so they add rather than cancel. The two concrete kernels in \cref{sec:flux,sec:persistent} are designed so that this persistence produces transport over the target scale.

Preserving \(q_y\), however, does not by itself imply contraction: the identity kernel is invariant, and a nearly deterministic kernel can retain periodic or weakly mixed components.  We therefore take an independent RGO step with a small \textit{refresh probability} \(\rho\).  The accelerated branch preserves direction on most iterations, while the refresh branch removes correlations that the persistent motion alone may fail to dissipate.  The one-dimensional example below illustrates how this balance changes diffusive motion into approximately ballistic transport.

\subsection{A One-Dimensional Picture of Acceleration}

Consider the quadratic potential
\(V(x)=\alpha x^2/2\).  The target has standard deviation
\(1/\sqrt\alpha\), whereas one local proximal update moves on the scale
\(\sqrt\eta\).  In the standard proximal sampler, each Gaussian redraw
forgets the previous direction, so the signed
increments are centered and approximately uncorrelated. As a result, the typical net displacement after \(n\) steps is only
\(\sqrt{n\eta}\).  Reaching the target scale requires roughly
\(1/(\alpha\eta)\) iterations.
Reflection keeps the previous displacement available to the accelerated
branch.  If this branch makes several successive moves point in a correlated
direction, their signed increments add \textit{approximately linearly} (\(n\sqrt\eta\)) rather than
cancelling as a random walk (\(\sqrt{n\eta}\)).  Achieving net displacement
\(1/\sqrt\alpha\) then takes roughly
\(1/\sqrt{\alpha\eta}\) iterations.  The occasional RGO branch prevents the
chain from remaining on a periodic trajectory without erasing its direction at every step.
Later, we prove distributional contraction for the two resulting chains.
For both of our proposed realizations, one iteration uses
\(\widetilde O(1+\beta\eta\sqrt d)\) oracle queries.  Multiplying this cost
by \(\widetilde O(1/\sqrt{\alpha\eta})\) iterations
and choosing an optimal \(\eta=\widetilde\Theta(1/(\beta\sqrt d))\) gives
\(\widetilde O(\sqrt\kappa d^{1/4})\) oracle queries.

\begin{figure}[t]
\vspace{-0.5em}
\centering
\includegraphics[width=0.8\textwidth]{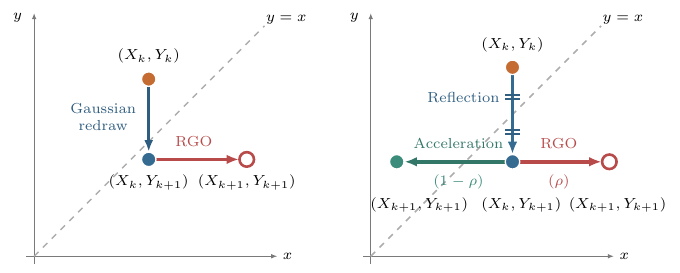}
\vspace{-1em}
\caption{
One outer iteration in the \((x,y)\)-plane.  \textbf{Left:} \textbf{Standard proximal sampler} (\cref{sec:preliminaries-ps}) independently redraws \(Y_{k+1}\sim\cN(X_k,\eta I_d)\), discarding the previous displacement, and then draws \(X_{k+1}\sim q_{Y_{k+1}}\) by RGO.  \textbf{Right:} \textbf{Reflected proximal sampler} (\cref{alg:framework-conditional-line}) sets \(Y_{k+1}=2X_k-Y_k\), so \(Y_k\) and \(Y_{k+1}\) are equidistant from \(X_k\), and then takes the accelerated conditional transition with probability \(1-\rho\) or an independent RGO refresh with probability \(\rho\). In the illustrated accelerated branch, \(X_{k+1}-Y_{k+1}\) retain the same sign as \(X_k-Y_k\), preserving directional information across the iteration; the independent RGO draw need not preserve this sign.
}
\label{fig:reflected-overview}
\end{figure}

\section{Flux-Resampled Underdamped Half-Turn}
\label{sec:flux}

We first instantiate the accelerated branch of \cref{alg:reflected-proximal-overview} without retaining momentum between outer iterations.  At the reflected value \(Y_{k+1}=y\), the update starts from \(X_k=x\), draws a Gaussian momentum, runs an event-driven underdamped half-turn, and returns its terminal position. At a high level, it freezes the nonlinear force at an independent conditional reference, follows the resulting exactly solvable affine motion, and uses random events to correct the residual force.

\subsection{The Flux-Resampled Update}

We present the ideal flux-resampled accelerated branch, \(X_{k+1}\sim F_{Y_{k+1}}(X_k,\cdot)\), in \cref{alg:flux-half-turn}. We choose a fixed numerical constant \(\gamma_0\in(0,1]\), and set
\(\gamma=\gamma_0\sqrt{\frac{\alpha\beta\eta^2}{1+\alpha\eta}}\),
\(\omega=\sqrt{1-\frac{\gamma^2}4}\), and \(T_\gamma=\frac{\pi}{\omega}\).  The condition \(\beta\eta\le1/2\) used below (\cref{prop:flux-main-reversal}) ensures \(\gamma<2\).
\((Z_t,P_t)\) denotes the within-update position--momentum trajectory.  At an \textit{event} time \(t\), we apply an instantaneous momentum update: \(Z_t\) remains continuous, while the momentum jumps from \(P_{t-}\) to \(P_t\).\footnote{By default, the process is always assumed to be c\`adl\`ag (right-continuous with left limits).}
All driving random sources used by one call are fresh: conditional on \((x,y)\), the reference draw, initial momentum, Brownian motion, driving Poisson clock, and event marks are mutually independent.
Thus, following the general framework (\cref{alg:framework-conditional-line}), the ideal flux chain makes one exact RGO draw \(q_y\) in every outer iteration.

\begin{algorithm}[htbp]
\refstepcounter{algorithm}
\vspace{0.35em}
\label{alg:flux-half-turn}
{\raggedright\textbf{Algorithm \thealgorithm: Ideal flux-resampled accelerated branch \(F_y\)}\par}
\vspace{0.3em}
\hrule
\vspace{0.35em}
\begin{algorithmic}[1]
\Require \(X_k=x\), \(Y_{k+1}=y\), and step size \(\eta\)
\State Draw \(\widetilde X\sim q_y\) independently of \(x\), draw
\(P_0\sim\gamma_d\), and set \(Z_0\gets x\)
\State Until \(t=T_\gamma\), evolve between events according to
\(\d Z_t=\sqrt\eta P_t\,\d t\) and
\(\d P_t=-\{(Z_t-y)/\sqrt\eta+\sqrt\eta\nabla V(\widetilde X)\}\,\d t
-\gamma P_t\,\d t+\sqrt{2\gamma}\,\d B_t\)
\State Set residual gradient \(h_t\gets\nabla V(Z_t)-\nabla V(\widetilde X)\) and generate
events at rate \(\lambda_t\gets\sqrt\eta[P_{t-}^\top h_t]_+\)
\State At each event, set \(n_t\gets h_t/\norm{h_t}\), draw \(R_t\) from the Rayleigh density \(r\mapsto r\e^{-r^2/2}1_{r>0}\), and update
\(P_t\gets P_{t-}-(P_{t-}^\top n_t+R_t)n_t\)
\State \Return \(X_{k+1}\gets Z_{T_\gamma} \sim F_y(x,\cdot)\)
\end{algorithmic}
\vspace{0.1em}
\end{algorithm}
\subsection{Why a Half-Turn and Why Events?}
We now provide intuition for the design of \cref{alg:flux-half-turn}.
At fixed \(y\), we freeze the nonlinear force at the independent reference \(\widetilde X\).  The conditional force then splits as
\[
-\nabla_x\log q_y(x) =
 \tfrac{x-y}{\eta}+\nabla V(x)
 =\big\{\tfrac{x-y}{\eta}+\nabla V(\widetilde X)\big\}
  +\big\{\nabla V(x)-\nabla V(\widetilde X)\big\}.
\]
The frozen term is a harmonic force centered at \(x_{\mathrm{ctr}}=y-\eta\nabla V(\widetilde X)\), while the second term is the residual force.  Freezing makes the between-event SDE affine and exactly solvable.  Drawing \(\widetilde X\sim q_y\) keeps the residual small on average because the stationary position and the reference follow the same localized law.

First ignore the residual events. Then \(Z_t-x_{\mathrm{ctr}}\) follows a damped harmonic oscillator.  At its half-period \(T_\gamma=\pi/\omega\), the coefficient of the initial momentum \(P_0\) vanishes, while the centered input \(Z_0-x_{\mathrm{ctr}}\) is multiplied by \(-\e^{-\gamma T_\gamma/2}\).  Thus the affine motion approximately reflects the input across \(x_{\mathrm{ctr}}\), producing a coherent move rather than a local diffusive step.  The choice of \(\gamma\) balances the damping error with the residual-force error; \cref{flux:app:halfturn} makes this tradeoff precise.

The affine half-turn alone is not exact for \(q_y\), because freezing omits the residual force.  To correct it, define the residual potential \(V_{\mathrm{res}}(z):=V(z)-V(\widetilde X)-\nabla V(\widetilde X)^\top(z-\widetilde X)\).  Along the between-event trajectory,
\(\frac{\d}{\d t}V_{\mathrm{res}}(Z_t)=\sqrt\eta P_t^\top h_t\),
and therefore \(\lambda_t=[\frac{\d}{\d t}V_{\mathrm{res}}(Z_t)]_+\).
Thus an event can occur only while the affine trajectory climbs the omitted potential.
When the process is stationary, conditional on position, the component \(P_t^\top n_t\) is standard Gaussian.  The event rate weights each positive value in proportion to its size, so the incoming value \(P_{t-}^\top n_t>0\) observed at an event has density \(r\mapsto r\e^{-r^2/2}1_{r>0}\).  The update draws \(R_t\) from this Rayleigh law, sets the outgoing component to \(P_t^\top n_t=-R_t\), and leaves the orthogonal momentum unchanged. This rule balances the probability flux of forward and momentum-reversed paths \citep{vanetti2017piecewise,michel2020forward}.

Together, the event rate and Rayleigh update pair each fixed-reference path with a momentum-reversed path of equal stationary probability.  Averaging over the independent reference and discarding momentum therefore yields a reversible position kernel \(F_y\) for \(q_y\), as formalized next.

\begin{proposition}[Conditional invariance and reversibility]
\label{prop:flux-main-reversal}
Suppose \(\beta\eta\le1/2\).
Conditional on the reflected auxiliary value
\(Y_{k+1}=y\), if the input position satisfies \(X_k\mid Y_{k+1}=y\sim q_y\), then the output satisfies \(X_{k+1}\mid Y_{k+1}=y\sim q_y\)  as well.
Moreover, \(F_y\) satisfies detailed balance with respect to \(q_y\), i.e.,
\(q_y(x)F_y(x,x')=q_y(x')F_y(x',x)\).
Consequently, \cref{alg:reflected-proximal-overview} with accelerated branch \(F_y\) (\cref{alg:flux-half-turn}) preserves \(\Pi_\eta\), whose position marginal is \(\pi\).
\end{proposition}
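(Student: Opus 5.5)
The plan is to prove detailed balance of $F_y$ with respect to $q_y$ directly, via a time-reversal (skew-reversibility) argument for the within-update process with the reference held fixed. Invariance of $q_y$ then follows by integrating detailed balance. The last claim follows from \cref{prop:framework-exactness}, since $F_y$ then preserves $q_y$ for every $y$.

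\textbf{Step 1: fix the reference and identify the invariant measure.} Fix $y$ and condition on the reference value $\widetilde X=\tilde x$. The between-event drift is the force of the frozen potential
\[
U_{\tilde x}(z)=\tfrac{\norm{z-y}^2}{2\eta}+\nabla V(\tilde x)^\top z .
\]
The events correct for the residual potential $V_{\mathrm{res}}$. The sum $U_{\tilde x}+V_{\mathrm{res}}$ equals $V(z)+\norm{z-y}^2/(2\eta)$ up to a constant depending only on $\tilde x$. Hence the candidate stationary measure of the $(Z,P)$ process is
\[
\mu(z,p)=q_y(z)\,\gamma_d(p),
\]
and it does not depend on $\tilde x$. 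The hypothesis $\beta\eta\le 1/2$ gives $\gamma<2$, so $\omega$ is real and $T_\gamma$ is finite. Since $h_t$ is Lipschitz in $Z_t$ and the momentum has Gaussian tails, the event rate is locally bounded. The process is therefore well defined, with finitely many events on $[0,T_\gamma]$ almost surely.

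\textbf{Step 2: skew-detailed balance of the generator.} Let $S(z,p)=(z,-p)$. Write the generator as $L=L_{\mathrm{H}}+L_{\mathrm{OU}}+L_{\mathrm{ev}}$. The pieces are the Hamiltonian transport $\sqrt\eta\,p\cdot\nabla_z-\sqrt\eta\,\nabla U_{\tilde x}\cdot\nabla_p$, the momentum Ornstein--Uhlenbeck part with coefficients $\gamma$ and $\sqrt{2\gamma}$, and the jump part $L_{\mathrm{ev}}$. I will show that the $L^2(\mu)$-adjoint satisfies $L^*=SLS$; call this $(\ast)$.
\begin{itemize}
\item The OU part is self-adjoint in $L^2(\mu)$ and commutes with $S$.
\item The transport part is antisymmetric with respect to $e^{-U_{\tilde x}}\gamma_d$. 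Relative to $\mu$, it leaves the residual term $-\sqrt\eta\,(p^\top h)\,f$ in the adjoint, where $h=\nabla V(z)-\nabla V(\tilde x)$.
\item The jump part must cancel this term. Its rate is $\lambda=\sqrt\eta[p^\top h]_+$, and its kernel replaces the normal component $p^\top n$ by $-R$ with $R$ Rayleigh. The verification is the flux identity
\[
\lambda(z,p)\,\gamma_d(p)\,Q\bigl(p,\d p'\bigr)\,\d p
=\lambda(z,-p')\,\gamma_d(p')\,Q\bigl(-p',-\d p\bigr)\,\d p'
\quad\text{on } \{p^\top n>0\},
\]
together with $\lambda(z,p)-\lambda(z,-p)=\sqrt\eta\,p^\top h$. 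The orthogonal momentum is untouched, so it suffices to check one dimension. There the identity reads $r\,\varphi(r)\cdot s e^{-s^2/2}=s\,\varphi(s)\cdot r e^{-r^2/2}$, which holds by symmetry of the Gaussian density $\varphi$.
\end{itemize}

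\textbf{Step 3: from $(\ast)$ to detailed balance of $F_y$.} The identity $(\ast)$ gives the path-level statement: under $\mu$, the time reversal of $(Z_t,P_t)_{0\le t\le T_\gamma}$ has the law of $(Z_t,-P_t)$. Evaluating only the endpoint positions, $(Z_0,Z_{T_\gamma})$ and $(Z_{T_\gamma},Z_0)$ have the same law. Here $Z_0\sim q_y$ and $P_0\sim\gamma_d$ is fresh and symmetric, so starting from $\mu$ is exactly the algorithm's initialization. This gives
\[
q_y(x)\,F^{\tilde x}_y(x,\d x')=q_y(x')\,F^{\tilde x}_y(x',\d x)
\]
for each fixed $\tilde x$. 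Averaging over the independent draw $\tilde x\sim q_y$ preserves the identity, so $F_y$ satisfies detailed balance. Integrating over $x$ shows that $F_y$ preserves $q_y$, and \cref{prop:framework-exactness} finishes the proof.

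\textbf{Main obstacle.} The main obstacle is Step 2, in particular the careful treatment of the one-sided rate $[\,\cdot\,]_+$ and the Rayleigh refresh kernel. The difficulty is ensuring that the antisymmetric residual from the transport part is exactly compensated. I would first verify the identity in one dimension along $n$, and only then lift it to the path-reversal statement using standard results for piecewise-deterministic processes with diffusion.
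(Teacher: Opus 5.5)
Your argument uses the same skew-balance mechanism as the paper. The ingredients match: the reference-independent target \(\mu=q_y\otimes\gamma_d\); cancellation of the transport residual \(\sqrt\eta\,p^\top h\) by \(\lambda-\lambda\circ S\); and the Rayleigh flux symmetry. Your one-dimensional identity is exactly the paper's invariance of the marked flux under \((v,z,r)\mapsto(-v,r,z)\). The remaining steps also match: insert and discard momentum, average over the independent reference, and close with \cref{prop:framework-exactness}.

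The algebra you flagged as the main obstacle is fine. The weak point is the sentence ``\((\ast)\) gives the path-level statement''. Passing from \(L^*=SLS\) on a test class to \(S_t^*=SS_tS\) requires two things. First, \(S_t\) must already be a contraction semigroup on \(L^2(\mu)\), i.e.\ \(\mu\) must be at least subinvariant, which is part of what you are proving. Second, your test class must be a core for its \(L^2(\mu)\) generator. Here the rate \([p^\top h]_+\) is unbounded and nonsmooth, the jump direction \(h/\norm h\) is discontinuous at \(h=0\), and there is Brownian forcing, so such a core result is not off the shelf.

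This is why the paper proves the integrated identity directly:
\begin{itemize}
\item The killed semigroup \(\mathsf D_t\) (weight \(\exp\{-\int_0^t\lambda\}\)) satisfies \(\mathsf D_t^*=J\mathsf D_tJ\), where \(J\) is your \(S\). This uses \(\varrho(W_t)/\varrho(W_0)=\exp\{-\int_0^t(\lambda-\lambda\circ J)\}\) for \(\varrho=\mathrm{d}\mu/\mathrm{d}\mu_{\rm ref}\), together with stationary momentum reversal of the affine Langevin diffusion for its explicit Gaussian law \(\mu_{\rm ref}\).
\item The gain kernel satisfies \(\mathcal G^*=J\mathcal GJ\) by your flux identity.
\item The series over ordered event times is reversed term by term, justified by Tonelli.
\end{itemize}
Everything is first done for bounded nonnegative tests. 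Testing against \(1\) gives \(\mu\mathsf D_t\le\mu\), and Jensen extends the identities to \(L^2(\mu)\), so neither a core nor a priori invariance is needed. Your generator route is shorter and more transparent. To be a proof, it needs either that core/uniqueness input for this hybrid process or the paper's semigroup expansion.

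Your non-explosion step is also under-justified.
\begin{itemize}
\item A locally bounded rate does not by itself rule out accumulation of events: a pure-jump process with rate \(x^2\) and jumps \(x\mapsto x+1\) explodes.
\item The Rayleigh update can increase \(\norm p\), since it changes the energy by \((R^2-\zeta^2)/2\), where \(\zeta>0\) is the incoming normal component.
\item ``Gaussian tails of the momentum'' is a property of the stationary law, so it cannot be used before the process is constructed.
\end{itemize}
Use a Lyapunov bound as the paper does. Take the harmonic energy \(\mathcal V=1+\{\eta\norm{\nabla U_{\tilde x}(z)}^2+\norm p^2\}/2\). Transport leaves \(\mathcal V\) unchanged, and the Ornstein--Uhlenbeck part contributes at most \(\gamma d\). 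Jumps contribute at most the rate, which is bounded by \(C_{\tilde x}(1+\mathcal V)\). A stopped Dynkin formula and Gronwall then bound \(\E\mathcal V\) and the expected event count on \([0,T_\gamma]\).
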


The proof in \cref{flux:app:validity} establishes non-explosion and sums the preceding Rayleigh flux balance over all possible event times, yielding the complete path-reversal argument.

\subsection{From the Ideal Kernel to an Implementable Update}
\label{sec:flux-implementation-main}

\cref{alg:flux-half-turn} is ideal in two places: it draws \(\widetilde X\) exactly from \(q_y\), and it uses the state-dependent event rate \(\lambda_t\).  The implementable update replaces these operations by a certified approximate RGO and capped Poisson thinning, with a finite work limit for every auxiliary optimization.

\paragraph{Certified approximate RGO.}
Gradient descent on the conditional objective \(z\mapsto V(z)+\norm{z-y}^2/(2\eta)\) starts from the incoming position and stops when it finds a point \(\widehat x\) satisfying \(\norm{y-\eta\nabla V(\widehat x)-\widehat x}\le\sqrt{d\eta}\).  This is an observable approximate-proximal certificate, not yet a conditional draw; it supplies the center required by the first-order rejection sampler that produces an approximate sample from \(q_y\).  The solver is stopped after a deterministic query budget; if it has not certified the residual, the algorithm terminates and returns the current position.  Otherwise, the approximate draw is used either as the RGO-branch output or as the accelerated branch's reference \(\widetilde X\).

\paragraph{Capped event clock.}
Once \(\nabla V(\widetilde X)\) is stored, the affine diffusion between candidate events can be sampled exactly without further oracle queries.  Candidate times are proposed at rate \(\bar\lambda\); at a candidate time \(t\), one query evaluates \(\lambda_t=\sqrt\eta[P_t^\top\{\nabla V(Z_t)-\nabla V(\widetilde X)\}]_+\), and the proposal is accepted with probability \(\min\{1,\lambda_t/\bar\lambda\}\).  This exactly simulates the truncated rate \(\min\{\lambda_t,\bar\lambda\}\), and the resulting bias is the probability that the truncated and ideal clocks disagree (\cref{sec:preliminaries-poisson-process}).

\paragraph{Coupling to the ideal chain.}
The implementable and ideal iterations use common randomness until one of three disagreements occurs: the center solver reaches its cap, the approximate and exact conditional draws differ, or the ideal event rate exceeds \(\bar\lambda\).  The appendix assigns each source a per-iteration error proportional to the target accuracy divided by the total number of iterations, couples the two chains until their first disagreement, and applies a union bound over the run.  On the agreement event, the implementable chain follows the ideal chain exactly; \cref{flux:app:implementation} gives the caps, tolerances, and complete query-complexity analysis.

\subsection{Convergence and Oracle Complexity}

We now establish the convergence of the ideal and implementable chains.
Let \(K_{\eta,\rho}\) be the transition kernel of one complete reflected iteration with the ideal accelerated branch \(F_y\).  In other words, if \((X_0,Y_0)\sim\nu\), then \((X_m,Y_m)\sim\nu K_{\eta,\rho}^m\).
Let \(\rho_\star\in(0,1)\) be the
fixed universal numerical constant furnished by the theorem below, and set
\begin{equation}
 \rho=\rho_\star
 \sqrt{\alpha\eta\log\big(\tfrac{\e}{\alpha\eta}\big)}\le\rho_\star<1.
 \label{flux:eq:rho}
\end{equation}

\begin{theorem}[Ideal flux contraction]
\label{flux:thm:ideal}
There exist universal constants \(c_0,c,C>0\) and a fixed
\(\rho_\star\in(0,1)\), independent of all problem parameters, such that, if
\(\beta\eta\le c_0\), then every integer
\(n_{\mathrm{blk}}\ge \frac{C}{\sqrt{\alpha\eta}}\sqrt{\log\frac{\e}{\alpha\eta}}\) satisfies, for every
\(\nu\ll\Pi_\eta\) and every nonnegative integer \(J\),
\begin{equation}
 \chi^2(\nu K_{\eta,\rho}^{J n_{\mathrm{blk}}}\Vert\Pi_\eta)
 \le \e^{-J}\chi^2(\nu\Vert\Pi_\eta).
 \label{flux:eq:ideal-contraction}
\end{equation}
\end{theorem}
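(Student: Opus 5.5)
We prove \cref{flux:thm:ideal} as a block contraction in $L^2(\Pi_\eta)$. Write $f=\d\nu/\d\Pi_\eta-1$, so $\chi^2(\nu\Vert\Pi_\eta)=\norm{f}_{L^2(\Pi_\eta)}^2$ and $\Pi_\eta f=0$. It then suffices to show $\norm{K_{\eta,\rho}^{n_{\mathrm{blk}}}}_{L^2_0(\Pi_\eta)\to L^2_0(\Pi_\eta)}^2\le\e^{-1}$ and iterate over $J$ blocks.

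\textbf{Structural decomposition.} One iteration factors as $K_{\eta,\rho}=R\,\overline F$. Here $R$ is the reflection $(x,y)\mapsto(x,2x-y)$, which is a measure-preserving involution and hence unitary and self-adjoint on $L^2(\Pi_\eta)$ (\cref{prop:framework-exactness}). The second factor $\overline F=(1-\rho)F+\rho G$ acts on $x$ given $y$, where $G$ is the Gibbs projection $x\mapsto q_y$. By \cref{prop:flux-main-reversal}, $F$ is $\Pi_\eta$-self-adjoint with $G F=FG=G$. We then split $L^2_0$ into functions of $y$ alone ($\operatorname{ran} G$) and their orthogonal complement. The composition $R\,G$ is the standard proximal-sampler half-step, whose contraction is known. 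The non-reversible part comes from $R F$ alternating reflection with the half-turn.

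\textbf{Quantitative steps.}
\begin{enumerate}[leftmargin=*]
\item Use the affine analysis of the frozen half-turn (cf.\ \cref{flux:app:halfturn}). At $T_\gamma$ it maps $Z_0-x_{\mathrm{ctr}}\mapsto-\e^{-\gamma T_\gamma/2}(Z_0-x_{\mathrm{ctr}})$ plus noise of variance $O(\gamma)$ and a residual-event error controlled by $\beta\eta$. Combined with the reflection, this shows that in the Gaussian/linear picture the pair $(X,Y-X)$ undergoes an approximate rotation by angle $\asymp\sqrt{\alpha\eta}$ per iteration. This is exactly the ballistic transport of the one-dimensional example.
\item Run a hypocoercivity-type argument over $n\asymp(\alpha\eta)^{-1/2}$ steps, for instance with a Lyapunov-modified $L^2$ norm or a two-scale coupling in the spirit of the ULD $\chi^2$ results cited in \cref{sec:preliminaries-ld}. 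Over a quarter rotation, a function with no dependence on $Y$ is converted into one with $Y$-dependence. The RGO refresh with rate $\rho$ then kills this dependence: the variance loses a factor $1-c\rho$ per step on $\operatorname{ran}(I-G)$-type components. With $\rho\asymp\sqrt{\alpha\eta\log(\e/\alpha\eta)}$, the block of length $n_{\mathrm{blk}}$ yields total dissipation $\gtrsim\rho n_{\mathrm{blk}}\gtrsim C\sqrt{\log}\cdot\sqrt{\log}$, enough to beat $\e^{-1}$.
\item The logarithm enters because the contraction estimate for the non-reversible rotation loses a factor polynomial in $1/(\alpha\eta)$ before mixing sets in, and this factor must be absorbed. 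This explains the $\sqrt{\log(\e/\alpha\eta)}$ in both $\rho$ and $n_{\mathrm{blk}}$. We fix $\rho_\star$ and $C$ at the end to absorb all universal constants.
\end{enumerate}

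\textbf{Main obstacle.} The hardest step is step 2 for non-Gaussian $V$. There $F$ is no longer an exact affine half-turn, and the rotation structure holds only approximately, with errors from damping and residual events. We first try to prove a spectral-gap-type inequality for the symmetrized block operator $(K^{n})^*K^{n}$. We would use $\Pi_\eta$'s Poincaré constant $1/\alpha$ together with a strong-convexity comparison: $\nabla V$ is sandwiched between $\alpha$ and $\beta$, and $\beta\eta\le c_0$ makes the residual perturbation a relative $O(\beta\eta)$ correction to the harmonic flow. If this fails, we fall back to a synchronous coupling of two copies in a twisted metric $\norm{\Delta x}^2+a\ip{\Delta x}{\Delta(y-x)}+b\norm{\Delta(y-x)}^2$ to get Wasserstein contraction. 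We would then upgrade it to $\chi^2$ via the refresh step's regularization, which is the step most likely to need the $\log$ factor.
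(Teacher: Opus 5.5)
Your reduction to an $L^2_0(\Pi_\eta)$ block bound matches the paper, and so do your factorization of one iteration as reflection followed by $G+(1-\rho)F(I-G)$ and your rotation picture. The gap is that the estimate which actually produces acceleration is missing, and the substitute you propose is too weak.

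\textbf{The local error must be weighted, not uniform.} You treat the half-turn's departure from an exact conditional sign reversal as a uniform relative $O(\beta\eta)$ error. Per iteration, however, reflection moves only the amount $\Gamma=(I-\mathsf T^2)^{1/2}$ of a $Y$-only mode into conditional position, where $\mathsf T=\mathsf P\mathsf U\mathsf P$. On the slowest modes $\Gamma$ equals $\gamma_a=2\sqrt{\alpha\eta}/(1+\alpha\eta)$. The hypothesis allows $\beta\eta$ up to $c_0$ while $\sqrt{\alpha\eta}$ can be arbitrarily small, so an unweighted error of size $\beta\eta$ is not small relative to the transferred signal. In the energy argument described below it enters through $\Gamma^{-1}$, and it yields at best a per-step rate of order $\alpha\eta/(\beta\eta)=1/\kappa$, not $\sqrt{\alpha\eta}$.

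The paper instead proves the covariance-weighted estimate $\norm{(I+F_y^w)\ell_{u,y}}^2_{L^2(q_y^w)}\le C\beta\eta\,u^\top(I-\Sigma_y)u$. Two ingredients give it:
\begin{itemize}
\item The residual forcing has second moment $(6+\pi/2)\Var(u^\top g_y(W))\le C\beta\eta\,u^\top(I-\Sigma_y)u$, by the Stein/Brascamp--Lieb identities.
\item The friction $\gamma$ is chosen with $\gamma^2\le\alpha\eta\,\beta\eta/(1+\alpha\eta)$ precisely so that the damping defect obeys the same bound.
\end{itemize}

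The deficit $I-\Sigma_y$ is exactly what governs the reflection transfer. The affine decomposition $\mathsf Q\mathsf U\mathsf Pf=L_f+e_f$ satisfies $\norm{e_f}^2\le4\eta\mathcal R_f$ and $\mathcal R_f+\mathcal D_f\lesssim\norm{\nabla(I-\mathsf T)f}\norm{\nabla f}$. On the spectral subspace of $\mathsf T$ for $[1/2,1]$ one also has $\sqrt\eta\norm{\nabla f}\lesssim\norm{\Gamma f}$ and $\sqrt\eta\norm{\nabla(I-\mathsf T)f}\lesssim\norm{\Gamma^3f}$. Together these give $\norm{(I+\mathsf F_Q)\mathsf Vv}^2\lesssim\norm v\norm{\Gamma^2v}$: the error is proportional to what is being transferred.

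A dyadic split of the spectrum of $\Gamma^2$, which lies in $[\gamma_a^2,1]$, turns this into $\norm{(I+\mathsf F_Q)\mathsf Vv}^2\le C\log(\e/\gamma_a)\norm{\Gamma v}^2$. That dyadic count is the true source of the logarithm. Your explanation, a polynomial prefactor absorbed before mixing, would put a full $\log(\e/\alpha\eta)$ into $n_{\mathrm{blk}}$ rather than its square root, and it does not explain the $\sqrt{\log}$ in $\rho$.

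\textbf{The global step needs a concrete mechanism.} Your Step~2 has the roles of the refresh reversed. The refresh $G=\mathsf P$ preserves $Y$-only observables and damps only the conditional variation $\mathsf Qf$. The slow directions are the $Y$-only modes with $\mathsf T$ near $1$. Reflection followed by an exact sign reversal rotates them by $(f_Y,f_V)\mapsto(\mathsf Tf_Y-\Gamma f_V,\Gamma f_Y+\mathsf Tf_V)$ into $\operatorname{ran}\mathsf Q$. Here $\mathsf Q\mathsf U\mathsf P=\mathsf V\Gamma$ with $\mathsf V$ an isometry, and $f_V=\mathsf V^*\mathsf Qf$.

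The paper uses the energy $\norm f^2+\vartheta\,\mathcal C(f_Y,f_V)$ with $\mathcal C=\tfrac12(\norm{f_Y}^2-\norm{f_V}^2)-\ip{\mathsf T\Gamma^{-1}f_Y}{f_V}$. Under the exact rotation, $\mathcal C$ changes by exactly $-\norm{f_Y}^2+\norm{f_V}^2$, so together with the refresh dissipation $\rho\norm{\mathsf Qf}^2$ it controls both pieces. It sees the half-turn error through $\norm{\Gamma^{-1}e_{\rm ht}}$, which is why the $\Gamma$-weighted bound is indispensable. The result is one-step decay $1-c\rho/\Lambda_\rho$ with $\Lambda_\rho=\log(\e/\gamma_a)+(\rho/\gamma_a)^2$. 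This is not the rate $\rho$ implicit in your count $\rho\,n_{\mathrm{blk}}$. The choice $\rho\asymp\sqrt{\alpha\eta\log(\e/\alpha\eta)}$ balances the two terms of $\Lambda_\rho$ and gives rate $\sqrt{\alpha\eta}/\sqrt{\log(\e/\alpha\eta)}$.

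\textbf{Your fallbacks do not close the gap.} By homogeneity, the claimed inequality for all $\nu\ll\Pi_\eta$ is exactly an $L^2_0(\Pi_\eta)$ operator-norm bound for the block. A spectral-gap statement for $(K^n)^*K^n$ therefore merely restates the goal. A $W_2$ contraction, already awkward for a state-dependent event clock, upgraded by a regularizing step typically gives bounds exponential in $W_2^2/\eta$ rather than linear in $\chi^2(\nu\Vert\Pi_\eta)$.
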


The proof first establishes in \cref{flux:app:halfturn} that one conditional half-turn nearly reverses centered affine position coordinates, with an error weighted by the same conditional covariance deficit through which reflection transfers information.  At the outer-chain level, reflection moves variation in \(Y\) into the conditional position, the half-turn reverses that component, and the RGO branch damps the remainder.  A modified \(L^2\) energy then contracts over blocks of \(n_{\mathrm{blk}}=\widetilde O(1/\sqrt{\alpha\eta})\) iterations; the complete argument is in \cref{flux:app:contraction}.

\begin{theorem}[Implementable flux half-turn]
\label{flux:thm:complexity}
Under \cref{ass:target}, let \(0<\eps<1/4\), \(\Delta\ge1\), and let
\(\nu_0\) be an initial position law satisfying
\(D_2(\nu_0\Vert\pi)\le\Delta\).  There exist universal numerical constants
\(c,C>0\), independent of \(d,\kappa,\Delta,\eps\), such that the following
choice is valid:
\begin{equation}
 L=\Delta+\log\tfrac{Cd\kappa}{\eps},\qquad
 \eta=\tfrac{c}{\beta(\sqrt{dL}+L)}.
 \label{flux:eq:eta-final}
\end{equation}
Initialize \(X_0\sim\nu_0\) and
\(Y_0\mid X_0\sim\cN(X_0,\eta I_d)\).  Then run
\cref{alg:reflected-proximal-overview} with the accelerated branch in
\cref{alg:flux-half-turn}, using the approximate RGO and capped Poisson clock
specified in \cref{flux:app:implementation}.  Its output distribution
\(\widehat\nu\) satisfies \(\TV(\widehat\nu,\pi)\le\eps\).  Writing
\(N_{\mathrm{qry}}\) for its total number of oracle queries,
\begin{equation}
 \E N_{\mathrm{qry}}
 \le C\sqrt\kappa\,(d+L)^{1/4}L^{3/4}
       \left(\Delta+\log\tfrac4\eps\right)\log(\e\kappa L).
 \label{flux:eq:complexity}
\end{equation}
In particular, a constant-\(D_2\) warm start requires
\(\widetilde O(\sqrt\kappa d^{1/4})\) expected oracle queries.
\end{theorem}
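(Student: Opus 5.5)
We propose to prove \cref{flux:thm:complexity} by coupling the implementable chain to the ideal chain and invoking \cref{flux:thm:ideal} for the ideal chain's mixing. The steps are: (i) bound the ideal chain's error after \(N\) iterations; (ii) bound the per-iteration probability that the implementable and ideal chains disagree; (iii) bound the expected oracle cost per iteration; (iv) multiply and optimize.

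\emph{Step (i): ideal mixing.} The initialization \(Y_0\mid X_0\sim\cN(X_0,\eta I_d)\) gives a joint law \(\nu\) with \(\chi^2(\nu\Vert\Pi_\eta)=\chi^2(\nu_0\Vert\pi)\le \e^{\Delta}\). Since \(\beta\eta\le c_0\) holds for small \(c\), \cref{flux:thm:ideal} applies with \(J=\lceil \Delta+2\log(4/\eps)\rceil\). After \(N=J n_{\mathrm{blk}}\) iterations, where
\[
n_{\mathrm{blk}}\asymp(\alpha\eta)^{-1/2}\sqrt{\log(\e/\alpha\eta)},
\]
the ideal joint law has \(\chi^2\le(\eps/4)^2\). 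Projecting to \(X\) and using \(\TV\le\frac12\sqrt{\chi^2}\), the ideal output is within \(\eps/2\) of \(\pi\) in TV. Note \(\log(\e/\alpha\eta)\lesssim\log(\e\kappa L)\) up to a \(\log d\) term absorbed in \(L\).

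\emph{Step (ii): coupling.} We couple the two chains with common randomness and allot each iteration a failure budget of \(\eps/(2N)\), split among three disagreement sources.

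\textbf{(a) Center solver hits its cap.} The conditional objective is \((1/\eta+\alpha)\)-strongly convex with condition number \(\le 2\) when \(\beta\eta\le1/2\). Gradient descent from the incoming position therefore reaches the \(\sqrt{d\eta}\) certificate in \(O(\log(\text{initial residual}/\sqrt{d\eta}))\) steps. Under the ideal chain, which is stationary after warm-up, the incoming residual is \(O(\sqrt{\eta(d+L)})\) with probability \(1-\e^{-L}\) by Gaussian concentration. Here we pay for the warm start: we transfer tail bounds from \(\Pi_\eta\) to the actual law via Cauchy--Schwarz with the \(\chi^2\) factor \(\e^{\Delta}\), which is why \(L\) contains \(\Delta\). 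A cap of order \(\log(Cd\kappa/\eps)\) then suffices.

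\textbf{(b) Approximate RGO differs from the exact draw.} We use the standard first-order rejection-sampler analysis for RGO. With step \(\eta\lesssim 1/(\beta\sqrt{dL})\), the sampler uses \(O(1)\) expected queries and TV error \(\le \eps/(6N)\) at \(\mathrm{polylog}\) overhead. This is exactly the origin of the \(\sqrt{dL}\) constraint in \eqref{flux:eq:eta-final}.

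\textbf{(c) Ideal event rate exceeds \(\bar\lambda\).} The rate satisfies \(\lambda_t\le\sqrt\eta\norm{P_t}\,\beta\norm{Z_t-\widetilde X}\). Both \(Z_t\) and \(\widetilde X\) are near-\(q_y\) draws at scale \(\sqrt{\eta(d+L)}\), so \(\norm{Z_t-\widetilde X}\lesssim\sqrt{\eta(d+L)}\) with high probability. Because the rate involves the projection \(P^\top h\) rather than \(\norm{P}\), a Gaussian tail gives \(\lambda_t\lesssim \beta\eta\sqrt{(d+L)L}\lesssim 1\) uniformly over \(t\le T_\gamma\) with probability \(1-\eps/(6N)\), after a union bound over a time grid plus continuity. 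This needs care and is the main technical obstacle: we must control a supremum over the trajectory including jump times, and all tails must hold along a non-stationary chain.

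Taking \(\bar\lambda=O(1)\), the union bound over \(N\) iterations gives total coupling failure \(\le\eps/2\), and the triangle inequality yields \(\TV(\widehat\nu,\pi)\le\eps\).

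\emph{Step (iii): cost per iteration.} Each iteration uses \(O(L)\) queries for the solver and \(O(1)\) expected queries for the RGO. The event clock costs \(\bar\lambda T_\gamma=O(1)\) candidate queries, or \(O(1+\beta\eta\sqrt{dL})\) in general.

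\emph{Step (iv): total.} Substituting \(\eta\asymp 1/(\beta\sqrt{dL})\) gives
\[
(\alpha\eta)^{-1/2}\asymp\sqrt\kappa\,(dL)^{1/4}.
\]
Multiplying by \(J\approx\Delta+\log(4/\eps)\) and the logarithmic factors recovers \eqref{flux:eq:complexity}. The solver cost of \(O(L)\) per iteration has to be arranged so that only a \(L^{1/2}\) factor survives, yielding the \(L^{3/4}\) power. We would achieve this by warm-starting the solver from the previous center, so its expected cost is \(O(1)\) amortized and the cap only bounds the worst case. The \(d^{1/4}\) warm-start claim then follows with \(\Delta=O(1)\).
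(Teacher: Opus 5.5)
\textbf{Verdict: genuine gap, in the query count.} Your total-variation argument is essentially the paper's (\cref{flux:prop:finite-compiler}): ideal mixing from \cref{flux:thm:ideal}, the same three disagreement sources, the warm-path Cauchy--Schwarz transfer, and a union bound over rows. One correction there: the ideal chain is not stationary after warm-up, but every row is \(D_2\)-warm by data processing, which is all the transfer needs.

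\textbf{The solver cost.} You cap the center solver at \(O(L)\) steps and charge that cap to every iteration. The horizon is \(N\asymp(\Delta+\log\frac4\eps)\,a^{-1/2}\sqrt{\log(\e/a)}\) with \(a=\alpha\eta\). Your accounting therefore gives a bound larger than \cref{flux:eq:complexity} by a factor of at least order \(\sqrt{L/\log(\e\kappa L)}\), which is unbounded as \(\Delta\) or \(\log(1/\eps)\) grows. Your remedy, restarting the solver from the previous center with \(O(1)\) amortized cost, changes the specified algorithm (which starts gradient descent at the incoming position), and you do not prove the amortized bound.

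The missing observation is that the failure level enters only inside a logarithm.
\begin{itemize}
\item By \cref{flux:eq:center-cost}, the solver needs \(C[1+\log\{1+R_0/\sqrt{d\eta}\}]\) steps, where \(R_0\) is the starting residual.
\item Under stationarity, the normalized residual \(\norm{Z+\sqrt\eta\nabla V(X)}/\sqrt d\) exceeds \(2\sqrt\kappa(1+\sqrt{2u/d})\) with probability at most \(2\e^{-u}\).
\item Taking \(u\asymp\Delta+\log(1/\delta_{\rm row})\lesssim\ell\) gives stationary cap failure \(\e^{-\Delta}\delta_{\rm row}^2\) with a cap of only \(O(\log(\e\kappa\ell))\) queries. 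This is exactly the \(\log(\e\kappa L)\) factor in \cref{flux:eq:complexity}.
\end{itemize}
Your residual estimate \(O(\sqrt{\eta(d+L)})\) also omits the \(\eta\nabla V(X)\) term, whose normalized size can reach \(\sqrt\kappa\). Inside the logarithm this is harmless.

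The \(L^{3/4}\) does not come from the solver; it comes from the horizon. Since \((\sqrt{dL}+L)^{1/2}\le2(d+L)^{1/4}L^{1/4}\) and \(\log(\e/a)\lesssim L\), one gets \(a^{-1/2}\sqrt{\log(\e/a)}\le C\sqrt\kappa\,(d+L)^{1/4}L^{3/4}\). Each row then costs \(Q_{\rm ctr}+O(1)+T_\gamma\bar\lambda=O(\log(\e\kappa L))\), and the product is \cref{flux:eq:complexity}.

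\textbf{The clock cap.} The time-uniform control of \(\lambda_t\) that you call the main obstacle is not needed. Drive the exact and capped clocks by one Poisson random measure. They can separate only through a point lying between \(\bar\lambda\) and \(\lambda_t\), so the separation probability is at most \(\E\int_0^{T_\gamma}(\lambda_t-\bar\lambda)_+\,\d t\).

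The fixed-reference motion preserves \(q_y\otimes\gamma_d\) (\cref{flux:prop:fixed-validity}), and the reference is independent of the input. Hence under stationarity each time-\(t\) state has exactly the single-time law of \cref{flux:lem:clock-coupling}. Integrating that one-time tail gives the bound directly, and Cauchy--Schwarz again absorbs the warm start. No time grid, continuity argument, or supremum over jump times is required.

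Finally, the RGO gate is \(\eta^{-1}\gtrsim\beta\{\sqrt{d\ell}+\ell\}\). So the \(+L\) term in \cref{flux:eq:eta-final} also comes from the gate, not only the \(\sqrt{dL}\) term.
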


The step size in \cref{flux:eq:eta-final} simultaneously controls the approximate RGO and the capped event clock.  Summing the center-solver, conditional-draw, and event-proposal costs over the ideal mixing horizon gives \cref{flux:eq:complexity}; the complete error allocation and query-complexity analysis are in \cref{flux:app:implementation}.

\section{Persistent-Momentum Quarter-Turn}
\label{sec:persistent}

The flux method uses momentum only inside one accelerated update and then discards it.  The persistent construction instead augments the outer state to \((X,Y,P)\), carries the terminal momentum into the next iteration, and requires its accelerated branch to preserve \(q_y\otimes\gamma_d\). As in the flux method, we first analyze this branch with an exact conditional reference, then replace that reference by an implementable Gaussian construction that reserves approximate RGO calls for the refresh branch.

\subsection{The Persistent Update}

At fixed \(Y=y\), let \(\zeta_y\) be a reference law depending only on \(y\).  For an incoming position--momentum pair \((x,p)\), \cref{alg:persistent-quarter-turn} defines the phase-space kernel \(G_y^\zeta((x,p),\cdot)\).  The ideal contraction analysis takes \(\zeta_y=q_y\); the implementable reference is constructed in \cref{sec:persistent-rgo-light-main}.
As in \cref{alg:flux-half-turn}, an \textit{event} is an instantaneous momentum update \(P_{t-}\mapsto P_t\).  Conditional on \((x,p,y)\), the reference draw, Poisson clock, and event marks are fresh and mutually independent.
\begin{algorithm}[htbp]
\refstepcounter{algorithm}
\vspace{0.2em}
\label{alg:persistent-quarter-turn}
{\raggedright\textbf{Algorithm \thealgorithm: Ideal persistent-momentum
accelerated branch \(G_y^\zeta\)}\par}
\hrule
\vspace{0.1em}
\begin{algorithmic}[1]
\Require Incoming pair \((x,p)\), fixed auxiliary value \(y\), step size
\(\eta\), and reference law \(\zeta_y\)
  \State Draw \(\widetilde X\sim\zeta_y\) independently of \((x,p)\), and set
  \((Z_0,P_0)\gets(x,p)\)
  \State Until \(t=\pi/2\), evolve between events according to
  \(\dot Z_t=\sqrt\eta P_t\) and
  \(\dot P_t=-(Z_t-y)/\sqrt\eta
                 -\sqrt\eta\nabla V(\widetilde X)\)
  \State Set residual gradient \(h_t\gets\nabla V(Z_t)-\nabla V(\widetilde X)\) and generate events at rate \(\lambda_t\gets\sqrt\eta[P_{t-}^{\top}h_t]_+\)
  \State At each event, set \(n_t\gets h_t/\norm{h_t}\) and update
  \(P_t\gets P_{t-}-2(P_{t-}^{\top}n_t)n_t\)
\State \Return \((X^+,P^+)\gets (Z_{\pi/2},P_{\pi/2})\sim G_y^\zeta((x,p),\cdot)\)
\end{algorithmic}
\vspace{0.1em}
\end{algorithm}

A complete ideal persistent iteration first reflects \(Y_{k+1}=2X_k-Y_k\).  It then redraws both position and momentum on the RGO branch, or applies the persistent phase-space kernel on the accelerated branch:
\[
 (X_{k+1},P_{k+1})\sim
 \begin{cases}
  q_{Y_{k+1}}\otimes\gamma_d,&\text{with probability }\rho,\\
  G_{Y_{k+1}}^\zeta((X_k,P_k),\cdot),&\text{otherwise}.
 \end{cases}
\]

\subsection{Why a Quarter-Turn?}
The persistent method must transfer position information into momentum rather than discard it.  Fix the reference \(\widetilde X\), set \(x_{\mathrm{ctr}}=y-\eta\nabla V(\widetilde X)\), and let \(W_t=(Z_t-x_{\mathrm{ctr}})/\sqrt\eta\).  Without events, \(\dot W_t=P_t\) and \(\dot P_t=-W_t\), so the motion is a rotation.
After one quarter-period, \((W_0,P_0)\mapsto(P_0,-W_0)\), so centered position information is stored in the terminal momentum and can return to position in later iterations.  A half-turn instead maps \((W_0,P_0)\) to \((-W_0,-P_0)\), producing a sign flip without exchanging the two coordinates.  Events correct the residual force through specular momentum reflection, which preserves momentum magnitude and is therefore compatible with persistence.  The RGO branch supplies the occasional refreshment described in \cref{sec:framework}.

\begin{proposition}[Conditional invariance]
\label{prop:persistent-main-exactness}
For every fixed \(y\) and reference value \(\widetilde x\), the
accelerated-branch motion preserves \(q_y\otimes\gamma_d\).  Consequently,
\(G_y^\zeta\) preserves \(q_y\otimes\gamma_d\) whenever
\(\widetilde X\sim\zeta_y\) is independent of the incoming position and
momentum.  The complete persistent iteration above therefore preserves
\(\widehat\Pi_\eta:=\Pi_\eta\otimes\gamma_d\).
\end{proposition}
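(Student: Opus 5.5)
The plan is to view the fixed-reference accelerated motion as a piecewise-deterministic Markov process (PDMP) on phase space and show that $q_y\otimes\gamma_d$ is stationary for it. Fix $y$ and $\widetilde x$, and write the joint density as $\mu(z,p)\propto\exp(-U(z)-\tfrac12\norm{p}^2)$. Here
\[
U(z)=V(z)+\norm{z-y}^2/(2\eta)=U_0(z)+V_{\mathrm{res}}(z),
\]
with $U_0(z)=\norm{z-y}^2/(2\eta)+V(\widetilde x)+\nabla V(\widetilde x)^\top(z-\widetilde x)$ affine-plus-quadratic, and $V_{\mathrm{res}}$ as defined in the flux section (with $\widetilde X$ replaced by $\widetilde x$).

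The flow $\dot z=\sqrt\eta p$, $\dot p=-\sqrt\eta\nabla U_0(z)$ is Hamiltonian for $H_0=\sqrt\eta\,(U_0+\tfrac12\norm{p}^2)$ after the time scaling. It is therefore divergence-free and conserves $U_0+\tfrac12\norm p^2$. The only part of $\log\mu$ it fails to conserve is $-V_{\mathrm{res}}$, whose time derivative along the flow is $-\sqrt\eta\,p^\top h(z)$. The generator is
\[
\mathcal L f=\sqrt\eta\,p^\top\nabla_z f-\sqrt\eta\,\nabla U_0^\top\nabla_p f+\lambda(z,p)\bigl[f(z,R_zp)-f(z,p)\bigr],
\]
where $\lambda=\sqrt\eta[p^\top h]_+$ and $R_z$ is the specular reflection across $h(z)^\perp$.

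The core step is to verify $\int\mathcal L f\,\d\mu=0$ for smooth compactly supported $f$. For the transport part, integrate by parts; because the flow is divergence-free, it yields $\int f\,\sqrt\eta\,p^\top\nabla V_{\mathrm{res}}\,\d\mu=\int f\,\sqrt\eta\,p^\top h\,\d\mu$. For the jump part, $R_z$ is an involution preserving $\norm p$ and $\gamma_d$, and it maps $p^\top h$ to $-p^\top h$. The change of variables $p\mapsto R_zp$ gives $\int\lambda(z,p)f(z,R_zp)\,\d\mu=\int\sqrt\eta[-p^\top h]_+f\,\d\mu$, so the jump part contributes $\int f\,\sqrt\eta([-p^\top h]_+-[p^\top h]_+)\,\d\mu=-\int f\,\sqrt\eta\,p^\top h\,\d\mu$. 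The two contributions cancel. This is the standard bouncy-particle computation, and it is where the choices of rate and specular update are used.

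The main obstacle is passing from the generator identity to invariance of the finite-horizon law at $t=\pi/2$. This requires non-explosion (finitely many events a.s.) and enough regularity that $\mathcal L$ determines the semigroup on a core. I would handle it as follows. Since $\nabla V$ is $\beta$-Lipschitz, $\norm h\le\beta\norm{z-\widetilde x}$. Both $\norm p$ and the distance of $z$ from the rotation center stay bounded on $[0,\pi/2]$: reflections preserve $\norm p$, and between events the motion is a rotation, so $U_0+\tfrac12\norm p^2$ is conserved and is unchanged by reflections. Hence $\lambda$ is bounded along each path by a constant depending only on the initial state, which rules out explosion. Invariance then follows either from the standard Davis PDMP theory (Dynkin's formula for bounded-rate processes with smooth flow), or more concretely by thinning: dominate the rate by a constant $\bar\lambda$ on the invariant energy level set, write the kernel as a Poisson series of flow-then-jump-or-stay operators, and check stationarity term by term via the same flux identity.

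With invariance established for every fixed $\widetilde x$, the kernel $G_y^\zeta$ is a mixture over an independent $\widetilde X\sim\zeta_y$ of $\mu$-invariant kernels, so it preserves $q_y\otimes\gamma_d$. The mixture with the RGO branch $q_y\otimes\gamma_d$ then preserves it as in the proof of \cref{prop:framework-exactness}. The reflection $Y\mapsto 2X-Y$ preserves $\Pi_\eta$ by that same proposition and leaves $P$ untouched, so it preserves $\widehat\Pi_\eta=\Pi_\eta\otimes\gamma_d$. Composing the two steps gives the claim.
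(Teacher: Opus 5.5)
Your route is different from the paper's, and its computations are correct. The transport term contributes $+\sqrt\eta\,p^\top h$ and the specular jump contributes $\sqrt\eta([-p^\top h]_+-[p^\top h]_+)$. The energy $U_0+\tfrac12\norm p^2$ is conserved by both flow and bounces, so rates stay bounded along paths. The mixture over $\widetilde X$ and the composition with the outer reflection are also right, modulo joint measurability of the fixed-reference kernel in $(\widetilde x,z,p)$, which the paper checks.

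The paper's proof of invariance is pathwise rather than infinitesimal:
\begin{itemize}
\item It writes $S_t$ on each invariant sublevel set $\{U_0+\tfrac12\norm p^2\le R\}$ as the ordered-event series, with survival factor $\exp\{-\int_0^t\lambda\}$ and collision factors $\lambda(z_{t_i-})$.
\item It pairs each skeleton with its momentum-flipped time reversal, which is a Lebesgue-preserving map, and checks that the collision factors agree.
\item It obtains the survival-weight identity by integrating exactly your cancellation $[s]_+-[-s]_+=s=\tfrac{\d}{\d u}V_{\mathrm{res}}(z_u)$.
\end{itemize}
This gives the skew-adjoint identity $(S_t)^*=\mathsf R S_t\mathsf R$, and invariance follows by testing against $1$ and using non-explosion. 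What this buys: no core or domain question arises, nothing beyond $C^{1,1}$ regularity is needed, and the discontinuity of $h/\norm h$ at $h=0$ is harmless. It also yields a statement stronger than invariance, which the contraction analysis uses later (the adjoint row identities in the resolvent lemma).

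Your write-up does not yet close at the step you flag yourself. Dynkin's formula gives $\mu S_tf-\mu f=\int_0^t\mu S_s\mathcal Lf\,\d s$, which is circular. To conclude $\mu S_t=\mu$ from $\int\mathcal Lf\,\d\mu=0$, you need the identity on a core or on the full extended-generator domain. With $\lambda$ only Lipschitz and $R_z$ discontinuous where $h=0$, it is not automatic that $C_c^\infty$ is a core. Your integration by parts does extend to bounded $f$ absolutely continuous along the flow, which is what a Davis-type characterization needs, but this has to be stated.

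Your concrete thinning alternative fails as stated. The terms of the constant-rate Poisson series are not individually stationary: the $n=0$ term is $\e^{-\bar\lambda t}\Phi_t$, and the flow does not preserve $\mu$. Even pairing reversed paths term by term breaks at rejected candidates, where $1-[s]_+/\bar\lambda\ne1-[-s]_+/\bar\lambda$. The balance appears only after candidate times are integrated into survival exponentials. Two fixes work:
\begin{itemize}
\item switch to the ordered-event expansion, as the paper does; or
\item finish with a bounded-perturbation (Duhamel uniqueness) argument for the adjoint semigroup on each sublevel set, where the rate is bounded, $\mu$ lies in the domain of the flow generator, and $\mu$ is annihilated by the full generator.
\end{itemize}
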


The proof in \cref{persistent:sec:exactness} combines harmonic energy conservation with the residual event rate and specular reflection.  It establishes non-explosion and pairs each stationary path with its momentum-reversed path run backward, which yields the conditional invariance stated above.

\subsection{From the Ideal Kernel to an Implementable Update}
\label{sec:persistent-rgo-light-main}

The ideal contraction analysis draws the accelerated-branch reference exactly from \(q_y\), but the implementable update avoids this per-iteration RGO.  Starting gradient descent at \(y\), the conditional mode solve in \cref{sec:flux-implementation-main} returns a point \(\widehat x_y\) satisfying \(\norm{y-\eta\nabla V(\widehat x_y)-\widehat x_y}\le\sqrt{d\eta}\).  The accelerated branch then draws \(G\sim\gamma_d\) and uses the shifted Gaussian reference \(\widetilde X=y-\eta\nabla V(\widehat x_y)+\sqrt\eta G\).  Because this reference depends only on \(y\) and fresh randomness, \cref{prop:persistent-main-exactness} still gives exact conditional invariance; its approximation to \(q_y\) is needed only for the contraction estimate.

The mode solve and event clock use the same capping principle as the flux method, while an approximate RGO is invoked only on a refresh iteration, which occurs with probability \(\rho\).  The appendix couples this implementable chain to an uncapped shifted-reference comparison chain until a mode cap, clock cap, or approximate refresh causes disagreement, and gives the complete error and query-complexity analysis in \cref{persistent:sec:implementation}.

\subsection{Convergence and Oracle Complexity}

We now prove the convergence of the ideal and implementable chains.
Let \(K_{\mathrm{per}}\) be the transition kernel of one complete ideal persistent iteration with \(\zeta_y=q_y\).  In other words, if \((X_0,Y_0,P_0)\sim\nu\), then \((X_m,Y_m,P_m)\sim\nu K_{\mathrm{per}}^m\).  Set \(g=\sqrt{\alpha\eta}\) and \(\ell_g=1+\log(1/g)\).

\begin{theorem}[Ideal persistent contraction]
\label{thm:persistent-contraction}
There exist universal constants \(c_0,C>0\) and a fixed \(c_\rho>0\), independent of all problem parameters, such that the following holds.  If \(0<\eta\le c_0/\beta\), set \(\rho=c_\rho g\sqrt{\ell_g}\) and suppose \(\rho\le1/4\).  Then every integer \(n_{\mathrm{blk}}\ge C\sqrt{\ell_g}/g\) satisfies, for every \(\nu\ll\widehat\Pi_\eta=\Pi_\eta\otimes\gamma_d\) and every nonnegative integer \(J\),
\begin{equation}
 \chi^2(\nu K_{\mathrm{per}}^{J n_{\mathrm{blk}}}
 \|\widehat\Pi_\eta)
 \le\e^{-J}\chi^2(\nu\|\widehat\Pi_\eta).
 \label{eq:persistent-ideal-contraction}
\end{equation}
Consequently, initialize \(X_0\sim\nu_0\), \(Y_0\mid X_0\sim\cN(X_0,\eta I_d)\), and independently \(P_0\sim\gamma_d\). Suppose \(D_2(\nu_0\Vert\pi)\le\Delta\). For \(\eps\le1/2\), one obtains \(\TV(\Law(X_n),\pi)\le\eps\) if
\begin{equation}
 n\ge\tfrac{C{\ell_g}^{1/2}}g
       \left(\Delta+\log\tfrac4\eps\right).
 \label{eq:persistent-iterations}
\end{equation}
\end{theorem}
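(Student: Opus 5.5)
We prove the block contraction \cref{eq:persistent-ideal-contraction} with a hypocoercive, modified-\(L^2\) argument on \(L^2_0(\widehat\Pi_\eta)\). The consequence \cref{eq:persistent-iterations} then follows by a short transfer step.

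\textbf{Decomposition of the kernel.} Write one iteration as \(K_{\mathrm{per}}=R\,\overline G\), where \(R\) is the reflection \(y\mapsto 2x-y\) and \(\overline G=(1-\rho)G+\rho\,\Pi_{q\otimes\gamma}\). Here \(\Pi_{q\otimes\gamma}\) is the conditional projection \(f\mapsto\E_{q_y\otimes\gamma_d}[f]\). By \cref{prop:framework-exactness} and \cref{prop:persistent-main-exactness}, every factor preserves \(\widehat\Pi_\eta\). It therefore suffices to bound \(\norm{K_{\mathrm{per}}^{n_{\mathrm{blk}}}}_{L^2_0\to L^2_0}^2\le\e^{-1}\).

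\textbf{Accounting of variance.} Split \(L^2_0\) into three components:
\begin{itemize}[leftmargin=*,itemsep=0.1em,topsep=0.1em]
\item the \(y\)-only part \(\E[f\mid Y]\);
\item the conditional position part;
\item the momentum part.
\end{itemize}
The RGO branch kills the last two components at rate \(\rho\) per step. The issue is transferring the \(y\)-only variance into them. Reflection does this transfer. By a Brascamp–Lieb/Gaussian-augmentation computation, \(\Pi_\eta\) has conditional covariance \(\Cov_{q_y}\approx\eta I\), while the \(Y\)-marginal has scale \(1/\alpha\). Each reflection therefore moves a fraction of order \(\eta\cdot\) (gradient of \(\E[f\mid Y]\)) into conditional position.

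The quarter-turn is the key step. I first analyze the event-free affine rotation with exact reference \(\widetilde X\sim q_y\). Centered position \(W\) is mapped to momentum, and momentum to \(-W\). Four quarter-turns, interleaved with reflections, therefore compose into a coherent drift of \(Y\) (and \(X\)) of size \(\sqrt\eta\) per step along a persistent direction. On linear test functions of \(y\) this is a discrete harmonic oscillator with frequency \(g=\sqrt{\alpha\eta}\), damped at rate \(\rho\). Choosing \(\rho\asymp g\sqrt{\ell_g}\) gives near-critical damping, so linear modes contract by a constant factor over \(\sim\sqrt{\ell_g}/g\) steps.

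\textbf{Nonlinear and residual terms.} To handle general \(f\), I construct a modified energy
\[
\mathcal E(f)=\norm{f}^2+a\langle \text{momentum part},\nabla_y \E[f\mid Y]\rangle,
\]
with \(a\asymp\sqrt\eta/g\). I then show it decreases by a factor \(1-c g/\sqrt{\ell_g}\) per step, up to errors. The errors come from two sources:
\begin{itemize}[leftmargin=*,itemsep=0.1em,topsep=0.1em]
\item the residual force \(\nabla V(Z)-\nabla V(\widetilde X)\), which is \(O(\beta\sqrt{d\eta})\) in size, but whose effect on the mean direction is \(O(\beta\eta)\) relative because \(\widetilde X\) and \(Z\) share the law \(q_y\);
\item the non-Gaussianity of \(q_y\).
\end{itemize}
Both are absorbed once \(\beta\eta\le c_0\). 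The \(\ell_g\) factor arises from summing logarithmically many scales of the \(Y\)-spectrum between \(\eta\) and \(1/\alpha\), or equivalently from a Poincaré-type bound for \(\Pi_\eta\) combined with the \(Y\)-marginal's log-Sobolev/Poincaré constant. Equivalence of \(\mathcal E\) and \(\norm{\cdot}^2\) up to constants converts the energy decay into the \(L^2\) bound \(\e^{-1}\) per block for a suitable \(C\).

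\textbf{Transfer to total variation.} Since \(P_0\) is independent and \(Y_0\mid X_0\) is exact, \(\chi^2(\nu\|\widehat\Pi_\eta)=\chi^2(\nu_0\|\pi)\le\e^\Delta\). After \(J\) blocks, data processing to the \(X\)-marginal and \(\TV\le\frac12\sqrt{\chi^2}\) require \(J\ge\Delta+2\log(1/(2\eps))\). Multiplying by \(n_{\mathrm{blk}}=\lceil C\sqrt{\ell_g}/g\rceil\) gives \cref{eq:persistent-iterations}.

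\textbf{Main obstacle.} The hard step is the hypocoercive energy estimate with residual events. Specular reflections make \(G\) non-affine, so the quarter-turn's exchange of \(W\) and \(P\) holds only approximately. The error must be controlled in the conditional covariance deficit rather than in a pointwise norm. My first attempt would be a path-level coupling between the evented and event-free flows. Its event rate is bounded in expectation by \(\sqrt\eta\,\E|P^\top h|\lesssim\beta\eta\sqrt{\eta}\cdot\)(scale), which would show the deviation contributes \(O(\beta\eta)\) of the contraction budget.
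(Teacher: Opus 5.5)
Your outer bookkeeping matches the paper's and is fine. Every factor preserves $\widehat\Pi_\eta$, and the refresh gives the dissipation $\norm f^2-\norm{\mathsf Kf}^2\ge\rho\norm{\mathsf Qf}^2$, where $\mathsf P=\E[\,\cdot\mid Y]$ and $\mathsf Q=I-\mathsf P$. The $\chi^2$ claim reduces through the adjoint to $\norm{\mathsf K^{n_{\mathrm{blk}}}}^2_{L^2_0\to L^2_0}\le\e^{-1}$, and your final TV count is correct.

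\textbf{Where it fails.} The gap is the step that produces acceleration: bounding how far the evented quarter-turn is from an exact rotation. You propose a path-level coupling with the event-free flow, and you absorb an $O(\beta\eta)$ relative error ``once $\beta\eta\le c_0$''. This fails for two reasons.

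First, events are not rare. $\norm{\nabla V(Z)-\nabla V(\widetilde X)}$ can be of order $\beta\sqrt{d\eta}$, so the rate $\sqrt\eta[P^\top h]_+$ produces of order $\beta\eta\sqrt d$ events per quarter-period. That is unbounded under the theorem's dimension-free hypothesis, so the two paths do not agree with high probability.

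Second, and more fundamentally, no error that is uniform in the signal can be absorbed. The contraction budget per step is only of order $g/\sqrt{\ell_g}$. In the natural energy or resolvent bookkeeping, a relative transfer error $\epsilon$ is harmless only if $\epsilon\lesssim g$ up to logarithms. Otherwise the best refresh $\rho\asymp\epsilon$ gives a rate of order $g^2/\epsilon$, which for $\epsilon\asymp\beta\eta$ is $1/\kappa$, i.e.\ no acceleration.

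Your ``non-Gaussianity'' term has the same problem in stronger form. The part of the reflected signal that is not affine in the conditional position is not controlled by $\beta\eta$ at all. It is controlled only by how fast $f$ varies on the $\sqrt\eta$ scale.

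\textbf{The missing idea.} You need a transfer estimate in which every error is proportional to the amount of signal being transferred. The paper obtains this in four steps.

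\emph{No path coupling is needed.} Let $w=(x-y)/\sqrt\eta$ have conditional mean $\bar w_y$, and take the affine observables $\ell_{a,y}(w)=a^\top(w-\bar w_y)$ and $m_a(p)=a^\top p$. The generator gives $\mathcal L\ell_{a,y}=m_a$ and $\mathcal Lm_a=-\ell_{a,y}+k_{a,e}$. Duhamel at time $\pi/2$ then expresses the deviation from $(\ell,m)\mapsto(m,-\ell)$ as a time integral of $S_sk_{a,e}$, however many events occur. With an independent exact reference, $\E k_{a,e}^2=11\Var(\sqrt\eta\,a^\top\nabla V(X)\mid Y=y)\lesssim\beta\eta\,a^\top D_ya$. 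Here $D_y=I-\Sigma_y$ and $\Sigma_y=\eta^{-1}\Cov(X\mid Y=y)$. This bound is dimension-free and weighted by the covariance deficit.

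\emph{Affine decomposition of the reflected signal.} Set $\mathsf T=\mathsf P\mathsf U\mathsf P$, with $\mathsf U$ the reflection, and $\Gamma=(I-\mathsf T^2)^{1/2}$. The paper writes $\mathsf Q\mathsf U\mathsf Pf$ as an affine part with slope $a_f=\E[\nabla f(Y_-)\mid Y_+]$ plus a remainder $e_f$. It proves $\norm{e_f}^2\le4\eta\mathcal R_f$, where $\mathcal R_f=\norm{\nabla f}^2-\norm{\mathsf T\nabla f}^2$. It also proves $\mathcal R_f+\E[a_f^\top D_Ya_f]\le C\norm{\nabla(I-\mathsf T)f}\norm{\nabla f}$.

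\emph{Weighted transfer errors.} On the spectral subspace of $\mathsf T$ for $[1/2,1]$, these bounds turn into the estimate $C\norm{\Gamma f}\norm{\Gamma^3 f}$. It covers the three errors in $\mathsf S\mathsf Q\mathsf U\mathsf Pf\approx M_f$, $\mathsf SM_f\approx-\mathsf Q\mathsf U\mathsf Pf$ and $\mathsf UM_f\approx M_f$, where $M_f=2\sqrt\eta\,a_f^\top P$.

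\emph{Factorization.} A dyadic split of the spectrum of $\Gamma^2$ gives $F_j=B_j\Gamma$ with $\norm{B_j}\le C\sqrt{\ell_g}$. This is where $\ell_g$ actually comes from. The factor $\Gamma$ is exactly what cancels the $\gamma_t^2$ lower bound, with $\gamma_t=\sqrt{1-t^2}$, on the determinant $z^3-tz^2-k^2tz+k^2$ (where $k=1-\rho$). That determinant belongs to the three-coordinate system of slow $Y$-mode, position channel and momentum channel.

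\textbf{Resolvent versus your energy.} Because the slow dynamics live on these three coordinates, the paper does not use a one-step energy. It proves $\sup_{\abs z=1}\norm{(z-\mathsf K)^{-1}}\le C\sqrt{\ell_g}/g$ and converts that into geometric decay. An energy route might be possible, but yours is not yet one. With coefficient $\sqrt\eta/g$, the cross term is of size $\norm{\Gamma f}\norm f/g\gg\norm f^2$ on modes with $\Gamma\asymp1$. Also, $\nabla_y\E[f\mid Y]$ is not defined on all of $L^2_0$ without smoothing by $\mathsf T$. So the claimed equivalence with $\norm f^2$ fails, and any corrected energy could only close using the weighted estimate above.
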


Since \(g=\sqrt{\alpha\eta}\), one constant-factor reduction of \(\chi^2\) divergence takes \(\widetilde O(1/\sqrt{\alpha\eta})\) iterations, and repeating this contraction from the warm start gives \cref{eq:persistent-iterations}.  The proof tracks how an observable moves among the auxiliary variable \(Y\), conditional position, and momentum: reflection creates conditional position dependence, the quarter-turn rotates it into momentum and back, and the RGO branch removes the remaining conditional dependence.  The appendix first proves the covariance-weighted approximate quarter-rotation in \cref{persistent:lem:joint-quarter}, then converts this three-coordinate transfer mechanism into the \(L^2\) operator estimate in \cref{persistent:prop:ideal-contraction}; applying that estimate to density ratios yields \cref{eq:persistent-ideal-contraction}.

\begin{theorem}[Implementable persistent sampler]
\label{thm:persistent-finite}
Under \cref{ass:target}, let \(\nu_0\) satisfy
\(D_2(\nu_0\Vert\pi)\le\Delta\), and let
\(\eps\le1/2\).  There exist fixed universal constants \(c,C>0\) so that the following holds.
Define
\(\Lambda_\eps=\Delta+\log(Cd\kappa/\eps)\) and
\(\eta=c/\{\beta(\sqrt{d\Lambda_\eps}+\Lambda_\eps)\}\).
Initialize \(X_0\sim\nu_0\),
\(Y_0\mid X_0\sim\cN(X_0,\eta I_d)\), and independently \(P_0\sim\gamma_d\).  Run the implementable algorithm specified in
\cref{persistent:sec:implementation}; its output law \(\widehat\nu\) satisfies
\(\TV(\widehat\nu,\pi)\le\eps\).  If
\(N_{\mathrm{qry}}\) counts all oracle queries and \(N_{\mathrm{rgo}}\)
counts approximate-RGO calls, then
\begin{align}
 \E N_{\mathrm{qry}}
 &\le\widetilde O(
 \sqrt\kappa\,(\Delta+\log(8/\eps))
 (\sqrt{d\Lambda_\eps}+\Lambda_\eps)^{1/2}),
 \label{eq:persistent-general-complexity}\\
 \E N_{\mathrm{rgo}}
 &\le\widetilde O(\Delta+\log(8/\eps)).
 \nonumber
\end{align}
In particular, a constant-\(D_2\) warm start gives
\(\widetilde O(\sqrt\kappa d^{1/4})\) expected oracle queries and only
\(\operatorname{polylog}(d,\kappa,1/\eps)\) expected RGO calls.
\end{theorem}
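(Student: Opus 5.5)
The plan is to couple the implementable persistent chain to an idealized comparison chain, and to split the total-variation error into a mixing part and a coupling part. The comparison chain is the persistent iteration with the \emph{uncapped} shifted Gaussian reference \(\widetilde X=y-\eta\nabla V(\widehat x_y)+\sqrt\eta G\), exact event rates, and exact RGO refreshes.

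\textbf{Step 1: stationarity of the comparison chain.} By \cref{prop:persistent-main-exactness}, any reference law depending only on \(y\) and fresh randomness gives a kernel preserving \(q_y\otimes\gamma_d\). Composing with the reflection and the refresh branch, exactly as in \cref{prop:framework-exactness}, the comparison chain preserves \(\widehat\Pi_\eta\).

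\textbf{Step 2: contraction with the shifted reference.} I would reprove the contraction of \cref{thm:persistent-contraction} for the shifted reference rather than for \(\zeta_y=q_y\). The approximate quarter-rotation, \cref{persistent:lem:joint-quarter}, needs the reference only through the size of the residual force \(\nabla V(Z_t)-\nabla V(\widetilde X)\) and through the first two moments of the center. Under \(\beta\eta\le c_0\), the conditional law \(q_y\) is \((1/\eta+\alpha)\)-strongly log-concave and nearly Gaussian, and the certificate \(\norm{y-\eta\nabla V(\widehat x_y)-\widehat x_y}\le\sqrt{d\eta}\) places the shifted Gaussian within \(O(\sqrt{d\eta})\) of \(q_y\) at the same scale. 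Hence the residual-force errors are of the same order, and the block contraction survives with modified constants.

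With \(\Delta\) warm-start divergence, the initial \(\chi^2\) is at most \(e^{\Delta}\). Taking \(n\) as in \cref{eq:persistent-iterations} therefore gives \(\TV\le\eps/2\) for the comparison chain.

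\textbf{Step 3: coupling.} Run both chains with common randomness until the first disagreement. There are three possible sources:
\begin{itemize}[leftmargin=*,itemsep=0.1em,topsep=0.1em]
\item the mode solver hits its query cap;
\item the event intensity exceeds \(\bar\lambda\) on \([0,\pi/2]\);
\item an approximate RGO refresh differs from the exact draw.
\end{itemize}
Each is allotted failure probability \(\eps/(6n)\) per iteration, so a union bound over \(n\) iterations keeps the total coupling error at most \(\eps/2\).

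The per-iteration failure probabilities are computed under the stationary law and transferred to the actual law by Cauchy–Schwarz, paying a factor \(e^{\Delta/2}\); this is where \(\Lambda_\eps\) absorbs \(\Delta\) and \(\log(d\kappa/\eps)\). Under stationarity, \(Z_t\) and \(\widetilde X\) are both \(\sqrt{\eta}\)-scale conditionally Gaussian-like. Gaussian concentration then gives \(\sqrt\eta\,\abs{P^\top h}\lesssim \beta\eta\sqrt{\eta}\cdot\sqrt{\eta}^{-1}(\sqrt{d\Lambda_\eps}+\Lambda_\eps)\), so \(\bar\lambda=\widetilde O(1+\beta\eta\sqrt{d\Lambda_\eps})=O(1)\) under the chosen \(\eta\). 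The mode solve is gradient descent on a well-conditioned objective started at \(y\), so it needs \(O(\log(\cdot))\) queries with high probability. For the approximate RGO, a rejection sampler with \(\eta\beta\lesssim 1/\sqrt{d\Lambda_\eps}\) has \(O(1)\) expected cost and TV error polynomially small at logarithmic cost.

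\textbf{Step 4: query count.} The expected queries are \(n\cdot\widetilde O(1)\) plus \(\rho n\) times the RGO cost. Substituting \(g=\sqrt{\alpha\eta}\) and \(\eta\asymp1/\{\beta(\sqrt{d\Lambda_\eps}+\Lambda_\eps)\}\) gives
\[
1/g=\sqrt\kappa\,(\sqrt{d\Lambda_\eps}+\Lambda_\eps)^{1/2},
\]
which yields \cref{eq:persistent-general-complexity}. For RGO calls, \(\rho n\asymp g\sqrt{\ell_g}\cdot\sqrt{\ell_g}(\Delta+\log(4/\eps))/g=\widetilde O(\Delta+\log(8/\eps))\).

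\textbf{Main obstacle.} I expect the hardest step to be Step 2: showing that the contraction proof, written for \(\zeta_y=q_y\), tolerates the shifted Gaussian reference. I would first try to isolate the only properties of \(\zeta_y\) used in \cref{persistent:lem:joint-quarter}, namely a second-moment bound on \(\widetilde X-Z\) and the mean of the frozen center. I would then bound their perturbation via the Wasserstein distance between \(\zeta_y\) and \(q_y\), which strong log-concavity of \(q_y\) controls by \(O(\sqrt{\eta}\cdot\beta\eta\sqrt d)\), small relative to the block tolerance.
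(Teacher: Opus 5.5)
Your Steps 1, 3 and 4 follow the paper's proof of \cref{persistent:thm:finite-compiler}: the uncapped shifted-reference comparison chain, the warm-path Cauchy--Schwarz transfer of stationary cap failures, and the $N\rho$ count of refresh calls. Step 2, which you rightly call the crux, has a genuine gap.

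\textbf{What the contraction argument needs.} It does not need residual-force errors ``of the same order.'' It needs the covariance-weighted forcing bound $\E k_{a,E}^2\le C\beta\eta\,a^\top D_y a$, with $D_y=I-\Sigma_y$, uniformly in $a$ and $y$, as in \cref{persistent:lem:joint-quarter}. Since $D_y$ can be as small as $\alpha\eta/(1+\alpha\eta)\asymp g^2$, this weighting is what yields $F_j=B_j\Gamma$ in \cref{persistent:cor:factorized-errors}. In other words, the transfer errors vanish on the slow modes, where $\Gamma\approx g$. An additive error $E\norm a^2$ would give only $F_j^*F_j\preceq C\ell_g\Gamma^2+CE$. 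The unit-circle resolvent bound of order $\sqrt{\ell_g}/g$ then survives only if $E\lesssim g^2\ell_g=\alpha\eta\,\ell_g$.

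\textbf{Why a Wasserstein perturbation fails.} Work in normalized coordinates, with residual gradient $h=\nabla r$ and $L=\beta\eta$. Since $h$ is $L$-Lipschitz, a $W_1$ bound of order $L\sqrt d$ lets $\E\,a^\top h$ move by up to $L^2\sqrt d\,\norm a$, so $E\sim L^4d$. This exceeds $\alpha\eta$ by the factor $\kappa L^3d$. At the theorem's step size that factor is of order $\kappa/(\sqrt d\,\Lambda_\eps^{3/2})$, which is unbounded once $\kappa\gg\sqrt d$ up to logarithms. The variance term is worse. The bound $\Var(a^\top h)\le CL\,a^\top D_ya$ comes from $\E[a^\top\nabla^2 r\,a]\le C\,a^\top D_ya$ via the Stein identity. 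But $\nabla^2r=\eta\nabla^2V$ is merely bounded and measurable, so no transport distance carries this expectation from $q_y$ to the Gaussian reference. A second-moment bound on $\widetilde X-Z$ gives only $L^2d\norm a^2$.

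\textbf{The missing idea.} What is needed is a multiplicative, density-ratio comparison, which is what \cref{persistent:lem:gaussian-comparison,persistent:lem:shifted-transfer} supply. Write the shifted reference as $\gamma_\delta=\cN(\delta,I_d)$ with $\norm\delta\le L\sqrt d$, and set $m=\alpha\eta$.
\begin{itemize}
\item One proves $\norm{\d\gamma_\delta/\d q}_{L^\tau(q)}\le\e^{C\tau(L^2d+\norm\delta^2)}$ with $\tau=2+\log(L/m)\asymp\log\kappa$.
\item H\"older, together with $0\le a^\top\nabla^2r\,a\le L\norm a^2$ and $\E_q[a^\top\nabla^2r\,a]\ge m\norm a^2$, gives $\E_{\gamma_\delta}[a^\top\nabla^2r\,a]\le C\,\E_q[a^\top\nabla^2r\,a]\le C\,a^\top Da$. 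The choice of $\tau$ is what makes $(L/m)^{1/\tau}$ universal.
\item Gaussian Poincar\'e converts this into $\Var_{\gamma_\delta}(a^\top h)\le CL\,a^\top Da$.
\item The mean discrepancy is handled by Cauchy--Schwarz under $q$: $\abs{\E_{\gamma_\delta}a^\top h-\E_q a^\top h}^2\le\chi^2(\gamma_\delta\Vert q)\Var_q(a^\top h)\le C(L^2d+\norm\delta^2)\,L\,a^\top Da$. This is multiplicative in $a^\top Da$.
\end{itemize}
This is also why $\Lambda_\eps$ must contain $\log\kappa$: you have to verify $\tau L^2d\lesssim1$, not merely $L\sqrt d\lesssim1$.

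\textbf{A smaller point in Step 3.} Do not take the clock failure event to be ``the intensity exceeds $\bar\lambda$ somewhere on $[0,\pi/2]$.'' Fixed-time stationary tails do not control that supremum, and energy conservation only bounds it by order $\beta\eta\,d$. Instead, drive the exact and capped clocks by a common Poisson random measure. Then bound the expected excess hazard $\int_0^{\pi/2}\E(\lambda_t-\bar\lambda)_+\,\d t$ using invariance at each fixed $t$, as in \cref{persistent:lem:capped-clock}.
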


The ideal horizon is \(\widetilde O(g^{-1}\{\Delta+\log(1/\eps)\})\), while refreshes occur with probability \(\rho=\widetilde O(g)\); their product gives the polylogarithmic RGO count.  Every iteration pays for the capped mode solve; accelerated iterations then use the shifted Gaussian reference and capped event clock, whereas refresh iterations invoke the approximate RGO.  At the step size in the theorem these costs are \(\widetilde O(1)\) per iteration, and summing them over the ideal horizon gives the stated total query bound.  The full error and query-complexity analysis is in \cref{persistent:sec:implementation}; an optional quadratic-baseline refinement is stated and proved in \cref{cor:persistent-near-gaussian}.

\section{Removing the Supplied Warm Start}
\label{sec:warm-start-removal}

The rates in \cref{sec:flux,sec:persistent} assume a constant-\(D_2\) warm start. Our cold-start pipeline has three stages: localization constructs a law with finite \(D_2\) divergence, a coarse proximal chain reduces this divergence to a constant, and either reflected sampler performs the final high-accuracy sampling.  When a finite bound \(D_2(\nu_0\Vert\pi)\le B\) is available, we skip localization and choose between direct reflected sampling and the coarse warm-up route. The resulting bounds are end-to-end: all stages used are included in the query count.
Warm-start removal preserves the high-accuracy character of the reflected final stage: the overall dependence on \(1/\varepsilon\) remains polylogarithmic.

Write the coarse step size as \(u=r/\beta\), where \(r\in[1,\kappa]\) interpolates between the step sizes \(1/\beta\) and \(1/\alpha\), and let the \emph{ideal coarse proximal chain} be the Gibbs sampler in \cref{eq:rgo} with step size \(u\).
The R\'enyi contraction of \citet[Theorem~3]{chen2022improved} makes the ideal coarse chain constant-\(D_2\)-warm after \(O((\kappa/r)\log(\e B))\) iterations.
The RGO has condition number \((\beta+\nicefrac{1}{u})/(\alpha+\nicefrac{1}{u})\le1+r\).  Increasing \(r\) therefore shortens the ideal coarse chain but makes each conditional draw harder.  The algorithm replaces those exact draws by the cheaper of two implementable posterior samplers: a regular interface based on smoothed Picard HMC \citep{chen2026smoothed} followed by reflected finalization, or a harmonic-base interface based on exact diffusion \citep{chen2026exact,chen2026high}.
For fixed \(r\), multiplying the number of coarse iterations by the
cost of the regular conditional interface gives the cost of the regular-interface route:
\[
 \widetilde O(
 \kappa\{r^{-1}+r^{1/6}d^{1/6}+r^{-1/2}d^{1/4}\}
 \log(\e B)).
\]
Optimizing over \(r\in[1,\kappa]\) and considering the relative sizes of \(\kappa\) and \(d\) yields the best polynomial cost factor below, which we denote by \(\Psi(\kappa,d)\):
\begin{equation*}
 \Psi(\kappa,d)\asymp
 \begin{cases}
  \kappa^{2/3}d^{1/3},&\text{when $d\lesssim\kappa^{16/7}$, with $r\asymp\kappa$},\\
  \kappa d^{3/16},&\text{when $\kappa^{16/7}\lesssim d\lesssim\kappa^8$, with $r\asymp d^{1/8}$},\\
  \sqrt\kappa d^{1/4},&\text{when $d\gtrsim\kappa^8$, with $r\asymp\kappa$},
 \end{cases}
\end{equation*}
Thus the optimized coarse route costs \(\widetilde O(\Psi(\kappa,d)\log(\e B))\).  This bound includes the implementable warm-up and the final
reflected sampler, whose lower-order cost is absorbed in every regime.  It
does not include the one-time localization cost needed when the input is only
a point.

\begin{theorem}[Warm-start removal]
\label{thm:warm-removal-distributional}
\label{thm:warm-removal-cold}
\emph{(i) Finite-divergence input.}
Suppose a sample from an initial law \(\nu_0\) is available and the known
number \(B\ge1\) satisfies \(D_2(\nu_0\Vert\pi)\le B\).  For every
\(0<\eps<1/4\), the wrapper using either reflected sampler as its final stage
returns an output law \(\widehat\nu\) with
\(\TV(\widehat\nu,\pi)\le\eps\).  If \(N_{\mathrm{qry}}\) is the total number
of oracle queries, then
\begin{equation}
 \E N_{\mathrm{qry}}
 \le\widetilde O(
 \min\{
  \sqrt\kappa B(\sqrt{dB}+B)^{1/2},
  \Psi(\kappa,d)\log(\e B)
 \}).
 \label{eq:warm-removal-distributional-cost}
\end{equation}
If \(B\) is polynomially bounded in \(d,\kappa,1/\eps\), this is at most
\(\widetilde O(\kappa^{6/7}d^{1/4})\).

\smallskip
\emph{(ii) Supplied point.}
Given \(x_0\in\R^d\) and \(0<\eps<1/4\), let \(x_*\) be the minimizer of \(V\), define
\(\log_+(t)=\max\{1,\log t\}\), and set
\(Q_{\mathrm{loc}}=\sqrt\kappa\log_+[\kappa\{2+\sqrt\beta
\norm{x_0-x_*}\}]\).  An implementable algorithm returns an output law
\(\widehat\nu\) with \(\TV(\widehat\nu,\pi)\le\eps\) and
\begin{equation}
 \E N_{\mathrm{qry}}
 \le\widetilde O(Q_{\mathrm{loc}}+\Psi(\kappa,d))
 \le\widetilde O(
 Q_{\mathrm{loc}}+\kappa^{6/7}d^{1/4}).
 \label{eq:warm-removal-cold-cost}
\end{equation}
\end{theorem}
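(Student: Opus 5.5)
The plan is to build a wrapper that chooses between two routes using the known bound \(B\), and to bound each route by results already stated. \emph{Direct route:} run either reflected sampler from \(\nu_0\) with \(\Delta=B\). By \cref{flux:thm:complexity} or \cref{thm:persistent-finite} this gives \(\TV\le\eps\) at cost \(\widetilde O(\sqrt\kappa B(\sqrt{dB}+B)^{1/2})\), since \(\Lambda_\eps\asymp B+\log(d\kappa/\eps)\).

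\emph{Coarse route:} fix \(u=r/\beta\) and run the ideal coarse proximal chain for \(n_c=O((\kappa/r)\log(\e B))\) iterations. By the R\'enyi contraction of \citet[Theorem~3]{chen2022improved}, the output law \(\mu\) has \(D_2(\mu\Vert\pi)\le 1\). Each exact RGO draw from \(q_y\), whose condition number is at most \(1+r\), is replaced by the cheaper of the two posterior-sampler interfaces, run to accuracy \(\eps_c:=c'/n_c\). We then couple the implementable chain with the ideal one. A union bound gives \(\TV(\widehat\mu,\mu)\le c'\), but a constant-TV perturbation does not control \(D_2\), so this step needs more care. I would instead require each interface to be \emph{R\'enyi-accurate}, and would first try the weak triangle inequality for R\'enyi divergence to keep \(D_2(\widehat\mu\Vert\pi)\le 2\). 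If the interfaces only give TV guarantees, the fallback is to run the final reflected sampler anyway. It is then coupled with its copy started from \(\mu\), and the final TV error is at most \(\TV(\widehat\mu,\mu)+\eps/2\), with \(\TV(\widehat\mu,\mu)\le\eps/2\) ensured by taking \(\eps_c=\eps/(2n_c)\). This only costs \(\mathrm{polylog}(1/\eps)\). I expect this to be the route actually used, since data processing for the Markov kernel gives \(\TV(\widehat\mu K^n,\mu K^n)\le\TV(\widehat\mu,\mu)\).

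\emph{Cost accounting:} use the regular interface, whose per-draw cost at condition number \(1+r\) I take to be \(\widetilde O(1+r^{7/6}d^{1/6}+r^{1/2}d^{1/4})\) (smoothed Picard HMC followed by reflected finalization at step \(\asymp 1/(\beta\sqrt d)\)). Multiplying by \(n_c\) gives the displayed \(\widetilde O(\kappa\{r^{-1}+r^{1/6}d^{1/6}+r^{-1/2}d^{1/4}\}\log(\e B))\). Minimizing over \(r\in[1,\kappa]\) gives the three regimes of \(\Psi\):
\begin{itemize}
\item balancing \(r^{1/6}d^{1/6}\) against \(r^{-1/2}d^{1/4}\) gives \(r=d^{1/8}\) and cost \(\kappa d^{3/16}\);
\item the boundary value \(r=\kappa\) gives \(\kappa^{2/3}d^{1/3}\) or \(\sqrt\kappa d^{1/4}\);
\item comparing exponents gives the thresholds \(d\asymp\kappa^{16/7}\) and \(d\asymp\kappa^8\).
\end{itemize}
The final reflected sampler from a constant-\(D_2\) start costs \(\widetilde O(\sqrt\kappa d^{1/4})\), which is dominated by \(\Psi\) in every regime. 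The wrapper takes the minimum of the two route costs, which are computable from \(B,\kappa,d\). This gives \cref{eq:warm-removal-distributional-cost}.

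For the \(\kappa^{6/7}d^{1/4}\) claim, with \(B\) polynomial we have \(\log(\e B)=\widetilde O(1)\), so it suffices to show \(\Psi(\kappa,d)\lesssim\kappa^{6/7}d^{1/4}\) in each regime:
\begin{itemize}
\item \(\kappa^{2/3}d^{1/3}\le\kappa^{6/7}d^{1/4}\) iff \(d^{1/12}\le\kappa^{4/21}\), i.e.\ \(d\le\kappa^{16/7}\);
\item \(\kappa d^{3/16}\le\kappa^{6/7}d^{1/4}\) iff \(\kappa^{1/7}\le d^{1/16}\), i.e.\ \(d\ge\kappa^{16/7}\);
\item \(\sqrt\kappa d^{1/4}\le\kappa^{6/7}d^{1/4}\) always holds.
\end{itemize}

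For part (ii), first run accelerated gradient descent with \(O(\sqrt\kappa\log_+[\cdots])\) queries to get \(\hat x\) with \(\norm{\hat x-x_*}\le 1/\sqrt\beta\). Then take \(\nu_0=\cN(\hat x,\beta^{-1}I)\). A standard Gaussian comparison using strong convexity and smoothness gives \(D_2(\nu_0\Vert\pi)\le B=O(d\log\kappa)\), which is polynomial. Applying part (i) with the coarse route then gives \cref{eq:warm-removal-cold-cost}.

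The main obstacle is the cost of the per-draw interface and its error certificate: establishing the \(r\)-dependent query bound, and making sure the approximate coarse chain's error enters only through TV at polylogarithmic cost.
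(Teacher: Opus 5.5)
Your overall architecture matches the paper: a direct route with budget $B$, and a coarse proximal route at $u=r/\beta$ contracted to constant $D_2$. The coarse path is coupled row by row in TV to the ideal one, and the result is transferred through one common final kernel by data processing. The gap is in the cost accounting for the small-dimensional regime.

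You mention two posterior interfaces but account only for the regular one. With its per-draw cost $\widetilde O(1+r^{7/6}d^{1/6}+r^{1/2}d^{1/4})$, the boundary choice $r=\kappa$ (with $O(\log(\e B))$ rows) gives $\kappa^{7/6}d^{1/6}+\sqrt\kappa\,d^{1/4}$, not $\kappa^{2/3}d^{1/3}$. Over all $r\in[1,\kappa]$, the regular route's optimum for $d\le\kappa^8$ is $r=d^{1/8}$ with cost $\kappa d^{3/16}$. This exceeds both $\kappa^{2/3}d^{1/3}$ and $\kappa^{6/7}d^{1/4}$ exactly when $d<\kappa^{16/7}$; at $d=1$ it is of order $\kappa$. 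So the first branch of $\Psi$ is never obtained. The $\kappa^{6/7}d^{1/4}$ envelope, which must hold when $B$ is polynomially large and the direct route does not help, is therefore unproved for $d<\kappa^{16/7}$. Your envelope check verifies an inequality for a branch your accounting never reaches.

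The missing ingredient is the harmonic-base interface. Write $\Phi_y(x)=V(x)+\norm{x-y}^2/(2u)$, $m_u=\alpha+u^{-1}$, $L_u=\beta+u^{-1}$ and $K_u=L_u/m_u$. Certify a point $z_y$ with $\norm{\nabla\Phi_y(z_y)}\le m_u/\sqrt{L_u}$; then $\cN(z_y,(2L_u)^{-1}I_d)$ has R\'enyi divergence $O(d\log(\e K_u))$ from $q_y$. Running exact-ULD/FORS from this Gaussian costs $\widetilde O(1+K_u^{2/3}d^{1/3})$ per draw. At $r=\kappa$ one has $K_u=(\kappa+1)/2$ and only $O(\log(\e B))$ rows, giving $\widetilde O(\kappa^{2/3}d^{1/3}\log(\e B))$. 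The threshold $d\asymp\kappa^{16/7}$ you quote is precisely where this crosses $\kappa d^{3/16}$, so your thresholds already implicitly assume this interface.

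A second, smaller omission concerns queries spent after the coupling breaks. The per-draw and final-stage costs you multiply are guaranteed along the exact path, where centers come from $D_2$-warm laws and the finalizer's input is warm. Once the algorithmic path disagrees with the ideal one, its later centers and the finalizer's input $\widehat\mu$ no longer have these laws, so your argument does not bound the work done there. The paper closes this by making every subroutine a total Markov kernel with a deterministic work limit. Geometric rejection counts, Poisson event counts, and mode solves (whose iteration count is logarithmic in a center gradient with controlled second moment along the warm path) are all truncated at logarithmic overhead. The truncation probability is charged to the TV budget, so the query bound holds pathwise regardless of separation.

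Your part (ii) is fine: even with covariance $\beta^{-1}I_d$, computing $\int\nu_0^2/\pi$ directly gives $D_2(\nu_0\Vert\pi)\le1+\tfrac d2\log\kappa$.
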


The first term in \cref{eq:warm-removal-distributional-cost} skips the
coarse stage and applies a reflected sampler directly with the full divergence
budget \(B\).  The second term uses the coarse route just described: warm the
input using step size \(u=r/\beta\), then apply a constant-warm reflected finalizer.
The ideal coarse chain is used only for analysis and is not executed.  Its endpoint is constant-\(D_2\)-warm, whereas the implementable warm-up replaces each exact conditional draw by an implementable posterior sampler.  The two endpoints are close in total variation.  Applying the same capped reflected finalizer to both cannot increase this discrepancy, so the convergence guarantee for the ideal coarse endpoint transfers to the algorithmic output.

For a supplied point, accelerated optimization stops at the observable
certificate \(\norm{\nabla V(x_{\mathrm{ref}})}\le\alpha/\sqrt\beta\).
One draw from \(\cN(x_{\mathrm{ref}},(2\beta)^{-1}I_d)\) then has
\(D_2\le1+\tfrac d2\log(2\kappa)\), so the finite-divergence result applies.
The algorithm does not use \(x_*\); it appears only in the localization
bound.  The regime optimization, error handoff, and localization proof are
given in \cref{app:warm-start-removal}.

\section{Conclusion}
\label{sec:conclusion}

Reflected proximal dynamics accelerate the outer chain by combining exact
Gaussian reflection, conditional kinetic transport, and controlled
refreshment.  From a constant-\(\chi^2\) warm start, both realizations use
\(\widetilde O(\sqrt\kappa d^{1/4})\) expected oracle queries, while the
persistent construction makes only a polylogarithmic expected number of
approximate-RGO calls.  Including localization and warm-up yields an end-to-end expected oracle complexity of
\(\widetilde O(\kappa^{6/7}d^{1/4})\).
Both the warm-start and end-to-end guarantees have only polylogarithmic dependence on \(1/\varepsilon\), arising from an ideal mixing horizon of order \(\log(1/\varepsilon)\).
Extending the analysis to more general settings, such as weakening strong log-concavity of the target to functional inequalities (e.g., log-Sobolev and Poincaré inequalities), is a natural direction beyond the current setting explored in this paper, and we leave this as future work.
%
%
%
%
%
%
%

\subsection*{AI use statement}

The idea of accelerating the proximal sampler using underdamped dynamics originated with the authors.  We used generative AI tools to brainstorm implementation details and algorithmic variants, assist in drafting and revising mathematical proofs. AI-generated suggestions were treated as unverified drafts rather than as authoritative sources.  The authors reviewed and revised all AI-assisted material, checked the final theorem statements and proofs, and verified the citations used in the paper against the referenced works.
We take responsibility for the final content of this work, including all text, claims, and artifacts roduced with the aid of generative AI.

\bibliography{iclr2027_conference}
\bibliographystyle{iclr2027_conference}

\clearpage
\appendix

\crefalias{section}{appendix}
\crefalias{subsection}{appendix}
\crefalias{subsubsection}{appendix}
\clearpage
\appendixcontents
\clearpage

\section{Related Work and Detailed Comparison}
\label{sec:related-work}

\paragraph{Proximal and kinetic samplers.}
Earlier proximal Langevin methods use Moreau regularization or proximal
mappings to discretize overdamped Langevin dynamics
\citep{durmus2018efficient,salim2020primal}.  The Gaussian proximal sampler
instead separates global contraction from sampling a localized conditional
\citep{lee2021structured,chen2022improved}.
Approximate-RGO algorithms improve the local implementation
\citep{fan2023improved,chen2026high}; our methods instead
modify the outer dynamics to retain information across conditional
updates.  Kinetic Langevin samplers obtain acceleration by retaining
velocity, with specialized discretizations needed to control the resulting
finite-step error
\citep{cheng2018underdamped,dalalyan2020sampling,ma2021is,shen2019randomized,zhang2023improved}.
We use the same principle inside each localized conditional problem, while
an exact Gaussian reflection transports memory between successive
conditionals.

\paragraph{Event-driven kinetic updates.}
The bouncy particle sampler corrects deterministic motion through velocity
events triggered by the positive directional derivative of the potential
\citep{bouchardcote2018bouncy}.  Event-driven Hamiltonian samplers and the
Boomerang sampler instead integrate an exactly solvable reference motion and
use such events to correct the residual force
\citep{vanetti2017piecewise,bierkens2020boomerang}.  Gaussian
flux-resampling of the force-aligned velocity component appears in the
framework of \citet{vanetti2017piecewise} and Forward Event-Chain Monte Carlo
\citep{michel2020forward}; related samplers combine Langevin motion with
randomized velocity events
\citep{monmarche2022exact,gouraud2025velocityjump}.  Our flux half-turn places
these ingredients inside a reflected proximal iteration and selects the
duration and damping to obtain the weighted reversal estimate used by the
outer-chain analysis.

\paragraph{End-to-end complexity.}
\Cref{tab:proximal-comparison} compares representative high-accuracy
results under smooth strong log-concavity.  We report the error metric in
which each paper states its guarantee and the initialization that its bound
requires.  The exact-ULD/FORS row is not a proximal method; we include it as
the closest global benchmark with a different tradeoff between \(d\) and
\(\kappa\).

\begin{table}[t]
\centering
\caption{Representative high-accuracy bounds for smooth, strongly
log-concave sampling.  The table reports oracle-query counts in expectation
unless marked otherwise.  The notation \(\widetilde O\) suppresses
polylogarithmic factors in \(d,\kappa,1/\eps\).}
\label{tab:proximal-comparison}

\setlength{\tabcolsep}{3pt}
\renewcommand{\arraystretch}{1.18}
\begin{tabularx}{\textwidth}{@{}
  >{\raggedright\arraybackslash}p{0.17\textwidth}
  >{\raggedright\arraybackslash}p{0.25\textwidth}
  >{\raggedright\arraybackslash}p{0.18\textwidth}
  >{\raggedright\arraybackslash}p{0.13\textwidth}
  >{\raggedleft\arraybackslash}X@{}}
\toprule
Paper & Method & Initialization & Metric & Complexity \\
\midrule
\citet{chen2022improved}
& Gaussian proximal sampler with rejection RGO
& polynomial KL/R\'enyi divergence
& KL, \(\chi^2\), R\'enyi
& \(\widetilde O(\kappa d)\) \\
\midrule

\citet{fan2023improved}
& approximate-rejection proximal sampler
& Gaussian cold start
& TV, \(W_2\), \(\chi^2\)
& \(\widetilde O(\kappa d^{1/2})\) \\
\midrule

\citet{altschuler2024faster}
& ULMC warm-up with MALA/proximal reduction
& known or located minimizer; no warm law
& TV, KL, \(\chi^2\), \(W_2\)
& \(\widetilde O(\kappa d^{1/2})\) w.h.p. \\
\midrule

\citet{mitra2025fast}
& proximal sampler with a basic rejection compiler
& polynomial \(\Phi\)-divergence
& \(\Phi\)-divergence
& \(\widetilde O(\kappa d)\) \\
\midrule

\citet{chen2026high}
& proximal sampler with a FORS conditional sampler
& finite-R\'enyi Gaussian start
& fixed-order R\'enyi
& \(\widetilde O(\kappa d^{1/2})\) \\
\midrule

\citet{chen2026exact}
& exact ULD/FORS
& Gaussian with \(D_{2q}=O(d\log\kappa)\), fixed \(q\)
& fixed-order R\'enyi
& \(\widetilde O(\kappa^{2/3}d^{1/3})\) \\
\midrule

\citet{chen2026smoothed,chen2026accelerated}
& SPHMC warm-up with a PBPS finalizer
& no supplied warm distribution
& TV
& \(\widetilde O(\kappa^{7/6}d^{1/6}
                 +\sqrt\kappa d^{1/4})\) \\
\midrule

This work
& reflected proximal dynamics with optimized initialization
& no supplied warm distribution
& TV
& \(\widetilde O(\kappa^{6/7}d^{1/4})\) \\
\bottomrule
\end{tabularx}
\end{table}

\paragraph{Extensions of proximal sampling.}
The two-variable Gaussian augmentation has also been extended to bipartite
networks, yielding distributed Gibbs samplers with non-asymptotic guarantees
for structured log-concave targets \citep{yuan2023gibbs,yuan2025clientonly}.
The proximal viewpoint has also been developed beyond smooth,
strongly log-concave Euclidean targets.  Alternating-sampling and
proximal-oracle implementations address nonsmooth, weakly smooth, and
composite potentials
\citep{liang2022proximal,liang2023a,liang2026proximal}.  Proximal sampler with adaptive stepsize is explored in \citet{yuan2025proximal}. Non-Euclidean variants replace the
Gaussian geometry by log-Laplace regularization
\citep{gopi2023algorithmic}, while related algorithmic-diffusion
constructions treat uniform sampling from convex bodies
\citep{kook2024in}.  For heavy-tailed targets, Gaussian proximal oracles
can face an intrinsic high-accuracy barrier, motivating stable-oracle
analogues \citep{he2024separation}.  Recent strong-data-processing analyses
also establish convergence of the proximal sampler in relative Fisher
information \citep{wibisono2026mixing}.  These works broaden the target,
geometry, oracle, or convergence criterion; our methods instead retain the
smooth strongly log-concave setting and modify the outer dynamics to carry
directional memory.

The closest comparison is concurrent PBPS
\citep{chen2026accelerated}.  PBPS uses the same augmentation and
outer reflection, but its conditional transition is a harmonic BPS
half-turn with specular bounces and a fresh independent reference from
\(q_y\) at every iteration.  PBPS and both of our methods have
\(\widetilde O(\sqrt\kappa d^{1/4})\) warm-start oracle-query complexity.
The flux method replaces specular reflection by Gaussian-flux resampling
and adds underdamped forcing.  The persistent method instead uses a
quarter-turn, carries momentum between ordinary iterations, and calls a
conditional-resampling subroutine only on rare refresh iterations. 

\begin{table}[t]
\centering
\caption{Comparison of reflected proximal samplers from a
constant-\(D_2\) warm start.}
\label{tab:reflected-proximal-comparison}
\renewcommand{\arraystretch}{1.12}
\begin{tabularx}{\textwidth}{@{}>{\raggedright\arraybackslash}p{0.20\textwidth}YYY@{}}
\toprule
& PBPS & Flux half-turn & Persistent quarter-turn \\
\midrule
Conditional motion
& harmonic BPS
& underdamped, flux-resampled
& fixed-reference bouncy \\
\midrule
Momentum across outer steps
& discarded & discarded & retained \\
\midrule
Independent \(q_y\) draw
& every step & every step & refresh steps only \\
\midrule
Oracle queries
& \(\widetilde O(\sqrt\kappa d^{1/4})\)
& \(\widetilde O(\sqrt\kappa d^{1/4})\)
& \(\widetilde O(\sqrt\kappa d^{1/4})\) \\
\midrule
Conditional-resampling calls
& \(\widetilde O(\sqrt\kappa d^{1/4})\)
& \(\widetilde O(\sqrt\kappa d^{1/4})\)
& \(\operatorname{polylog}(d,\kappa,1/\eps)\) \\
\bottomrule
\end{tabularx}
\end{table}

\section{Proofs for the Flux-Resampled Half-Turn}
\label{app:flux}

This appendix proves
\cref{prop:flux-main-reversal,prop:flux-main-antipode,flux:thm:ideal,flux:thm:complexity}.
The proof follows the algorithm:

\begin{enumerate}[leftmargin=*,label=\arabic*.]
\item establish the geometry of the localized conditional law and show
      that the event-driven half-turn preserves it;
\item prove that one conditional half-turn approximately reverses every
      affine position coordinate that is relevant to the outer chain;
\item combine this estimate with the outer reflection and occasional
      conditional refreshment to obtain accelerated \(L^2\) contraction;
\item replace exact conditional draws and unbounded event clocks by finite
      implementations, then account for total-variation error and oracle
      queries.
\end{enumerate}

Standard ingredients are cited rather than reproved: Brascamp--Lieb and
Poincar\'e inequalities for strongly log-concave measures, skew balance for
velocity-jump processes, and the existing approximate-RGO implementation.
The new parts are the covariance-weighted half-turn estimate and the
modified-energy argument that turns it into contraction of the complete
reflected proximal chain.

Throughout this appendix, set
\[
 a=\alpha\eta,\qquad s=\beta\eta,\qquad
 g_y(w)=\sqrt\eta\,\nabla V(y+\sqrt\eta w).
\]
Thus \(0<a\le s\).  Under the change of variables
\(w=(x-y)/\sqrt\eta\), the conditional density \(q_y\) becomes
\(q_y^w(w)\propto\exp\{-\norm w^2/2-V(y+\sqrt\eta w)\}\).
We write \(I\) for the identity on the space at hand, use \(I_d\) when its
matrix dimension is worth displaying, and set \(\gamma_d=\cN(0,I_d)\).
Unless a measure is displayed explicitly,
inner products and norms below are taken in the stationary \(L^2\) space
of the objects under discussion; gradients of functions of \(y\) are with
respect to \(y\).
Operators act on observables; probability densities therefore evolve by
their adjoints.  The few operators used in the contraction proof are:

\begin{center}
\begin{tabular}{@{}cl@{}}
\toprule
symbol & meaning\\
\midrule
\(\mathsf P\) & conditional expectation given \(Y\)\\
\(\mathsf Q=I-\mathsf P\) & remaining conditional variation\\
\(\mathsf U\) & outer reflection \((x,y)\mapsto(x,2x-y)\)\\
\(\mathsf F\) & conditional flux half-turn\\
\(\mathsf K\) & one complete ideal iteration\\
\(\mathsf T=\mathsf P\mathsf U\mathsf P\) & reflection seen on \(Y\)-only observables\\
\(\Gamma=(I-\mathsf T^2)^{1/2}\) & amount transferred out of the \(Y\)-only part\\
\bottomrule
\end{tabular}
\end{center}

\subsection{Conditional Geometry and Validity of the Half-Turn}
\label{flux:app:validity}

We first collect the conditional identities used throughout the proof and
then verify that the event-driven motion is an exact conditional update.
Neither step requires a classical Hessian: the identities hold in the weak
sense for the strongly convex, smooth potential in \cref{ass:target}.

Let \(W=(X-y)/\sqrt\eta\) under \(q_y\), and denote its mean and
covariance by \(\bar w_y\) and \(\Sigma_y\).  The following lemma says
that the departure of this conditional law from a standard Gaussian is
measured by the same matrix \(I-\Sigma_y\) that controls the curvature of
the \(Y\)-marginal.

\begin{lemma}[Localized conditional geometry]
\label{flux:lem:conditional-geometry}
For every \(y\),
\[
 \frac1{1+s}I\preceq\Sigma_y\preceq\frac1{1+a}I,
 \qquad
 \frac a{1+a}I\preceq I-\Sigma_y\preceq\frac s{1+s}I.
\]
If \(\Pi_\eta^Y(y)\propto \e^{-U_\eta(y)}\), then
\[
 \nabla U_\eta(y)=\eta^{-1}\{y-\E[X\mid Y=y]\},\qquad
 \nabla^2U_\eta(y)=\eta^{-1}(I-\Sigma_y),
\]
and hence
\(\alpha(1+a)^{-1}I\preceq\nabla^2U_\eta
\preceq\beta(1+s)^{-1}I\).
Moreover, when \(s<1\),
\begin{equation}
 \E g_y(W)=-\bar w_y,\qquad
 \Cov(W,g_y(W))=I-\Sigma_y,\qquad
 \Cov(g_y(W))\preceq\frac{s}{1-s}(I-\Sigma_y).
 \label{flux:eq:conditional-identities}
\end{equation}
\end{lemma}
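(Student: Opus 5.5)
The plan is to work throughout in the rescaled variable \(w=(x-y)/\sqrt\eta\) and let \(\phi_y(w):=V(y+\sqrt\eta w)\). Then \(q_y^w\propto \e^{-\norm w^2/2-\phi_y(w)}\), and \(\nabla\phi_y=g_y\). Since \(\nabla^2\phi_y\) lies between \(aI\) and \(sI\) (in the weak sense), the potential \(\norm w^2/2+\phi_y\) is \((1+a)\)-strongly convex and \((1+s)\)-smooth.

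\textbf{Step 1 (covariance bounds).} The upper bound \(\Sigma_y\preceq (1+a)^{-1}I\) is Brascamp--Lieb applied to linear test functions, using the \((1+a)\)-strong convexity. For the lower bound \(\Sigma_y\succeq(1+s)^{-1}I\) I would use the Cramér--Rao / integration-by-parts argument. Write \(\psi=\norm w^2/2+\phi_y\). Then for unit \(v\),
\[
 1=\E[\ip{v}{W}\ip{v}{\nabla\psi(W)}]=\Cov(\ip vW,\ip v{\nabla\psi}),
\]
and Cauchy--Schwarz together with \(\E\norm{\ip{v}{\nabla\psi}}^2=\E\ip v{\nabla^2\psi\, v}\le 1+s\) gives \(v^\top\Sigma_y v\ge (1+s)^{-1}\). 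For nonsmooth Hessians I would mollify \(V\) and pass to the limit. The bounds on \(I-\Sigma_y\) then follow immediately.

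\textbf{Step 2 (marginal identities).} \(\Pi_\eta^Y(y)\propto\int \e^{-V(x)-\norm{x-y}^2/(2\eta)}\d x\). Differentiating under the integral (dominated convergence, using the Gaussian tail) gives \(\nabla U_\eta(y)=\eta^{-1}(y-\E[X\mid Y=y])\). Differentiating again gives \(\nabla^2U_\eta=\eta^{-1}I-\eta^{-2}\Cov(X\mid y)=\eta^{-1}(I-\Sigma_y)\), because \(\Cov(X\mid y)=\eta\Sigma_y\). The Hessian bounds then follow from Step 1, since \(\eta^{-1}a/(1+a)=\alpha/(1+a)\) and \(\eta^{-1}s/(1+s)=\beta/(1+s)\).

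\textbf{Step 3 (identities \eqref{flux:eq:conditional-identities}).} Gaussian integration by parts under \(q_y^w\) gives \(\E[\nabla\psi(W)]=0\), so \(\E g_y(W)=-\E W=-\bar w_y\). It also gives \(\E[\nabla\psi(W)W^\top]=I\), so \(\Cov(W,W)+\Cov(g_y(W),W)=I\), i.e.\ \(\Cov(W,g_y(W))=I-\Sigma_y\).

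The last inequality is the main obstacle. I would write \(g=g_y(W)\) and \(G=\Cov(g)\), and aim for the bound \(G\preceq s(I-\Sigma_y)+s\,G\), which rearranges to \(G\preceq \frac{s}{1-s}(I-\Sigma_y)\) when \(s<1\). To get it, take a unit \(v\) and apply the Brascamp--Lieb/Poincaré inequality to \(f=\ip v{g}\). This gives
\[
 \Var f\le \E\ip{\nabla f}{(\nabla^2\psi)^{-1}\nabla f},\qquad \nabla f=H v,\quad H=\nabla^2\phi_y .
\]
Since \(0\preceq H\preceq sI\), we have \(H(I+H)^{-1}H\preceq sH(I+H)^{-1}\), and the identity \(H(I+H)^{-1}=I-(I+H)^{-1}\) then yields
\[
 \Var f\le s\,\E\ip v{Hv}-s\,\E\ip v{H(I+H)^{-1}H v}.
\]
Integration by parts gives \(\E[Hv]=\E[\nabla f]=\Cov(W,g)\,v\) (indeed \(\E[\nabla f]=\E[f\nabla\psi]=\Cov(f,W)+\Cov(f,g)\), which I need to reconcile). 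So the first term equals \(s\,v^\top(I-\Sigma_y)v+\) a correction involving \(\Cov(g)\). I expect that reconciling these terms precisely into \(G\preceq s(I-\Sigma_y)+sG\) — using the exact relation \(\E H=\Cov(W,g)+\Cov(g,g)\), which follows from \(\E\nabla f=\E[f\nabla\psi]\) — is the delicate bookkeeping. With \(\E H=(I-\Sigma_y)+G\), the crude bound \(\Var f\le s\,v^\top\E H v\) already gives \(G\preceq s(I-\Sigma_y)+sG\), hence the claim. So the plan reduces to a Poincaré-type bound with constant \(s\) on \(f=\ip vg\): first via Brascamp--Lieb with \((\nabla^2\psi)^{-1}\preceq I\), and then \(\E\norm{Hv}^2\le s\,\E\ip v{Hv}\).
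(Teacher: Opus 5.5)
Your proposal is correct and follows the paper's proof essentially step for step. You use Brascamp--Lieb and the Cram\'er--Rao (information) inequality for the covariance sandwich, and differentiation of the normalizer for \(U_\eta\). For the last bound you combine the identity \(\E\nabla g_y(W)=(I-\Sigma_y)+\Cov(g_y(W))\) with the Poincar\'e inequality with constant one and \(\E\norm{Hv}^2\le s\,\E\ip v{Hv}\), then rearrange using \(s<1\), exactly as the paper does. The refined Brascamp--Lieb bound with \((I+H)^{-1}\) that you explore first is valid but unnecessary. The ``reconciliation'' you worried about is settled exactly by the identity you derive from \(\E\nabla f=\E[f\nabla\psi]\).
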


\begin{proof}
The normalized conditional potential has curvature between
\((1+a)I\) and \((1+s)I\).  Brascamp--Lieb gives the upper covariance
bound.  The information inequality gives
\(\Cov(W)\succeq\{\E\nabla^2[-\log q_y^w(W)]\}^{-1}
\succeq(1+s)^{-1}I\), proving the remaining covariance bounds.
Differentiating the conditional normalizer gives the two identities for
\(U_\eta\).

For \cref{flux:eq:conditional-identities}, use
\(\nabla\log q_y^w(w)=-w-g_y(w)\).  Integration by parts first gives
\(\E(W+g_y(W))=0\) and
\(\Sigma_y+\Cov(W,g_y(W))=I\).  Applying it once more to \(g_y\) gives
\[
 \E\nabla g_y(W)=I-\Sigma_y+\Cov(g_y(W)).
\]
The conditional law has Poincar\'e constant at most one, while
\(0\preceq\nabla g_y\preceq sI\).  Therefore
\(\Cov(g_y(W))\preceq s\E\nabla g_y(W)\); rearrangement proves the last
claim.  These standard Brascamp--Lieb and integration-by-parts facts are
reviewed, for example, by \citet{chewi2023log}.  The extension from smooth
potentials and test functions to the regularity in \cref{ass:target} is
collected in \cref{flux:app:regularity}.
\end{proof}

We next verify the conditional process.  Fix \(y\) and a reference
\(\widetilde w\).  To lighten notation write
\(g=g_y\), \(b_{\rm ref}=g(\widetilde w)\), and
\(h(w)=g(w)-b_{\rm ref}\).  In normalized coordinates, the generator of
the diffusion between events is
\[
 \mathcal L_0
 =p^\top\nabla_w-(w+b_{\rm ref})^\top\nabla_p
 +\gamma(\Delta_p-p^\top\nabla_p),
\]
and events occur at rate \([p^\top h(w)]_+\).  At an event, write
\(n=h/\norm h\), decompose \(p=v+zn\), draw \(R>0\) with density
\(r \e^{-r^2/2}\), and replace \(p\) by \(v-Rn\).  When \(h=0\), the
event rate is zero and the direction \(n\) is never evaluated.  More
explicitly, if \(\mathcal J\) denotes the event part of the generator, then
\[
 (\mathcal Jf)(w,p)
 =[p^\top h(w)]_+
 \int_0^\infty\!\left[
 f\!\left(w,p-\{p^\top n(w)+r\}n(w)\right)-f(w,p)
 \right]r\e^{-r^2/2}\,\d r,
\]
with \(\mathcal Jf=0\) when \(h(w)=0\).  The full generator used below is
\(\mathcal L=\mathcal L_0+\mathcal J\).

\begin{proposition}[Fixed-reference validity]
\label{flux:prop:fixed-validity}
For every fixed \((y,\widetilde w)\), the preceding process is
non-explosive and preserves \(q_y^w\otimes\gamma_d\).  If
\(\mathsf S_t^{y,\widetilde w}\) is its transition operator and
\((Jf)(w,p)=f(w,-p)\), then
\[
 (\mathsf S_t^{y,\widetilde w})^*=J\mathsf S_t^{y,\widetilde w}J.
\]
It is a strongly continuous contraction semigroup on
\(L^2(q_y^w\otimes\gamma_d)\).
Consequently, inserting an independent Gaussian momentum, running for a
fixed time, and discarding momentum gives a reversible position kernel for
\(q_y^w\).  The conclusion remains true after averaging over any reference
law independent of the incoming \((W,P)\) conditional on \(Y=y\).
\end{proposition}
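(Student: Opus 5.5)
The plan is to work entirely at the generator level. First we show that the generator satisfies the skew-adjointness relation \(\mathcal L^*=J\mathcal LJ\) on a core. Then we lift this relation to the semigroup, which requires non-explosion and a core argument.

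Write \(\mu=q_y^w\otimes\gamma_d\), with density proportional to \(\exp\{-\norm w^2/2-V(y+\sqrt\eta w)-\norm p^2/2\}\), and split \(\mathcal L_0=\mathcal A+\gamma\mathcal O\). Here \(\mathcal A=p^\top\nabla_w-(w+b_{\rm ref})^\top\nabla_p\) is the frozen transport part and \(\mathcal O=\Delta_p-p^\top\nabla_p\) is the Ornstein--Uhlenbeck part.

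\textbf{Diffusion part.} \(\mathcal O\) is symmetric in \(L^2(\mu)\) and commutes with \(J\), so \(\mathcal O^*=J\mathcal OJ\). For \(\mathcal A\), integrate by parts against \(\mu\). The \(L^2(\mu)\)-adjoint of the full Hamiltonian transport \(\mathcal A_{\rm full}=p^\top\nabla_w-(w+g(w))^\top\nabla_p\) is \(-\mathcal A_{\rm full}\), because \(\mu\propto\e^{-H}\) with \(H\) conserved by the flow. We have \(\mathcal A=\mathcal A_{\rm full}+h(w)^\top\nabla_p\), and the adjoint of \(h^\top\nabla_p\) is \(-h^\top\nabla_p+p^\top h\). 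Hence
\[
 \mathcal A^*=-\mathcal A+p^\top h(w).
\]
Since \(J\mathcal AJ=-\mathcal A\) and \(J(p^\top h)J=-p^\top h\), we get \(\mathcal A^*=J\mathcal AJ+p^\top h\). So the transport part leaves a zeroth-order defect \(p^\top h=[p^\top h]_+-[-p^\top h]_+\).

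\textbf{Jump part.} The jump part must produce exactly \(-p^\top h\) in the skew-adjoint relation. Write \(\mathcal J=\mathcal K-[p^\top h]_+\), where \(\mathcal K\) is the integral operator. Then \(J\mathcal JJ=J\mathcal KJ-[-p^\top h]_+\), so it suffices to prove
\[
 \mathcal K^*=J\mathcal KJ.
\]
Decompose \(p=v+zn\) with \(z\sim\cN(0,1)\) independent of \(v\) under \(\gamma_d\), and fix \(w\) and \(v\). The kernel maps \(z>0\), weighted by \(z\phi(z)\), to \(z'=-R\) with density \(R\e^{-R^2/2}\). Hence \(\mathcal K\) pairs the weighted measure \(z\phi(z)\,\d z\,\rho(\d z')\) on \(\{z>0,z'<0\}\), and this measure is proportional to \(z\e^{-z^2/2}\,\abs{z'}\e^{-z'^2/2}\). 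This is the Rayleigh flux balance. Conjugating by \(J\) sends \((z,z')\mapsto(-z,-z')\), and the symmetry of the product density under swapping and negating gives \(\langle f,\mathcal Kg\rangle_\mu=\langle J\mathcal KJf,g\rangle_\mu\). Taking \(f=1\) also gives \(\mu\mathcal L=0\) at the formal level.

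\textbf{Main obstacle: rigor.} The rates \([p^\top h]_+\) are unbounded and \(V\) is only \(C^{1,1}\). For non-explosion, \(\abs{h(w)}\le s\norm{w-\widetilde w}\). Between events the process is a linear SDE with Gaussian moments. Event resamplings do not increase \(\abs{p\cdot n}\) in expectation beyond a Rayleigh variable. So a Lyapunov function \(1+\norm w^2+\norm p^2\) with \(\mathcal L\Phi\le C(1+s)\Phi\) gives non-explosion by the standard Meyn--Tweedie/Davis argument, and the minimal process then has no missing mass. Taking \(f=1\) in the integrated identity yields \(\mu \mathsf S_t=\mu\), and Jensen gives \(L^2\) contractivity. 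Strong continuity follows from density of \(C_c^\infty\) and dominated convergence.

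To pass from generator to semigroup, I would avoid core theory and prove \(\langle f,\mathsf S_tg\rangle=\langle J\mathsf S_tJf,g\rangle\) by a path-reversal bijection. This means pairing the forward path on \([0,t]\) with the \(J\)-reversed path run backward, with the flux weights matched as above. Alternatively, one can approximate with truncated rates \(\min\{\lambda,M\}\) (bounded jumps, where the Trotter argument is standard) and let \(M\to\infty\) using non-explosion. Regularity of \(g\) is handled by mollifying \(V\) as in \cref{flux:app:regularity}.

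\textbf{Consequences.} The position kernel is \(F(w,\cdot)=\int\gamma_d(\d p)\,\mathsf S_t((w,p),\cdot)\) followed by marginalization. The momentum-free test functions \(f(w),g(w)\) are \(J\)-invariant, and \(\gamma_d\) is \(J\)-invariant, so \(\langle f,Fg\rangle_{q}=\langle f,\mathsf S_tg\rangle_\mu=\langle J\mathsf S_tJf,g\rangle_\mu=\langle f,Fg\rangle_q\) in the swapped order. This is reversibility. Averaging over an independent reference law preserves each identity by linearity.
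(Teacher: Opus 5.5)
Your proposal is correct and is essentially the paper's argument. Your identities $\mathcal A^*=J\mathcal AJ+p^\top h$ and $\mathcal K^*=J\mathcal KJ$ are the infinitesimal forms of the paper's killed-semigroup reversal $\mathsf D_t^*=J\mathsf D_tJ$ and gain-kernel symmetry $\mathcal G^*=J\mathcal GJ$. Your path-reversal lift is the paper's time-simplex expansion over event times: there the between-event diffusion is reversed with respect to the Gaussian $\mu_{\rm ref}$, and the position-only ratio $\varrho=\d\mu/\d\mu_{\rm ref}$ satisfies $\varrho(W_t)/\varrho(W_0)=\exp\{-\int_0^t(\lambda-\lambda\circ J)(Z_u)\,\d u\}$, which exactly compensates the ratio of forward to reversed survival weights.

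Drop the truncated-rate alternative, however. Capping $\lambda$ at $M$ breaks the exact compensation $p^\top h=[p^\top h]_+-[-p^\top h]_+$, so the capped process neither preserves $q_y^w\otimes\gamma_d$ nor satisfies the skew relation. There is therefore no exact finite-$M$ identity to pass to the limit.
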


\begin{proof}
The calculation follows the usual skew-balance argument for event-driven
kinetic samplers
\citep{vanetti2017piecewise,bierkens2020boomerang,michel2020forward,
monmarche2022exact}; we give the semigroup step because our reference
motion also contains Langevin noise.

The proof has three steps.  First, a trajectory with no event can be
reversed after flipping momentum; its no-event probability supplies exactly
the likelihood correction caused by the residual force.  Second, the
Rayleigh update balances the stationary probability flow through a single
collision.  Finally, we sum these identities over the number and times of
the collisions.  This proves reversal of the complete process as well as
invariance.

\emph{Non-explosion.}
With
\(\mathcal V=1+\{\norm{w+b_{\rm ref}}^2+\norm p^2\}/2\), the full
generator obeys
\(\mathcal L\mathcal V\le C_{\widetilde w}(1+\mathcal V)\), and the event
rate is at most \(C_{\widetilde w}(1+\mathcal V)\), where
\(C_{\widetilde w}\) is a finite constant that may depend on the fixed
reference.  Indeed, a jump changes \(\mathcal V\) by
\((R^2-z^2)/2\), \(\E R^2=2\), and
\(\norm{h(w)}\le s\norm{w-\widetilde w}\).
A stopped Dynkin formula and Gronwall bound both
\(\E\mathcal V(W_t,P_t)\) and the
expected event count on every finite interval.  The recursive
integrated-hazard construction is therefore well defined and
non-explosive.

\emph{Reversal between and at events.}
Let \(Z_t^0=(W_t^0,P_t^0)\) be the affine Langevin diffusion without
events, let \(\lambda(w,p)=[p^\top h(w)]_+\), and define its killed
semigroup---that is, its transition weighted by the probability of seeing
no event---by
\[
 \mathsf D_tf(z)=\E_z\!\left[
 \e^{-\int_0^t\lambda(Z_u^0)\,\d u}f(Z_t^0)\right].
\]
The affine diffusion is invariant and momentum-reversible for the Gaussian
density
\(\mu_{\rm ref}\propto
\exp\{-\norm{w+b_{\rm ref}}^2/2-\norm p^2/2\}\).
For the desired density \(\mu=q_y^w\otimes\gamma_d\), the ratio
\(\varrho=\d\mu/\d\mu_{\rm ref}\) satisfies
\(\nabla\log\varrho=-h\).  Since the position
path is absolutely continuous,
\[
 \frac{\varrho(W_t^0)}{\varrho(W_0^0)}
 =\exp\left\{-\int_0^tP_u^{0\top}h(W_u^0)\,\d u\right\}
 =\exp\left\{-\int_0^t(\lambda-\lambda\circ J)(Z_u^0)\,\d u\right\}.
\]
Stationary path reversal for the affine diffusion therefore gives
\(\mathsf D_t^*=J\mathsf D_tJ\) first for bounded nonnegative functions.
Taking one test function equal to one shows \(\mu\mathsf D_t\le\mu\), and Jensen's inequality
extends the identity to \(L^2(\mu)\).

Let \(\mathcal G\) be the gain kernel: it applies one collision, weighted by
the rate at which that collision occurs.  At fixed \(w\), write the incoming momentum as
\(p=v+zn\), where \(v\perp n\) and \(z>0\).  Apart from a common constant,
the stationary marked flux is
\[
 q_y^w(w)\norm{h(w)}\varphi_{d-1}(v)
 z \e^{-z^2/2}r \e^{-r^2/2}\,
 \d w\,\d v\,\d z\,\d r,\qquad z,r>0.
\]
Here \(\varphi_{d-1}\) is the standard Gaussian density on the subspace
orthogonal to \(n\).
It is unchanged by \((v,z,r)\mapsto(-v,r,z)\), which turns a marked
transition into the momentum reversal of its time-reversed transition.
Thus \(\mathcal G^*=J\mathcal GJ\) in the corresponding kernel sense.

The contribution of paths with exactly \(m\) events is the time-simplex
integral of
\(\mathsf D_{t_0}\mathcal G\mathsf D_{t_1}\cdots
\mathcal G\mathsf D_{t_m}\).
Reversing the simplex coordinates and using the preceding two identities
shows that its adjoint is obtained by conjugating with \(J\).  Tonelli
justifies this calculation for nonnegative tests, and non-explosion makes
the sum over \(m\) conservative, meaning that it retains total probability
one.  Summing proves
\((\mathsf S_t^{y,\widetilde w})^*=J\mathsf S_t^{y,\widetilde w}J\);
taking one test
function equal to one proves invariance.  Jensen gives \(L^2\)
contractivity, while stochastic continuity and density of bounded
continuous compactly supported functions give strong continuity.

Finally, if \(\iota f(w,p)=f(w)\) and
\(\mathsf M u(w)=\int u(w,p)\,\gamma_d(\d p)\), then
\(\mathsf M=\iota^*\), \(J\iota=\iota\), and \(\mathsf M J=\mathsf M\).
Thus inserting a Gaussian momentum and then discarding it are adjoint
operations.  Suppressing the fixed superscripts \((y,\widetilde w)\),
\((\mathsf M\mathsf S_t\iota)^*=\mathsf M\mathsf S_t\iota\).
Conditioning on an
independent reference and applying Fubini proves the last assertion.
\end{proof}

\begin{proof}[Proof of \cref{prop:flux-main-reversal}]
Use \cref{flux:prop:fixed-validity} with
\(\widetilde W\sim q_y^w\), independently of the incoming position.  It
shows that the conditional position kernel in
\cref{alg:flux-half-turn} is reversible and invariant for \(q_y\).  The
outer map \((x,y)\mapsto(x,2x-y)\) preserves \(\Pi_\eta\), because it
only changes the sign of \(y-x\) and has unit absolute Jacobian.
\Cref{prop:framework-exactness} now proves invariance of the complete row
and of its position marginal \(\pi\).
\end{proof}

\subsection{The Covariance-Weighted Half-Turn Estimate}
\label{flux:app:halfturn}

The next proposition is the local estimate used by the outer-chain proof.
It does not ask the conditional update to approximate a reflection uniformly
over all functions.  It asks only that centered affine position coordinates
be nearly negated, with a smaller error in directions where the localized
conditional is closer to the Gaussian reference.

\begin{proposition}[Conditional affine half-turn]
\label{flux:prop:weighted-affine}
\label{prop:flux-main-antipode}
There is a universal \(C\) such that, whenever \(s=\beta\eta\le1/2\),
the normalized-coordinate position kernel \(F_y^w\) of
\cref{alg:flux-half-turn} satisfies,
uniformly in \(y,u\in\R^d\),
\begin{equation}
 \norm{(I+F_y^w)\ell_{u,y}}_{L^2(q_y^w)}^2
 \le Cs\,u^\top(I-\Sigma_y)u,
 \qquad \ell_{u,y}(w)=u^\top(w-\bar w_y).
 \label{flux:eq:weighted-halfturn}
\end{equation}
\end{proposition}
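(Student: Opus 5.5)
The plan is to write the terminal position of one half-turn as an explicit linear functional of the driving terms via Duhamel's formula. Then apply the conditional expectation defining $F_y^w\ell_{u,y}(w)=\E[u^\top(W_{T_\gamma}-\bar w_y)\mid W_0=w]$, keeping only the pieces that survive averaging. In normalized coordinates, between events $\d W_t=P_t\,\d t$ and $\d P_t=-(W_t+b_{\rm ref})\,\d t-\gamma P_t\,\d t+\sqrt{2\gamma}\,\d B_t+\d J_t$, where $J_t$ collects the momentum jumps. Set $c=-b_{\rm ref}=-g_y(\widetilde W)$ and let $K(t)=\e^{-\gamma t/2}\sin(\omega t)/\omega$ be the impulse response of the damped oscillator. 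At $T=T_\gamma=\pi/\omega$ the coefficient of $P_0$ vanishes, so
\[
 W_T-c=-\e^{-\gamma T/2}(W_0-c)+\sqrt{2\gamma}\tint_0^TK(T-t)\,\d B_t+\tint_0^TK(T-t)\,\d J_t .
\]
Subtracting $\bar w_y$, applying $u^\top$ and adding $\ell_{u,y}(W_0)$ gives
\[
 (I+F)\ell=(1-\e^{-\gamma T/2})\ell(w)+(1+\e^{-\gamma T/2})\,\E[u^\top(c-\bar w_y)]+\E\Big[\tint_0^T K(T-t)\,u^\top\d J_t\,\Big|\,W_0=w\Big].
\]
The Brownian term has zero conditional mean. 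The constant term vanishes because $\widetilde W\sim q_y^w$ is independent of $W_0$ and, by \cref{flux:lem:conditional-geometry}, $\E g_y(\widetilde W)=-\bar w_y$, so $\E c=\bar w_y$. It remains to bound the damping term and the jump term in $L^2(q_y^w)$.

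For the damping term, $\gamma T_\gamma\lesssim\gamma$ and $\gamma^2\asymp a s/(1+a)$. Hence $(1-\e^{-\gamma T/2})^2\norm{\ell}^2\lesssim\gamma^2u^\top\Sigma_yu\lesssim a s\,\norm u^2/(1+a)^2$. Since $I-\Sigma_y\succeq\frac a{1+a}I$ by \cref{flux:lem:conditional-geometry}, this is $\lesssim s\,u^\top(I-\Sigma_y)u$.

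For the jump term, I split $J_t$ into its compensator $A_t$ and the compensated martingale $J_t-A_t$. Because $K$ is deterministic, the martingale integral has zero mean given the initial state (and given the reference), so only the compensator contributes to $F\ell$. At state $(w,p)$ with $p=v+zn$, the compensator density is $\lambda\,\E[\Delta P]=-[z]_+\norm{h(w)}\,(z+\E R)\,n(w)$. Its projection onto $u$ is therefore $-[z]_+(z+\sqrt{\pi/2})\,u^\top h(w)$, which no longer involves $1/\norm h$.

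By Jensen (conditional expectation is an $L^2$ contraction) and Minkowski over $t\in[0,T]$ with $\abs K\le C$ and $T\le C$, the $L^2$ norm of the jump contribution is at most $C\sup_t\norm{[z_t]_+(z_t+\sqrt{\pi/2})\,u^\top h(W_t)}_{L^2}$. Here I use \cref{flux:prop:fixed-validity}: conditional on the reference, the process started from $q_y^w\otimes\gamma_d$ stays stationary. At each time $t$, $z_t=P_t^\top n$ is then standard Gaussian independent of $W_t$, since the momentum is independent of position under $\gamma_d$. This gives the bound $C\,\E_{W,\widetilde W}(u^\top\{g_y(W)-g_y(\widetilde W)\})^2=2C\,u^\top\Cov(g_y(W))u\le\frac{2Cs}{1-s}u^\top(I-\Sigma_y)u$ by \cref{flux:eq:conditional-identities}. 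With $s\le1/2$, combining the two terms yields \cref{flux:eq:weighted-halfturn}.

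I expect the main obstacle to be justifying the jump step rigorously. One must check that the jump-driven integral is a genuine martingale with integrable compensator, using the non-explosion and moment bounds of \cref{flux:prop:fixed-validity}, so that its conditional mean truly vanishes. One must also verify the stationarity of the joint law of $(W_t,P_t)$ jointly with the independent reference, which is needed so that $z_t$ is Gaussian and independent of $W_t$ at each time. A smaller point is that the linear-response formula must hold pathwise through the jumps; this follows because the dynamics between jumps is affine and the jumps enter only as additive momentum impulses.
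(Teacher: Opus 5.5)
Your proposal is correct and follows essentially the same route as the paper. The common steps are:
\begin{itemize}
\item a variation-of-constants formula at $T_\gamma=\pi/\omega$, where the $P_0$ coefficient vanishes and the position coefficient is $-\e^{-\gamma T_\gamma/2}$;
\item a damping term bounded using $\gamma^2\lesssim as/(1+a)$ and $I-\Sigma_y\succeq\frac{a}{1+a}I$;
\item a jump forcing term whose mean is $-u^\top h\,[z]_+(z+\sqrt{\pi/2})$, controlled through stationarity, Jensen, and $\Cov(g_y(W))\preceq\frac{s}{1-s}(I-\Sigma_y)$.
\end{itemize}
The differences are cosmetic: the paper works with the generator and Dynkin's formula rather than a pathwise compensator decomposition, and it keeps the reference-mismatch constant inside the forcing $k_{u,\widetilde w}$ (where it cancels against $\E\psi(Z)=1$), instead of removing it separately by averaging over $\widetilde W$ as you do.
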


The proof has two ingredients.  First, the damped harmonic flight can be
solved exactly on affine observables.  At the chosen half-period, dependence
on the fresh initial momentum vanishes and the incoming position receives a
negative coefficient.  Second, the event correction appears as a forcing
term whose second moment is controlled by
\(s(I-\Sigma_y)\) through \cref{flux:eq:conditional-identities}.

Fix \(y\) and a reference \(\widetilde w\), and retain the notation
\(g,b_{\rm ref},h\) from \cref{flux:app:validity}.  Let
\(\bar r=\E R=\sqrt{\pi/2}\),
\(\psi(z)=z_+(z+\bar r)\), and
\(n(w)=h(w)/\norm{h(w)}\) when \(h(w)\ne0\).  For affine position and
momentum observables
\(\ell_{u,y}(w)=u^\top(w-\bar w_y)\) and \(m_u(p)=u^\top p\), define
\[
 k_{u,\widetilde w}(w,p)
 =-u^\top\{\bar w_y+b_{\rm ref}\}
  -u^\top h(w)\psi(p^\top n(w)),
\]
where the second term is zero when \(h(w)=0\).

\begin{lemma}[Affine evolution]
\label{flux:lem:affine-evolution}
The two observables belong to the \(L^2(q_y^w\otimes\gamma_d)\) generator
domain and satisfy
\[
 \mathcal L\ell_{u,y}=m_u,\qquad
 \mathcal Lm_u=-\ell_{u,y}-\gamma m_u+k_{u,\widetilde w}.
\]
Writing \(\omega=\sqrt{1-\gamma^2/4}\), set
\[
 A_\gamma(t)=\e^{-\gamma t/2}
 \left\{\cos(\omega t)+\frac\gamma{2\omega}\sin(\omega t)\right\},
 \qquad
 B_\gamma(t)=\e^{-\gamma t/2}\frac{\sin(\omega t)}\omega.
\]
Then, for the fixed-reference semigroup,
\[
 \mathsf S_t^{y,\widetilde w}\ell_{u,y}
 =A_\gamma(t)\ell_{u,y}+B_\gamma(t)m_u
  +\int_0^tB_\gamma(t-r)
       \mathsf S_r^{y,\widetilde w}k_{u,\widetilde w}\,\d r.
\]
At \(T_\gamma=\pi/\omega\), this becomes
\begin{equation}
 (I+\mathsf S_{T_\gamma}^{y,\widetilde w})\ell_{u,y}
 =(1-\e^{-\gamma T_\gamma/2})\ell_{u,y}
  +\int_0^{T_\gamma}B_\gamma(T_\gamma-r)
       \mathsf S_r^{y,\widetilde w}k_{u,\widetilde w}\,\d r.
 \label{flux:eq:halfturn-duhamel}
\end{equation}
\end{lemma}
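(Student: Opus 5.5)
The plan is to compute the generator on the two affine observables directly, close the resulting linear system of evolution equations, and solve it by variation of constants. The only delicate part is functional-analytic: showing that $\ell_{u,y}$ and $m_u$ lie in the $L^2(q_y^w\otimes\gamma_d)$ domain of $\mathcal L$, and that $t\mapsto\mathsf S_t^{y,\widetilde w}f$ may be differentiated.

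\emph{Generator identities.} Since events leave $w$ unchanged, $\mathcal J\ell_{u,y}=0$, and $\mathcal L_0\ell_{u,y}=p^\top u=m_u$. For the momentum observable, the drift and Ornstein--Uhlenbeck parts give
\[
 \mathcal L_0m_u=-(w+b_{\rm ref})^\top u-\gamma m_u
 =-\ell_{u,y}-u^\top(\bar w_y+b_{\rm ref})-\gamma m_u ,
\]
using $\Delta_p m_u=0$. For the jump part, write $z=p^\top n(w)$. An event changes $m_u$ by $-(z+r)\,u^\top n$. Averaging over the Rayleigh mark replaces $r$ by $\bar r$, and the rate is $[p^\top h]_+=\norm h z_+$. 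Hence
\[
 \mathcal Jm_u=-\norm h\, z_+(z+\bar r)\,u^\top n=-u^\top h\,\psi(z),
\]
which vanishes when $h=0$. Adding the two parts gives $\mathcal Lm_u=-\ell_{u,y}-\gamma m_u+k_{u,\widetilde w}$.

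\emph{Domain membership.} I would first check that $k_{u,\widetilde w}\in L^2$. We have $\norm{h(w)}\le s\norm{w-\widetilde w}$, $q_y^w$ is strongly log-concave and so has all moments, and $\psi(z)\lesssim z^2+1$ with $z$ standard Gaussian under stationarity. Thus $k_{u,\widetilde w}$ has finite second moment. To place $\ell_{u,y}$ and $m_u$ in the domain, I would apply Dynkin's formula along the non-explosive process of \cref{flux:prop:fixed-validity}. Specifically, I would show that $f(Z_t)-f(Z_0)-\int_0^t\mathcal Lf(Z_r)\,\d r$ is a martingale for $f\in\{\ell_{u,y},m_u\}$, using the Lyapunov bound on $\E\mathcal V$ and on the expected event count from that proof, with stopping times and cutoffs removed by dominated convergence. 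Taking expectations under the stationary law then yields
\[
 \mathsf S_tf-f=\int_0^t\mathsf S_r\mathcal Lf\,\d r
\]
in $L^2$, because $\mathsf S_r$ is an $L^2$ contraction. This integral identity is all that is needed, and it is the step I expect to demand the most care. The alternative route is to approximate by smooth compactly supported truncations and use the closedness of the generator of the strongly continuous contraction semigroup.

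\emph{Variation of constants.} Set $v(t)=\mathsf S_t\ell_{u,y}$ and $\mu(t)=\mathsf S_tm_u$, viewed as $L^2$-valued curves. The integral identities give
\[
 v'=\mu,\qquad \mu'=-v-\gamma\mu+\mathsf S_tk_{u,\widetilde w},
\]
with $v(0)=\ell_{u,y}$ and $\mu(0)=m_u$. This is the damped oscillator $v''+\gamma v'+v=f(t)$ with forcing $f(t)=\mathsf S_tk_{u,\widetilde w}$, which is continuous in $L^2$ by strong continuity. Its fundamental solutions are $A_\gamma$, with $A_\gamma(0)=1$ and $A_\gamma'(0)=0$, and $B_\gamma$, with $B_\gamma(0)=0$ and $B_\gamma'(0)=1$. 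These are the displayed functions, since the characteristic roots are $-\gamma/2\pm i\omega$. The Duhamel formula in Bochner form then gives the stated representation. Uniqueness holds because the difference of any two solutions solves the homogeneous linear system in a Banach space.

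\emph{Half-period evaluation.} Finally, at $T_\gamma=\pi/\omega$ we have $\sin(\omega T_\gamma)=0$ and $\cos(\omega T_\gamma)=-1$. Hence $B_\gamma(T_\gamma)=0$, so the fresh-momentum term drops out, and $A_\gamma(T_\gamma)=-\e^{-\gamma T_\gamma/2}$. Adding $\ell_{u,y}$ to both sides of the Duhamel formula at $t=T_\gamma$ gives \cref{flux:eq:halfturn-duhamel}.
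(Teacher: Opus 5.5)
Your proposal is correct and follows the paper's proof essentially step for step. You compute $\mathcal L$ on the two affine observables, including the jump contribution $-u^\top h\,\psi(p^\top n)$ obtained by averaging the Rayleigh mark. You get the integral identities $\mathsf S_t f-f=\int_0^t\mathsf S_r\mathcal L f\,\d r$ from a stopped Dynkin formula and the Lyapunov bound of \cref{flux:prop:fixed-validity}, solve the damped-oscillator system by variation of constants, and evaluate at $T_\gamma$.

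The one step you leave implicit is how generator-domain membership follows from those identities: divide by $t$ and use strong continuity. That is exactly how the paper closes that claim.
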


\begin{proof}
The deterministic part of the generator gives
\(\mathcal L\ell_{u,y}=m_u\) and
\(-\ell_{u,y}-\gamma m_u-u^\top(\bar w_y+b_{\rm ref})\) in the momentum
coordinate.
At an event, the momentum change is
\(-(Z+R)u^\top n\), where \(Z=p^\top n>0\); multiplying its conditional
mean by the rate \(\norm h Z\) gives the remaining term
\(-u^\top h\,\psi(Z)\).

These quantities are square integrable because \(q_y^w\) has Gaussian
tails, \(h\) grows at most linearly, and \(p^\top n\) is standard normal
conditional on the position.  Applying the stopped Dynkin formula and then
removing the stopping time using the Lyapunov bound in
\cref{flux:prop:fixed-validity} gives, after suppressing the fixed
superscripts \((y,\widetilde w)\),
\[
 \mathsf S_t\ell_{u,y}-\ell_{u,y}
 =\int_0^t\mathsf S_rm_u\,\d r,\qquad
 \mathsf S_tm_u-m_u
 =\int_0^t\mathsf S_r(-\ell_{u,y}-\gamma m_u
                       +k_{u,\widetilde w})\,\d r.
\]
Strong continuity and division by \(t\) prove the generator-domain
statements.  Solving the resulting two-dimensional linear equation by
variation of constants gives the displayed formula.  At \(T_\gamma\),
\(B_\gamma(T_\gamma)=0\) and
\(A_\gamma(T_\gamma)=-\e^{-\gamma T_\gamma/2}\), which proves
\cref{flux:eq:halfturn-duhamel}.
\end{proof}

\begin{proof}[Proof of \cref{flux:prop:weighted-affine}]
Let \(W,\widetilde W\) be independent with law \(q_y^w\), and average
first over the Gaussian momentum; we denote this last expectation by
\(\E_{P}\).  Direct one-dimensional Gaussian
integration gives
\(\E\psi(Z)=1\) and
\(\Var\psi(Z)=5/2+\pi/4\) for \(Z\sim\cN(0,1)\).  Using
\(\E g_y(W)=-\bar w_y\), we obtain
\[
 \E_P k_{u,\widetilde W}
 =-u^\top\{g_y(W)-\E g_y(W)\},
\]
while the conditional momentum variance is
\((5/2+\pi/4)[u^\top\{g_y(W)-g_y(\widetilde W)\}]^2\).
Independence of \(W\) and \(\widetilde W\) therefore gives
\begin{equation}
 \E k_{u,\widetilde W}^2
 =\left(6+\frac\pi2\right)
   \Var(u^\top g_y(W))
 \le Cs\,u^\top(I-\Sigma_y)u,
 \label{flux:eq:forcing-moment}
\end{equation}
where the last step is \cref{flux:eq:conditional-identities}.

Insert fresh Gaussian momentum in \cref{flux:eq:halfturn-duhamel}, average
the independent reference, and discard the terminal momentum.  Jensen's
inequality, semigroup contractivity, and
\(\int_0^{T_\gamma}|B_\gamma(r)|\,\d r\le C\) yield
\[
 \norm{(I+F_y^w)\ell_{u,y}}_2^2
 \le C\left\{\gamma^2u^\top\Sigma_yu
                   +\E k_{u,\widetilde W}^2\right\}.
\]
Here the norm is in \(L^2(q_y^w)\), and the remaining expectation includes
the independent position, reference, and Gaussian momentum.
The parameter choice in \cref{sec:flux} gives
\(\gamma^2\le as/(1+a)\).  Since
\(I-\Sigma_y\succeq a(1+a)^{-1}I\) and \(\Sigma_y\preceq I\), the
damping term is at most
\(s u^\top(I-\Sigma_y)u\).  Combining this with
\cref{flux:eq:forcing-moment} proves
\cref{flux:eq:weighted-halfturn}.
\end{proof}

\subsection{From the Conditional Half-Turn to Accelerated Contraction}
\label{flux:app:contraction}

We now prove the ideal-chain theorem.  The argument follows two components
of an observable: its average at fixed \(Y\), and its remaining conditional
variation.  Outer reflection transfers part of the first component into
the second.  The flux half-turn approximately reverses precisely this
transferred part, while an occasional independent conditional refresh
damps all conditional variation.

Let \(\mathfrak H=L^2(\Pi_\eta)\), let
\(\mathsf P f=\E[f\mid Y]\), and set
\(\mathsf Q=I-\mathsf P\).  Let \(\mathsf U\) be the pullback of the
outer reflection and \(\mathsf F\) the joint conditional half-turn.  By
\cref{flux:prop:fixed-validity},
\(\mathsf F=\mathsf P+\mathsf F_Q\), where \(\mathsf F_Q\) is a
self-adjoint contraction on \(\operatorname{ran}(\mathsf Q)\).  The first
term records observables depending only on \(Y\); the second acts on their
remaining conditional variation.  Thus one
ideal iteration has observable operator
\begin{equation}
 \mathsf K=\mathsf U\{\mathsf P+(1-\rho)\mathsf F_Q\}.
 \label{flux:eq:ideal-operator}
\end{equation}

\begin{proposition}[Ideal \(L^2\) contraction]
\label{flux:prop:ideal-L2}
There are universal constants \(c,C>0\) such that, when \(s\le c\) and
\(\rho\asymp\sqrt{a\log(\e/a)}\), the ideal chain satisfies
\[
 \norm{\mathsf K^n}_{L^2_0(\Pi_\eta)\to L^2_0(\Pi_\eta)}
 \le \sqrt3\exp\left\{-\frac{cn\sqrt a}
                              {\sqrt{\log(\e/a)}}\right\}.
\]
\end{proposition}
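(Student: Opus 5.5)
I would prove the bound by a hypocoercive modified-energy argument on $L^2_0(\Pi_\eta)$, decomposing $f=\mathsf Pf+\mathsf Qf$.

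\textbf{Step 1: action of the reflection on $Y$-only observables.} For $\phi(y)$ with $\Pi_\eta^Y(\phi)=0$, the function $\mathsf U\phi(x,y)=\phi(2x-y)$ splits as
\[
 \mathsf U\phi=\mathsf T\phi+\mathsf Q\mathsf U\phi,\qquad
 \norm{\mathsf Q\mathsf U\phi}^2=\norm{\Gamma\phi}^2 .
\]
This uses that $\mathsf U$ is a self-adjoint involution (it preserves $\Pi_\eta$ by \cref{prop:framework-exactness}), so $\norm{\mathsf P\mathsf U\phi}^2+\norm{\mathsf Q\mathsf U\phi}^2=\norm\phi^2$.

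To first order in $\sqrt\eta$, $2X-y-y=2\sqrt\eta W$. Hence the transferred component is essentially the affine coordinate $\ell_{u,y}$ with $u\propto\sqrt\eta\nabla\phi$. Using \cref{flux:lem:conditional-geometry}, with $\nabla^2U_\eta=\eta^{-1}(I-\Sigma_y)$ and Poincar\'e/Brascamp--Lieb for $\Pi_\eta^Y$ at constant $\alpha/(1+a)$, I expect
\[
 \norm{\Gamma\phi}^2\gtrsim a\norm\phi^2 .
\]
For the slow modes, which are the low-frequency $Y$-functions, the transferred part is $\ell_{u,y}$ with weight $u^\top(I-\Sigma_y)u$. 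This is the quantity that \cref{flux:prop:weighted-affine} controls.

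\textbf{Step 2: two iterations as an approximate identity.} By \cref{flux:prop:weighted-affine}, $\mathsf F_Q$ acts as $-I$ on the transferred affine part, up to an error of relative size $\sqrt s$ in the $(I-\Sigma_y)$-weighted norm. The reflection acts as $-I$ on $x-y$ coordinates, so the two sign flips combine. Over a single $\mathsf K$, the $\mathsf P$-component is mapped by $\mathsf T$ and gains transferred mass $\Gamma\phi$. After the half-turn negates it, the next reflection returns it mostly coherently to the $Y$-component, so displacement accumulates linearly.

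The cleanest route is to treat the dynamics on the span of $\{\phi,\Gamma\phi\text{-direction}\}$ as a perturbed rotation by angle $\theta\approx\arcsin\norm{\Gamma}$. Its operator resembles
\[
 \begin{pmatrix}\mathsf T & -\Gamma\\ \Gamma & \mathsf T\end{pmatrix}\bigl(I\oplus(1-\rho)\bigr),
\]
up to errors controlled by $\sqrt s$. The rotation is norm-preserving, so the $(1-\rho)$ damping on the $\mathsf Q$ slot yields decay at rate about $\rho\sin^2(n\theta)$ averaged. After $n\gtrsim 1/\theta\sim 1/\sqrt a$ steps every slow mode has passed through the $\mathsf Q$ slot. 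The standard choice $\rho\asymp\theta$ up to logs gives a rate of order $\sqrt a$, with the $\sqrt{\log(\e/a)}$ factor arising from handling the spectrum of $\mathsf T$ across scales.

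\textbf{Step 3: modified energy.} Concretely, I define
\[
 \mathcal E(f)=\norm f^2-\delta\,\langle \mathsf Pf,\ \Gamma^{-1}\text{-weighted }\mathsf Q\text{-coupling}\rangle ,
\]
with $\mathcal E$ equivalent to $\norm f^2$ within factor 3, which is the source of the $\sqrt3$. The goal is to show $\mathcal E(\mathsf Kf)\le(1-c\sqrt{a/\log(\e/a)})\,\mathcal E(f)$. The cross term's change is designed to produce $-\delta\norm{\Gamma\mathsf Pf}^2$, and the $\rho$ damping produces $-\rho\norm{\mathsf Qf}^2$. The error terms are the non-affine remainder of $\mathsf Q\mathsf U\phi$ (the second-order Taylor term of $\phi$) and the half-turn error $Cs\,u^\top(I-\Sigma_y)u$; both are absorbed when $s\le c$. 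High-frequency $Y$-modes with $\norm{\Gamma\phi}$ of order 1 are damped directly in one step. A dyadic split of the spectrum of $\mathsf T$ costs the logarithm.

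\textbf{Main obstacle.} The hardest step is Step 1's quantitative claim that the non-affine part of $\mathsf Q\mathsf U\phi$ is negligible relative to $\norm{\Gamma\phi}$ for slow modes. Here I would first try bounding it through the second-order Taylor remainder with Poincar\'e on $q_y$. If that fails, I would work with the spectral measure of $\mathsf T$ restricted to the Gaussian-like approximation instead.
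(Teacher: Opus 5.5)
Your skeleton matches the paper's: the $\mathsf P/\mathsf Q$ split, $\mathsf Q\mathsf U\mathsf P=\mathsf V\Gamma$ with $\Gamma\succeq\gamma_a\asymp\sqrt a$, the rotation $\bigl(\begin{smallmatrix}\mathsf T&-\Gamma\\ \Gamma&\mathsf T\end{smallmatrix}\bigr)$ for an exact sign reversal, a $\Gamma^{-1}$-weighted cross term, refresh dissipation $\rho\norm{\mathsf Qf}^2$, a dyadic logarithm, and equivalence within a factor $3$. The gap is in Step~3, where you say the non-affine remainder and the half-turn error are ``absorbed when $s\le c$.'' That is not true. Because the coupling carries $\Gamma^{-1}$, energy equivalence caps its weight at $\delta\lesssim\gamma_a\asymp\sqrt a$. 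The half-turn defect $e_{\rm ht}=\mathsf V^*(I+\mathsf F_Q)\mathsf Qf$ then enters the energy change as $\langle g_Y,\Gamma^{-1}e_{\rm ht}\rangle$, where $g_Y$ is the $Y$-component after an exact sign reversal.

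Suppose the error is only small by a fixed factor $\epsilon_0$, such as a power of $s$ from \cref{flux:prop:weighted-affine} with $I-\Sigma_y\preceq sI$. Then $\norm{\Gamma^{-1}e_{\rm ht}}^2$ can be of order $(\epsilon_0^2/a)\norm{\mathsf Qf}^2$. Absorbing it into $\rho\norm{\mathsf Qf}^2$ forces $\delta\lesssim\rho a/\epsilon_0^2$, so for fixed $s\le c$ and $a\to0$ the $\sqrt a$ rate is lost. The remainder $e_f$ has no absolute smallness at all in general. What is needed is the $\Gamma$-relative bound $\norm{(I+\mathsf F_Q)\mathsf Vv}^2\le C\log(\e/\gamma_a)\norm{\Gamma v}^2$. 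The error on the transferred signal must carry one more power of $\Gamma$ than the signal itself; being ``negligible relative to $\norm{\Gamma\phi}$,'' as your main-obstacle paragraph puts it, is not enough. Relatedly, a $\Gamma^{-1}$-weighted coupling changes by about $-\delta\norm{\mathsf Pf}^2$ (the paper: $-\vartheta(\norm{f_Y}^2-\norm{f_V}^2)$), not $-\delta\norm{\Gamma\mathsf Pf}^2$. With $\delta\lesssim\gamma_a$ the latter would only give a rate of order $a^{3/2}$.

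Your plan is missing the ingredients that produce this bound.
\begin{itemize}
\item[(i)] Conditional Brascamp--Lieb for the law of $Y_-$ given $Y_+$ gives $4\eta\norm{\nabla\mathsf Tf}^2\le\norm{\Gamma f}^2$. This is what turns Poincar\'e for $\Pi_\eta^Y$ into your Step~1 claim. On the spectral subspace $\mathsf T\in[1/2,1]$ it also yields $\sqrt\eta\norm{\nabla f}\lesssim\norm{\Gamma f}$ and $\sqrt\eta\norm{\nabla(I-\mathsf T)f}\lesssim\norm{\Gamma^3f}$.
\item[(ii)] Both error sources must be bounded by $\norm{\nabla(I-\mathsf T)f}\norm{\nabla f}$.
\begin{itemize}
\item The non-affine remainder is handled by conditional Poincar\'e on $f(Y_-)$ minus its conditional linearization, giving $\norm{e_f}^2\le4\eta\mathcal R_f$ with $\mathcal R_f=\norm{\nabla f}^2-\norm{\mathsf T\nabla f}^2$. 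A second-order Taylor bound would need Hessian control of functions in a spectral subspace of $\mathsf T$, which is not available.
\item The covariance-weighted half-turn error is handled through $\mathcal D_f=\E[b_{Y_+}^\top(I-\Sigma_{Y_+})b_{Y_+}]$. The weak-derivative identity $\nabla(I-\mathsf T)f=\nabla f-b_y+2(I-\Sigma_y)b_y+\eta^{-1/2}\Cov(e_f,g_y(W)\mid Y_+=y)$ then gives $\mathcal R_f+\mathcal D_f\le C\norm{\nabla(I-\mathsf T)f}\norm{\nabla f}$.
\end{itemize}
\item[(iii)] Setting $f=\Gamma^{-1}v$, (i) and (ii) combine into the interpolation bound $\norm{(I+\mathsf F_Q)\mathsf Vv}^2\lesssim\norm v\,\norm{\Gamma^2v}$. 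The dyadic comparison then converts this into $\mathsf A_{\rm ht}^*\mathsf A_{\rm ht}\preceq C\log(\e/\gamma_a)\Gamma^2$ without commuting $\mathsf A_{\rm ht}$ with $\Gamma$.
\end{itemize}
Step (iii) is the real origin of the logarithm. With $\rho\asymp\sqrt{a\log(\e/a)}$ it keeps $\Lambda_\rho=\log(\e/\gamma_a)+(\rho/\gamma_a)^2$ logarithmic, and $\vartheta=c_\star\rho/\Lambda_\rho\le\gamma_a$ then gives the stated rate. Finally, high-frequency $Y$-modes are not ``damped in one step.'' On $\mathsf T\in[-1/2,1/2)$ one has $\Gamma^2\succeq\tfrac34I$, so the $\Gamma$-relative bound holds trivially from $\norm{I+\mathsf F_Q}\le2$, and those modes are then damped by refresh like everything else.
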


The rest of this subsection proves the proposition.  We first quantify
what reflection transfers, then lift the conditional estimate in
\cref{flux:prop:weighted-affine} to the complete state space, and finally
construct an energy that decreases in one step.

\subsubsection{What Outer Reflection Transfers}

On the centered \(Y\)-only subspace
\(\mathcal H_Y=\operatorname{ran}(\mathsf P)\cap L^2_0(\Pi_\eta)\),
define
\[
 \mathsf T=\mathsf P\mathsf U\mathsf P,
 \qquad \Gamma=(I-\mathsf T^2)^{1/2},
 \qquad \mathsf C=\mathsf Q\mathsf U\mathsf P.
\]
Under stationarity, let \(X\sim\pi\) and \(Z\sim\cN(0,I_d)\) be
independent, and write
\(Y_+=X+\sqrt\eta Z\), \(Y_-=X-\sqrt\eta Z\).  These are not two
independent chain iterates: they share \((X,Z)\) and are the auxiliary
values before and after the same reflection, so \(Y_-=2X-Y_+\).  Then
\((\mathsf Tf)(y)=\E[f(Y_-)\mid Y_+=y]\), and
\(\norm{\mathsf Cf}^2=\E\Var(f(Y_-)\mid Y_+)\).
Thus \(\mathsf T\) is the part retained as a function of \(Y_+\), while
\(\mathsf C\) is the part transferred into conditional variation.  The
operator \(\Gamma\) measures the size of this transferred part.

\begin{lemma}[Outer-reflection geometry]
\label{flux:lem:outer-geometry}
If \(s\le1/3\), then \(\mathsf T\) is self-adjoint and, on
\(\mathcal H_Y\),
\begin{equation}
 -\tfrac12I\preceq\mathsf T
 \preceq\frac{1-a}{1+a}I,
 \qquad
 \Gamma\succeq\gamma_a I,
 \qquad \gamma_a=\frac{2\sqrt a}{1+a}.
 \label{flux:eq:outer-spectrum}
\end{equation}
Moreover,
\(4\eta\norm{\nabla\mathsf T f}^2\le\norm{\Gamma f}^2\), and
\(\mathsf C=\mathsf V\Gamma\) for an isometry
\(\mathsf V:\mathcal H_Y\to\operatorname{ran}(\mathsf Q)\).
\end{lemma}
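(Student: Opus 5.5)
The plan is to realize everything through the exchangeable pair \((Y_+,Y_-)\) with \(X=(Y_++Y_-)/2\) and \(Y_+-Y_-=2\sqrt\eta Z\). First, algebraic facts. Since \(Z\) is symmetric, \((Y_+,Y_-)\) is exchangeable, so \(\langle \mathsf Tf,g\rangle=\E f(Y_-)g(Y_+)\) is symmetric and \(\mathsf T\) is self-adjoint on \(\mathcal H_Y\). Because \(Y_+\) and \(Y_-\) are conditionally independent given \(X\), we can write \(\mathsf T=\mathsf S^*\mathsf S\) with \((\mathsf Sf)(x)=\E[f(Y)\mid X=x]\). Hence \(\mathsf T\succeq0\), which gives the lower bound \(-\tfrac12 I\) with room to spare.

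Next, since \(\mathsf U\) is unitary (an involution preserving \(\Pi_\eta\)) and \(\mathsf U\) maps \(Y\)-only functions \(f(y)\) to \(f(2x-y)\), we have
\(\norm{\mathsf Cf}^2=\norm{\mathsf U\mathsf Pf}^2-\norm{\mathsf P\mathsf U\mathsf Pf}^2=\norm f^2-\norm{\mathsf Tf}^2=\norm{\Gamma f}^2\).
This equality of norms for all \(f\in\mathcal H_Y\) gives a partial isometry \(\mathsf V\) defined on \(\overline{\operatorname{ran}\Gamma}\) with \(\mathsf C=\mathsf V\Gamma\). Once \(\Gamma\succeq\gamma_a I\) is proved, \(\Gamma\) is invertible, so \(\mathsf V=\mathsf C\Gamma^{-1}\) is a genuine isometry into \(\operatorname{ran}(\mathsf Q)\).

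For the gradient bound, I will use the joint density \(p(y,y')\propto\pi(\tfrac{y+y'}2)\,\varphi_{4\eta}(y-y')\). Differentiating \((\mathsf Tf)(y)=\int f(y')p(y,y')\,\d y'/\int p(y,y')\,\d y'\) in \(y\) gives the covariance identity
\(\nabla\mathsf Tf(y)=\Cov\bigl(f(Y_-),\xi\mid Y_+=y\bigr)\), where \(\xi=\nabla_y\log p(Y_+,Y_-)=-\tfrac12\nabla V(X)-\tfrac1{2\eta}(Y_+-X)\).
Cauchy--Schwarz in each direction then yields
\(\norm{\nabla\mathsf Tf(y)}^2\le\Var(f(Y_-)\mid Y_+=y)\,\norm{\Cov(\xi\mid Y_+=y)}_{\rm op}\).
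In normalized coordinates \(W=(X-y)/\sqrt\eta\) we have \(\xi=\tfrac1{2\sqrt\eta}(W-g_y(W))\), so \(\Cov(\xi\mid y)=\tfrac1{4\eta}\{\Sigma_y-2(I-\Sigma_y)+\Cov g_y\}\). By \cref{flux:eq:conditional-identities} with \(s\le1/3\), \(\Cov g_y\preceq\tfrac{s}{1-s}(I-\Sigma_y)\preceq 2(I-\Sigma_y)\), so this is \(\preceq\tfrac1{4\eta}\Sigma_y\preceq\tfrac1{4\eta}I\). Averaging over \(Y_+\) gives \(4\eta\norm{\nabla\mathsf Tf}^2\le\E\Var(f(Y_-)\mid Y_+)=\norm{\mathsf Cf}^2=\norm{\Gamma f}^2\).

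The main obstacle is the sharp upper bound \(\mathsf T\preceq\tfrac{1-a}{1+a}I\), equivalently \(\Gamma\succeq\gamma_a I\). This constant is exactly the Gaussian correlation of \(Y_\pm\), attained by linear \(f\). Combining the gradient bound with the Poincar\'e inequality for \(\Pi_\eta^Y\) (constant \((1+a)/\alpha\) by \cref{flux:lem:conditional-geometry}) only gives \(\lambda^2\le(1+a)/(1+5a)\) for an eigenvalue \(\lambda\), which is of the right order \(1-O(a)\) but not sharp.

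To get the sharp constant, I would first try a finer version of the covariance step. Rather than bounding \(\Cov(\xi\mid y)\) by \(\tfrac1{4\eta}\Sigma_y\) and then by \(\tfrac1{4\eta}I\), keep \(\Sigma_y\preceq(1+a)^{-1}I\), which already improves the bound to \(\lambda^2\le 1/(1+4a)\). I would then iterate through the Brascamp--Lieb inequality for \(\Pi_\eta^Y\), whose Hessian is \(\eta^{-1}(I-\Sigma_y)\), pairing the weight \(\Sigma_y\) against \(I-\Sigma_y\) inside the variance, as in the Gaussian computation.

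If this still falls short, I would fall back on a coupling argument. The map \(y\mapsto\Law(2X-y)\) with \(X\sim q_y\) has mean Jacobian \(2\Sigma_y-I\), and \(\tfrac{1-s}{1+s}I\preceq 2\Sigma_y-I\preceq\tfrac{1-a}{1+a}I\). I would show that this map is a \(W_2\) contraction with factor \(\tfrac{1-a}{1+a}\), and then convert it into the spectral bound via the reversible (Bakry--\'Emery type) gradient estimate for the self-adjoint \(\mathsf T\). Any residual loss in constants can be absorbed, since only \(\Gamma\gtrsim\sqrt a\) is used downstream.
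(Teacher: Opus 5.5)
Your covariance identity for \(\nabla\mathsf Tf\), the derivative bound \(4\eta\norm{\nabla\mathsf Tf}^2\le\norm{\Gamma f}^2\), and the construction \(\mathsf V=\mathsf C\Gamma^{-1}\) are fine. However, neither spectral edge is actually established.

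\emph{Lower edge.} The factorization \(\mathsf T=\mathsf S^*\mathsf S\) is false. Under the reflection coupling \(Y_-=2X-Y_+\), so \(Y_\pm\) are deterministically tied given \(X\), not conditionally independent; conditional independence describes the standard proximal sampler, not reflection. The correct quadratic form is \(\langle f,\mathsf Tf\rangle=\E_{X\sim\pi}\E_Z[f(X+\sqrt\eta Z)f(X-\sqrt\eta Z)]\), and positivity can genuinely fail. In \(d=1\), \(f=\cos(\omega\,\cdot)\) and \(f=\sin(\omega\,\cdot)\) give \(\tfrac12\{\e^{-2\eta\omega^2}\pm\E_\pi\cos(2\omega X)\}\), and centering only lowers these values. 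Take an even non-Gaussian target such as \(V(x)=\tfrac\alpha2x^2+\tfrac{\beta-\alpha}2[\abs x-1]_+^2\). Its characteristic function decays only like \(\abs\xi^{-3}\), with oscillating sign, so one of the two forms is negative for suitable large \(\omega\). The missing argument uses only your derivative bound. By exchangeability and Gaussian Poincar\'e for the odd function \(z\mapsto h(x+\sqrt\eta z)-h(x-\sqrt\eta z)\), one has \(\langle h,(I-\mathsf T)h\rangle=\tfrac12\E\{h(Y_+)-h(Y_-)\}^2\le2\eta\norm{\nabla h}^2\). Taking \(h=\mathsf Tf\) gives \(\mathsf T^2(I-\mathsf T)\preceq\tfrac12(I-\mathsf T^2)\), i.e. \((I-\mathsf T)^2(I+2\mathsf T)\succeq0\), hence \(\mathsf T\succeq-\tfrac12I\).

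\emph{Upper edge.} Your main route yields only \(\norm{\mathsf T}^2\le1/(1+4a)\). This is strictly weaker than \((1-a)^2/(1+a)^2\) and gives only \(\Gamma^2\succeq\frac{4a}{1+4a}I\), below \(\gamma_a^2=\frac{4a}{(1+a)^2}\). The lemma asserts the sharp constants, so absorbing the loss downstream does not prove it. The coupling fallback is only a plan: the mean Jacobian \(2\Sigma_y-I\) controls conditional means, not a coupling of \(q_y\) and \(q_{y'}\), and the conversion to a spectral bound is not carried out.

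The loss sits entirely in your covariance step. Expanding \(\Cov(\xi\mid y)\) termwise and discarding \(-2(I-\Sigma_y)+\Cov g_y\) throws away a factor \((1-a)^2\). Instead, apply Brascamp--Lieb directly to \(e^\top\xi=\tfrac1{2\sqrt\eta}e^\top\{W-g_y(W)\}\):
\begin{itemize}
\item the conditional potential of \(W\) has Hessian \(I+\nabla g_y\succeq(1+a)I\);
\item \(\nabla_w e^\top\{w-g_y(w)\}=(I-\nabla g_y)e\) has norm at most \(1-a\), since \(aI\preceq\nabla g_y\preceq sI\) with \(s<1\).
\end{itemize}
Thus \(\Cov(\xi\mid y)\preceq\frac{(1-a)^2}{4\eta(1+a)}I\), and so \(\eta\norm{\nabla\mathsf Tf}^2\le\frac{(1-a)^2}{4(1+a)}\norm{\mathsf Cf}^2\). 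The Poincar\'e inequality for \(\Pi_\eta^Y\) with constant \((1+a)/\alpha\) then gives \(\norm{\mathsf Tf}^2\le\frac{(1-a)^2}{4a}(\norm f^2-\norm{\mathsf Tf}^2)\). Because \(4a+(1-a)^2=(1+a)^2\), this is exactly \(\norm{\mathsf T}\le\frac{1-a}{1+a}\), and hence \(\Gamma^2\succeq\gamma_a^2I\). This is the paper's proof, written there in the \(Y_-\) variable: curvature at least \((\alpha+\eta^{-1})/4\), and score derivative at most \((\eta^{-1}-\alpha)/4\).
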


\begin{proof}
Exchangeability of \((Y_+,Y_-)\) makes \(\mathsf T\) self-adjoint.
The conditional density of \(Y_-\) given \(Y_+=y\) is proportional to
\[
 \exp\left\{-V\left(\frac{y+u}{2}\right)
              -\frac{\norm{y-u}^2}{8\eta}\right\}.
\]
Its curvature in \(u\) is at least
\((\alpha+\eta^{-1})I/4\), while the derivative with respect to \(u\)
of its \(y\)-score has norm at most \((\eta^{-1}-\alpha)/4\).
Conditional Brascamp--Lieb therefore gives
\[
 \eta\norm{\nabla\mathsf T f}^2
 \le\frac{(1-a)^2}{4(1+a)}\norm{\mathsf C f}^2
 \le\frac14\norm{\Gamma f}^2.
\]
The \(Y\)-marginal has Poincar\'e constant at most
\(\alpha^{-1}+\eta\) by \cref{flux:lem:conditional-geometry}.  Applying
this inequality to the centered function \(\mathsf Tf\) and using
\(\norm f^2=\norm{\mathsf Tf}^2+\norm{\mathsf Cf}^2\) gives
\(\norm{\mathsf T}\le(1-a)/(1+a)\).

For the lower spectral edge, exchangeability and Gaussian Poincar\'e give
\(\langle h,(I-\mathsf T)h\rangle\le2\eta\norm{\nabla h}^2\).
Apply this with \(h=\mathsf Tf\) and use the preceding derivative bound.
Spectral calculus yields
\((I-\mathsf T)^2(I+2\mathsf T)\succeq0\), hence
\(\mathsf T\succeq-I/2\).  Because
\((1-a)/(1+a)\ge1/2\), these two edges imply the stated lower bound on
\(\Gamma\).  Finally, unitarity of \(\mathsf U\) gives
\(\mathsf C^*\mathsf C=I-\mathsf T^2=\Gamma^2\), so
\(\mathsf V=\mathsf C\Gamma^{-1}\) is an isometry.

 The weak-gradient and spectral-domain justification for this calculation
 is given in \cref{flux:app:regularity}.
\end{proof}

Reflection creates a conditional-position signal that is almost affine.
For weakly differentiable \(f\in\mathcal H_Y\), set
\[
 b_y=\E[\nabla f(Y_-)\mid Y_+=y],\qquad
 L_f=2\sqrt\eta\,b_{Y_+}^\top(W-\bar w_{Y_+}),\qquad
 W=\frac{X-Y_+}{\sqrt\eta}.
\]
Here \(b_y\) is the average slope of \(f(Y_-)\) conditional on the
reflected value \(Y_+=y\), and \(L_f\) is the corresponding centered
linear signal in the conditional position \(W\).  The next lemma separates
this leading signal from a nonlinear remainder.  Its quantities
\(\mathcal R_f\) and \(\mathcal D_f\) measure, respectively, the gradient
variation not retained by conditional averaging and the
curvature-weighted size of the affine signal.

\begin{lemma}[Affine approximation after reflection]
\label{flux:lem:affine-after-reflection}
There is an error \(e_f\), centered conditionally on \(Y_+\), such that
\(\mathsf C f=L_f+e_f\).  If
\[
 \mathcal R_f=\norm{\nabla f}^2-\norm{\mathsf T\nabla f}^2,
 \qquad
 \mathcal D_f=\E[b_{Y_+}^\top(I-\Sigma_{Y_+})b_{Y_+}],
\]
then, for sufficiently small universal \(s\),
\begin{equation}
 \norm{e_f}^2\le4\eta\mathcal R_f,\qquad
 \mathcal R_f+\mathcal D_f
 \le C\norm{\nabla(I-\mathsf T)f}\norm{\nabla f}.
 \label{flux:eq:affine-reflection-control}
\end{equation}
\end{lemma}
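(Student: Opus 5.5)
The first inequality in \cref{flux:eq:affine-reflection-control} should follow from a conditional Poincar\'e argument. The second should follow from a score formula for \(\nabla\mathsf Tf\), paired with \(\nabla f\) and combined with the spectral bounds of \cref{flux:lem:outer-geometry}.

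\emph{Step 1: the decomposition and \(\norm{e_f}^2\le4\eta\mathcal R_f\).} Under stationarity, with \(W=(X-Y_+)/\sqrt\eta\), we have \(Y_-=2X-Y_+=Y_++2\sqrt\eta W\). Hence, conditionally on \(Y_+=y\), \(\mathsf Cf\) is the centered version of \(\phi_y(W)\), where \(\phi_y(w)=f(y+2\sqrt\eta w)\) and \(W\sim q_y^w\). Define \(e_f:=\mathsf Cf-L_f\); it is conditionally centered because \(L_f\) is. Its \(w\)-gradient is \(2\sqrt\eta\{\nabla f(Y_-)-b_y\}\). The conditional law \(q_y^w\) has curvature at least \((1+a)I\), so its Poincar\'e constant is at most one, and therefore
\[
\E[e_f^2\mid Y_+]\le4\eta\,\E[\norm{\nabla f(Y_-)-b_{Y_+}}^2\mid Y_+].
\]
Averaging over \(Y_+\), and using that \(Y_-\) has the same marginal as \(Y_+\) and that \(b=\mathsf T\nabla f\) (with \(\mathsf T\) acting componentwise), the right side becomes \(4\eta(\norm{\nabla f}^2-\norm{\mathsf T\nabla f}^2)=4\eta\mathcal R_f\). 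Weak differentiability is handled as in \cref{flux:app:regularity}.

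\emph{Step 2: a formula for \(\nabla\mathsf Tf\).} Differentiate \(\mathsf Tf(y)=\E_{q_y}[f(2X-y)]\) in \(y\). The explicit dependence contributes \(-b_y\). The dependence through \(q_y\) contributes \(\eta^{-1}\Cov(X,f(Y_-)\mid y)=\eta^{-1/2}\Cov(W,\phi_y(W))\), since \(\nabla_y\log q_y(x)=(x-y)/\eta\) minus its mean. Next, Stein's identity for \(q_y^w\) (score \(-w-g_y(w)\)) gives \(\Cov(W,\phi)=\E\nabla\phi-\Cov(g_y(W),\phi)\). Substituting \(\phi=L_f+e_f\) and \(\Cov(g_y(W),W)=I-\Sigma_y\) from \cref{flux:eq:conditional-identities}, I expect
\[
\nabla\mathsf Tf(y)=b_y-2(I-\Sigma_y)b_y-\eta^{-1/2}\Cov(g_y(W),e_f\mid y).
\]
By Cauchy--Schwarz, \cref{flux:eq:conditional-identities} and Step 1, the last term has \(L^2\) norm at most \(\sqrt{s/(1-s)}\,\norm{\mathsf{(I-\Sigma)}^{1/2}}\cdot2\sqrt{\mathcal R_f}\lesssim\sqrt{s\,\mathcal R_f}\).

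\emph{Step 3: pairing, which I expect to be the main obstacle.} Compute \(\langle\nabla f,\nabla(I-\mathsf T)f\rangle\) in \(L^2(\Pi_\eta^Y)\), using \(b=\mathsf T\nabla f\):
\[
\langle\nabla f,\nabla(I-\mathsf T)f\rangle=\langle\nabla f,(I-\mathsf T)\nabla f\rangle+2\langle\nabla f,(I-\Sigma)\mathsf T\nabla f\rangle+O\big(\sqrt{s\mathcal R_f}\,\norm{\nabla f}\big).
\]
For the first term, the spectral bound \(-\tfrac12\preceq\mathsf T\preceq I\) (extended to uncentered vector fields, where constants are fixed) gives \(1-t\ge(1-t^2)/2\), so the term is at least \(\mathcal R_f/2\). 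The difficulty is the middle term: it pairs \(\nabla f\) with \((I-\Sigma)b\), not \(b\) with \((I-\Sigma)b\). I would first write \(\nabla f=b+(\nabla f-b)\). By Cauchy--Schwarz and \(I-\Sigma\preceq sI\), the discrepancy is then \(\lesssim\sqrt{s}\,\sqrt{\mathcal D_f}\,\norm{\nabla f-\mathsf T\nabla f}\). Controlling \(\norm{(I-\mathsf T)\nabla f}^2\le2\norm{\nabla f}^2-2\langle\nabla f,\mathsf T\nabla f\rangle\) by the first term itself keeps everything self-referential. Young's inequality with \(s\) small then absorbs all the \(O(\sqrt s)\) cross terms into \(\tfrac14\mathcal R_f+\mathcal D_f\). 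If the mismatch resists this, I would instead pair with \(b\) and use exchangeability of \((Y_+,Y_-)\) to move \(\mathsf T\) across.

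\emph{Step 4: conclusion.} Steps 2 and 3 give \(c(\mathcal R_f+\mathcal D_f)\le\langle\nabla f,\nabla(I-\mathsf T)f\rangle\le\norm{\nabla f}\norm{\nabla(I-\mathsf T)f}\), which is \cref{flux:eq:affine-reflection-control}.
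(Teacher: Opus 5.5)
Your Steps 1 and 2 agree with the paper. You use the same decomposition $e_f=\mathsf Cf-L_f$, controlled by conditional Poincar\'e, and you derive the same identity $\nabla\mathsf Tf=b-2(I-\Sigma_y)b-\eta^{-1/2}\Cov(g_y(W),e_f\mid Y_+=y)$.

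The gap is in Step 3, in the covariance remainder. You bound it only in $L^2$ norm, by $\lesssim\sqrt{s\mathcal R_f}$, and pair it with $\nabla f$ by Cauchy--Schwarz. It therefore enters as $O(\sqrt{s\mathcal R_f}\,\norm{\nabla f})$.

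Young's inequality cannot absorb this into $\tfrac14\mathcal R_f+\mathcal D_f$; it leaves a residue $Cs\norm{\nabla f}^2$. But $\mathcal R_f+\mathcal D_f$ can be far smaller than $s\norm{\nabla f}^2$, e.g.\ of order $a\norm{\nabla f}^2=\kappa^{-1}s\norm{\nabla f}^2$ for $f$ nearly linear in a low-curvature direction. Nothing in your argument controls the residue by $\norm{\nabla(I-\mathsf T)f}\norm{\nabla f}$ either. The sharper norm bound $Cs\sqrt{\mathcal R_f}$ does not help, since any estimate carrying the factor $\norm{\nabla f}$ leaves a multiple of $\norm{\nabla f}^2$.

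This is not a constant-factor loss. Since $\sqrt\eta\norm{\nabla f}\lesssim\norm{\Gamma f}$, the residue would add $s\norm v^2$ to the global half-turn bound after substituting $f=\Gamma^{-1}v$. That term is not $\lesssim\log(\e/\gamma_a)\norm{\Gamma v}^2$, because $\Gamma$ can be as small as $\gamma_a\asymp\sqrt a$ while $s=\kappa a$.

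The repair is to treat this remainder exactly as you treated the middle term. For fixed $y$ and any vector $u$, the bound on $\Cov(g_y(W))$ gives
$\abs{u^\top\Cov(g_y(W),e_f\mid y)}\le\sqrt{\Var(e_f\mid y)}\,\sqrt{\tfrac{s}{1-s}\,u^\top(I-\Sigma_y)u}$.
Then split $\nabla f=b+(\nabla f-b)$:
\begin{itemize}
\item the $b$-part averages to at most $2\sqrt{\tfrac{s}{1-s}\mathcal R_f\mathcal D_f}$;
\item the rest averages to at most $\tfrac{2s}{\sqrt{1-s}}\sqrt{\mathcal R_f}\,\norm{\nabla f-b}$.
\end{itemize}
Set $A=\langle\nabla f,(I-\mathsf T)\nabla f\rangle$ and use your bounds $\mathcal R_f\le2A$ and $\norm{\nabla f-b}^2\le2A$. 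These two terms are then $\lesssim\sqrt{sA\mathcal D_f}+sA$, and they do absorb into $A+2\mathcal D_f$.

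After this fix your route is valid and differs from the paper's only in the pairing. The paper pairs the identity with $\nabla f+b$, so the leading terms are exactly $\langle\nabla f-b,\nabla f+b\rangle=\mathcal R_f$ and $4\mathcal D_f$. The price is that it must invoke $\mathsf T\succeq-I/2$ to get $\norm{\nabla f-b}^2\le3\mathcal R_f$. Your pairing with $\nabla f$ needs only that $\mathsf T$ is a self-adjoint contraction, via $1-t\ge(1-t^2)/2$ and $(1-t)^2\le2(1-t)$ on $[-1,1]$.
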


\begin{proof}
Here \(\mathsf T\nabla f\) means that \(\mathsf T\) is applied to every
coordinate of the vector-valued gradient; in particular,
\((\mathsf T\nabla f)(y)=b_y\).
Conditionally on \(Y_+=y\),
\(Y_-=y+2\sqrt\eta W\).  Subtracting its best first-order conditional
approximation gives
\[
 e_f=f(Y_-)-\mathsf Tf(Y_+)
       -2\sqrt\eta\,b_{Y_+}^\top(W-\bar w_{Y_+}).
\]
Conditional Poincar\'e proves
\(\norm{e_f}^2\le4\eta\mathcal R_f\).

For the second estimate, differentiate the conditional expectation
weakly.  Integration by parts and
\cref{flux:eq:conditional-identities} give
\[
 \nabla(I-\mathsf T)f
 =\nabla f-b_y+2(I-\Sigma_y)b_y
 +\eta^{-1/2}\Cov(e_f,g_y(W)\mid Y_+=y).
\]
In this identity, \(\nabla f\) on the right is evaluated at \(y\), and the
conditional covariance is over \(W\) given \(Y_+=y\).  Write
\(b(y)=b_y\).
Take its inner product with \(\nabla f+b_y\) and average over \(y\).
The first term gives \(\mathcal R_f\), and the leading covariance term
gives \(4\mathcal D_f\).  Also, the spectral lower bound
\(\mathsf T\succeq-I/2\) implies
\[
 \norm{\nabla f-b}^2
 \le3\{\norm{\nabla f}^2-\norm b^2\}=3\mathcal R_f,
\]
The two remaining cross terms are bounded using this inequality,
\cref{flux:eq:conditional-identities}, the first inequality in
\cref{flux:eq:affine-reflection-control}, and
\(I-\Sigma_y\preceq sI\).  They are at most
\[
 C\sqrt{s\mathcal R_f\mathcal D_f}
 +C\sqrt{s\mathcal R_f(s\mathcal R_f+\mathcal D_f)}.
\]
The leading inner product is at most
\(2\norm{\nabla(I-\mathsf T)f}\norm{\nabla f}\).
Young's inequality absorbs the displayed cross terms when \(s\) is
sufficiently small.  The graph-norm extension to the stated weak-gradient
domain is recorded in \cref{flux:app:regularity}.
\end{proof}

\subsubsection{Lifting the Local Half-Turn Estimate}

The next result is the bridge between the conditional dynamics and the
outer Markov chain.

We first record the operator comparison that accounts for the logarithm in
that bridge.  It converts an interpolation-type estimate into a quadratic
form estimate without assuming that the two operators commute.

\begin{lemma}[Dyadic operator comparison]
\label{flux:lem:dyadic-comparison}
Suppose \(mI\preceq B\preceq MI\) and a bounded operator \(A\) satisfies
\(\norm{Av}^2\le\norm v\norm{Bv}\).  Then
\[
 A^*A\preceq C\{1+\log(M/m)\}B.
\]
\end{lemma}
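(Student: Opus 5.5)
The plan is a dyadic spectral decomposition of \(B\) followed by Cauchy--Schwarz over the blocks. The ordering \(mI\preceq B\preceq MI\) presupposes that \(B\) is bounded and self-adjoint, so its spectral measure \(E_B\) is available. The key observation is that the hypothesis \(\norm{Av}^2\le\norm v\norm{Bv}\) is sharp on vectors whose spectral support lies in a band where \(B\) is nearly constant. There it reads \(\norm{Av}^2\lesssim\ip{v}{Bv}\). The logarithm then enters only as the number of bands needed to cover \([m,M]\).

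\textbf{Step 1 (bands).} Let \(K=\lceil\log_2(M/m)\rceil+1\), so \(K\le C\{1+\log(M/m)\}\). For \(k=0,\dots,K-1\), let \(\Pi_k=E_B([2^km,2^{k+1}m))\), putting the endpoint \(M\) into the last band. The \(\Pi_k\) are mutually orthogonal projections commuting with \(B\), and \(\sum_k\Pi_k=I\). Fix \(v\) and write \(v_k=\Pi_kv\). Then \(v=\sum_kv_k\) orthogonally, and \(\ip{v}{Bv}=\sum_k\ip{v_k}{Bv_k}\) because \(B\) commutes with each \(\Pi_k\).

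\textbf{Step 2 (per-band estimate).} Apply the hypothesis to \(v_k\) itself; this is legitimate since it holds for every vector. On \(\operatorname{ran}\Pi_k\) we have \(\norm{Bv_k}\le2^{k+1}m\norm{v_k}\) and \(\ip{v_k}{Bv_k}\ge2^km\norm{v_k}^2\). Hence
\[
\norm{Av_k}^2\le\norm{v_k}\norm{Bv_k}\le2^{k+1}m\norm{v_k}^2\le2\ip{v_k}{Bv_k}.
\]

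\textbf{Step 3 (recombination).} By the triangle inequality and Cauchy--Schwarz over the \(K\) bands,
\[
\norm{Av}^2\le\Big(\tsum_k\norm{Av_k}\Big)^2\le K\tsum_k\norm{Av_k}^2\le2K\tsum_k\ip{v_k}{Bv_k}=2K\ip{v}{Bv}.
\]
Since \(\ip{v}{A^*Av}=\norm{Av}^2\), this gives \(A^*A\preceq2K\,B\preceq C\{1+\log(M/m)\}B\).

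No commutation between \(A\) and \(B\) is needed: the cross terms between different bands are never estimated directly, and the Cauchy--Schwarz loss in Step 3 absorbs them at the price of the factor \(K\).

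The only point requiring care is Step 1, namely handling \(B\) through its spectral projections on a general Hilbert space. This is standard functional calculus for a bounded self-adjoint operator. If \(B\) were only given as an unbounded form, the same argument would go through with \(E_B\) the spectral measure of the associated self-adjoint operator, but here \(B\preceq MI\) makes it bounded. I do not expect any genuine obstacle beyond this bookkeeping.
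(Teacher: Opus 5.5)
Your proposal is correct and follows essentially the same route as the paper's proof: partition the spectrum of \(B\) into \(K\le C\{1+\log(M/m)\}\) dyadic bands, apply the hypothesis to each band component to get \(\norm{Av_k}^2\le 2\ip{v_k}{Bv_k}\), and recombine by the triangle inequality and Cauchy--Schwarz at the cost of the factor \(K\). Your version only spells out the bookkeeping the paper leaves implicit, namely the spectral projections and the orthogonality \(\ip{v}{Bv}=\sum_k\ip{v_k}{Bv_k}\).
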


\begin{proof}
Partition the spectrum of \(B\) into dyadic intervals and write
\(v=\sum_{j=1}^Jv_j\), where
\(J\le C\{1+\log(M/m)\}\).  If \(\lambda_j\) is the lower endpoint of
the \(j\)-th interval, then the hypothesis gives
\(\norm{Av_j}\le\sqrt{2\lambda_j}\norm{v_j}\).  Consequently,
\[
 \norm{Av}^2
 \le J\sum_j2\lambda_j\norm{v_j}^2
 \le2J\langle v,Bv\rangle,
\]
which proves the claim.
\end{proof}

\begin{proposition}[Global half-turn estimate]
\label{flux:prop:global-halfturn}
For sufficiently small universal \(s\) and every
\(v\in\mathcal H_Y\),
\begin{equation}
 \norm{(I+\mathsf F_Q)\mathsf Vv}^2
 \le C\log(\e/\gamma_a)\norm{\Gamma v}^2.
 \label{flux:eq:global-halfturn}
\end{equation}
\end{proposition}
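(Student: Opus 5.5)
Fix $v\in\mathcal H_Y$ and set $f=\Gamma^{-1}v\in\mathcal H_Y$. This is well defined because $\Gamma\succeq\gamma_aI$ by \cref{flux:lem:outer-geometry}, which also gives $\mathsf Vv=\mathsf C\Gamma^{-1}v=\mathsf Cf$. The plan is to prove the interpolation-type bound
\[
 \norm{(I+\mathsf F_Q)\mathsf Vv}^2\le C\norm v\,\norm{\Gamma^2v}
\]
and then to invoke \cref{flux:lem:dyadic-comparison} with $A=(I+\mathsf F_Q)\mathsf V$ and $B=\Gamma^2$. Since $\gamma_a^2I\preceq\Gamma^2\preceq I$, that lemma gives $A^*A\preceq C\{1+2\log(1/\gamma_a)\}\Gamma^2$, which is \cref{flux:eq:global-halfturn}. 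The logarithm therefore enters only through the dyadic comparison. I would carry out everything first for a dense class of smooth $v$ and extend by boundedness, using the regularity material in \cref{flux:app:regularity}.

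The first step splits the transferred signal. \Cref{flux:lem:affine-after-reflection} gives $\mathsf Cf=L_f+e_f$, so
\[
 \norm{(I+\mathsf F_Q)\mathsf Cf}^2\le2\norm{(I+\mathsf F_Q)L_f}^2+8\norm{e_f}^2 .
\]
Here I use that $\mathsf F_Q$ is a contraction on $\operatorname{ran}(\mathsf Q)$, and that $e_f$ is conditionally centered, so $e_f\in\operatorname{ran}(\mathsf Q)$. The error term is at most $32\eta\mathcal R_f$.

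For the affine term I condition on $Y_+=y$. There $L_f=\ell_{u,y}(W)$ with $u=2\sqrt\eta\,b_y$, and $\mathsf F_Q$ acts on conditionally centered functions exactly as the conditional kernel $F_y^w$. Applying \cref{flux:prop:weighted-affine} pointwise in $y$ and averaging gives
\[
 \norm{(I+\mathsf F_Q)L_f}^2\le 4Cs\,\eta\,\mathcal D_f .
\]
Combining this with the second bound in \cref{flux:eq:affine-reflection-control}, and using $s\le1$, yields
\[
 \norm{(I+\mathsf F_Q)\mathsf Vv}^2\le C\eta(\mathcal R_f+\mathcal D_f)\le C\eta\,\norm{\nabla(I-\mathsf T)f}\,\norm{\nabla f}.
\]

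The main obstacle is converting the gradient product $\eta\,\norm{\nabla(I-\mathsf T)f}\,\norm{\nabla f}$ into $\norm v\,\norm{\Gamma^2v}$. The only gradient control available is the smoothing estimate $4\eta\norm{\nabla\mathsf Tg}^2\le\norm{\Gamma g}^2$ from \cref{flux:lem:outer-geometry}, and $f=\Gamma^{-1}v$ need not itself be regular. I would not work with $f$ directly. Instead I would reparametrize, taking $v=\Gamma\mathsf T g$ (or a spectrally equivalent factorization), so that $f$ is a $\mathsf T$-image. Then $\eta\norm{\nabla f}^2\lesssim\norm{\Gamma g}^2$. I would also use $(I-\mathsf T)\mathsf T=\mathsf T(I-\mathsf T)$, which gives $\eta\norm{\nabla(I-\mathsf T)f}^2\lesssim\norm{\Gamma(I-\mathsf T)g}^2$. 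Since $I-\mathsf T=\Gamma^2(I+\mathsf T)^{-1}$ and $\tfrac12I\preceq I+\mathsf T\preceq2I$, functional calculus in the single self-adjoint operator $\mathsf T$ should reduce both factors to $\norm v$ and $\norm{\Gamma^2v}$ up to constants. The remaining spectral region where $\mathsf T$ is near $0$ or negative does not have a $\mathsf T$-image form with a bounded inverse. There I would bound the left side crudely by $4\norm v^2$ and use $\Gamma\gtrsim1$ on that region. If this bookkeeping proves awkward, the fallback is to bound $\norm{\nabla f}$ and $\norm{\nabla(I-\mathsf T)f}$ separately on each dyadic band of $\Gamma$, which is in any case the structure used by \cref{flux:lem:dyadic-comparison}.
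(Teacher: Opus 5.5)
Your proposal is correct and follows the paper's proof step for step. The paper likewise splits $\mathsf Cf=L_f+e_f$, applies \cref{flux:prop:weighted-affine} conditionally and then \cref{flux:lem:affine-after-reflection}, and converts the gradient product on the spectral subspace $\mathsf T\in[1/2,1]$ by applying the smoothing estimate to $\mathsf T^{-1}f$ and $\mathsf T^{-1}(I-\mathsf T)f$ (your $f=\mathsf Tg$). It uses the crude bound together with $\Gamma^2\succeq\tfrac34 I$ on the complement, and closes with \cref{flux:lem:dyadic-comparison}. The only difference is the order of assembly. You first form the global bound $\norm{Av}^2\le C\norm v\norm{\Gamma^2v}$, which is valid by Cauchy--Schwarz across the two orthogonal spectral pieces, and then apply the dyadic comparison once; the paper applies the comparison on the slow subspace and combines the two pieces afterward with the triangle inequality.
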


\begin{proof}
First restrict \(f\) to the spectral subspace of \(\mathsf T\)
corresponding to \([1/2,1]\).  Apply
\cref{flux:prop:weighted-affine} conditionally to the affine part \(L_f\)
of \(\mathsf Cf\), use
\(\norm{I+\mathsf F_Q}\le2\) on \(e_f\), and then apply
\cref{flux:lem:affine-after-reflection}.  This gives
\[
 \norm{(I+\mathsf F_Q)\mathsf C f}^2
 \le C\eta\norm{\nabla(I-\mathsf T)f}\norm{\nabla f}.
\]
On this spectral subspace, applying the derivative estimate in
\cref{flux:lem:outer-geometry} to \(\mathsf T^{-1}f\) and
\(\mathsf T^{-1}(I-\mathsf T)f\) gives
\[
 \sqrt\eta\norm{\nabla f}\le C\norm{\Gamma f},
 \qquad
 \sqrt\eta\norm{\nabla(I-\mathsf T)f}
 \le C\norm{\Gamma^3f}.
\]
Since \(\mathsf C=\mathsf V\Gamma\), setting
\(f=\Gamma^{-1}v\), which is allowed by
\(\Gamma\succeq\gamma_aI\), shows that
\(\mathsf A_{\rm ht}=(I+\mathsf F_Q)\mathsf V\) obeys
\(\norm{\mathsf A_{\rm ht}v}^2
\le C\norm v\norm{\Gamma^2v}\).
Applying \cref{flux:lem:dyadic-comparison} to
\(A=\mathsf A_{\rm ht}\), \(B=\Gamma^2\), and
\(m=\gamma_a^2\) gives the required bound without treating the generally
non-self-adjoint half-turn error \(\mathsf A_{\rm ht}\) as a positive
operator.

On the complementary spectral subspace,
\(\Gamma^2\succeq3I/4\), while \(\norm{\mathsf A_{\rm ht}}\le2\), so the result
is immediate.  The two output vectors need not be orthogonal; the triangle
inequality and
\(\norm{r_1+r_2}^2\le2\norm{r_1}^2+2\norm{r_2}^2\) prove
\cref{flux:eq:global-halfturn} for an arbitrary \(v\).
\end{proof}

\subsubsection{A Decreasing Energy}

For centered \(f\in\mathfrak H\), write
\(f_Y=\mathsf Pf\), \(f_Q=\mathsf Qf\), and
\(f_V=\mathsf V^*f_Q\).  Since \(\mathsf V\) is an isometry, the
conditional component has the orthogonal decomposition
\[
 f_Q=\mathsf V f_V+f_\perp,\qquad
 f_\perp=(I-\mathsf V\mathsf V^*)f_Q.
\]
The three pieces have distinct roles.  The term \(f_Y\) is variation across
values of \(Y\); \(\mathsf V f_V\) is the conditional signal created from
it by reflection; and \(f_\perp\) is the remaining conditional variation.
The cross term below tracks the exchange between the first two pieces,
whereas the RGO branch controls the full \(f_Q\), including \(f_\perp\).
Define
\[
 \mathcal E_\vartheta(f)=\norm f^2+\vartheta\mathcal C(f_Y,f_V),
 \qquad
 \mathcal C(f_Y,f_V)=\frac12(\norm{f_Y}^2-\norm{f_V}^2)
 -\langle\mathsf T\Gamma^{-1}f_Y,f_V\rangle.
\]
The form \(\mathcal C\) is chosen so that, under an exact conditional sign
reversal, the pair \((f_Y,f_V)\) undergoes the rotation displayed below and
\(\mathcal C\) decreases by \(\norm{f_Y}^2-\norm{f_V}^2\).

\begin{lemma}[One-step energy decay]
\label{flux:lem:one-step-energy}
Let
\[
 \Lambda_\rho=\log(\e/\gamma_a)+(\rho/\gamma_a)^2,
 \qquad \vartheta=c_\star\rho/\Lambda_\rho
\]
for a sufficiently small universal \(c_\star\).  If
\(0<\rho\le1/2\), then
\[
 \tfrac12\norm f^2\le\mathcal E_\vartheta(f)
 \le\tfrac32\norm f^2,
 \qquad
 \mathcal E_\vartheta(\mathsf Kf)
 \le\left(1-c\frac\rho{\Lambda_\rho}\right)
       \mathcal E_\vartheta(f).
\]
\end{lemma}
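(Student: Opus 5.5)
The plan is to expand one step of \(\mathsf K=\mathsf U\{\mathsf P+(1-\rho)\mathsf F_Q\}\) in the three-component coordinates \((f_Y,f_V,f_\perp)\). I would first show that, if the half-turn were an exact sign reversal on \(\operatorname{ran}\mathsf V\), the pair \((f_Y,f_V)\) would undergo an explicit rotation. All deviations from this ideal would then be errors, to be charged either to the dissipation \(\rho\norm{f_Q}^2\) supplied by the refresh or to the half-turn bound in \cref{flux:prop:global-halfturn}.

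\emph{Step 1: equivalence of norms.} By \cref{flux:lem:outer-geometry}, \(\Gamma\succeq\gamma_aI\) and \(\norm{\mathsf T}\le1\). Hence \(\abs{\mathcal C(f_Y,f_V)}\le\tfrac12\norm f^2+\gamma_a^{-1}\norm{f_Y}\norm{f_V}\). From \(\rho/\gamma_a\le\Lambda_\rho^{1/2}\) and \(\Lambda_\rho\ge1\) we get \(\vartheta/\gamma_a\le c_\star\rho\Lambda_\rho^{-1/2}/\gamma_a\le c_\star\). Taking \(c_\star\) small then gives \(\tfrac12\norm f^2\le\mathcal E_\vartheta\le\tfrac32\norm f^2\).

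\emph{Step 2: the exact algebra.} Unitarity of \(\mathsf U\) gives
\[
\norm{\mathsf Kf}^2=\norm{f_Y}^2+(1-\rho)^2\norm{\mathsf F_Qf_Q}^2\le\norm f^2-\rho\norm{f_Q}^2 .
\]
For the cross term I will use \(\mathsf P\mathsf U\mathsf Q=\mathsf C^*=\Gamma\mathsf V^*\). From \(\mathsf U^2=I\), projecting \(\mathsf Q\mathsf U(\mathsf P+\mathsf Q)\mathsf U\mathsf P=0\), I also get \(\mathsf Q\mathsf U\mathsf Q\,\mathsf V=-\mathsf V\mathsf T\) on \(\mathcal H_Y\). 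Write \(\mathsf F_Q\mathsf Vf_V=-\mathsf Vf_V+\mathsf A_{\rm ht}f_V\). The new components are then
\[
(\mathsf Kf)_Y=\mathsf Tf_Y-(1-\rho)\Gamma f_V+E_Y,\qquad
(\mathsf Kf)_V=\Gamma f_Y+(1-\rho)\mathsf Tf_V+E_V .
\]
The errors \(E_Y,E_V\) are linear in \(\mathsf A_{\rm ht}f_V\) and \(\mathsf F_Qf_\perp\), each composed with a contraction. For the ideal rotation \(\begin{psmallmatrix}\mathsf T&-\Gamma\\\Gamma&\mathsf T\end{psmallmatrix}\), where \(\mathsf T\) and \(\Gamma\) commute, a direct computation shows that \(\mathcal C\) changes by exactly \(-(\norm{f_Y}^2-\norm{f_V}^2)\). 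The \(\rho\)-perturbation of the \(f_V\) entries changes \(\mathcal C\) by at most \(C(\rho/\gamma_a)\norm f\norm{f_V}\), because of the \(\Gamma^{-1}\) weight.

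\emph{Step 3: absorbing errors.} Collecting terms, I expect
\[
\mathcal E_\vartheta(\mathsf Kf)-\mathcal E_\vartheta(f)\le-\rho\norm{f_Q}^2-\vartheta\norm{f_Y}^2+\vartheta\norm{f_V}^2+\vartheta\,\mathrm{Err}.
\]
Here \(\mathrm{Err}\) contains three pieces:
\begin{itemize}
\item cross terms \(\gamma_a^{-1}\norm{f_Y}\norm{\mathsf A_{\rm ht}f_V}\), with \(\Gamma^{-1}\) landing on the error;
\item \(\gamma_a^{-1}\norm{f_Y}\norm{f_\perp}\);
\item the \(\rho/\gamma_a\) term from Step 2.
\end{itemize}
The main obstacle is the first piece. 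The naive bound \(\norm{\Gamma^{-1}\cdot}\le\gamma_a^{-1}\) loses too much. Instead I would arrange the cross term as \(\langle\Gamma^{-1}\mathsf T f_Y,\mathsf V^*\mathsf Q\mathsf U\mathsf Q\mathsf A_{\rm ht}f_V\rangle\) and apply \cref{flux:prop:global-halfturn} in the form \(\norm{\mathsf A_{\rm ht}f_V}^2\le C\log(\e/\gamma_a)\norm{\Gamma f_V}^2\). If the \(\Gamma^{-1}\) cannot be moved onto the \(\Gamma f_V\) factor through the adjoint, I would fall back on \(\norm{\mathsf A_{\rm ht}v}^2\le C\norm v\norm{\Gamma^2v}\) together with the dyadic comparison \cref{flux:lem:dyadic-comparison}.

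Young's inequality then bounds the first piece by \(\tfrac14\norm{f_Y}^2+C\log(\e/\gamma_a)\norm{f_V}^2\). The other two pieces are bounded by \(\tfrac14\norm{f_Y}^2+C(\rho/\gamma_a)^2\norm{f_Q}^2\). Altogether \(\vartheta\,\mathrm{Err}\le\tfrac12\vartheta\norm{f_Y}^2+C\vartheta\Lambda_\rho\norm{f_Q}^2\). With \(\vartheta=c_\star\rho/\Lambda_\rho\), the second term is at most \(\tfrac12\rho\norm{f_Q}^2\), and \(\vartheta\norm{f_V}^2\le\tfrac14\rho\norm{f_Q}^2\). The decrease is therefore at least \(\tfrac14\rho\norm{f_Q}^2+\tfrac12\vartheta\norm{f_Y}^2\ge c(\rho/\Lambda_\rho)\norm f^2\), since \(\Lambda_\rho\ge1\). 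Step 1 converts this into the claimed factor \(1-c\rho/\Lambda_\rho\).
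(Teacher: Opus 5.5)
Your Steps 1 and 2 are sound and agree with the paper. This includes the identities $\mathsf P\mathsf U\mathsf Q=\Gamma\mathsf V^*$ and $\mathsf Q\mathsf U\mathsf Q\mathsf V=-\mathsf V\mathsf T$, the dissipation $\norm{\mathsf Kf}^2\le\norm f^2-\rho\norm{f_Q}^2$, and the exact change $-\norm{f_Y}^2+\norm{f_V}^2$ of $\mathcal C$ under the ideal rotation. The argument breaks in Step 3, at your second error piece. The term $\gamma_a^{-1}\norm{f_Y}\norm{f_\perp}$ comes from $\langle\mathsf T\Gamma^{-1}\mathsf Tf_Y,E_V\rangle$, where $E_V$ contains $-(1-\rho)\mathsf T\mathsf V^*\mathsf F_Qf_\perp$. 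Young's inequality turns it into $\tfrac18\norm{f_Y}^2+C\gamma_a^{-2}\norm{f_\perp}^2$, not $C(\rho/\gamma_a)^2\norm{f_Q}^2$; the latter would require $\rho\gtrsim1$. After multiplying by $\vartheta=c_\star\rho/\Lambda_\rho$, you must absorb $Cc_\star\rho\,(\gamma_a^2\Lambda_\rho)^{-1}\norm{f_\perp}^2$. The only control on $f_\perp$ is the refresh term $\rho\norm{f_Q}^2$. But $\gamma_a^2\Lambda_\rho=\gamma_a^2\log(\e/\gamma_a)+\rho^2$ is small whenever $\rho$ and $\gamma_a$ are small; for the choice $\rho\asymp\sqrt{a\log(\e/a)}$ it is $\asymp a\log(\e/a)$. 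So the absorption fails. No reweighting of Young's inequality helps, because only $\vartheta\norm{f_Y}^2$ is available on the other side.

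The missing idea is reversibility of the half-turn. Since $\mathsf V^*f_\perp=0$ and $\mathsf F_Q$ is self-adjoint (\cref{flux:prop:fixed-validity}), you have $\mathsf V^*\mathsf F_Qf_\perp=\mathsf V^*(I+\mathsf F_Q)f_\perp=\mathsf A_{\rm ht}^*f_\perp$. More generally, your two error vectors are $E_Y=\Gamma\epsilon$ and $E_V=-\mathsf T\epsilon$ for the single vector $\epsilon=(1-\rho)\mathsf V^*(I+\mathsf F_Q)f_Q=(1-\rho)\mathsf A_{\rm ht}^*f_Q$. \Cref{flux:prop:global-halfturn} gives a factorization $\mathsf A_{\rm ht}=B\Gamma$ with $\norm B^2\le C\log(\e/\gamma_a)$. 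Hence $\Gamma^{-1}\epsilon=(1-\rho)B^*f_Q$ costs only $\sqrt{\log(\e/\gamma_a)}\,\norm{f_Q}$, including its $f_\perp$ part. Fold your $\rho f_V$ perturbation into the same vector, $e_\rho=\rho f_V+\epsilon$, and set $g_Y=\mathsf Tf_Y-\Gamma f_V$. One then gets the exact identity
\[
\mathcal C((\mathsf Kf)_Y,(\mathsf Kf)_V)-\mathcal C(f_Y,f_V)=-\norm{f_Y}^2+\norm{f_V}^2+\langle g_Y,\Gamma^{-1}e_\rho\rangle+\tfrac12\norm{e_\rho}^2,
\]
with $\norm{\Gamma^{-1}e_\rho}^2\le C\Lambda_\rho\norm{f_Q}^2$, after which your final bookkeeping goes through. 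The same self-adjointness is what actually ``moves $\Gamma^{-1}$ through the adjoint'' in your first piece: $\Gamma^{-1}\mathsf V^*\mathsf A_{\rm ht}=\Gamma^{-1}\mathsf A_{\rm ht}^*\mathsf V=B^*\mathsf V$. Your dyadic fallback cannot replace this step. It only reproduces $\mathsf A_{\rm ht}^*\mathsf A_{\rm ht}\preceq C\log(\e/\gamma_a)\Gamma^2$, which controls the side where $\Gamma$ already sits, not the $\Gamma^{-1}$ side. Finally, the structure $E_Y=\Gamma\epsilon$ is also what makes harmless the quadratic error terms your list omits, such as $\langle\mathsf T\Gamma^{-1}E_Y,(1-\rho)\mathsf Tf_V+E_V\rangle$.
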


\begin{proof}
Because \(\mathsf T\) commutes with \(\Gamma\), the block operator
associated with \(2\mathcal C\) has square
\(\operatorname{diag}(\Gamma^{-2},\Gamma^{-2})\).  Hence
\(\abs{\mathcal C}\le(2\gamma_a)^{-1}
(\norm{f_Y}^2+\norm{f_V}^2)\).  The displayed choice of \(\vartheta\)
satisfies \(\vartheta\le\gamma_a\), proving energy equivalence.

From \cref{flux:eq:ideal-operator} and contractivity of
\(\mathsf F_Q\),
\begin{equation}
 \norm{\mathsf Kf}^2-\norm f^2\le-\rho\norm{f_Q}^2.
 \label{flux:eq:refresh-dissipation}
\end{equation}
To see how the correction changes, first replace the conditional move by
an exact sign reversal and set
\(g=\mathsf U(\mathsf P-\mathsf Q)f\).  Unitarity of \(\mathsf U\) and
\(\mathsf C=\mathsf V\Gamma\) give
\[
 \binom{g_Y}{g_V}
 =\begin{pmatrix}\mathsf T&-\Gamma\\
                  \Gamma&\mathsf T\end{pmatrix}
  \binom{f_Y}{f_V},
 \qquad
 \mathcal C(g_Y,g_V)-\mathcal C(f_Y,f_V)
 =-\norm{f_Y}^2+\norm{f_V}^2.
\]

The actual conditional move differs from this sign reversal by
\[
 e_{\rm ht}=\mathsf V^*(I+\mathsf F_Q)f_Q,
 \qquad e_\rho=\rho f_V+(1-\rho)e_{\rm ht}.
\]
The two tracked components of the actual update are
\((\mathsf Kf)_Y=g_Y+\Gamma e_\rho\) and
\((\mathsf Kf)_V=g_V-\mathsf T e_\rho\).  Direct substitution gives
\[
 \mathcal C((\mathsf Kf)_Y,(\mathsf Kf)_V)-\mathcal C(f_Y,f_V)
 =-\norm{f_Y}^2+\norm{f_V}^2
  +\langle g_Y,\Gamma^{-1}e_\rho\rangle
  +\tfrac12\norm{e_\rho}^2.
\]
By \cref{flux:prop:global-halfturn},
\(\mathsf A_{\rm ht}^*\mathsf A_{\rm ht}
\preceq C\log(\e/\gamma_a)\Gamma^2\) for
\(\mathsf A_{\rm ht}=(I+\mathsf F_Q)\mathsf V\).  The corresponding
operator factorization gives a bounded \(B\), with
\(\norm{B}^2\le C\log(\e/\gamma_a)\), such that
\(\mathsf A_{\rm ht}=B\Gamma\).  Since
\(e_{\rm ht}=\mathsf A_{\rm ht}^*f_Q\), this gives
\(\norm{\Gamma^{-1}e_{\rm ht}}^2
\le C\log(\e/\gamma_a)\norm{f_Q}^2\).  Therefore
\(\norm{\Gamma^{-1}e_\rho}^2
\le C\Lambda_\rho\norm{f_Q}^2\), while
\(\norm{e_\rho}\le2\norm{f_Q}\).  Young's inequality now yields
\[
 \mathcal C((\mathsf Kf)_Y,(\mathsf Kf)_V)-\mathcal C(f_Y,f_V)
 \le-\tfrac12\norm{f_Y}^2+C\Lambda_\rho\norm{f_Q}^2.
\]
Multiply by \(\vartheta\), add
\cref{flux:eq:refresh-dissipation}, and choose \(c_\star\) small enough.
Energy equivalence then proves the claimed one-step decay.
\end{proof}

\begin{proof}[Proof of \cref{flux:prop:ideal-L2}]
By \cref{flux:eq:outer-spectrum},
\(\gamma_a\asymp\sqrt a\).  With
\(\rho\asymp\sqrt{a\log(\e/a)}\), we have
\(\Lambda_\rho\lesssim\log(\e/a)\) and
\(\rho/\Lambda_\rho\gtrsim\sqrt a/\sqrt{\log(\e/a)}\).
Iterating \cref{flux:lem:one-step-energy} and using energy equivalence
proves the proposition.
\end{proof}

\begin{proof}[Proof of \cref{flux:thm:ideal}]
Choose \(n_{\mathrm{blk}}\) as in the theorem, with a sufficiently large
universal constant.  \Cref{flux:prop:ideal-L2} gives
\(\norm{\mathsf K^{n_{\mathrm{blk}}}}_{L^2_0\to L^2_0}^2\le \e^{-1}\).
If \(\varrho_\nu=\d\nu/\d\Pi_\eta\), then the density evolves by
\(\mathsf K^*\) and
\(\chi^2(\nu\Vert\Pi_\eta)=\norm{\varrho_\nu-1}^2\).  Since an operator and its
adjoint have the same norm, applying the block estimate \(J\) times proves
\cref{flux:eq:ideal-contraction}.
\end{proof}

\subsection{Implementable Algorithm and Query-Complexity Analysis}
\label{flux:app:implementation}

The ideal chain uses an exact conditional reference and an uncapped event
clock.  Its finite counterpart makes the following three replacements:
\begin{center}
\begin{tabular}{@{}lll@{}}
\toprule
source & ideal operation & finite operation\\
\midrule
reference & exact draw from \(q_y\) & certified approximate RGO\\
event clock & full state-dependent rate & capped Poisson thinning\\
center preparation & unrestricted solve & capped solver with early return\\
\bottomrule
\end{tabular}
\end{center}
The proposition is stated first; the rest of the subsection controls the
three corresponding disagreement events and counts their oracle queries.

\begin{proposition}[Finite compiler at a fixed step size]
\label{flux:prop:finite-compiler}
Let \(D_2(\nu_0\Vert\pi)\le\Delta\), where \(\Delta\ge1\), and let
\(0<\eps<1/4\).  For \(0<\eta\le1/\beta\), define
\[
 \ell=K\left\{\Delta+
 \log\left(\frac{Kd\kappa}{\eps\alpha\eta}\right)\right\}.
\]
There are universal constants \(c,K>0\) such that, if
\begin{equation}
 s\{\sqrt{d\ell}+\ell\}\le c,
 \label{flux:eq:finite-gate}
\end{equation}
the finite flux algorithm returns \(\widehat X\) with
\(\TV(\Law(\widehat X),\pi)\le\eps\) and
\begin{equation}
 \E N_{\mathrm{qry}}
 \le \frac K{\sqrt a}\sqrt{\log(\e/a)}
       \left(\Delta+\log\frac4\eps\right)
       \left\{\log(\e\kappa\ell)
                    +s(\sqrt{d\ell}+\ell)\right\}.
 \label{flux:eq:fixed-scale-cost}
\end{equation}
\end{proposition}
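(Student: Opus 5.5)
The plan is to couple the finite algorithm with the ideal chain of \cref{flux:thm:ideal} and to split the total-variation error into a mixing part and a disagreement part. First fix the ideal horizon
\[
 N=n_{\mathrm{blk}}J,\qquad n_{\mathrm{blk}}\asymp a^{-1/2}\sqrt{\log(\e/a)},\qquad J=\lceil \Delta+\log(4/\eps)\rceil .
\]
Initializing \(Y_0\mid X_0\sim\cN(X_0,\eta I_d)\) gives \(\chi^2(\nu_0\otimes\text{kernel}\Vert\Pi_\eta)=\chi^2(\nu_0\Vert\pi)\le \e^{\Delta}\). Then \cref{flux:eq:ideal-contraction} and \(\TV\le\frac12\sqrt{\chi^2}\) make the ideal output \(\eps/2\)-close to \(\pi\), after projecting to the position marginal. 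It remains to show that the finite chain disagrees with the ideal chain with probability at most \(\eps/2\). We allocate \(\eps/(6N)\) per iteration to each of the three sources (solver cap, approximate RGO, clock cap) and take a union bound, since under common randomness the two chains coincide until the first disagreement.

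Per-iteration disagreement probabilities are most easily controlled under stationarity, but the chain is not stationary. The remedy is a change-of-measure argument. The law of the ideal chain at any time \(k\) has \(D_2\le\Delta\) relative to \(\Pi_\eta\), because \(\chi^2\) is nonincreasing under the \(\Pi_\eta\)-invariant kernel. Cauchy--Schwarz then gives
\[
 \Pr(E_k)\le \sqrt{(1+\chi^2)\,\Pi_\eta(E)}\le \e^{\Delta/2}\sqrt{\Pi_\eta(E)}.
\]
It therefore suffices to have stationary failure probabilities at most \(\e^{-\ell}\) with \(\ell\) as defined; the \(\Delta\) and \(\log(Kd\kappa/(\eps\alpha\eta))\) terms in \(\ell\) absorb \(\e^{\Delta}\), \(N^2\) and the \(\eps^{-2}\) factor. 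The three sources are then handled as follows.

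\begin{itemize}
\item \textbf{Solver cap.} Under \(\Pi_\eta\), \(X\mid Y\sim q_y\) concentrates within \(\sqrt{\eta}(\sqrt d+\sqrt\ell)\) of the prox point, with stationary failure probability at most \(\e^{-\ell}\). Gradient descent on the \((1+s)/(1+a)\)-conditioned objective started from \(X_k\) therefore reaches the \(\sqrt{d\eta}\) certificate in \(O(\log(\e\kappa\ell))\) steps, and a cap of that size fails with probability \(\e^{-\ell}\).
\item \textbf{Approximate RGO.} We invoke the existing first-order rejection RGO (as in \citet{chen2026high,fan2023improved}). Under the gate \cref{flux:eq:finite-gate}, \(s\sqrt{d\ell}\le c\), it produces a sample within TV \(\e^{-\ell}\) of \(q_y\) using \(O(1+s\sqrt{d\ell})\) expected queries. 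Coupling then makes the draws agree except with that probability.
\end{itemize}

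The main obstacle is the event clock: we must bound \(\sup_{t\le T_\gamma}\lambda_t\) with high probability and count proposals. Along a stationary path,
\[
 \lambda_t\le\sqrt\eta\,\norm{P_t}\,\norm{h_t}\le s\,\norm{P_t}\,\norm{W_t-\widetilde W}.
\]
Here \(\norm{P_t}\lesssim\sqrt d+\sqrt\ell\) holds uniformly on \([0,T_\gamma]\) by Gaussian concentration of the stationary marginal, combined with a union bound over a fine time grid and the Lipschitz affine flow between grid points. For the residual we use the refined bound
\[
 \abs{P^\top h}\lesssim s\big(\sqrt d+\sqrt\ell\big)\big(1+\sqrt\ell\big),
\]
which holds because the projection of \(P\) onto the fixed direction \(h/\norm h\) is standard Gaussian given position. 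This is the delicate point: it is what yields the scale \(\bar\lambda\asymp s(\sqrt{d\ell}+\ell)\) rather than \(sd\).

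The event count on agreement paths is at most \(\bar\lambda T_\gamma\) plus a Poisson tail. Each proposal costs one query, so the per-iteration cost is \(O(\log(\e\kappa\ell)+s(\sqrt{d\ell}+\ell))\). Expected query counts after the chains diverge are handled by the early-termination rule: every cap is deterministic, so the per-iteration cost is bounded on all paths, in expectation over the Poisson proposals. Multiplying by \(N\) gives \cref{flux:eq:fixed-scale-cost}.
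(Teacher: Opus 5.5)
Your plan follows the paper's proof essentially step for step. Augmentation preserves \(D_2\), the ideal block contraction supplies the mixing error, and finite and ideal rows are coupled until the first disagreement with an \(\eps/(6N)\) budget per source. Stationary tails are transferred to the warm path by Cauchy--Schwarz, and the per-row cost is \(O(\log(\e\kappa\ell)+s(\sqrt{d\ell}+\ell))\). Your center-cap argument differs from the paper's and is fine. You use that after reflection \(X_k\mid Y_{k+1}=y\sim q_y\) and concentrate \(X_k\) around the prox point. The paper instead bounds the residual through \(\norm{\nabla V(X)}\le\beta\norm{X-x^\star}\). Your version is valid and slightly sharper.

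Two steps need repair.

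\emph{The event clock.} You do not need, and your sketch does not establish, a high-probability bound on \(\sup_{t\le T_\gamma}\lambda_t\) at the refined scale. Your grid argument makes only \(\norm{P_t}\) uniform in time. The \(\sqrt\ell\) control of \(P_t^\top n_t\) is a fixed-time statement. Between grid points the flux dynamics is not a Lipschitz affine flow, since it has Brownian forcing and Rayleigh jumps. So as written you only get \(\sup_t\lambda_t\lesssim s(d+\ell)\), which would replace \(s(\sqrt{d\ell}+\ell)\) by \(s(d+\ell)\) in the cost and break \cref{flux:eq:fixed-scale-cost}. The paper sidesteps this with a common Poisson random measure. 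The exact and capped clocks separate only if a point lands in the excess region, so the failure probability is at most \(\int_0^{T_\gamma}\E(\lambda_t-\bar\lambda)_+\,\d t\). Fixed-reference invariance makes \((W_t,P_t,\widetilde W)\) stationary at each fixed \(t\), so your single-time tail bound is all that is required.

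\emph{The horizon.} \(J=\lceil\Delta+\log(4/\eps)\rceil\) yields only \(\chi^2\le\eps/4\), hence \(\TV\le\sqrt\eps/4\), which exceeds \(\eps/2\) for every \(\eps<1/4\). You need \(J\ge\Delta+2\log(1/\eps)\). This is what the universal constant \(K\) in the paper's \(M=\lceil K(\Delta+\log\tfrac4\eps)\rceil n_{\mathrm{blk}}\) supplies, and it does not change the complexity beyond constants.
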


\subsubsection{Capped Poisson thinning}
\label{flux:app:poisson-thinning}

For fixed \((y,\widetilde x)\), put
\(x_{\rm ctr}=y-\eta\nabla V(\widetilde x)\) and
\(U=(X-x_{\rm ctr})/\sqrt\eta\).  Between events, \((U,P)\) follows a
linear Gaussian diffusion.  With
\(\omega=\sqrt{1-\gamma^2/4}\), its conditional mean after time \(t\) is
\[
 \e^{-\gamma t/2}
 \begin{pmatrix}
 \cos(\omega t)+\frac{\gamma}{2\omega}\sin(\omega t)
   &\frac{\sin(\omega t)}\omega\\
 -\frac{\sin(\omega t)}\omega
   &\cos(\omega t)-\frac{\gamma}{2\omega}\sin(\omega t)
 \end{pmatrix}
 \binom{U}{P},
\]
coordinatewise.  Its covariance is obtained by integrating the transported
Gaussian noise over this same linear motion.  Both mean and covariance are
explicit finite-dimensional operations.  Thus,
once \(\nabla V(\widetilde x)\) has been stored, no oracle query is needed
between candidate event times.

Given a cap \(\bar\lambda\), candidate times form a homogeneous Poisson
process of rate \(\bar\lambda\).  At a candidate time, one oracle query
computes
\[
 \lambda_t=\sqrt\eta
 [P_t^\top\{\nabla V(X_t)-\nabla V(\widetilde x)\}]_+,
\]
and the candidate is accepted with probability
\(\min\{1,\lambda_t/\bar\lambda\}\).  This is exact thinning for the
truncated rate \(\min\{\lambda_t,\bar\lambda\}\).

\begin{lemma}[Stationary rate tail and clock coupling]
\label{flux:lem:clock-coupling}
Conditionally on \(Y=y\), let \(X,\widetilde X\) be independent draws from
\(q_y\), and let \(P\sim\gamma_d\) be independent.  Then, for universal
\(c,C>0\) and every \(u\ge0\),
\[
 \Pr\!\left\{\sqrt\eta
 [P^\top\{\nabla V(X)-\nabla V(\widetilde X)\}]_+
 >Cs(\sqrt{du}+u)\right\}\le C\e^{-cu}.
\]
If, in addition, \(s\sqrt d\) is below a universal constant, then an
augmented initial law \(\nu\) satisfying
\(D_2(\nu\Vert\Pi_\eta)\le\Delta\) admits the following consequence: for
every \(\xi\in(0,1/2)\), the exact and capped half-turns can be coupled to
agree with probability at least \(1-\xi\), using
\[
 \bar\lambda_{\xi,\Delta}
 =Cs\{\sqrt{d\ell_{\xi,\Delta}}+\ell_{\xi,\Delta}\},
 \qquad
 \ell_{\xi,\Delta}=C\{\Delta+\log(1/\xi)\}.
\]
The capped half-turn uses exactly
\(T_\gamma\bar\lambda_{\xi,\Delta}\) candidate events in expectation.
\end{lemma}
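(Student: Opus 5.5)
The plan has two parts: a stationary tail bound for the rate, then a change-of-measure argument that transfers it to the warm-started chain and its coupling.

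\emph{Stationary tail.} Work in normalized coordinates. The rate equals \([P^\top\Delta_g]_+\) with \(\Delta_g=g_y(W)-g_y(\widetilde W)\), where \(g_y(w)=\sqrt\eta\nabla V(y+\sqrt\eta w)\) and \(W,\widetilde W\sim q_y^w\) are independent. Conditionally on \((W,\widetilde W)\), the variable \(P^\top\Delta_g\) is \(\cN(0,\norm{\Delta_g}^2)\). So it suffices to show that \(\norm{\Delta_g}\le Cs(\sqrt d+\sqrt u)\) with probability at least \(1-C\e^{-cu}\). Combined with the Gaussian tail \(\sqrt u\) of the projection, this gives a product \(s(\sqrt d+\sqrt u)\sqrt u\lesssim s(\sqrt{du}+u)\), as claimed.

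For the norm bound, use \(\norm{\Delta_g}\le s\norm{W-\widetilde W}\), which follows from \(\beta\)-smoothness. Then \(W-\widetilde W\) is distributed according to a law that is \(1/2\)-strongly log-concave, since the product of two \((1+a)\)-strongly log-concave laws pushed forward by the difference map stays strongly log-concave. Its mean is zero and its covariance is \(\preceq 2I\) by \cref{flux:lem:conditional-geometry}. Gaussian concentration of the \(1\)-Lipschitz norm around \(\E\norm{W-\widetilde W}\le\sqrt{2d}\) then gives the sub-Gaussian tail. This proves the first display, uniformly in \(y\).

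\emph{Coupling.} Run the exact half-turn and the capped half-turn with common randomness, namely the same reference, momentum, Brownian path, and candidate Poisson points at rate \(\bar\lambda\). These generate identical trajectories as long as \(\lambda_t\le\bar\lambda\) on \([0,T_\gamma]\). This is because thinning with acceptance \(\min\{1,\lambda_t/\bar\lambda\}\) realizes the exact rate whenever it stays under the cap, via the standard marked-Poisson construction of the exact clock using the same candidate points. Hence the disagreement probability is at most \(\Pr\{\sup_{t\le T_\gamma}\lambda_t>\bar\lambda\}\).

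To bound this supremum, I would not try to control it pathwise directly. Instead I would use a time-discretized union bound together with stationarity. Under the stationary law \(q_y\otimes\gamma_d\otimes q_y\), each \((Z_t,P_t,\widetilde X)\) has the stationary marginal, by \cref{flux:prop:fixed-validity} with the reference held fixed and then averaged. So the first display applies at every fixed \(t\). Over a grid of mesh \(\delta\), it remains to control the increments \(\abs{\lambda_t-\lambda_{t'}}\). Between events these increments are Lipschitz in time with a constant polynomial in \(\norm P\), \(\norm W\), and \(s\), since \(h\) is \(s\)-Lipschitz and the dynamics are affine. At events the rate only drops or resamples, and events are rare because their expected count is bounded by \(\E\int\lambda\).

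Taking \(\delta\) polynomially small costs only a \(\log d\) factor, which is absorbed into \(\ell\). An alternative I would try first is to bound \(\sup_t\lambda_t\) by \(s\sup_t\norm{P_t}\,\norm{W_t-\widetilde W}\), as a product of sups of energy-like quantities. Their exponential moments are controlled by the Lyapunov function of \cref{flux:prop:fixed-validity} over the bounded horizon \(T_\gamma\le 2\pi\). Under the stationary law, this gives tail \(C\e^{-cu}\) at level \(Cs(\sqrt{du}+u)\). The condition that \(s\sqrt d\) is small keeps the drift terms of order one.

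\emph{Change of measure.} To pass from stationarity to the warm law \(\nu\), I would use Cauchy--Schwarz:
\[
\nu(A)\le\sqrt{(1+\chi^2(\nu\Vert\Pi_\eta))\,\Pi_\eta(A)}\le \e^{\Delta/2}\sqrt{\Pi_\eta(A)}.
\]
The same applies to every later iterate, because \(\chi^2\) is nonincreasing under the \(\Pi_\eta\)-invariant kernel. Choosing \(u=\ell_{\xi,\Delta}=C\{\Delta+\log(1/\xi)\}\) makes \(\e^{\Delta/2}\sqrt{C\e^{-cu}}\le\xi\).

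The candidate count is a Poisson variable with mean \(\bar\lambda T_\gamma\), which gives the final claim. The main obstacle is the uniform-in-time (supremum) control within the half-turn; the rest is routine.
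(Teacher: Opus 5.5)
Your stationary tail bound is correct and is essentially the paper's argument. Conditionally on the two positions, the projection is centered Gaussian with standard deviation at most \(s\norm{W-\widetilde W}\), and \(\norm{W-\widetilde W}\) concentrates at scale \(\sqrt d\). The Cauchy--Schwarz warm transfer and the Poisson candidate count also match the paper.

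The gap is in the coupling. By reducing disagreement to \(\Pr\{\sup_{t\le T_\gamma}\lambda_t>\bar\lambda\}\), you create an obstacle the lemma does not require, and neither of your routes removes it.

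\emph{The product bound.} The bound \(\lambda_t\le s\norm{P_t}\norm{W_t-\widetilde W}\) is too weak even at a single time. Under stationarity \(\norm{P}\norm{W-\widetilde W}\) is of order \(d\), so this controls \(\lambda\) only at level \(\asymp sd\), not \(Cs(\sqrt{du}+u)\) for \(u\ll d\). A cap of that size would make \(T_\gamma\bar\lambda\) roughly a factor \(\sqrt{d/\ell}\) larger than the lemma allows.

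\emph{The grid route.} This is incomplete for three reasons. Between events \(P_t\) carries the noise \(\sqrt{2\gamma}\,\d B_t\), so \(t\mapsto\lambda_t\) is not Lipschitz. The contribution of \(\tfrac{\d}{\d t}\nabla V(Z_t)\) to the drift of \(\lambda_t\) is controlled by norm bounds only at size \(s\norm{P_t}^2\approx sd\), so matching the scale \(s\sqrt{d\ell}\) needs about \(\sqrt{d/\ell}\) grid points. The union bound then puts a \(\log d\) into \(\ell\), which the stated \(\ell_{\xi,\Delta}=C\{\Delta+\log(1/\xi)\}\) does not contain.

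The missing idea is that only the expected excess hazard matters.

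\emph{The coupling.} Generate both clocks from one unit-intensity Poisson random measure on \([0,T_\gamma]\times[0,\infty)\). The exact clock fires at atoms \((t,v)\) with \(v\le\lambda_{t-}\). The capped clock fires at atoms with \(v\le\min\{\lambda_{t-},\bar\lambda\}\), which is precisely thinning of rate-\(\bar\lambda\) candidates with acceptance \(\min\{1,\lambda_t/\bar\lambda\}\). Share the reference, initial momentum, Brownian path and Rayleigh marks. Then the two paths coincide until the first atom in \(\{(t,v):\bar\lambda<v\le\lambda_{t-}\}\), computed along the exact path.

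\emph{The bound.} By the compensator formula, the separation probability is at most \(\E\int_0^{T_\gamma}(\lambda_t-\bar\lambda)_+\,\d t=T_\gamma\,\E_{\mathrm{stat}}(\lambda-\bar\lambda)_+\). This uses exactly the fixed-time stationarity you already established. Integrating your one-time tail above \(\bar\lambda=Cs(\sqrt{d\ell}+\ell)\) gives \(\E_{\mathrm{stat}}(\lambda-\bar\lambda)_+\le Cs(\sqrt d+\ell+1)\e^{-c\ell}\). The hypothesis that \(s\sqrt d\) is small keeps this prefactor \(O(1+\ell)\), so \(\ell=C\{\Delta+\log(1/\xi)\}\) already makes the stationary failure at most \(\e^{-\Delta}\xi^2\). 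Your Cauchy--Schwarz step then turns this into \(\xi\). No control of \(\sup_t\lambda_t\) is needed.
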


\begin{proof}
The conditional law is
\((\alpha+\eta^{-1})\)-strongly log-concave.  Norm concentration and its
second-moment bound show that, outside an event of probability
\(2\e^{-u}\), both
\(\norm{X-\E X}\) and \(\norm{\widetilde X-\E X}\) are at most
\(C\sqrt{\eta(d+u)}\).
Smoothness then bounds the conditional standard deviation of the scalar
product with \(P\) by \(Cs(\sqrt d+\sqrt u)\).  A Gaussian tail bound
proves the displayed estimate.  Integrating it above
\(Cs(\sqrt{d\ell}+\ell)\) gives
\[
 \E(\lambda-\bar\lambda)_+
 \le Cs(\sqrt d+\ell+1)\e^{-c\ell}.
\]

Under \(\Pi_\eta\), invariance from
\cref{flux:prop:fixed-validity} makes this bound valid at every time.
Couple the exact and capped clocks with the same Poisson random measure;
they separate only if an excess-rate event occurs.  Taking
\(\ell=C\{\Delta+\log(1/\xi)\}\) makes the stationary separation
probability at most \(\e^{-\Delta}\xi^2\).  Under a \(D_2\)-warm initial
law, the path likelihood ratio is only its initial density ratio, since all
subsequent references, Gaussian noise, clocks, and marks use common
kernels.  Cauchy--Schwarz therefore raises the stationary failure
probability by at most \(\e^{\Delta/2}\), giving \(\xi\).  The final
expectation is that of a homogeneous Poisson clock on an interval of length
\(T_\gamma=O(1)\).
\end{proof}

\paragraph{Approximate conditional draws.}
We use the following existing RGO result.

\begin{proposition}[Approximate RGO]
\label{flux:prop:rgo}
There is a universal \(K_{\rm rgo}\) such that the following holds.  Given
\(y\), \(0<\delta<1/2\), and a point \(\widehat x\) satisfying
\[
 \norm{y-\eta\nabla V(\widehat x)-\widehat x}\le\sqrt{d\eta},
\]
if
\[
 \eta^{-1}\ge K_{\rm rgo}\beta
 \{\sqrt{d\log(1/\delta)}+\log(1/\delta)\},
\]
the rejection sampler of
\citet[Theorems~3.1 and D.1]{chen2026high} returns a law
\(\widehat q_y\) satisfying
\[
 \chi^2(\widehat q_y\Vert q_y)\le\delta^2,\qquad
 \chi^2(q_y\Vert\widehat q_y)\le\delta^2,
\]
using at most \(K_{\rm rgo}\) expected oracle queries after construction
of the displayed certificate.
\end{proposition}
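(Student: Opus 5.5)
The proof will be a reduction to the cited rejection sampler of \citet{chen2026high}, not a new argument. I will check that our certificate and step-size condition imply the hypotheses of their Theorems~3.1 and~D.1, and then read off the two-sided $\chi^2$ bounds and the query count.

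\textbf{Step 1: geometry of the target of the rejection step.} Fix $y$. The conditional potential $\phi_y(x)=V(x)+\norm{x-y}^2/(2\eta)$ is $(\alpha+\eta^{-1})$-strongly convex and $(\beta+\eta^{-1})$-smooth. In normalized form, after subtracting the quadratic $\norm{x-x_{\rm c}}^2/(2\eta)$ around a center $x_{\rm c}$, the residual part has Hessian between $\alpha I$ and $\beta I$. This perturbation of the Gaussian $\cN(x_{\rm c},\eta I_d)$ has relative curvature $s=\beta\eta$, which is what their theorems require to be small.

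\textbf{Step 2: the certificate locates the mode.} The exact proximal point $x_y$ solves $x_y+\eta\nabla V(x_y)=y$. The map $\Phi(x)=x+\eta\nabla V(x)$ is $(1+a)$-strongly monotone, so
\[
 \norm{\widehat x-x_y}\le(1+a)^{-1}\norm{\Phi(\widehat x)-\Phi(x_y)}=(1+a)^{-1}\norm{\widehat x+\eta\nabla V(\widehat x)-y}\le\sqrt{d\eta}.
\]
Thus $\widehat x$ is within one typical conditional radius of the mode, the ``$O(\sqrt{d\eta})$-accurate center'' input their sampler needs. The first-order proposal is the Gaussian centered at $y-\eta\nabla V(\widehat x)$ with covariance $\eta I_d$. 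The accept ratio is $\exp\{-V(x)+V(\widehat x)+\ip{\nabla V(\widehat x)}{x-\widehat x}\}\le1$ by convexity.

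\textbf{Step 3: parameter matching and error.} With the threshold $\eta^{-1}\ge K_{\rm rgo}\beta\{\sqrt{d\log(1/\delta)}+\log(1/\delta)\}$, the linearization error is controlled on a high-probability set. On that set $\norm{x-\widehat x}\lesssim\sqrt{\eta(d+\log(1/\delta))}$, so the Bregman remainder is at most $\tfrac{\beta}{2}\norm{x-\widehat x}^2\lesssim s(d+\log(1/\delta))$. Together with the concentration estimates this gives a constant expected acceptance rate. That in turn gives $O(1)$ expected proposals, each costing one value query and one gradient query, hence $K_{\rm rgo}$ expected queries.

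The truncation (capping the number of proposals, or the approximate acceptance step in their Theorem~D.1) introduces a law $\widehat q_y$ whose density ratio to $q_y$ is within $1\pm\delta$ off an exceptional set of mass $\le\delta^2$. Integrating the squared ratio deviation then gives $\chi^2(\widehat q_y\Vert q_y)\le\delta^2$.

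\textbf{Main obstacle.} The hardest point is the reverse bound $\chi^2(q_y\Vert\widehat q_y)\le\delta^2$, which requires a lower bound on $\widehat q_y/q_y$ everywhere, including the Gaussian tails. I would handle it by mixing with the exact proposal on failure, so that $\widehat q_y$ is bounded below by a constant multiple of $q_y$ on the tail. Then I would split the $\chi^2$ integral into the high-probability region, where the ratio is $1\pm\delta$, and its complement, where sub-Gaussian tails of $q_y$ at level $\log(1/\delta)$ absorb the bounded ratio. If this does not match their statement verbatim, I would simply cite their Theorem~D.1, which states both directions, after checking that their constants are absorbed into $K_{\rm rgo}$.
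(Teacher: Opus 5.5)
\textbf{A step in your sketch fails.} The paper gives no proof of this proposition. It is imported from \citet[Theorems~3.1 and D.1]{chen2026high}, and the paper only checks that the certificate \(\norm{y-\eta\nabla V(\widehat x)-\widehat x}\le\sqrt{d\eta}\) and the step-size gate are the hypotheses of those theorems. Your top-level plan therefore matches the paper: reduce to the citation, then read off both \(\chi^2\) directions and the \(O(1)\) query count. Your strong-monotonicity bound \(\norm{\widehat x-x_y}\le\sqrt{d\eta}\) is also correct.

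The problem is the self-contained mechanism in Steps~2--3. Write \(D_V(x,\widehat x)=V(x)-V(\widehat x)-\ip{\nabla V(\widehat x)}{x-\widehat x}\). Accepting with probability \(\exp\{-D_V(x,\widehat x)\}\le1\) is the \emph{exact} first-order rejection sampler, whose mean acceptance can be as small as \(\exp\{-\Theta(\beta\eta d)\}\). Concretely, take \(V(x)=\beta\norm x^2/2\) with \(\widehat x\) the mode. Then the proposal is \(\cN(\widehat x,\eta I_d)\) and the acceptance probability is exactly \((1+\beta\eta)^{-d/2}\). At the threshold of the gate, \(\beta\eta\asymp1/\sqrt{d\log(1/\delta)}\). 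So your quantity \(s(d+\log(1/\delta))\) is of order \(\sqrt{d/\log(1/\delta)}\), not \(O(1)\), and the expected number of proposals grows like \(\exp\{\Theta(\sqrt{d/\log(1/\delta)})\}\). Your conclusion of a constant acceptance rate is therefore false, and with it the \(K_{\rm rgo}\) query bound for the sampler you describe.

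\textbf{What the cited result actually relies on.} The \(\sqrt d\) step-size scale requires an \emph{approximate} rejection step, as in \citet{fan2023improved} and \citet{chen2026high}. There the ratio \(\exp\{-D_V\}\) is divided by a typical value rather than by its supremum, for instance its value at an independent second proposal. Then only the fluctuations of \(D_V\) matter. By the Gaussian Poincar\'e inequality these are of order \(\beta\eta\sqrt d\), and the gate makes them \(O(1/\sqrt{\log(1/\delta)})\). The event that the normalized ratio exceeds one is exactly where the \(\chi^2\) errors come from. You attribute the error to truncation, but a sampler whose ratio is at most one everywhere is exact and has no \(\chi^2\) error to bound.

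\textbf{Your reverse-bound plan does not close either.} For the Gaussian proposal \(p\), \(\sup_x q_y(x)/p(x)\) can be of order \(\exp\{\Theta(\beta\eta d)\}\); it equals \((1+\beta\eta)^{d/2}\) in the example above. So \(\widehat q_y\ge c\,p\) does not give \(\widehat q_y\ge c'q_y\) with a universal \(c'\). Both \(\chi^2\) directions and the \(K_{\rm rgo}\) bound must be taken directly from the cited theorems after matching hypotheses, which is what the paper does. Drop the sketch rather than presenting it as justification.
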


The certificate itself uses only ordinary oracle queries.  The conditional
potential
\(V_y(x)=V(x)+\norm{x-y}^2/(2\eta)\) has condition number at most two.
Gradient descent with step
\(2/(\alpha+\beta+2\eta^{-1})\), initialized at the incoming position
\(x\), reaches and certifies the required residual in at most
\begin{equation}
 C\left[1+\log\left\{1+
 \frac{\norm{y-\eta\nabla V(x)-x}}{\sqrt{d\eta}}\right\}\right]
 \label{flux:eq:center-cost}
\end{equation}
oracle queries, by standard linear convergence for strongly convex smooth
optimization.

\paragraph{The implementable algorithm and error allocation.}
Use \(\rho\) from \cref{flux:eq:rho}, and set
\[
 n_{\mathrm{blk}}=\left\lceil\frac K{\sqrt a}
        \sqrt{\log(\e/a)}\right\rceil,\quad
 M=\left\lceil K\left(\Delta+\log\frac4\eps\right)\right\rceil
        n_{\mathrm{blk}},
 \quad \delta_{\rm row}=\frac{\eps}{6M}.
\]
Use the cap
\(\bar\lambda=Ks(\sqrt{d\ell}+\ell)\) and allow at most
\(Q_{\rm ctr}=\lceil K\log(\e\kappa\ell)\rceil\) center-solver queries per
row.  Initialize \(X_0\sim\nu_0\),
\(Y_0\mid X_0\sim\cN(X_0,\eta I_d)\).  Each row reflects \(Y\), runs the
capped center solver, draws an approximate reference from
\cref{flux:prop:rgo} with tolerance \(\delta_{\rm row}\), and then uses
that draw either as the independent refresh or as the reference of the
capped flux half-turn.  If the center solver does not certify its residual
within its cap, the algorithm stops and returns the current position.  This
failure rule, together with the almost-sure finiteness of the rejection
sampler and Poisson clock, makes the procedure almost surely finite from
every initial state.

\begin{proof}[Proof of \cref{flux:prop:finite-compiler}]
Augmentation preserves warmness:
\[
 \frac{\d\Law(X_0,Y_0)}{\d\Pi_\eta}(x,y)
 =\frac{\d\nu_0}{\d\pi}(x),\qquad
 D_2(\Law(X_0,Y_0)\Vert\Pi_\eta)\le\Delta.
\]
\Cref{flux:thm:ideal} and the choice of \(M\) put the ideal output within
\(\eps/2\) of \(\pi\).  Because every ideal row preserves \(\Pi_\eta\),
data processing keeps its intermediate \(D_2\) divergence at most
\(\Delta\).

It remains to couple every finite row to its ideal counterpart.  There are
three possible disagreements.

\emph{Center cap.}
Under stationarity, write \(Y=X+\sqrt\eta Z\).  After reflection, the
normalized residual at the incoming position is
\[
 \frac{\norm{\sqrt\eta Z+\eta\nabla V(X)}}{\sqrt{d\eta}}.
\]
Strong log-concavity, Gaussian concentration, and
\(\norm{\nabla V(X)}\le\beta\norm{X-x^\star}\), where \(x^\star\) minimizes
\(V\), show that it exceeds
\(2\sqrt\kappa(1+\sqrt{2u/d})\) with probability at most \(2\e^{-u}\).
Taking
\(u=\Delta+2\log(1/\delta_{\rm row})+\log2\) and using
\cref{flux:eq:center-cost} shows that the stationary cap-failure
probability is at most \(\e^{-\Delta}\delta_{\rm row}^2\).  The warm-path
Cauchy--Schwarz argument used in \cref{flux:lem:clock-coupling} makes it at
most \(\delta_{\rm row}\) under the current ideal law.

\emph{Approximate reference.}
The definitions imply
\(\log(1/\delta_{\rm row})\le C\ell\).  The gate
\cref{flux:eq:finite-gate} therefore implies the small-step-size condition in
\cref{flux:prop:rgo}.  Either directed \(\chi^2\) bound implies total
variation at most \(\delta_{\rm row}/2\), so maximal coupling matches the
approximate and exact references except with probability at most
\(\delta_{\rm row}\).  Because all laws have Borel densities on
\(\R^d\), these maximal couplings may be selected jointly measurably in
\(y\).  On agreement, the same draw is used in both chains.

\emph{Capped clock.}
The same parameter bound gives
\(\ell_{\delta_{\rm row},\Delta}\le C\ell\).  After increasing \(K\),
\cref{flux:lem:clock-coupling}, applied conditionally on the common
pre-row history, couples the exact and capped half-turns with failure
probability at most \(\delta_{\rm row}\).

A union bound over three events and \(M\) rows is at most \(\eps/2\).
Together with the ideal mixing error, this proves the total-variation
claim.

Early termination only reduces work.  Each row uses at most
\(Q_{\rm ctr}\) center queries, a universal expected number of RGO
queries, one query to cache \(\nabla V(\widetilde X)\) on the half-turn branch, and
\(T_\gamma\bar\lambda\) expected event-test queries.  Hence
\[
 \E N_{\mathrm{qry}}
 \le CM\{Q_{\rm ctr}+s(\sqrt{d\ell}+\ell)\},
\]
which is \cref{flux:eq:fixed-scale-cost}.
\end{proof}

\begin{proof}[Proof of \cref{flux:thm:complexity}]
Take \(\eta\) and \(L\) as in \cref{flux:eq:eta-final}, with the universal
constant \(c\) sufficiently small.  Then
\[
 a=\frac{c}{\kappa(\sqrt{dL}+L)},\qquad
 s=\frac{c}{\sqrt{dL}+L},\qquad \ell\le CL,
\]
so \cref{flux:eq:finite-gate} holds.  Moreover,
\[
 \frac1{\sqrt a}\sqrt{\log(\e/a)}
 \le C\sqrt\kappa\,(d+L)^{1/4}L^{3/4},
\]
and the braces in \cref{flux:eq:fixed-scale-cost} are at most
\(C\log(\e\kappa L)\).  Substitution proves
\cref{flux:eq:complexity} and the total-variation conclusion.
\end{proof}

For the warm-start removal argument we also record the dependence on a
nonconstant divergence budget.

\begin{corollary}[General warmness budget]
\label{flux:cor:general-warmness}
Let \(B\ge1\), \(D_2(\nu_0\Vert\pi)\le B\), and set
\[
 L_B=B+\log\frac{Cd\kappa}{\eps},\qquad
 A_B=\sqrt{dL_B}+L_B,\qquad
 \eta=\frac{c}{\beta A_B}.
\]
Then the flux algorithm returns an \(\eps\)-TV sample using
\[
 \E N_{\mathrm{qry}}
 \le C\sqrt\kappa A_B^{1/2}\sqrt{\log(\e\kappa A_B)}
 \left(B+\log\frac4\eps\right)\log(\e\kappa L_B)
\]
expected oracle queries.  Ignoring logarithmic factors, this is
\(\widetilde O(\sqrt\kappa B\{\sqrt{dB}+B\}^{1/2})\), and hence
\(\widetilde O(\sqrt\kappa d^{1/4}B^{5/4})\) when \(1\le B\le d\).
\end{corollary}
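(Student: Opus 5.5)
This corollary is a direct re-run of the proof of \cref{flux:thm:complexity}, with the constant-order budget \(\Delta\) replaced by \(B\) throughout. The plan is to apply \cref{flux:prop:finite-compiler} with \(\Delta=B\), which is allowed because \(B\ge1\) and \(D_2(\nu_0\Vert\pi)\le B\). Then I would check that the prescribed step size \(\eta=c/(\beta A_B)\) satisfies the gate \cref{flux:eq:finite-gate}, and finally substitute into \cref{flux:eq:fixed-scale-cost}. The total-variation guarantee \(\TV(\Law(\widehat X),\pi)\le\eps\) then comes directly from \cref{flux:prop:finite-compiler}, so the remaining work is parameter bookkeeping.

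\textbf{Step 1: the gate.} With this choice,
\[
s=\beta\eta=c/A_B,\qquad a=\alpha\eta=c/(\kappa A_B).
\]
The compiler's parameter is
\[
\ell=K\{B+\log(Kd\kappa/(\eps a))\}=K\{B+\log(Kd\kappa^2A_B/(c\eps))\}.
\]
Since \(A_B\le C(d+L_B)\) and \(\log L_B\lesssim L_B\), we have \(\log A_B\lesssim\log d+L_B\). Taking the constant \(C\) in \(L_B\) large enough relative to \(K\) (and \(c\)) then gives \(\ell\le C'L_B\), and therefore \(\sqrt{d\ell}+\ell\le C''A_B\). Hence
\[
s(\sqrt{d\ell}+\ell)\le cC''\le c_{\rm gate}
\]
once \(c\) is small, which is \cref{flux:eq:finite-gate}. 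This is the one place where some care is needed. The constants \(c,C\) in the corollary must be chosen after \(K\) from \cref{flux:prop:finite-compiler}, and the \(\log(1/a)\) term inside \(\ell\) must be absorbed into \(L_B\) without circularity. The circularity is harmless because \(A_B\) depends on \(L_B\) only polynomially, so it contributes only an additive \(O(\log(dL_B))\) term.

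\textbf{Step 2: substitution.} From \(a=c/(\kappa A_B)\),
\[
\frac{1}{\sqrt a}\sqrt{\log(\e/a)}\le C\sqrt{\kappa A_B}\,\sqrt{\log(\e\kappa A_B)}.
\]
The brace in \cref{flux:eq:fixed-scale-cost} is at most \(\log(\e\kappa\ell)+O(1)\le C\log(\e\kappa L_B)\), using \(\ell\le CL_B\) and the gate. The factor \((\Delta+\log\frac4\eps)\) becomes \((B+\log\frac4\eps)\). Multiplying the three pieces gives exactly the displayed bound on \(\E N_{\mathrm{qry}}\).

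\textbf{Step 3: simplification.} Dropping logarithms, \(L_B=\widetilde O(B)\) and \(A_B=\widetilde O(\sqrt{dB}+B)\), so the bound is \(\widetilde O(\sqrt\kappa\,B\{\sqrt{dB}+B\}^{1/2})\). When \(1\le B\le d\) we have \(\sqrt{dB}+B\le2\sqrt{dB}\), hence \(\{\sqrt{dB}+B\}^{1/2}\lesssim d^{1/4}B^{1/4}\), which yields \(\widetilde O(\sqrt\kappa\,d^{1/4}B^{5/4})\).
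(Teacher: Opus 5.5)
Your proposal is correct and follows the paper's route. Like the paper, you apply \cref{flux:prop:finite-compiler} with $\Delta=B$, note $a=c/(\kappa A_B)$, $s=c/A_B$ and $\ell\le CL_B$ so that the gate \cref{flux:eq:finite-gate} holds, and then substitute into \cref{flux:eq:fixed-scale-cost}; your extra bookkeeping (absorbing the $\log(1/a)$ term into $L_B$ by fixing $c$ against $K$ first and then enlarging $C$) just fills in details the paper leaves implicit.
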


\begin{proof}
Apply \cref{flux:prop:finite-compiler} with \(\Delta=B\).  The displayed
choice gives \(a=c/(\kappa A_B)\), \(s=c/A_B\), and
\(\ell\le CL_B\), so the finite gate holds.  Substitution into
\cref{flux:eq:fixed-scale-cost} gives the stated bound.
\end{proof}

\subsubsection{Regularity and Domain Extensions}
\label{flux:app:regularity}

For completeness, we collect here the regularity steps used above.  The
weak derivatives and integrations by parts are first justified for smooth
potentials and test functions.  Mollifying \(V\) preserves strong
convexity and smoothness; the normalizers and conditional moments have a
common Gaussian envelope, so the identities pass to the limit.

The derivative estimate in \cref{flux:lem:outer-geometry} also begins on
smooth functions.  Closedness of the weak gradient extends
\(\nabla\mathsf T\) by continuity.  On the spectral subspace \([1/2,1]\),
\(\mathsf T^{-1}\) is bounded.  Applying the extended estimate to
\(\mathsf T^{-1}f\) and \(\mathsf T^{-1}(I-\mathsf T)f\) therefore gives
approximations converging both in \(L^2\) and in weak-gradient norm; in
particular, no nonsmooth spectral projection is applied directly to a
classical derivative.  The same graph-norm approximation extends
\cref{flux:lem:affine-after-reflection} from smooth functions to its stated
weak-gradient domain.

The event-driven process itself needs no smoothing: its residual force is
globally Lipschitz, its integrated hazard is continuous, and the direction
\(h/\norm h\) is evaluated only at positive event rate.

The proof is complete relative to the earlier RGO result in
\cref{flux:prop:rgo} and standard strongly log-concave concentration and
optimization facts.

\section{Proofs for the Persistent Quarter-Turn}
\label{app:persistent-full}

This appendix proves
\cref{prop:persistent-main-exactness,prop:persistent-main-rotation,thm:persistent-contraction,thm:persistent-finite,cor:persistent-near-gaussian}.
The argument is organized in the dependency order of the proof:

\begin{enumerate}[leftmargin=*,label=\arabic*.]
\item show that every fixed-reference conditional motion is exact;
\item quantify how outer reflection creates a conditional-position signal;
\item prove that a quarter-turn transfers this signal to and from momentum;
\item combine this transfer with occasional refreshment to obtain
      \(L^2\) contraction;
\item replace the ideal reference, event clock, and refresh by finite
      counterparts and account for their errors;
\item absorb the known strong-convexity quadratic into the conditional
      motion to obtain the near-Gaussian refinement.
\end{enumerate}

Standard ingredients are cited rather than reproved.  In particular, we
use established skew-balance results for velocity-jump processes, standard
Poincar\'e and log-Sobolev consequences of strong log-concavity, and the
existing first-order RGO implementation.  The new arguments are the
position--momentum transfer estimate, its unit-circle contraction analysis,
and the shifted-reference implementation.

Throughout, \(\pi\) denotes the target.  Uppercase \(X,Y,P,E\) denote
position, auxiliary position, momentum, and reference random variables,
with \(E\) corresponding to \(\widetilde X\) in the main text;
lowercase letters denote their values.  Operators act on observables, so
laws evolve through their adjoints.  Concretely, for a Markov transition
\(\mathsf A\),
\((\mathsf Af)(z)=\E[f(Z_{\rm next})\mid Z_{\rm current}=z]\); thus operator
products act from right to left on functions, even though the corresponding
state updates occur in chronological order.  The small set of operators used
throughout is:

\begin{center}
\begin{tabular}{@{}cl@{}}
\toprule
symbol & meaning\\
\midrule
\(\mathsf P\) & conditional expectation given \(Y\)\\
\(\mathsf Q=I-\mathsf P\) & remaining conditional variation\\
\(\mathsf U\) & outer reflection\\
\(\mathsf R\) & momentum reversal, \((x,y,p)\mapsto(x,y,-p)\)\\
\(\mathsf S\) & conditional quarter-turn\\
\(\mathsf K\) & one complete ideal iteration\\
\(\mathsf T=\mathsf P\mathsf U\mathsf P\) & reflection seen on \(Y\)-only observables\\
\(\Gamma=(I-\mathsf T^2)^{1/2}\) & amount transferred out of the \(Y\)-only part\\
\bottomrule
\end{tabular}
\end{center}

\subsection{Exactness of the Persistent Conditional Move}
\label{persistent:sec:exactness}

We first isolate the only facts about the event-driven dynamics that are
used later: it is well defined, it preserves the desired conditional law,
and reversing momentum gives its stationary time reversal.  These are
standard skew-balance properties of bouncy and Boomerang processes
\citep{bouchardcote2018bouncy,vanetti2017piecewise,bierkens2020boomerang,
monmarche2022exact}.  We verify that our conditional process satisfies the
required identities.

Fix \(Y=y\), use the normalized position \(w=(x-y)/\sqrt\eta\), and write
\[
 q_y^w(\d w)\propto
 \exp\{-V(y+\sqrt\eta w)-\norm w^2/2\}\,\d w,
 \qquad
 b_y(w)=\sqrt\eta\,\nabla V(y+\sqrt\eta w).
\]
For a nonzero vector \(h\), let
\(\operatorname{Ref}_h p=p-2(p^\top h)h/\norm h^2\), and set
\(\operatorname{Ref}_0p=p\).  A physical reference \(E\) corresponds to
\(e=(E-y)/\sqrt\eta\).  Conditional on \((y,e)\), the generator is
\begin{align}
 (\mathcal L_{y,e}F)(w,p)
 ={}&p^\top\nabla_wF-\{w+b_y(e)\}^\top\nabla_pF \notag\\
 &+[p^\top\{b_y(w)-b_y(e)\}]_+
 \{F(w,\operatorname{Ref}_{b_y(w)-b_y(e)}p)-F(w,p)\}.
 \label{eq:persistent-generator}
\end{align}
The first line is a deterministic harmonic flight centered at
\(-b_y(e)\).  The last line says that a state-dependent Poisson clock rings
at rate \([p^\top\{b_y(w)-b_y(e)\}]_+\); when it rings, only the momentum is
reflected across the residual-gradient direction
\(b_y(w)-b_y(e)\).  Fixing \(e\) is a proof device.  The accelerated branch
later draws a fresh reference independently given \(y\), and averages these
fixed-reference kernels.

\begin{lemma}[Fixed-reference validity]
\label{persistent:lem:fixed-invariance}
For every fixed \(y,e\), \cref{eq:persistent-generator} defines a
non-explosive Markov process.  If \(S_t^{y,e}\) denotes its transition
operator and \(\mathsf R(w,p)=(w,-p)\), then
\[
 q_y^w\otimes\gamma_d\text{ is invariant},
 \qquad (S_t^{y,e})^*=\mathsf R S_t^{y,e}\mathsf R .
\]
Both conclusions remain valid after averaging \(S_t^{y,e}\) over any
measurable reference law depending on \(y\), provided the reference is
conditionally independent of the incoming \((W,P)\).
\end{lemma}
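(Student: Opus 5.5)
We follow the three-step template of \cref{flux:prop:fixed-validity}: non-explosion, reversal of the event-free flight killed at the event rate, reversal of a single collision, and summation over collision sequences. The Langevin noise is absent here, which simplifies the event-free part, but the jump kernel is now deterministic specular reflection, so the collision balance must be checked separately.

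\emph{Non-explosion.} Take the Lyapunov function \(\mathcal V=1+\{\norm{w+b_y(e)}^2+\norm p^2\}/2\). The harmonic flight preserves \(\mathcal V\) exactly. A specular reflection preserves \(\norm p\) and leaves \(w\) unchanged, so \(\mathcal V\) is conserved along every path. The event rate satisfies \([p^\top h(w)]_+\le s\norm p\norm{w-e}\le C_e\mathcal V\), which is bounded on each level set of \(\mathcal V\). On any finite interval the number of events is therefore dominated by a Poisson variable with finite mean. Hence the recursive integrated-hazard construction is well defined and non-explosive, and the transition operator \(S_t^{y,e}\) is a Markov contraction.

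\emph{Event-free flight.} Let \(\Phi_t\) be the harmonic rotation flow. It preserves the Gaussian \(\mu_{\rm ref}\propto\exp\{-\norm{w+b_y(e)}^2/2-\norm p^2/2\}\) and Lebesgue measure, and it satisfies \(\mathsf R\Phi_t\mathsf R=\Phi_{-t}\). The target \(\mu=q_y^w\otimes\gamma_d\) has density ratio \(\varrho\) with respect to \(\mu_{\rm ref}\) satisfying \(\nabla_w\log\varrho=-h\), where \(h=b_y(w)-b_y(e)\). Along the flow this gives
\[
\log\varrho(\Phi_t z)-\log\varrho(z)=-\int_0^t(\lambda-\lambda\circ\mathsf R)(\Phi_u z)\,\d u .
\]
Define the killed operator \(\mathsf D_tf(z)=\e^{-\int_0^t\lambda(\Phi_uz)\d u}f(\Phi_tz)\). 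A change of variables with the identity above gives \(\mathsf D_t^*=\mathsf R\mathsf D_t\mathsf R\) in \(L^2(\mu)\), first for nonnegative tests.

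\emph{Single collision.} The gain kernel is \(\mathcal G f(w,p)=\lambda(w,p)f(w,\operatorname{Ref}_hp)\). The reflection \(\operatorname{Ref}_h\) is an involutive isometry, so it preserves \(\gamma_d\). It also maps \(\{p^\top h>0\}\) onto \(\{p^\top h<0\}\), and it satisfies \(\lambda(w,p)=\lambda(w,-\operatorname{Ref}_hp)\). Together these give
\[
\int\!\mu\,g\,\mathcal Gf=\int\!\mu\,(\mathsf R f)\,\mathcal G(\mathsf R g),
\qquad\text{i.e. }\mathcal G^*=\mathsf R\mathcal G\mathsf R .
\]

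\emph{Summation over paths.} Expand \(S_t^{y,e}\) as the sum over \(m\) of simplex integrals of \(\mathsf D_{t_0}\mathcal G\mathsf D_{t_1}\cdots\mathcal G\mathsf D_{t_m}\). Reversing the order of the simplex coordinates, and conjugating each factor by \(\mathsf R\), identifies the adjoint of each term; Tonelli justifies this for nonnegative tests. Non-explosion makes the series conservative, which yields \((S_t^{y,e})^*=\mathsf R S_t^{y,e}\mathsf R\). Testing against the constant function \(1\) and using \(\mathsf R1=1\) gives invariance of \(\mu\). For the averaged statement, the reference is drawn conditionally independently of \((W,P)\) given \(y\). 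Fubini then shows that the mixture \(\int S_t^{y,e}\,\zeta_y(\d e)\) inherits both invariance and the adjoint identity.

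The main obstacle is the path-summation step: the event rates are unbounded, and the series identity must be justified rigorously. This is where the conserved energy is essential, since it bounds the rate along each entire trajectory. The remaining steps are direct computations.
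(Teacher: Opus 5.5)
Your proposal is correct and uses the same skew-balance argument as the paper. The paper pairs each whole deterministic skeleton with its momentum-reversed time reversal through the volume-preserving map $(z_0,t_1,\ldots,t_n)\mapsto(\mathsf R z_t,t-t_n,\ldots,t-t_1)$, checking the collision-rate and survival-weight identities directly on the invariant energy shells $\{\mathcal V\le R\}$; this is exactly what your composition of $\mathsf D_t^*=\mathsf R\mathsf D_t\mathsf R$ and $\mathcal G^*=\mathsf R\mathcal G\mathsf R$ with simplex reversal encodes, and your Tonelli route makes the shell localization unnecessary. Two steps are left implicit in your write-up, and the paper records both explicitly: the extension from bounded tests to $L^2(\mu)$, which follows because invariance plus Jensen gives contractivity and bounded functions are dense, and the joint Borel measurability of the construction in $(e,z_0)$, which your final Fubini step requires.
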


\begin{proof}
Put \(a=b_y(e)\), \(r(w)=b_y(w)-a\), and
\[
 \mathsf J(w,p)=(w,\operatorname{Ref}_{r(w)}p),
 \qquad s(w,p)=p^\top r(w).
\]
Between events the process follows the harmonic flow
\[
 \phi_u(w,p)=\bigl(-a+(w+a)\cos u+p\sin u,
                    -(w+a)\sin u+p\cos u\bigr).
\]
Both this flow and \(\mathsf J\) preserve
\(H(w,p)=\{\norm{w+a}^2+\norm p^2\}/2\).  Thus every set
\(K_R=\{H\le R\}\) is invariant.  Lipschitz continuity of \(b_y\) bounds
the event rate \(\lambda=[s]_+\) on \(K_R\) by
\[
 \norm p\,\norm{b_y(w)-b_y(e)}
 \le\beta\eta\sqrt{2R}\{\sqrt{2R}+\norm{e+a}\}
 =:\Lambda_R.
\]
Thinning a homogeneous Poisson process of rate \(\Lambda_R\) therefore
constructs the process from every point of \(K_R\).  Since every initial
state lies in some \(K_R\), this proves well-posedness and non-explosion.

We prove skew balance directly on paths.  The idea is to pair every path with
its time reversal after flipping momentum: collision factors agree one by
one, while the change in target density exactly cancels the change in the
survival probability.  Write \(z=(w,p)\), set
\[
 \Psi(w)=V(y+\sqrt\eta w)-a^\top w,
 \qquad \nu=q_y^w\otimes\gamma_d,
\]
so that the Lebesgue density of \(\nu\) is, up to normalization,
\(\varrho(z)=\exp\{-H(z)-\Psi(w)\}\).  Along a flight,
\(\frac{\d}{\d u}\Psi(w_u)=s(z_u)\), while a collision changes neither
\(H\) nor \(\Psi\).  Moreover,
\begin{equation}
 \mathsf R\phi_u\mathsf R=\phi_{-u},\qquad
 \mathsf J\mathsf R=\mathsf R\mathsf J,\qquad
 s\circ\mathsf J=s\circ\mathsf R=-s.
 \label{persistent:eq:path-reversal-identities}
\end{equation}

Fix \(t>0\).  For
\(\boldsymbol t=(t_1,\ldots,t_n)\) in the ordered simplex
\(\Delta_n(t)=\{0<t_1<\cdots<t_n<t\}\), let
\(z_u=z_u(z_0,\boldsymbol t)\) be the deterministic skeleton obtained by
following \(\phi\) and inserting a collision \(\mathsf J\) at every
\(t_i\).  On \(K_R\), the iterated-hazard expansion of the transition
operator is
\begin{equation}
 S_t^{y,e}F(z_0)
 =\sum_{n\ge0}\int_{\Delta_n(t)}
   \exp\!\left\{-\int_0^t\lambda(z_u)\,\d u\right\}
   \prod_{i=1}^n\lambda(z_{t_i-})F(z_t)\,\d\boldsymbol t .
 \label{persistent:eq:ordered-event-expansion}
\end{equation}
The bound by \(\Lambda_R\) makes this series absolutely convergent.

Pair this skeleton with the one starting at \(\mathsf Rz_t\) and having
ordered collision times
\((t-t_n,\ldots,t-t_1)\).  By
\cref{persistent:eq:path-reversal-identities}, it is the momentum-reversed
time reversal of the original skeleton and ends at \(\mathsf Rz_0\).  The
map
\[
 (z_0,t_1,\ldots,t_n)
 \longmapsto(\mathsf Rz_t,t-t_n,\ldots,t-t_1)
\]
preserves Lebesgue measure.  Indeed, \(\phi_u\) and \(\mathsf R\) preserve
volume, while \(\mathsf J\) does so by applying, for every fixed \(w\), an
orthogonal change of variables in \(p\).  This argument does not require
continuity of the reflection direction at \(r(w)=0\), where the event rate
vanishes.

If \(z^-\) is a forward pre-collision state, the corresponding reverse
pre-collision state is \(\mathsf R\mathsf Jz^-\); hence
\[
 \lambda(\mathsf R\mathsf Jz^-)=\lambda(z^-).
\]
Thus the collision-rate products in
\cref{persistent:eq:ordered-event-expansion} agree.  The reverse survival
hazard is \(\int_0^t[-s(z_u)]_+\,\d u\), and
\[
 \Psi(w_t)-\Psi(w_0)
 =\int_0^t s(z_u)\,\d u
 =\int_0^t\bigl\{[s(z_u)]_+-[-s(z_u)]_+\bigr\}\,\d u.
\]
Since \(H(z_t)=H(z_0)\) and \(\varrho(\mathsf Rz)=\varrho(z)\), this is
exactly the survival-weight identity
\begin{equation}
 \varrho(z_0)\exp\!\left\{-\int_0^t[s(z_u)]_+\,\d u\right\}
 =\varrho(\mathsf Rz_t)
  \exp\!\left\{-\int_0^t[-s(z_u)]_+\,\d u\right\}.
 \label{persistent:eq:path-weight-balance}
\end{equation}

Let \(\nu_R\) be \(\nu\) conditioned on \(K_R\).  Pairing paths term by
term in \cref{persistent:eq:ordered-event-expansion} using
\cref{persistent:eq:path-weight-balance} gives, for bounded measurable
\(F,G\),
\[
 \int F S_t^{y,e}G\,\d\nu_R
 =\int (\mathsf R G)S_t^{y,e}(\mathsf R F)\,\d\nu_R
 =\int (\mathsf R S_t^{y,e}\mathsf R F)G\,\d\nu_R.
\]
Letting \(R\to\infty\) gives the same identity under \(\nu\).  Taking
\(F=1\) proves invariance.  Jensen's inequality then makes
\(S_t^{y,e}\) a contraction on \(L^2(\nu)\), and density of bounded
functions extends the identity to \(L^2(\nu)\), proving
\((S_t^{y,e})^*=\mathsf R S_t^{y,e}\mathsf R\).

Finally, the recursive hazard construction is jointly Borel in
\((y,e,z_0)\): the flow, rate, integrated hazard, and reflection are Borel,
and successive event times are obtained by taking level-hitting times of
the integrated hazard.  Non-explosion gives the measurable pointwise
limit.  Consequently, conditioning on an incoming-state-independent
reference and applying Fubini passes both invariance and the adjoint
identity to every measurable reference mixture.
\end{proof}

We now pass from the conditional move to one complete proximal iteration.
All operators below act on \(L^2(\widehat\Pi_\eta)\), where
\(\widehat\Pi_\eta=\Pi_\eta\otimes\gamma_d\).  Let \(\mathsf U\) be the
pullback of the outer reflection
\((x,y,p)\mapsto(x,2x-y,p)\), let
\(\mathsf P f=\E[f\mid Y]\), and set \(\mathsf Q=I-\mathsf P\).  Finally,
for the ideal chain let \(\mathsf S\) be the quarter-period conditional
kernel after drawing \(E\sim q_y\), independently of the incoming state.
Then \(\mathsf S\mathsf P=\mathsf P\mathsf S=\mathsf P\).  The
conditional mixture and the observable operator of one ideal iteration are
\begin{equation}
 \mathsf B_\rho
 =\rho\mathsf P+(1-\rho)\mathsf S
 =\mathsf P+(1-\rho)\mathsf S\mathsf Q,
 \qquad
 \mathsf K=\mathsf U\mathsf B_\rho.
 \label{persistent:eq:K-def}
\end{equation}
The first term is the independent conditional refresh: as an operator on
observables, an exact draw from \(q_y\otimes\gamma_d\) is precisely
\(\mathsf P\).  The second term is the persistent quarter-turn.

\begin{lemma}[Invariance and refresh dissipation]
\label{persistent:lem:row-identities}
The operator \(\mathsf K\) preserves \(\widehat\Pi_\eta\) and is a
contraction.  Moreover, for every \(f\in L^2(\widehat\Pi_\eta)\),
\begin{equation}
 \norm f^2-\norm{\mathsf Kf}^2
 \ge \rho\norm{\mathsf Qf}^2.
 \label{persistent:eq:dissipation}
\end{equation}
\end{lemma}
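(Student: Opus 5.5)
We argue at the operator level on \(L^2(\widehat\Pi_\eta)\), using \cref{persistent:eq:K-def}. First, invariance: \(\mathsf U\) is the pullback of a measure-preserving bijection of \(\widehat\Pi_\eta\), because \((x,y,p)\mapsto(x,2x-y,p)\) only flips the sign of \(y-x\), has unit Jacobian, and leaves \(p\) untouched. This is the computation in \cref{prop:framework-exactness}. So \(\mathsf U\) is unitary and \(\mathsf U1=1\). For \(\mathsf B_\rho\), \(\mathsf P\) is an orthogonal projection, and by \cref{persistent:lem:fixed-invariance} the averaged kernel \(\mathsf S\) preserves \(q_y\otimes\gamma_d\) for every \(y\). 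Hence \(\mathsf S\) is a Markov contraction on \(L^2(\widehat\Pi_\eta)\) with \(\mathsf P\mathsf S=\mathsf P\). The contraction property follows from Jensen's inequality applied conditionally on \(Y\) and then integrated. Combining these, \(\mathsf B_\rho\) is a convex combination of Markov operators preserving \(\widehat\Pi_\eta\), and so is \(\mathsf K=\mathsf U\mathsf B_\rho\). This gives invariance and \(\norm{\mathsf K}\le1\).

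For \cref{persistent:eq:dissipation}, unitarity of \(\mathsf U\) gives \(\norm{\mathsf Kf}=\norm{\mathsf B_\rho f}\), so it suffices to bound \(\mathsf B_\rho f\). Write \(f=\mathsf Pf+\mathsf Qf\). Then \(\mathsf B_\rho f=\mathsf Pf+(1-\rho)\mathsf S\mathsf Qf\). The next step is to check that \(\mathsf S\mathsf Qf\in\operatorname{ran}(\mathsf Q)\), which holds because \(\mathsf P\mathsf S\mathsf Q=\mathsf P\mathsf Q=0\). The two summands are therefore orthogonal, and
\[
 \norm{\mathsf B_\rho f}^2=\norm{\mathsf Pf}^2+(1-\rho)^2\norm{\mathsf S\mathsf Qf}^2
 \le\norm{\mathsf Pf}^2+(1-\rho)^2\norm{\mathsf Qf}^2 .
\]
Since \(1-(1-\rho)^2=\rho(2-\rho)\ge\rho\) for \(\rho\in[0,1]\), subtracting from \(\norm f^2=\norm{\mathsf Pf}^2+\norm{\mathsf Qf}^2\) gives the claim.

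The only delicate point is justifying the identities \(\mathsf P\mathsf S=\mathsf S\mathsf P=\mathsf P\) and the contractivity of \(\mathsf S\) on the full space. The identity \(\mathsf S\mathsf P=\mathsf P\) holds because \(\mathsf S\) does not move \(y\). The identity \(\mathsf P\mathsf S=\mathsf P\) is the conditional invariance from \cref{persistent:lem:fixed-invariance}, after averaging over the reference \(E\sim q_y\) drawn independently of the incoming \((W,P)\). I would verify this by a disintegration along \(Y\), applying Fubini together with the joint measurability of the fixed-reference kernels established at the end of that lemma. With these in hand, the remaining steps are algebraic.
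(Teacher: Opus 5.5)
Your proposal is correct and follows the paper's proof. Invariance comes from the measure-preserving reflection together with the conditional invariance of \cref{persistent:lem:fixed-invariance}; dissipation comes from \(\norm{\mathsf Kf}=\norm{\mathsf B_\rho f}\), the orthogonal split \(\mathsf B_\rho f=\mathsf Pf+(1-\rho)\mathsf S\mathsf Qf\), contractivity of \(\mathsf S\), and \(1-(1-\rho)^2\ge\rho\). Your check that \(\mathsf P\mathsf S\mathsf Q=0\), hence \(\mathsf S\mathsf Qf\perp\mathsf Pf\), is the orthogonality the paper's displayed identity uses without stating it.
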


\begin{proof}
The outer reflection is a measure-preserving involution because it sends
\(Y-X\) to its negative.  \Cref{persistent:lem:fixed-invariance} shows that
the conditional mixture preserves the conditional position--momentum law;
hence their composition preserves \(\widehat\Pi_\eta\).  Orthogonality of
\(\mathsf P f\) and \(\mathsf Q f\), together with contractivity of
\(\mathsf S\), gives
\[
 \norm f^2-\norm{\mathsf Kf}^2
 =\norm{\mathsf Qf}^2-(1-\rho)^2\norm{\mathsf S\mathsf Qf}^2
 \ge\{1-(1-\rho)^2\}\norm{\mathsf Qf}^2,
\]
which is stronger than \cref{persistent:eq:dissipation}.
\end{proof}

\subsection{How Reflection Feeds the Quarter-Turn}
\label{persistent:sec:graph}

This subsection proves the sampling-specific estimate used by the
contraction argument.  Reflection first turns a function of \(Y\) into an
almost affine function of the conditional position.  The quarter-turn then
sends that affine position component into momentum and sends the
corresponding momentum component back with the opposite sign.

At a semantic level, the signal flow is
\[
 \begin{aligned}
 f(Y_+)\xrightarrow{\ \mathsf U\ }f(Y_-)
   &=\mathsf T f+\mathsf C f,
     &&\mathsf C:=\mathsf Q\mathsf U\mathsf P,\\
 \mathsf C f&\approx L_f,
     &\mathsf S\mathsf C f&\approx M_f,\\
 \mathsf S M_f&\approx-\mathsf C f,
     &\mathsf U M_f&\approx M_f.
 \end{aligned}
\]
The proposition below controls the last three errors simultaneously; the
intermediate affine observable \(L_f\) makes that control possible.
Refreshment then damps whatever is not represented by these signals.

\paragraph{Notation and the result to be proved.}
Let
\(\mathsf P f=\E[f\mid Y]\), \(\mathsf Q=I-\mathsf P\), and let
\(\mathsf U\) denote outer reflection.  On the centered \(Y\)-only
subspace \(\mathcal H_Y=\operatorname{ran}(\mathsf P)\cap L^2_0\), define
\[
 \mathsf T=\mathsf P\mathsf U\mathsf P,\qquad
 \Gamma=(I-\mathsf T^2)^{1/2},\qquad
 g=\sqrt{\alpha\eta}.
\]
Under the stationary reflection coupling, write
\(Y_+=Y\), \(Y_-=2X-Y\), and \(W=(X-Y_+)/\sqrt\eta\).  Conditional on
\(Y_+=y\), denote the mean and covariance of \(W\) by
\(\bar w_y\) and \(\Sigma_y\), and set \(D_y=I-\Sigma_y\).
For a differentiable \(f\in\mathcal H_Y\), define
\[
 a_f(y)=\E[\nabla f(Y_-)\mid Y_+=y],\quad
 L_f=2\sqrt\eta\,a_f(Y_+)^\top(W-\bar w_{Y_+}),\quad
 M_f=2\sqrt\eta\,a_f(Y_+)^\top P.
\]
The letter \(L_f\) denotes the affine position observable; \(M_f\) is its
momentum counterpart.  Componentwise, the definition also says
\(a_f=\mathsf T\nabla f\), where \(\mathsf T\) acts coordinatewise on
vector-valued functions.  The roles of the main quantities are summarized
as follows.

\begin{center}
\begin{tabular}{@{}lp{0.67\linewidth}@{}}
\toprule
quantity & role in the transfer argument\\
\midrule
\(\mathsf T f\) & the part of the reflected observable predictable from
\(Y_+\) alone\\
\(\Gamma f\) & the scale of the conditional-position signal created by
reflection\\
\(\mathsf C f\) & the conditional-position signal itself\\
\(L_f\) & the leading affine approximation to that position signal\\
\(M_f\) & the momentum copy of \(L_f\) produced by a quarter-turn\\
\bottomrule
\end{tabular}
\end{center}

These formulas are initially classical for differentiable \(f\).  The
proposition below is an all--\(L^2\) statement: the closed weak-gradient
lemma that follows proves that every \(f\in\mathcal H_+\) has the required
weak gradient and defines \(a_f,M_f\) by graph closure.  The auxiliary
observable \(L_f\) is needed only in the smooth-core calculation.  In
particular, the proposition does not assume that a spectral projection
preserves smoothness.

\begin{proposition}[Position--momentum transfer]
\label{persistent:prop:three-side}
Let \(\mathcal H_+\) be the spectral subspace of \(\mathsf T\)
corresponding to \([1/2,1]\).  If \(\beta\eta\) is smaller than a universal
constant, then every \(f\in\mathcal H_+\) satisfies
\begin{align}
 &\norm{\mathsf S(\mathsf Q\mathsf U\mathsf P f)-M_f}^2
 +\norm{\mathsf S M_f+\mathsf Q\mathsf U\mathsf P f}^2
 +\norm{\mathsf U M_f-M_f}^2 \notag\\
 &\hspace{35mm}\le
 C\norm{\Gamma f}\norm{\Gamma^3f}.
 \label{persistent:eq:three-side}
\end{align}
\end{proposition}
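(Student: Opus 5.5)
Throughout, write \(\mathsf C f=\mathsf Q\mathsf U\mathsf P f\). The plan is to follow the flux argument (\cref{flux:lem:affine-after-reflection}, \cref{flux:prop:global-halfturn}), replacing the half-turn Duhamel identity with a quarter-turn one. First, on the smooth core, split \(\mathsf Cf=L_f+e_f\) exactly as in \cref{flux:lem:affine-after-reflection}. The persistent chain has the same \(\Pi_\eta\)-marginal geometry, so the same bounds hold: \(\norm{e_f}^2\le4\eta\mathcal R_f\) and \(\mathcal R_f+\mathcal D_f\le C\norm{\nabla(I-\mathsf T)f}\norm{\nabla f}\), with \(a_f=b\) and \(\mathcal D_f=\E[a_f^\top D_{Y_+}a_f]\). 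On \(\mathcal H_+\), the derivative estimates of \cref{flux:lem:outer-geometry}, applied to \(\mathsf T^{-1}f\) and \(\mathsf T^{-1}(I-\mathsf T)f\), give \(\sqrt\eta\norm{\nabla f}\le C\norm{\Gamma f}\) and \(\sqrt\eta\norm{\nabla(I-\mathsf T)f}\le C\norm{\Gamma^3f}\). It therefore suffices to bound all three terms by \(C\eta(\mathcal R_f+\mathcal D_f)\), plus terms already of that size.

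Second, I will prove an affine quarter-turn lemma, the analogue of \cref{flux:lem:affine-evolution} for the undamped generator \eqref{eq:persistent-generator}. Take \(\ell_u(w)=u^\top(w-\bar w_y)\) and \(m_u(p)=u^\top p\). Then \(\mathcal L\ell_u=m_u\). For the momentum coordinate, the specular jump contributes \(-2(p^\top n)_+^2\norm h\,u^\top n\), so \(\mathcal L m_u=-\ell_u+k_u\) with
\[
 k_u=-u^\top(\bar w_y+b_y(e))-2u^\top h\,(p^\top n)_+^2/\norm h .
\]
Variation of constants with \(\cos,\sin\) at \(t=\pi/2\) then gives
\[
 S\ell_u=m_u\text{-part}+\int_0^{\pi/2}\cos(\tfrac\pi2-r)S_rk_u\,\d r,
 \qquad
 Sm_u=-\ell_u+\int_0^{\pi/2}\sin(\tfrac\pi2-r)S_rk_u\,\d r .
\]
Averaging over \(p\) gives \(\E[2(p^\top n)_+^2]=1\), so the \(p\)-mean of \(k_u\) is \(-u^\top(g_y(W)-\E g_y(W))\). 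Its conditional variance is a constant times \((u^\top h)^2\). The computation of \cref{flux:eq:forcing-moment}, with independent \(W,E\sim q_y\), then bounds \(\E k_u^2\le Cs\,u^\top D_yu\); this uses \cref{flux:eq:conditional-identities}. Taking \(u=2\sqrt\eta a_f(y)\) and using contractivity of \(S_r\), I get \(\norm{\mathsf SL_f-M_f}^2+\norm{\mathsf SM_f+L_f}^2\le C\eta s\mathcal D_f\). The error \(e_f\) is handled by contractivity of \(\mathsf S\), costing \(C\eta\mathcal R_f\). Together these give the first two terms of \eqref{persistent:eq:three-side}.

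Third, the term \(\norm{\mathsf UM_f-M_f}\) is \(2\sqrt\eta\) times \(\norm{(a_f(2X-Y)-a_f(Y))^\top P}\). Since \(P\) is independent standard Gaussian, this equals \(4\eta\,\E\norm{a_f(Y_-)-a_f(Y_+)}^2\), so the task is to bound that expectation by \(C(\mathcal R_f+\mathcal D_f)\) plus controlled terms. I would write \(a_f(Y_-)-a_f(Y_+)=(\nabla f(Y_-)-a_f(Y_+))-(\nabla f(Y_-)-a_f(Y_-))\). Because \(Y_\pm\) are exchangeable, \(\E\norm{\nabla f(Y_-)-a_f(Y_+)}^2=\norm{\nabla f}^2-\norm{\mathsf T\nabla f}^2=\mathcal R_f\). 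The term \(\nabla f-a_f=(I-\mathsf T)\nabla f\) is bounded via \(\norm{\nabla f-b}^2\le3\mathcal R_f\), the bound from the flux lemma that uses \(\mathsf T\succeq-I/2\). So this term costs only \(C\eta\mathcal R_f\).

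Finally, I pass from smooth \(f\) to all of \(\mathcal H_+\) by the graph-closure argument of \cref{flux:app:regularity}. Every term in \eqref{persistent:eq:three-side} is continuous in the weak-gradient graph norm, and \(L_f\) is never needed outside the core. The main obstacle is the affine quarter-turn step. There is no damping, the jump is specular, and the forcing \(k_u\) is a quadratic functional of \(p\). I must check that its \(p\)-average reproduces exactly \(-u^\top(g_y-\E g_y)\), with no leftover \(O(1)\) bias, and also control the cross-dependence between the reference \(E\) and the path when averaging \(S_rk_u\). I would first verify the forcing bound under stationarity, using the invariance from \cref{persistent:lem:fixed-invariance}.
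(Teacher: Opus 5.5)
Your proposal is correct and is essentially the paper's own proof. It uses the split $\mathsf Q\mathsf U\mathsf Pf=L_f+e_f$ and the quarter-turn Duhamel identity, whose forcing has $p$-mean $-u^\top\{b_y(W)-\E b_y(W)\}$ and second moment $11\Var(u^\top b_y(W))\le Cs\,u^\top D_yu$. It then uses contractivity for $e_f$, control of $\norm{\mathsf UM_f-M_f}^2$ by $C\eta\mathcal R_f$ via $\mathsf T\succeq-\tfrac12 I$ (the paper gets $4\eta\mathcal R_f$ in one spectral step rather than your triangle split), and finally graph closure followed by the sector derivative estimates.

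A few small fixes are needed:
\begin{itemize}
\item In $k_u$ the jump term is $-2u^\top h\,(p^\top n)_+^2$, with no division by $\norm h$.
\item The $\cos$ and $\sin$ kernels in your two Duhamel formulas are interchanged; this is harmless for the bound.
\item In the closure step, the inequality you pass to the limit must be the pre-spectral one: the sum of the three squared errors is at most $C\eta\norm{\nabla(I-\mathsf T)f}\norm{\nabla f}$. You cannot pass the final $\Gamma$-estimate itself, because smooth approximants need not lie in $\mathcal H_+$. This step requires graph-norm continuity of $f\mapsto\nabla(I-\mathsf T)f=\nabla f-\nabla\mathsf Tf$, which comes from the bounded extension of $\nabla\mathsf T$. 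The sector estimates are applied only after taking the limit.
\end{itemize}
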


The proof has a technical prelude followed by three substantive steps.  The
prelude extends estimates from smooth observables to the closed weak-gradient
domain.  Step~1 analyzes the signal created by reflection, Step~2 analyzes
its quarter-rotation, and Step~3 combines the two estimates and performs the
single closure passage.

\paragraph{Technical prelude: extending smooth estimates to all \(L^2\).}
We first isolate the only closure argument needed below.  A graph core here
means that every function with a weak gradient can be approximated together
with its gradient by smooth compactly supported functions.  Let
\(\nu=\Pi_\eta^Y\), let
\(D:\operatorname{Dom}(D)\subset L^2_0(\nu)
\to L^2(\nu;\mathbb R^d)\) be the closed weak gradient, and put
\[
 \mathcal C_0=C_c^\infty(\mathbb R^d)\cap L^2_0(\nu).
\]

\begin{lemma}[Closed weak-gradient smoothing]
\label{persistent:lem:closed-gradient-smoothing}
The set \(\mathcal C_0\) is a graph core for \(D\).  Suppose that
\(\mathsf T\) is a self-adjoint contraction,
\(\mathsf T\mathcal C_0\subset\operatorname{Dom}(D)\), and, on
\(\mathcal C_0\),
\begin{equation}
 \sqrt\eta\norm{D\mathsf T u}\le c_D\norm{\Gamma u}.
 \label{persistent:eq:core-smoothing}
\end{equation}
Then \(D\mathsf T\) has an everywhere-defined bounded extension
\(\mathsf R_D:L^2_0(\nu)\to L^2(\nu;\mathbb R^d)\), and in fact
\[
 \mathsf T L^2_0(\nu)\subset\operatorname{Dom}(D),\qquad
 D\mathsf T=\mathsf R_D,\qquad
 \sqrt\eta\norm{\mathsf R_Du}\le c_D\norm{\Gamma u}.
\]
Consequently, if
\(\Pi_+=\mathbf1_{[1/2,1]}(\mathsf T)\), then
\(\operatorname{ran}(\Pi_+)\subset\operatorname{Dom}(D)\) and
\begin{align*}
 \sqrt\eta\norm{Df}&\le2c_D\norm{\Gamma f},
 \\
 \sqrt\eta\norm{D(I-\mathsf T)f}
 &\le2c_D\norm{\Gamma^3f},
 \qquad f\in\operatorname{ran}(\Pi_+).
\end{align*}
Moreover, for every \(f\in\operatorname{Dom}(D)\),
\begin{equation}
 D(I-\mathsf T)f=Df-\mathsf R_Df.
 \label{persistent:eq:graph-difference}
\end{equation}
Thus convergence in the graph norm of \(D\) automatically implies
convergence of \(D(I-\mathsf T)f\); no additional graph norm is needed.
\end{lemma}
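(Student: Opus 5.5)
We prove the lemma in three stages: the graph-core property, the extension of \(D\mathsf T\) by closedness, and the spectral consequences.

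\emph{Graph core.} Take \(f\in\operatorname{Dom}(D)\). We first truncate with smooth cutoffs \(\chi_R(y)=\chi(y/R)\). Since \(\nu\) is strongly log-concave by \cref{flux:lem:conditional-geometry}, it has Gaussian tails, and \(\norm{\nabla\chi_R}_\infty\lesssim1/R\). Hence \(\chi_Rf\to f\) in the graph norm. We then mollify the compactly supported functions. Because the density of \(\nu\) is bounded above and below on compacts, \(L^2(\nu)\) convergence and \(L^2(\d y)\) convergence coincide there, so mollification converges in the graph norm. Finally we recentre by subtracting \(\nu(u_n)\chi_{R'}\) for a fixed cutoff with nonzero mean, rescaled appropriately. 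This correction tends to zero in the graph norm because \(\nu(u_n)\to\nu(f)=0\), and the resulting functions lie in \(\mathcal C_0\).

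\emph{Extension.} Set \(\mathsf R_D\) to be the bounded extension of \(D\mathsf T|_{\mathcal C_0}\). This exists because \(\norm{\Gamma u}\le\norm u\) and \(\mathcal C_0\) is dense in \(L^2_0(\nu)\); it also satisfies the same bound \(\sqrt\eta\norm{\mathsf R_Du}\le c_D\norm{\Gamma u}\). For arbitrary \(u\), pick \(u_n\in\mathcal C_0\) with \(u_n\to u\). Then \(\mathsf Tu_n\to\mathsf Tu\) in \(L^2\) and \(D\mathsf Tu_n\to\mathsf R_Du\). Closedness of \(D\) gives \(\mathsf Tu\in\operatorname{Dom}(D)\) and \(D\mathsf Tu=\mathsf R_Du\). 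Identity \cref{persistent:eq:graph-difference} is then linearity, \(D(I-\mathsf T)f=Df-D\mathsf Tf\), for \(f\in\operatorname{Dom}(D)\).

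\emph{Spectral consequences.} On \(\operatorname{ran}(\Pi_+)\), the operator \(\mathsf T\) is invertible with \(\norm{\mathsf T^{-1}}\le2\), and it commutes with \(\Gamma\) and \(\Pi_+\). For \(f\in\operatorname{ran}(\Pi_+)\), write \(f=\mathsf T(\mathsf T^{-1}f)\). Then \(f\in\operatorname{Dom}(D)\) and
\[
\sqrt\eta\norm{Df}\le c_D\norm{\Gamma\mathsf T^{-1}f}\le2c_D\norm{\Gamma f}.
\]
Similarly, \((I-\mathsf T)f=\mathsf T\,u\) with \(u=\mathsf T^{-1}(I-\mathsf T)f\), so
\[
\sqrt\eta\norm{D(I-\mathsf T)f}\le c_D\norm{\Gamma\mathsf T^{-1}(I-\mathsf T)f}.
\]
By spectral calculus on \([1/2,1]\), \(\lambda^{-1}(1-\lambda)\le2(1-\lambda)\le2(1-\lambda^2)=2\gamma(\lambda)^2\), where \(\gamma(\lambda)=\sqrt{1-\lambda^2}\). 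Since \(\Gamma\) is the function \(\gamma(\mathsf T)\), this gives \(\norm{\Gamma\mathsf T^{-1}(I-\mathsf T)f}\le2\norm{\Gamma^3f}\).

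The main obstacle is the graph-core step. Everything else is soft functional analysis, but there one must carefully control the cutoff gradient term and the recentring with only the Gaussian tails of \(\nu\) available.
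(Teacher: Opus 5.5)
Your proposal is correct and follows the paper's argument essentially step for step: cutoff, mollification and recentring by a fixed normalized bump for the graph core; bounded extension of \(D\mathsf T\) from \(\mathcal C_0\) combined with closedness of \(D\); and writing \(f=\mathsf T(\mathsf T^{-1}f)\) on \(\operatorname{ran}(\Pi_+)\) together with the scalar bounds \(\gamma_t/t\le2\gamma_t\) and \((1-t)/t\le2(1-t^2)\). One small remark: the cutoff step does not need Gaussian tails, since \(\norm{f\nabla\chi_R}_{L^2(\nu)}\le(C/R)\norm{f}_{L^2(\nu)}\), so the graph-core step is less delicate than your closing paragraph suggests.
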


\begin{proof}
The density of \(\nu\) is positive and continuous, hence is bounded above
and below on every compact set.  Cutoff followed by ordinary convolution
therefore approximates every element of \(\operatorname{Dom}(D)\) in the
norm \(\norm f^2+\norm{Df}^2\).  To preserve centering, subtract the mean
of each approximant times a fixed \(\psi\in C_c^\infty\) satisfying
\(\int\psi\,\d\nu=1\).  This proves the graph-core assertion.

If \(u_n\in\mathcal C_0\) converges in \(L^2(\nu)\),
\cref{persistent:eq:core-smoothing} makes \(D\mathsf T u_n\) Cauchy,
while \(\mathsf T u_n\) converges.  Closedness of \(D\) shows that the
limit belongs to \(\operatorname{Dom}(D)\), identifies its gradient, and
gives the bounded extension \(\mathsf R_D=D\mathsf T\).

On \(\operatorname{ran}(\Pi_+)\), \(\mathsf T^{-1}\) is bounded with
norm at most two.  For \(f\) in that subspace,
\[
 Df=\mathsf R_D\mathsf T^{-1}f,
 \qquad
 D(I-\mathsf T)f
 =\mathsf R_D\mathsf T^{-1}(I-\mathsf T)f.
\]
The first estimate follows from
\(\gamma_t/t\le2\gamma_t\) on \(t\in[1/2,1]\), where
\(\gamma_t=(1-t^2)^{1/2}\).  The second follows from
\[
 \frac{1-t}{t}=\frac{1-t^2}{t(1+t)}\le2(1-t^2),
 \qquad t\in[1/2,1].
\]
Finally, \(\mathsf T f\in\operatorname{Dom}(D)\) for every
\(f\in L^2_0(\nu)\),
so \cref{persistent:eq:graph-difference} follows by linearity and proves
the last assertion.
\end{proof}

\subsubsection{Step 1: What the Outer Reflection Creates}

\begin{lemma}[Outer-reflection geometry]
\label{persistent:lem:outer-geometry}
Assume \(\beta\eta\le1/3\).  On \(\mathcal H_Y\),
\[
 -\tfrac12 I\preceq\mathsf T
 \preceq\frac{1-g^2}{1+g^2}I,\qquad
 \Gamma\succeq\frac{2g}{1+g^2}I.
\]
Moreover, \(\mathsf C:=\mathsf Q\mathsf U\mathsf P\) can be written as
\(\mathsf C=\mathsf V\Gamma\), where \(\mathsf V\) is an isometry, and
\[
 \mathsf U\mathsf P f=\mathsf P\mathsf T f+\mathsf V\Gamma f,\qquad
 \mathsf U\mathsf V f=\mathsf P\Gamma f-\mathsf V\mathsf T f.
\]
On \(\mathcal H_+\),
\[
 \sqrt\eta\norm{\nabla f}\le C\norm{\Gamma f},\qquad
 \sqrt\eta\norm{\nabla(I-\mathsf T)f}
 \le C\norm{\Gamma^3f}.
\]
\end{lemma}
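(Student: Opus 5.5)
The plan is to reduce to the flux-appendix counterpart, \cref{flux:lem:outer-geometry}, and check that nothing in that argument involves momentum. On \(L^2(\widehat\Pi_\eta)\) with \(\widehat\Pi_\eta=\Pi_\eta\otimes\gamma_d\), the operators \(\mathsf P\), \(\mathsf U\) and \(\mathsf T\) restricted to \(\mathcal H_Y\) act only on the \((x,y)\) coordinates. The momentum enters only as an independent product factor, which is integrated out by \(\mathsf P\) and left untouched by \(\mathsf U\). Hence \(\mathsf T\) on \(\mathcal H_Y\) is literally the same operator as in \cref{flux:app:contraction}, namely \((\mathsf Tf)(y)=\E[f(Y_-)\mid Y_+=y]\) under the stationary reflection coupling.

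\textbf{Spectral bounds.} With \(a=g^2\) these follow exactly as in the flux proof, in three steps.
\begin{enumerate}
\item Exchangeability of \((Y_+,Y_-)\) gives self-adjointness of \(\mathsf T\).
\item The \(Y\)-marginal Poincar\'e constant \(\alpha^{-1}+\eta\) from \cref{flux:lem:conditional-geometry}, combined with the conditional Brascamp--Lieb derivative bound \(4\eta\norm{\nabla\mathsf Tf}^2\le\norm{\mathsf Cf}^2\), gives \(\norm{\mathsf T}\le(1-g^2)/(1+g^2)\).
\item The Gaussian Poincar\'e bound \(\langle h,(I-\mathsf T)h\rangle\le2\eta\norm{\nabla h}^2\), applied to \(h=\mathsf Tf\), gives \((I-\mathsf T)^2(I+2\mathsf T)\succeq0\), hence \(\mathsf T\succeq -I/2\).
\end{enumerate}
The two spectral edges then give \(\Gamma^2=I-\mathsf T^2\succeq 1-\bigl(\tfrac{1-g^2}{1+g^2}\bigr)^2=\bigl(\tfrac{2g}{1+g^2}\bigr)^2\). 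The edge \(1/4\) coming from \(-1/2\) is dominated, since \(2g/(1+g^2)\le1\) and \((1-g^2)/(1+g^2)\ge 1/2\) once \(g^2\le\beta\eta\le1/3\).

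\textbf{The factorization.} Unitarity of \(\mathsf U\) (it is a measure-preserving involution) gives \(\norm{\mathsf Pf}^2=\norm{\mathsf T f}^2+\norm{\mathsf Cf}^2\), so \(\mathsf C^*\mathsf C=\Gamma^2\). Since \(\Gamma\) is invertible, \(\mathsf V:=\mathsf C\Gamma^{-1}\) is an isometry, and \(\mathsf U\mathsf Pf=\mathsf P\mathsf Tf+\mathsf V\Gamma f\) on \(\mathcal H_Y\).

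For the second identity I use that \(\mathsf U\) is a self-adjoint unitary. First, \(\mathsf P\mathsf U\mathsf V=(\mathsf V^*\mathsf U\mathsf P)^*\) and \(\mathsf V^*\mathsf U\mathsf P=\mathsf V^*\mathsf C=\Gamma\), so \(\mathsf P\mathsf U\mathsf Vf=\Gamma f\). Second, applying \(\mathsf U\) to the first identity gives \(f=\mathsf U\mathsf P\mathsf Tf+\mathsf U\mathsf V\Gamma f=\mathsf T^2 f+\mathsf V\Gamma\mathsf T f+\mathsf U\mathsf V\Gamma f\). Substituting \(\Gamma f\) for \(f\) and using that \(\Gamma\) and \(\mathsf T\) commute yields \(\mathsf U\mathsf V\Gamma f=\Gamma^2 f-\mathsf V\Gamma\mathsf T f\). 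Cancelling the injective \(\Gamma\), with range density justified by \(\Gamma\succeq cI\), gives \(\mathsf U\mathsf Vf=\mathsf P\Gamma f-\mathsf V\mathsf Tf\).

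\textbf{Gradient bounds on \(\mathcal H_+\).} Both follow from \cref{persistent:lem:closed-gradient-smoothing} with \(c_D=1/2\), using the Brascamp--Lieb bound \(\sqrt\eta\norm{\nabla\mathsf Tu}\le\tfrac12\norm{\Gamma u}\) on the smooth core \(\mathcal C_0\).

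\textbf{Main obstacle.} The one substantive step is that conditional Brascamp--Lieb derivative bound. I would prove it as in the flux lemma. The conditional density of \(Y_-\) given \(Y_+=y\) is proportional to \(\exp\{-V(\tfrac{y+u}2)-\norm{y-u}^2/(8\eta)\}\); its \(u\)-curvature is at least \((\alpha+\eta^{-1})/4\), and the mixed \(y,u\) derivative of the log-density has norm at most \((\eta^{-1}-\alpha)/4\). The covariance form of Brascamp--Lieb, \(\nabla_y\E[f\mid y]=\Cov(f,-\nabla_y\log\text{density})\), then gives the bound. Regularity is handled by mollifying \(V\), as in \cref{flux:app:regularity}.
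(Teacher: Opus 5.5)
Your proposal is correct and follows the paper's proof essentially step for step. It uses exchangeability, the score-covariance formula for $\nabla\mathsf Tf$ bounded by conditional Poincar\'e, the $Y$-marginal Poincar\'e constant $\alpha^{-1}+\eta$ for the upper edge, Gaussian Poincar\'e applied to $h=\mathsf Tf$ for the lower edge, $\mathsf C^*\mathsf C=\Gamma^2$ together with $\mathsf U^2=I$ for the block identities, and \cref{persistent:lem:closed-gradient-smoothing} with $c_D=1/2$ for the bounds on $\mathcal H_+$.

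One bookkeeping slip: the exact upper edge $(1-g^2)/(1+g^2)$ requires the sharp factor $\eta\norm{\nabla\mathsf Tf}^2\le\frac{(1-g^2)^2}{4(1+g^2)}\norm{\mathsf Cf}^2$, which your curvature and mixed-derivative constants do deliver. The cruder bound $4\eta\norm{\nabla\mathsf Tf}^2\le\norm{\mathsf Cf}^2$ that you cite in step~2 only gives $\norm{\mathsf T}^2\le(1+g^2)/(1+5g^2)$.
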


\begin{proof}
The pair \((Y_+,Y_-)\) is exchangeable and
\((\mathsf Tf)(y)=\E[f(Y_-)\mid Y_+=y]\), so \(\mathsf T\) is a
self-adjoint contraction.  The conditional density of \(Y_-\) given
\(Y_+=y\) is proportional to
\[
 \exp\left\{-V\left(\frac{y+u}{2}\right)
             -\frac{\norm{y-u}^2}{8\eta}\right\}.
\]
Its score with respect to \(y\) is
\(s_y(u)=-\nabla V((y+u)/2)/2-(y-u)/(4\eta)\).  For
\(f\in\mathcal C_0\), weak differentiation under the integral gives
\[
 D\mathsf Tf(y)
 =\Cov(f(Y_-),s_y(Y_-)\mid Y_+=y).
\]
Indeed, on compact sets of \(y\), the score times the conditional density
is dominated by an integrable Gaussian tail.  Thus Fubini and weak
integration by parts justify differentiation of both the numerator and
normalizing constant.  In particular,
\(\mathsf T\mathcal C_0\subset\operatorname{Dom}(D)\).
Applying conditional Poincar\'e to this covariance gives
\[
 \eta\norm{D\mathsf Tf}^2
 \le\frac{(1-\alpha\eta)^2}{4(1+\alpha\eta)}
       \{\norm f^2-\norm{\mathsf Tf}^2\}
 \le\frac14\norm{\Gamma f}^2.
\]
The \(Y\)-marginal has Poincar\'e constant at most
\(\alpha^{-1}+\eta\): this follows from the conditional covariance bound
\(\Cov(X\mid Y)\preceq\eta(1+\alpha\eta)^{-1}I\) and
Brascamp--Lieb \citep{chewi2023log}.  Equivalently, if \(W_\eta\) is the
negative log-density of the \(Y\)-marginal, then
\[
 \nabla^2W_\eta(y)
 =\eta^{-1}I-\eta^{-2}\Cov(X\mid Y=y)
 \succeq\frac{\alpha}{1+\alpha\eta}I.
\]
Applying the resulting Poincar\'e inequality to the
centered function \(\mathsf Tf\) yields
\(\norm{\mathsf T}_{L^2_0}\le(1-g^2)/(1+g^2)\).
This inequality is first obtained on \(\mathcal C_0\) and extends to
\(L^2_0(\nu)\) by density.

For the lower edge, exchangeability and Gaussian Poincar\'e imply
\[
 \ip h{(I-\mathsf T)h}
 =\tfrac12\E\{h(Y_+)-h(Y_-)\}^2
 \le2\eta\norm{\nabla h}^2.
\]
Take \(h=\mathsf Tf\) for \(f\in\mathcal C_0\) and use the preceding
derivative bound.  The resulting inequality involves only bounded
polynomials in \(\mathsf T\), so density extends it to all of
\(L^2_0(\nu)\).  Spectral calculus gives
\(2\mathsf T^2(I-\mathsf T)\preceq I-\mathsf T^2\), equivalently
\((I-\mathsf T)^2(I+2\mathsf T)\succeq0\), and hence
\(\mathsf T\succeq-I/2\).  The two spectral edges imply the lower bound
on \(\Gamma\).  Since \(\mathsf T1=1\), the lower bound
\(\mathsf T\succeq-I/2\) also holds on the full scalar \(L^2(\nu)\), and
hence coordinatewise on vector fields such as \(Df\), whose components
need not be centered.

Finally,
\(\mathsf C^*\mathsf C=I-\mathsf T^2=\Gamma^2\).  Thus
\(\mathsf V=\mathsf C\Gamma^{-1}\) is an isometry, and
\(\mathsf U^2=I\) gives the two block identities.

The core derivative estimate above is
\cref{persistent:eq:core-smoothing} with \(c_D=1/2\).  Applying
\cref{persistent:lem:closed-gradient-smoothing} proves that all of
\(\mathcal H_+\), not merely a smooth subset of it, belongs to the weak
gradient domain and gives the two final inequalities.  Notice that this
argument never applies the discontinuous spectral projection \(\Pi_+\) to
a smooth function.
\end{proof}

\begin{lemma}[Affine approximation after reflection]
\label{persistent:lem:affine-control}
For every \(f\in\mathcal C_0\),
\(\mathsf Q\mathsf U\mathsf P f=L_f+e_f\), where
\(\E[e_f\mid Y_+]=0\).  With
\[
 \mathcal R_f=\norm{\nabla f}^2-\norm{\mathsf T\nabla f}^2,\qquad
 \mathcal D_f=\E[a_f(Y_+)^\top D_{Y_+}a_f(Y_+)],
\]
one has
\begin{equation}
 \norm{e_f}^2\le4\eta\mathcal R_f,\qquad
 \mathcal R_f+\mathcal D_f
 \le C\norm{\nabla(I-\mathsf T)f}\norm{\nabla f}.
 \label{persistent:eq:affine-control}
\end{equation}
\end{lemma}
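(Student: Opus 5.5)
This is the persistent-appendix analogue of \cref{flux:lem:affine-after-reflection}, and I would follow the same route. Throughout, I condition on \(Y_+=y\) and use \(Y_-=2X-Y_+=y+2\sqrt\eta W\).

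\emph{Decomposition and the first bound.} I define \(e_f\) as the remainder
\[
 e_f=f(Y_-)-\mathsf Tf(Y_+)-L_f .
\]
Since \(\mathsf Q\mathsf U\mathsf Pf=f(Y_-)-\mathsf Tf(Y_+)\), this gives the decomposition. Conditional centering follows because \(\mathsf Tf(y)=\E[f(Y_-)\mid Y_+=y]\) and \(L_f\) is linear in \(W-\bar w_y\). For the first inequality, I view \(e_f\) as the conditional fluctuation of
\[
 \phi(w)=f(y+2\sqrt\eta w)-2\sqrt\eta\,a_f(y)^\top w .
\]
Its gradient is \(2\sqrt\eta\{\nabla f(Y_-)-a_f(y)\}\). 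The conditional law of \(W\) is \((1+a)\)-strongly log-concave, so its Poincar\'e constant is at most one. This gives
\[
 \E[e_f^2\mid y]\le4\eta\,\E[\norm{\nabla f(Y_-)-a_f(y)}^2\mid y].
\]
Averaging over \(y\) yields \(4\eta\{\norm{\nabla f}^2-\norm{\mathsf T\nabla f}^2\}\), using stationarity of \(Y_-\).

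\emph{Gradient identity for \(\mathsf Tf\).} Next I differentiate \(\mathsf Tf(y)=\E_{q_y^w}f(y+2\sqrt\eta W)\) in \(y\) using the \(W\)-parametrization. The \(y\)-score of \(q_y^w\) is \(-\eta^{-1/2}\{b_y(W)-\E b_y\}\), so
\[
 \nabla\mathsf Tf(y)=a_f(y)-\eta^{-1/2}\Cov\bigl(f(Y_-),b_y(W)\mid y\bigr).
\]
I then split \(f(Y_-)\) in the covariance into its affine part \(L_f\) and \(e_f\). The identity \(\Cov(W,b_y(W))=D_y\) from \cref{flux:eq:conditional-identities} turns the affine part into \(2D_ya_f\) with a sign fixed by the score. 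The result is
\[
 \nabla(I-\mathsf T)f=\nabla f-a_f+2D_ya_f+\eta^{-1/2}\Cov(e_f,b_y(W)\mid y),
\]
matching the flux version. I would double-check the signs of the last two terms; their structure is what matters.

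\emph{Pairing step (main obstacle).} I take the inner product of this identity with \(\nabla f+a_f\) and average over \(y\), handling each term as follows.
\begin{itemize}
\item \(\langle\nabla f-a_f,\nabla f+a_f\rangle=\norm{\nabla f}^2-\norm{a_f}^2=\mathcal R_f\), using self-adjointness of \(\mathsf T\) coordinatewise.
\item \(\langle 2D a_f,2a_f\rangle=4\mathcal D_f\).
\item The cross term \(\langle 2Da_f,\nabla f-a_f\rangle\) is bounded by \(C\sqrt{s\mathcal D_f\mathcal R_f}\). Here I use \(D\preceq sI\) and \(\norm{\nabla f-a_f}^2\le3\mathcal R_f\); the latter comes from \(\mathsf T\succeq-I/2\) on the full scalar space (\cref{persistent:lem:outer-geometry}) applied coordinatewise, via \(\norm{u-\mathsf Tu}^2\le3(\norm u^2-\norm{\mathsf Tu}^2)\).
\item For the covariance term, Cauchy--Schwarz gives
\[
 \eta^{-1/2}\norm{e_f}\,\sqrt{\E\Cov(b)}\;\lesssim\;\sqrt{\mathcal R_f}\cdot\sqrt{s}\,\norm{D^{1/2}(\nabla f+a_f)},
\]
using \(\Cov(b_y)\preceq\tfrac{s}{1-s}D_y\) from \cref{flux:eq:conditional-identities}. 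Since \(\nabla f=a_f+(\nabla f-a_f)\) and \(D\preceq s\), the last factor is at most \(C\sqrt{\mathcal D_f}+C\sqrt{s\mathcal R_f}\).
\end{itemize}
With \(s=\beta\eta\) below a universal constant, Young's inequality absorbs all cross terms into \(\tfrac12(\mathcal R_f+4\mathcal D_f)\). The left side is at most \(\norm{\nabla(I-\mathsf T)f}\,\norm{\nabla f+a_f}\le2\norm{\nabla(I-\mathsf T)f}\,\norm{\nabla f}\), which proves the second bound.

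For \(f\in\mathcal C_0\), differentiating under the integral and integrating by parts are justified by the Gaussian-tail domination already used in \cref{persistent:lem:outer-geometry}. No closure argument is needed at this stage.
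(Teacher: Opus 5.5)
Your proposal is correct and follows essentially the same route as the paper's proof. You use conditional Poincar\'e in \(W\) for \(\norm{e_f}^2\le4\eta\mathcal R_f\), then pair the weak-gradient identity \(\nabla(I-\mathsf T)f=\nabla f-a_f+2D_ya_f+\eta^{-1/2}\Cov(e_f,b_y(W)\mid Y_+=y)\) with \(\nabla f+a_f\), and absorb the cross terms by Young's inequality via \(\norm{\nabla f-a_f}^2\le3\mathcal R_f\), \(D_y\preceq\beta\eta I\), and \(\Cov(b_y)\preceq\frac{\beta\eta}{1-\beta\eta}D_y\). The sign you wanted to double-check does come out correctly from your score computation, and it matters: the \(+2D_ya_f\) term is exactly what puts \(+4\mathcal D_f\) on the left-hand side.
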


\begin{proof}
Conditionally on \(Y_+=y\), we have
\(Y_-=y+2\sqrt\eta W\).  Set
\(e_f=\mathsf Q\mathsf U\mathsf Pf-L_f\).  The choice of \(a_f(y)\)
makes both \(\E[e_f\mid Y_+=y]\) and
\(\E[\nabla_We_f\mid Y_+=y]\) zero.
The conditional position law is one-strongly log-concave in \(W\), so
conditional Poincar\'e gives the first inequality.

For the second, let
\(b_y(w)=\sqrt\eta\,\nabla V(y+\sqrt\eta w)\).  Weak integration by parts
under the conditional density gives
\[
 \Cov(W,b_y(W)\mid y)=D_y,\qquad
 \Cov(b_y(W)\mid y)\preceq
 \frac{\beta\eta}{1-\beta\eta}D_y.
\]
Conditional Poincar\'e applied to linear functions gives
\(\Sigma_y\preceq I\), while the Stein identity
\(\E[\nabla b_y]=D_y+\Cov(b_y)\) and
\(0\preceq\nabla b_y\preceq\beta\eta I\) give
\(0\preceq D_y\preceq\beta\eta I\).
The second relation follows by applying conditional Poincar\'e to \(b_y\)
and using
\((\nabla b_y)^2\preceq(\beta\eta)\nabla b_y\).
These are standard Brascamp--Lieb/Stein calculations
\citep{chewi2023log}; for a \(C^{1,1}\) potential they follow by
mollification and cutoff.

Differentiating \(\mathsf Tf\) weakly and inserting
\(\mathsf Q\mathsf U\mathsf Pf=L_f+e_f\) yields
\[
 \nabla(I-\mathsf T)f
 =\nabla f-a_f+2D_ya_f+\eta^{-1/2}
   \Cov(e_f,b_y(W)\mid Y_+=y).
\]
Take its inner product with \(\nabla f+a_f\) and average over \(y\).
Using \(a_f=\mathsf T\nabla f\), exact rearrangement gives
\begin{align*}
 \mathcal R_f+4\mathcal D_f
 ={}&\ip{\nabla(I-\mathsf T)f}{\nabla f+a_f}
 -2\E\!\left[a_f^\top D_{Y_+}(\nabla f-a_f)\right]\\
 &-\eta^{-1/2}\E\!\left[
   \Cov(e_f,b_{Y_+}(W)\mid Y_+)^\top(\nabla f+a_f)\right].
\end{align*}
Cauchy--Schwarz, the two conditional covariance bounds above, and the first
inequality control the right-hand side by
\[
 2\norm{\nabla(I-\mathsf T)f}\norm{\nabla f}
 +C\sqrt{\beta\eta\,\mathcal R_f\mathcal D_f}
 +C\sqrt{\beta\eta\,\mathcal R_f
                 (\beta\eta\,\mathcal R_f+\mathcal D_f)}.
\]
For sufficiently small \(\beta\eta\), Young's inequality absorbs the last
two terms and proves \cref{persistent:eq:affine-control}.
\end{proof}

\subsubsection{Step 2: What the Quarter-Turn does to Affine Information}

At fixed \(y\), write
\(\ell_{a,y}(w)=a^\top(w-\bar w_y)\) and \(m_a(p)=a^\top p\).
Let \(\mathsf S\) denote the quarter-period kernel after drawing an
independent exact reference from the conditional position law.
All norms and expectations in the next lemma are under
\(q_y^w\otimes\gamma_d\), including the independent reference randomness.

\begin{lemma}[Approximate quarter rotation]
\label{persistent:lem:joint-quarter}
\label{prop:persistent-main-rotation}
If \(L_\eta:=\beta\eta\le1/2\), then, uniformly in \(y\),
\[
 \norm{\mathsf S\ell_{a,y}-m_a}^2
 +\norm{\mathsf S m_a+\ell_{a,y}}^2
 \le \frac{22L_\eta}{1-L_\eta}
       a^\top D_ya.
\]
\end{lemma}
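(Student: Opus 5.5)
The plan mirrors the flux half-turn estimate (\cref{flux:lem:affine-evolution} and the proof of \cref{flux:prop:weighted-affine}), specialised to the undamped quarter-period. Fix \(y\) and a reference \(e\), and write \(c=b_y(e)\) and \(r(w)=b_y(w)-c\). First compute the action of the generator \cref{eq:persistent-generator} on the affine observables. The flight part gives \(\mathcal L_{y,e}\ell_{a,y}=m_a\). For the momentum observable, the flight contributes \(-a^\top(w+c)\), and a specular reflection changes \(a^\top p\) by \(-2(p^\top n)a^\top n\). Weighting this jump by the rate \(\norm r[p^\top n]_+\) gives
\[
 \mathcal L_{y,e}m_a=-\ell_{a,y}+k_{a,e},\qquad
 k_{a,e}(w,p)=-a^\top(\bar w_y+c)-2\,a^\top r(w)\,\{(p^\top n(w))_+\}^2 .
\]
Membership in the \(L^2\) generator domain follows as in \cref{flux:lem:affine-evolution}, using the energy-shell bound from \cref{persistent:lem:fixed-invariance} and the Gaussian tails of \(q_y^w\). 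Solving the undamped two-dimensional system by variation of constants then gives
\[
 S_t\ell=\cos t\,\ell+\sin t\,m+\int_0^t\sin(t-u)\,S_uk\,\d u,\qquad
 S_tm=-\sin t\,\ell+\cos t\,m+\int_0^t\cos(t-u)\,S_uk\,\d u.
\]
At \(t=\pi/2\) the leading terms are exactly \(m_a\) and \(-\ell_{a,y}\). Hence \(S_{\pi/2}\ell-m\) and \(S_{\pi/2}m+\ell\) both reduce to forcing integrals with kernels bounded by one, integrated over a horizon of length \(\pi/2\).

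Next, average over the reference \(E\sim q_y\), drawn independently of \((W,P)\sim q_y^w\otimes\gamma_d\). By invariance (\cref{persistent:lem:fixed-invariance}), \(S_u\) is an \(L^2\) contraction under the stationary law. Jensen applied to the time integral and to the reference mixture therefore bounds each squared error by \((\pi/2)^2\,\E k_{a,E}^2\). So the whole problem is the second moment of the forcing. Integrating out \(P\), with \(Z=p^\top n\sim\cN(0,1)\) given \(w\), yields \(\E_P[2(Z_+)^2]=1\) and \(\Var_P(2Z_+^2)=2\). Using \(\E b_y(W)=-\bar w_y\) from \cref{flux:eq:conditional-identities}, the conditional mean of the forcing is
\[
 \E_Pk=-a^\top(\bar w_y+c)-a^\top\{b_y(W)-c\}=-a^\top\{b_y(W)-\E b_y(W)\}.
\]
Its conditional variance is \(2\{a^\top(b_y(W)-b_y(E))\}^2\). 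Independence of \(W\) and \(E\) then gives
\[
 \E k^2=\Var(a^\top b_y(W))+4\Var(a^\top b_y(W))=5\,a^\top\Cov(b_y(W))a .
\]
The covariance bound \(\Cov(b_y)\preceq\frac{L_\eta}{1-L_\eta}D_y\), from \cref{flux:eq:conditional-identities} or the proof of \cref{persistent:lem:affine-control}, finishes the estimate. Summing the two terms gives a constant of order \(2(\pi/2)^2\cdot5\approx 25\). To reach the stated 22, the two Duhamel integrals should be handled jointly rather than by Jensen separately. The sum of their squares involves \(\int\sin(\cdot)S_uk\) and \(\int\cos(\cdot)S_uk\), so Minkowski followed by Cauchy--Schwarz on \(\sin^2+\cos^2=1\) gives \(\int_0^{\pi/2}\norm{S_uk}^2\,\d u\cdot\frac{\pi}{2}\). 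This bound is \(\le(\pi^2/4)\cdot5\,a^\top\Cov(b_y)a\le 12.4\cdot\frac{L_\eta}{1-L_\eta}a^\top D_ya\).

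The main obstacle is justifying that Duhamel step rigorously. It requires an argument that \(S_uk_{a,E}\) keeps the stationary second moment after averaging over the reference. The reference mixture must be taken outside the semigroup, since \(k\) depends on \(E\) and \(S_u\) is a fixed-reference semigroup. Jensen over \(E\) has to be applied before using contractivity. The stopped-Dynkin domain verification is the same technicality as in the flux case.
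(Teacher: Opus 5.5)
Your route is the paper's: compute \(\mathcal L_{y,e}\) on \(\ell_{a,y}\) and \(m_a\), write the quarter-period Duhamel identities, use contractivity of each fixed-reference semigroup together with Jensen over the independent reference, and finish with \(\Cov(b_y)\preceq\frac{L_\eta}{1-L_\eta}D_y\). Your \(k_{a,e}\) agrees with the paper's, since \(-a^\top\bar w_y=\E[a^\top b_y(W)\mid y]\).

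The step you flag as the main obstacle is routine. Invariance holds for every fixed \(e\), so each \(S_s^{y,e}\) is an \(L^2\) contraction, and Jensen over \(E\) is applied outside.

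The real problem is the explicit constant, and there are two errors there. First, the Gaussian moment is wrong. With \(Z\sim\cN(0,1)\) one has \(\E Z_+^2=1/2\) and \(\E Z_+^4=3/2\), so
\[
\Var_P(2Z_+^2)=4\bigl(\tfrac32-\tfrac14\bigr)=5,
\]
not \(2\). You seem to have used \(\Var(Z^2)=2\), but \(2Z_+^2\) has the same mean as \(Z^2\) and twice its second moment. Hence
\[
\E k_{a,E}^2=\Var(a^\top b_y(W))+5\,\E\{a^\top(b_y(W)-b_y(E))\}^2=11\Var(a^\top b_y(W)),
\]
which is the paper's value.

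Second, with this corrected moment your joint time bound gives \((\pi^2/4)\cdot11\approx27.1\). That does not prove the stated constant \(22\). Handling the two integrals jointly is in fact lossier than handling them separately with their own weights. The missing step is Minkowski against the weights. For fixed \(e\),
\[
\Bigl\|\int_0^{\pi/2}\cos s\,S_s^{y,e}k_{a,e}\,\d s\Bigr\|\le\Bigl(\int_0^{\pi/2}\cos s\,\d s\Bigr)\norm{k_{a,e}}=\norm{k_{a,e}},
\]
and likewise for \(\sin s\), since both weights integrate to one on \([0,\pi/2]\). After Jensen over \(E\), the two squared errors sum to at most
\[
2\E k_{a,E}^2=22\Var(a^\top b_y(W))\le\frac{22L_\eta}{1-L_\eta}a^\top D_ya.
\]
This is exactly the paper's argument: its ``Cauchy--Schwarz in time'' is taken against the probability measures \(\cos s\,\d s\) and \(\sin s\,\d s\) on \([0,\pi/2]\).
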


\begin{proof}
For fixed reference \(e\), the generator in
\cref{eq:persistent-generator} maps
\(\ell_{a,y}\) to \(m_a\) and maps \(m_a\) to
\(-\ell_{a,y}+k_{a,e}\), where, with
\(r=b_y(w)-b_y(e)\) and
\(Z=p^\top r/\norm r\),
\[
 k_{a,e}
 =\E[a^\top b_y(W)\mid y]-a^\top b_y(e)
  -2a^\top r\,Z^2\mathbf1_{\{Z>0\}}.
\]
When \(r=0\), only the final term is defined to be zero; the first two
terms are unchanged.
The localized Dynkin formula therefore gives, at \(T=\pi/2\),
\[
 S_T^{y,e}\ell_{a,y}-m_a
 =\int_0^TS_s^{y,e}k_{a,e}\cos s\,\d s,\qquad
 S_T^{y,e}m_a+\ell_{a,y}
 =\int_0^TS_s^{y,e}k_{a,e}\sin s\,\d s.
\]
This is the variation-of-constants identity for the two observable
equations (equivalently, a localized Dynkin formula).  Localization is
legitimate because the harmonic energy in the proof of
\cref{persistent:lem:fixed-invariance} is conserved; dominated convergence
then removes the cutoff.

When \(e\) is an independent conditional draw, elementary Gaussian moments
give
\(\E k_{a,e}^2=11\Var(a^\top b_y(W)\mid y)\).
The covariance calculation in the proof of
\cref{persistent:lem:affine-control} bounds this variance by
\(L_\eta(1-L_\eta)^{-1}a^\top D_ya\).  Contractivity of \(S_s^{y,e}\)
and Cauchy--Schwarz in time prove the claimed bound.
\end{proof}

\subsubsection{Step 3: Assemble the Global Transfer Estimate}

\paragraph{The single graph closure.}
For every centered \(f\in\operatorname{Dom}(D)\), define
\[
 a_f=\mathsf T Df,\qquad
 M_f=2\sqrt\eta\,a_f(Y)^{\top}P,\qquad
 \mathsf C=\mathsf Q\mathsf U\mathsf P.
\]
This agrees with the preceding definition on smooth functions.  Since
\(P\) is conditionally standard Gaussian and \(\mathsf T\) is a
contraction on vector-valued \(L^2\), for
\(f,h\in\operatorname{Dom}(D)\),
\begin{equation}
 \norm{M_f-M_h}^2
 =4\eta\norm{\mathsf T D(f-h)}^2
 \le4\eta\norm{D(f-h)}^2.
 \label{persistent:eq:momentum-graph-continuity}
\end{equation}
Consider the stacked error map from \(\operatorname{Dom}(D)\), equipped
with the graph norm \(\norm{f}_D^2=\norm f^2+\eta\norm{Df}^2\), into three
copies of the stationary \(L^2\) space:
\begin{equation*}
 \boldsymbol{\mathsf E}f
 =\left(
 \mathsf S\mathsf C f-M_f,
 \mathsf S M_f+\mathsf C f,
 \mathsf U M_f-M_f
 \right).
\end{equation*}
The operators \(\mathsf C,\mathsf S,\mathsf U\) are contractions, so
\cref{persistent:eq:momentum-graph-continuity} makes
\(\boldsymbol{\mathsf E}\) graph-norm continuous.  Moreover,
\cref{persistent:eq:graph-difference} shows that
\(f\mapsto D(I-\mathsf T)f\) is graph-norm continuous.  These two facts
will pass the final three-error inequality through the core in one step;
none of its intermediate conditional error quantities needs a separate
closure argument.

\begin{proof}[Proof of \cref{persistent:prop:three-side}]
First let \(f\in\mathcal C_0\).  Apply
\cref{persistent:lem:joint-quarter} conditionally with coefficient
\(2\sqrt\eta\,a_f(y)\), then add the remainder \(e_f\) from
\cref{persistent:lem:affine-control}.  The first two terms on the left of
\cref{persistent:eq:three-side} are bounded by
\(C\{\beta\eta^2\mathcal D_f+\norm{e_f}^2\}\).

For the third term, reflection exchanges \(Y_+\) and \(Y_-\) without
changing momentum.  Since \(\mathsf T\succeq-I/2\),
\(\norm{\mathsf U M_f-M_f}^2\le4\eta\mathcal R_f\).
\Cref{persistent:eq:affine-control} now bounds the sum of all three terms
by
\[
 C\eta\norm{\nabla(I-\mathsf T)f}\norm{\nabla f}.
\]
Equivalently, on \(\mathcal C_0\) we have the pre-spectral estimate
\begin{equation}
 \norm{\boldsymbol{\mathsf E}f}^2
 \le C\eta\norm{D(I-\mathsf T)f}\norm{Df}.
 \label{persistent:eq:pre-spectral-transfer}
\end{equation}

Now take an arbitrary \(f\in\mathcal H_+\).  By
\cref{persistent:lem:closed-gradient-smoothing},
\(f\in\operatorname{Dom}(D)\).  Because \(\mathcal C_0\) is a graph
core, there are \(f_n\in\mathcal C_0\) with \(f_n\to f\) and
\(Df_n\to Df\).  The approximants are not required to lie in
\(\mathcal H_+\).  The graph continuity proved above gives
\[
 \boldsymbol{\mathsf E}f_n\to\boldsymbol{\mathsf E}f,
 \qquad
 D(I-\mathsf T)f_n\to D(I-\mathsf T)f.
\]
Passing to the limit in
\cref{persistent:eq:pre-spectral-transfer} and only then applying the two
sector derivative estimates in
\cref{persistent:lem:outer-geometry} proves
\cref{persistent:eq:three-side}.  Thus no density assertion about
\(\mathcal C_0\cap\mathcal H_+\), and no smoothness preservation by the
spectral projection, is used.
\end{proof}

\paragraph{A bounded form used below.}
The transfer proposition is now complete.  We next normalize its position
and momentum signals to the same scale so that the contraction proof can
treat them as two coordinates of one approximate rotation.  The isometry
\(\mathsf V\) is the normalized position channel, \(\mathsf A\) will be the
normalized momentum channel, and \(F_1,F_2,F_3\) record the three departures
from an exact quarter-rotation.  The later factorization
\(F_j=B_j\Gamma\) says that these errors shrink with the amount of signal
being transferred.

On \(\mathcal H_+\), the definition above and
\cref{persistent:lem:outer-geometry} make
\(f\mapsto M_f\) a bounded operator \(\mathsf M\): indeed
\(\norm{M_f}^2\le4\eta\norm{\nabla f}^2
\le C\norm{\Gamma f}^2\).  Put
\(\mathsf A=\mathsf M\Gamma^{-1}\), and recall
\(\mathsf Q\mathsf U\mathsf P=\mathsf V\Gamma\).
\(\mathsf A\) is therefore bounded on \(\mathcal H_+\).  Define
\(F_1,F_2,F_3\) there by
\begin{equation}
 \mathsf S\mathsf V=\mathsf A+F_1,\qquad
 \mathsf S\mathsf A=-\mathsf V+F_2,\qquad
 \mathsf U\mathsf A=\mathsf A+F_3.
 \label{persistent:eq:normalized-transfer}
\end{equation}
Position observables are even in momentum and momentum observables are odd,
so the ranges of \(\mathsf V\) and \(\mathsf A\) are orthogonal.
Momentum reversal in \cref{persistent:lem:fixed-invariance} gives the
adjoint versions of these relations.  When an adjoint is taken in the next
subsection, \(\mathsf V,\mathsf A,F_j\) mean their restrictions to
\(\mathcal H_+\), extended by zero on its orthogonal complement.  Thus all
subsequent adjoints are bounded, everywhere-defined Hilbert-space
operators.

\begin{corollary}[Controlled transfer errors]
\label{persistent:cor:factorized-errors}
Let \(\ell_g=1+\log(1/g)\).  There are bounded operators \(B_j\) such that
\begin{equation}
 F_j=B_j\Gamma,\qquad F_j^*=\Gamma B_j^*,\qquad
 \norm{B_j}\le C\sqrt{\ell_g},\qquad j=1,2,3.
 \label{persistent:eq:factorized-errors}
\end{equation}
\end{corollary}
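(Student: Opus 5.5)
The plan mirrors the flux argument (Lem. \ref{flux:lem:dyadic-comparison} and Prop. \ref{flux:prop:global-halfturn}). First convert Prop. \ref{persistent:prop:three-side} into an interpolation bound on each $F_j$. Let $v\in\mathcal H_+$ and set $f=\Gamma^{-1}v$. This is admissible because Lem. \ref{persistent:lem:outer-geometry} gives $\Gamma\succeq\gamma_g I$ with $\gamma_g=2g/(1+g^2)$, and $\Gamma^{-1}$ commutes with the spectral projection onto $\mathcal H_+$. By the definitions in \cref{persistent:eq:normalized-transfer}, $F_1v=\mathsf S\mathsf V\Gamma f-\mathsf M f=\mathsf S\mathsf C f-M_f$. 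Similarly, $F_2 v$ equals $\mathsf S\mathsf M\Gamma^{-1}v+\mathsf V v=\mathsf S M_f+\mathsf C f$ (after applying $\Gamma^{-1}$ appropriately), and $F_3v=\mathsf U M_f-M_f$. More precisely, $F_2v=\mathsf S M_{\Gamma^{-1}v}+\mathsf V v$, and since $\mathsf V v=\mathsf C\Gamma^{-1}v$, all three errors are exactly the three terms of \cref{persistent:eq:three-side} evaluated at $f=\Gamma^{-1}v$. Hence
\[
 \norm{F_1v}^2+\norm{F_2v}^2+\norm{F_3v}^2\le C\norm{\Gamma\Gamma^{-1}v}\norm{\Gamma^3\Gamma^{-1}v}=C\norm v\norm{\Gamma^2v}.
\]

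Next apply Lem. \ref{flux:lem:dyadic-comparison} on $\mathcal H_+$ with $A=F_j$, $B=\Gamma^2$ restricted to $\mathcal H_+$, $m=\gamma_g^2\asymp g^2$ and $M=1$. This gives $F_j^*F_j\preceq C\{1+\log(1/g^2)\}\Gamma^2\preceq C\ell_g\Gamma^2$ on $\mathcal H_+$. On the orthogonal complement, $F_j$ is extended by zero, so the inequality holds trivially there. Since $\Gamma$ commutes with the projection onto $\mathcal H_+$, the quadratic-form bound holds on all of $\mathcal H_Y$. The constant from Prop. \ref{persistent:prop:three-side} is absorbed into the dyadic lemma's hypothesis by rescaling $A$ by $C^{-1/2}$.

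Finally, factor the errors. Because $\Gamma$ is injective and bounded below, define $B_j:=F_j\Gamma^{-1}$ on $\mathcal H_+$ (zero on the complement). Then $\norm{B_ju}^2=\langle\Gamma^{-1}u,F_j^*F_j\Gamma^{-1}u\rangle\le C\ell_g\norm u^2$, which gives $\norm{B_j}\le C\sqrt{\ell_g}$ and $F_j=B_j\Gamma$. Taking adjoints, and using self-adjointness of $\Gamma$ together with boundedness of everything involved, gives $F_j^*=\Gamma B_j^*$. This is exactly the Douglas-type factorization used inside Lem. \ref{flux:lem:one-step-energy}.

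The main obstacle is the first step. It must be confirmed that the three terms of \cref{persistent:eq:three-side} coincide with $F_jv$ under $f=\Gamma^{-1}v$, and in particular that $\mathsf A=\mathsf M\Gamma^{-1}$ makes $F_2v=\mathsf S M_f+\mathsf C f$ with the correct sign and normalization. One must also check that $f=\Gamma^{-1}v$ stays in $\mathcal H_+$, so that the graph-domain conclusions of Lem. \ref{persistent:lem:closed-gradient-smoothing} apply. Both points follow from functional calculus of $\mathsf T$, and I would verify them first. The logarithm arises only from the dyadic comparison, because the errors are controlled by the geometric mean $\norm v\norm{\Gamma^2v}$ rather than by $\norm{\Gamma v}^2$.
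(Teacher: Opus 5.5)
Your proposal is correct and follows essentially the same route as the paper. Like the paper, you substitute \(f=\Gamma^{-1}v\) into \cref{persistent:eq:three-side} to obtain \(\norm{F_jv}^2\le C\norm v\norm{\Gamma^2v}\), and you use a dyadic spectral decomposition of \(\Gamma^2\) between \(cg^2\) and \(1\) (which you phrase through \cref{flux:lem:dyadic-comparison}) to reach \(F_j^*F_j\preceq C\ell_g\Gamma^2\). You then set \(B_j=F_j\Gamma^{-1}\). Your extra checks are correct and make explicit what the paper leaves implicit: the sign and normalization of \(F_2\), that \(\Gamma^{-1}\) preserves \(\mathcal H_+\), and the extension by zero.
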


\begin{proof}
\Cref{persistent:eq:three-side}, with \(f\) replaced by
\(\Gamma^{-1}f\), gives
\(\norm{F_jf}^2\le C\norm f\norm{\Gamma^2f}\).
Partition the spectrum of \(\Gamma^2\) into dyadic intervals between
\(cg^2\) and \(1\).  On each spectral component \(f_r\),
\(\norm{F_jf_r}^2\le C\ip{f_r}{\Gamma^2f_r}\).
There are \(O(\ell_g)\) components, so the triangle inequality followed by
Cauchy--Schwarz gives
\[
 F_j^*F_j\preceq C\ell_g\Gamma^2.
\]
Since \(\Gamma\) is invertible on the centered subspace, setting
\(B_j=F_j\Gamma^{-1}\) proves
\cref{persistent:eq:factorized-errors}.  The factor order is important;
no commutation between \(B_j\) and \(\Gamma\) is used.
\end{proof}

\subsection{From Position--Momentum Transfer to Contraction}
\label{persistent:sec:resolvent}

We now prove the ideal-chain bound.  Reflection alone preserves norm, and
the quarter-turn mainly rotates information between position and momentum,
so a one-step Poincar\'e estimate cannot reveal the contraction.  Instead,
we rule out oscillations on the unit circle and then convert that bound into
geometric decay.

The resolvent strategy asks whether \((z-\mathsf K)f=b\) can have a large
solution when \(\abs z=1\).  For each slowly varying \(Y\)-mode, the maps
\(\mathsf V\) and \(\mathsf A\) expose its position and momentum companions;
the resulting three coordinates form an approximate finite-dimensional
rotation.  The refresh dissipation estimate controls the orthogonal
conditional variation that these coordinates do not see.  A uniform bound
on the unit-circle resolvent then yields geometric decay through
\cref{persistent:lem:resolvent-decay}.

\begin{proposition}[Ideal persistent contraction]
\label{persistent:prop:ideal-contraction}
Under the assumptions of \cref{thm:persistent-contraction}, choosing
\(\rho\asymp g\sqrt{\ell_g}\), where
\(g=\sqrt{\alpha\eta}\) and \(\ell_g=1+\log(1/g)\), gives
\begin{equation}
 \norm{\mathsf K^n}_{L^2_0\to L^2_0}
 \le2\exp\left(-\frac{cng}{\sqrt{\ell_g}}\right).
 \label{eq:persistent-l2}
\end{equation}
\end{proposition}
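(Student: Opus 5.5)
The plan is the resolvent route announced above: prove a uniform bound on $(z-\mathsf K)^{-1}$ on $L^2_0(\widehat\Pi_\eta)$ over $\abs z=1$, of size $O(\sqrt{\ell_g}/g)$, and convert it to geometric decay via \cref{persistent:lem:resolvent-decay}. That lemma should give $\norm{\mathsf K^n}\le2\e^{-cn/R}$ for a contraction whose unit-circle resolvent is bounded by $R$. With $R\asymp\sqrt{\ell_g}/g$, this yields \cref{eq:persistent-l2}.

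To bound the resolvent, fix $\abs z=1$ and $b$, and suppose $(z-\mathsf K)f=b$ with $\norm f=1$. The goal is $\norm b\gtrsim g/\sqrt{\ell_g}$.

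\emph{Step 1 (refresh control).} By \cref{persistent:lem:row-identities}, $\rho\norm{\mathsf Qf}^2\le\norm f^2-\norm{\mathsf Kf}^2$. Since $\mathsf Kf=zf-b$ and $\abs z=1$, the right side is at most $2\abs{\ip{zf}{b}}\le2\norm b$. Hence $\norm{\mathsf Qf}^2\le2\norm b/\rho$, so the conditional variation is small unless $\norm b$ is large.

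\emph{Step 2 (coordinates).} Decompose $f=f_Y+\mathsf Vf_V+\mathsf Af_A+f_\perp$, with $f_Y=\mathsf Pf$. Split $f_Y$ into its $\mathcal H_+$ part and its complement. On the complement, $\Gamma^2\succeq3/4$, so a single reflection moves an $O(1)$ fraction of mass into $\mathsf Q$, and Step 1 controls it directly. On $\mathcal H_+$, the block identities of \cref{persistent:lem:outer-geometry} together with \cref{persistent:eq:normalized-transfer} show that $\mathsf K$ acts on the triple $(f_Y,f_V,f_A)$ as the $3\times3$ operator matrix
\[
\begin{pmatrix}\mathsf T&\Gamma&0\\ 0&0&-(1-\rho)\\ -\Gamma&-\mathsf T&0\end{pmatrix}\cdot(1-\rho\text{ on $\mathsf Q$-rows}),
\]
up to error terms $F_j$ (with signs to be fixed). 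The error contributions, paired against $f$, are of the form $\ip{B_j\Gamma f_\bullet}{\cdot}$. Using \cref{persistent:cor:factorized-errors}, they are bounded by $C\sqrt{\ell_g}\,\norm{\Gamma f_Y}\,\norm{\mathsf Qf}$ and similar terms.

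\emph{Step 3 (unit-circle lower bound).} Apply functional calculus in $\mathsf T$, which commutes with $\Gamma$. The principal symbol is a rotation-like matrix whose eigenvalues on $\abs z=1$ leak at rate $\asymp\Gamma\gtrsim g$ through the $\rho$-damped $\mathsf Q$-coordinates. Concretely, project the equation onto $f_Y$. This gives $(z-\mathsf T)f_Y-\Gamma(\cdots)=\mathsf Pb+\text{errors}$, and it forces $\norm{\Gamma f_Y}\lesssim\norm{\mathsf Qf}+\norm b+\text{errors}$. Using $\Gamma\succeq cg$, we get $\norm{f_Y}\lesssim g^{-1}(\norm{\mathsf Qf}+\norm b)+g^{-1}\sqrt{\ell_g}\cdot(\text{error})$.

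Combining this with Step 1 gives $1\lesssim g^{-1}\sqrt{\norm b/\rho}+\norm b/g$. With $\rho=c_\rho g\sqrt{\ell_g}$, this yields $\norm b\gtrsim\rho g^2/\ell_g\cdot(\cdots)$. I expect the naive combination to lose powers here, and this is the main obstacle: obtaining the sharp $g/\sqrt{\ell_g}$ rather than $g^2$.

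To get the sharp rate, I would use a weighted pairing in place of crude Cauchy--Schwarz. I would test the equation against $\Gamma^{-1}$-weighted companions, namely $\ip{(z-\mathsf K)f}{\mathsf V\Gamma^{-1}f_Y}$ and $\ip{(z-\mathsf K)f}{\mathsf A\Gamma^{-1}f_Y}$, mirroring the modified energy $\mathcal C$ used in \cref{flux:lem:one-step-energy}. In these pairings, the factorization $F_j=B_j\Gamma$ cancels the $\Gamma^{-1}$ weight and gives errors $C\sqrt{\ell_g}\norm{\mathsf Qf}$. The pairing then yields $\norm{f_Y}^2\lesssim\ell_g\rho^{-1}\norm b+g^{-1}\norm b$, matching the flux bound $\rho/\Lambda_\rho$.

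Finally, choose $c_\rho$ so that the $(\rho/g)^2$ term stays comparable to $\ell_g$, and absorb the remaining terms. This gives $\norm{(z-\mathsf K)^{-1}}\le C\sqrt{\ell_g}/g$ uniformly on the unit circle, and \cref{persistent:lem:resolvent-decay} then completes the proof.
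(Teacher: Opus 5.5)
Your framework is the paper's: the dissipation bound $\rho\norm{\mathsf Qf}^2\le2\norm f\norm b$, the slow coordinates $x_{\rm slow}$, $v=\mathsf V^*f_\perp$, $m=\mathsf A^*f_\perp$, the factorization $F_j=B_j\Gamma$ that keeps every error at $C\sqrt{\ell_g}\norm{f_\perp}$ after one $\Gamma^{-1}$, and the conversion of a unit-circle resolvent bound into geometric decay. Your $3\times3$ matrix appears to be, up to signs, the transpose of the paper's observable-side system $(\mathsf Kf)_x=\mathsf Tx-k\Gamma m$, $(\mathsf Kf)_v=\Gamma x+k\mathsf Tm$, $(\mathsf Kf)_m=kv$, with $k=1-\rho$; that part is harmless.

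The gap is the one estimate that carries the rate:
\[
 \norm{\mathsf Pf}\le C\sqrt{\ell_g}\bigl(\norm{f_\perp}+\norm b/g\bigr)\quad\text{uniformly in }\abs z=1 .
\]
From it, Step~1 and Young's inequality give $\norm f\le C\sqrt{\ell_g}\norm b/g$. You name this as the main obstacle and then state the outcome of a pairing without carrying it out. Your naive version is also misstated. The $\mathsf P$-row $(z-\mathsf T)x_{\rm slow}+k\Gamma m=b_0+k\Gamma r_1$ does not control $\Gamma x_{\rm slow}$, because near $z=1$ one only has $\abs{z-t}\ge1-t\asymp\gamma_t^2$, where $\gamma_t=(1-t^2)^{1/2}$.

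The global pairings against $\mathsf V\Gamma^{-1}f_Y$ and $\mathsf A\Gamma^{-1}f_Y$ do not remove the $g^{-1}$ as stated. Eliminate $v$ with the momentum row $zm-kv=b_A+kr_2$ and insert it into the $\mathsf V$-row:
\[
 x_{\rm slow}=\Gamma^{-1}\bigl(z^2k^{-1}-k\mathsf T\bigr)m+\Gamma^{-1}\bigl(\text{terms in }b,r_1,r_2\bigr).
\]
The multiplier $\abs{z^2-k^2t}/(k\gamma_t)$ is $O(1+\rho/g)$ only where $\abs{z-t}\lesssim\gamma_t$. For $z=\e^{i\phi}$ with $g\sqrt{\ell_g}\ll\phi\ll1$ and $\gamma_t\asymp g$, it is of order $\phi/g\gg\sqrt{\ell_g}$, which reinstates the loss.

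The missing idea is a $z$-dependent split of the spectrum of $\mathsf T$ on $[1/2,1]$:
\begin{itemize}
\item On the far set $\abs{z-t}\ge\gamma_t/16$, the first row alone suffices, because $\norm{\Gamma(z-\mathsf T)^{-1}}\le16$ there.
\item On the near set, the paper inverts the full system through the first adjugate row. It uses $\abs{d_{t,k}(z)}\ge\tfrac{15}{32}\gamma_t^2$ for $d_{t,k}(z)=z^3-tz^2-k^2tz+k^2$, together with $\abs{z^2-k^2t}\le4(\rho+\gamma_t)$. The errors enter as $\Gamma F_j^*=\Gamma^2B_j^*$, so the two powers of $\gamma_t$ cancel against $d_{t,k}$.
\end{itemize}
Your two-row elimination would also close on the near set, by the same bound on $\abs{z^2-k^2t}$ and $\rho\asymp g\sqrt{\ell_g}$, but only after this localization. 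Finally, replace your heuristic for the fast modes by $\norm{(z-\mathsf T)^{-1}}\le2$ on $[-1/2,1/2)$, which gives $\norm{x_{\rm fast}}\le2(\norm b+\norm{f_\perp})$.
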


The proof has three steps.  We follow three slowly changing coordinates,
invert their three-by-three update on the unit circle, and use refreshment
to control all remaining conditional variation.  The coordinates do not
form a decomposition of the full state space; this is why the final
refreshment step is essential.

We allow complex-valued observables below so that \(z\) may range over the
complex unit circle; this does not change any operator norm.  Let
\(k=1-\rho\), and let \(\mathsf K\) be the one-step operator in
\cref{persistent:eq:K-def}.  Recall
\(\mathsf T,\Gamma,\mathsf V,\mathsf A\) from
\cref{persistent:lem:outer-geometry,persistent:eq:normalized-transfer}.

\begin{lemma}[Unit-circle resolvent bound]
\label{persistent:lem:resolvent}
If \(\rho\asymp g\sqrt{\ell_g}\), then
\[
 \sup_{\abs z=1}
 \norm{(z-\mathsf K)^{-1}}_{L^2_0\to L^2_0}
 \le \frac{C\sqrt{\ell_g}}{g}.
\]
\end{lemma}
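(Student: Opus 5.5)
We plan to prove the bound by contradiction-free direct estimation: fix \(\abs z=1\) and a centered \(f\) with \((z-\mathsf K)f=b\), and show \(\norm f\le C\sqrt{\ell_g}\,g^{-1}\norm b\). The first input is the refresh dissipation \cref{persistent:eq:dissipation}. Since \(\mathsf Kf=zf-b\) and \(\abs z=1\), we have
\[
 \norm f^2-\norm{\mathsf Kf}^2\le2\norm f\norm b .
\]
Hence \(\rho\norm{\mathsf Qf}^2\le2\norm f\norm b\). The conditional part of \(f\) is therefore small, and it remains to control the \(Y\)-only part \(f_Y=\mathsf Pf\).

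Next we split \(\mathcal H_Y\) by the spectral projection \(\Pi_+=\mathbf1_{[1/2,1]}(\mathsf T)\). On \(\operatorname{ran}(I-\Pi_+)\), \cref{persistent:lem:outer-geometry} gives \(\Gamma\succeq(\sqrt3/2)I\). One application of \(\mathsf U\) therefore moves a constant fraction of that piece into \(\mathsf Q\)-range, and reading the equation \(z f-\mathsf Kf=b\) through \(\mathsf V^*\mathsf Q\) bounds \(\norm{(I-\Pi_+)f_Y}\) by \(C(\norm{\mathsf Qf}+\norm b)\). On \(\mathcal H_+\) we introduce the three slow coordinates
\[
 y=\Pi_+f_Y,\qquad x=\mathsf V^*\mathsf Qf,\qquad w=\mathsf A^*\mathsf Qf,
\]
using the isometry \(\mathsf V\) and the normalized momentum channel \(\mathsf A\) of \cref{persistent:eq:normalized-transfer}, whose ranges are orthogonal.

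Using the exact relations \(\mathsf U\mathsf P=\mathsf P\mathsf T+\mathsf V\Gamma\), \(\mathsf U\mathsf V=\mathsf P\Gamma-\mathsf V\mathsf T\), and \cref{persistent:eq:normalized-transfer}, the operator \(\mathsf K=\mathsf U(\mathsf P+k\mathsf S\mathsf Q)\) acts on \(\mathsf Py+\mathsf Vx+\mathsf Aw\) as the \(3\times3\) block
\[
 \begin{pmatrix}y'\\x'\\w'\end{pmatrix}
 =\begin{pmatrix}\mathsf T&-k\Gamma&0\\ \Gamma&k\mathsf T&0\\ 0&0&k\end{pmatrix}\!\!
 \begin{pmatrix}y\\w\\x\end{pmatrix}+\text{errors}.
\]
Here \(k=1-\rho\). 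The errors are \(kF_1x\), \(kF_2w\), \(F_3\) terms, and the contribution of the orthogonal remainder \(f_\perp\) of \(\mathsf Qf\). The block commutes with functional calculus of \(\mathsf T\), so we invert \(z-\text{block}\) spectrally at each \(t\in[1/2,1]\) with \(\gamma_t=\sqrt{1-t^2}\ge2g/(1+g^2)\). The \((y,w)\) sub-block is a rotation by angle \(\approx\gamma_t\) scaled by \(k\) in one coordinate. Its determinant is then of size \(\gtrsim\rho\,\gamma_t+\ldots\), and we aim for a uniform inverse bound of order \(1/\rho+1/g\) on the circle. The \(x\)-coordinate is damped only by \(k\), but it feeds back through \(w'=kx\), so the block is solved jointly.

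The main obstacle is absorbing the errors, since the inverse is large, \(\sim1/g\), while the errors are only small relative to \(\Gamma\). We use \cref{persistent:cor:factorized-errors}, \(F_j=B_j\Gamma\) with \(\norm{B_j}\le C\sqrt{\ell_g}\). Each error thus carries a factor \(\Gamma\) that cancels the \(\Gamma^{-1}\)-type growth of the spectral resolvent, in the same way as the \(\Gamma^{-1}e_\rho\) bound in the flux energy argument \cref{flux:lem:one-step-energy}. Moreover, the errors act on \(x,w\), which are bounded by \(\norm{\mathsf Qf}\lesssim\sqrt{\norm f\norm b/\rho}\). Collecting terms should give
\[
 \norm{y}\lesssim\frac{\sqrt{\ell_g}}{g}\norm b+\frac{\sqrt{\ell_g}}{\rho}\cdot\ldots\norm{\mathsf Qf}.
\]
Combined with the dissipation bound, this yields \(\norm f^2\lesssim(\ell_g/g^2)\norm b^2+(\ell_g/(\rho g))\norm f\norm b\). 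The choice \(\rho\asymp g\sqrt{\ell_g}\) balances these terms and gives \(\norm f\le C\sqrt{\ell_g}\,g^{-1}\norm b\). The delicate points are making the spectral inverse estimate uniform near \(t\to1\) and \(z\to1\), and checking that \(f_\perp\) enters only through \(\norm{\mathsf Qf}\); I expect the former to need the most care.
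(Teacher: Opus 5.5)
Your dissipation step and your fast-mode bound (via $\mathsf V^*\mathsf Q$ and $\Gamma\succeq\tfrac{\sqrt3}{2}I$) are fine. Your $3\times3$ symbol is also the paper's, with determinant $d_{t,k}(z)=z^3-tz^2-k^2tz+k^2$. The gap is the coefficient with which $\mathsf Qf$ enters the slow estimate. The argument closes only if
\[
 \norm{\mathsf Pf}\le C\sqrt{\ell_g}\Bigl(\norm{\mathsf Qf}+\frac{\norm b}{g}\Bigr),
\]
with no negative power of $g$ or $\rho$ on $\norm{\mathsf Qf}$. Then $\rho\norm{\mathsf Qf}^2\le2\norm f\norm b$ gives $\norm f\lesssim(\ell_g/\rho+\sqrt{\ell_g}/g)\norm b$, which is of order $\sqrt{\ell_g}/g$ precisely when $\rho\asymp g\sqrt{\ell_g}$. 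You leave this coefficient as ``$(\sqrt{\ell_g}/\rho)\cdot\ldots$''. The inequality you then state, $\norm f^2\lesssim(\ell_g/g^2)\norm b^2+(\ell_g/(\rho g))\norm f\norm b$, yields only $\norm f\lesssim\sqrt{\ell_g}\,g^{-2}\norm b$ at $\rho\asymp g\sqrt{\ell_g}$. So the final balancing step is wrong.

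The mechanism you sketch would not give the needed coefficient either, for two reasons.

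(i) Inverting the full block uniformly on the circle loses $1/\rho$. At $k=1$ the symbol is $(z+1)(z^2-(1+t)z+1)$, whose roots $e^{\pm i\theta_t}$, $\cos\theta_t=(1+t)/2$, lie on $\abs z=1$ at distance $\sqrt{1-t}\ge\gamma_t/\sqrt2$ from $t$. For $k=1-\rho$ one gets $\abs{d_{t,k}(e^{i\theta_t})}=(1-k^2)\sqrt{1-t}\asymp\rho\gamma_t$. At these points even $\Gamma^2$-weighted errors contribute $(\gamma_t/\rho)\sqrt{\ell_g}\norm{\mathsf Qf}$, which is $\asymp\norm{\mathsf Qf}/g$ when $\gamma_t\asymp1$. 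The paper avoids this by splitting the spectrum. On $\{\abs{z-t}\ge\gamma_t/16\}$ it uses only the first row, $(z-\mathsf T)x_{\mathrm{slow}}=b_0+k\Gamma(r_1-m)$; there $\gamma_t/\abs{z-t}\le16$ makes $m$ and $r_1$ enter with $O(1)$ and $O(\sqrt{\ell_g})$ weights. It inverts via the adjugate row only where $\abs{z-t}<\gamma_t/16$, where $\abs{d_{t,k}}\ge\tfrac{15}{32}\gamma_t^2$ independently of $\rho$.

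(ii) The cancellation of $\Gamma$ against the spectral inverse needs $\Gamma$ on the \emph{left} of the error, because that inverse is a function of $\mathsf T$ acting on the output side. In your forward expansion of $\mathsf K(\mathsf Py+\mathsf Vx+\mathsf Aw)$ the errors are $B_1\Gamma x$ and $B_2\Gamma w$. There $\Gamma$ acts on the input and $B_j$ mixes spectral components, so nothing cancels. The paper instead tests $(z-\mathsf K)f=b$ against $\Pi_{\mathrm{slow}}\mathsf P$, $\mathsf V^*$, $\mathsf A^*$ and uses the adjoint identities
\[
 \mathsf V^*\mathsf S=-\mathsf A^*+F_1^*\mathsf R,\qquad
 \mathsf A^*\mathsf S=\mathsf V^*-F_2^*\mathsf R,\qquad
 \mathsf V^*\mathsf U\mathsf Q=-\mathsf T\mathsf V^*,\qquad
 \mathsf A^*\mathsf U=\mathsf A^*+F_3^*,
\]
so the errors appear as $F_j^*=\Gamma B_j^*$. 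This also makes your decomposition $\mathsf Qf=\mathsf Vx+\mathsf Aw+f_\perp$ unnecessary; that decomposition is not orthogonal, since $\mathsf A$ is not an isometry. In the paper, all of $\mathsf Qf$ enters only through $v$, $m$, $r_1=F_1^*\mathsf R\mathsf Qf$ and $r_2=-F_2^*\mathsf R\mathsf Qf+F_3^*\mathsf S\mathsf Qf$.

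Finally, a lower bound on $\norm{(z-\mathsf K)f}$ does not by itself give invertibility; you also need surjectivity. The paper gets it as follows: a nonzero $u$ with $\mathsf K^*u=\bar zu$ would force $\mathsf K\mathsf K^*u=u$ by contractivity, hence $\mathsf Ku=zu$, contradicting the lower bound.
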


\begin{proof}
\emph{Step 1: isolate the three slow coordinates.}
Fix \(\abs z=1\) and solve
\((z-\mathsf K)f=b\).  Write \(f_\perp=\mathsf Qf\).
Let
\(\Pi_{\rm slow}=\mathbf1_{[1/2,1]}(\mathsf T)\), and split the
\(Y\)-only component into
\(x_{\rm slow}=\Pi_{\rm slow}\mathsf Pf\) and
\(x_{\rm fast}=(I-\Pi_{\rm slow})\mathsf Pf\).
In the three-coordinate calculation, every occurrence of
\(\mathsf T,\Gamma,\mathsf V,\mathsf A,F_j\) is restricted to
\(\mathcal H_+=\operatorname{ran}(\Pi_{\rm slow})\); equivalently,
\(\mathsf V\) means \(\mathsf V\Pi_{\rm slow}\), and similarly for the
other maps.  The other two tested coordinates are then
\(v=\mathsf V^*f_\perp\) and \(m=\mathsf A^*f_\perp\).

\begin{center}
\begin{tabular}{@{}lp{0.67\linewidth}@{}}
\toprule
coordinate & information it measures\\
\midrule
\(x_{\rm slow}\) & the slowly changing component stored in the auxiliary
variable \(Y\)\\
\(v=\mathsf V^*f_\perp\) & conditional variation aligned with the position
signal\\
\(m=\mathsf A^*f_\perp\) & conditional variation aligned with the momentum
signal\\
\(x_{\rm fast}\) & the remaining \(Y\)-modes, controlled directly by the
spectral gap of \(z-\mathsf T\)\\
\bottomrule
\end{tabular}
\end{center}

The next identities encode the approximate quarter-rotation in these
coordinates; their error terms are precisely the factorized \(F_j\)'s from
the preceding subsection.  The adjoint identities used below are
\[
 \mathsf V^*\mathsf Su=-\mathsf A^*u+F_1^*\mathsf Ru,\quad
 \mathsf A^*\mathsf Su=\mathsf V^*u-F_2^*\mathsf Ru,\quad
 \mathsf V^*\mathsf U\mathsf Q=-\mathsf T\mathsf V^*,\quad
 \mathsf A^*\mathsf U=\mathsf A^*+F_3^*.
\]
They follow from \cref{persistent:eq:normalized-transfer}, momentum
reversal, and the even/odd momentum parity recorded there.  No
\(x_{\rm fast}\) term is omitted below: spectral orthogonality gives
\(\mathsf V^*\mathsf U\mathsf P x_{\rm fast}=0\), while momentum parity
gives \(\mathsf A^*\mathsf U\mathsf P x_{\rm fast}=0\) and
\(F_3^*\mathsf P x_{\rm fast}=0\).

Projecting the resolvent equation onto these coordinates gives
\begin{align}
 (z-\mathsf T)x_{\rm slow}+k\Gamma m
   &=b_0+k\Gamma r_1,\notag\\
 zv-\Gamma x_{\rm slow}-k\mathsf Tm
   &=b_V-k\mathsf Tr_1,\label{persistent:eq:three-coordinate-system}\\
 zm-kv&=b_A+kr_2,\notag
\end{align}
where \(b_0=\Pi_{\rm slow}\mathsf Pb\),
\(b_V=\mathsf V^*b\), \(b_A=\mathsf A^*b\), and
\[
 r_1=F_1^*\mathsf Rf_\perp,\qquad
 r_2=-F_2^*\mathsf Rf_\perp+F_3^*\mathsf Sf_\perp.
\]
\Cref{persistent:cor:factorized-errors} implies
\[
 \norm{\Gamma^{-1}r_1}+\norm{\Gamma^{-1}r_2}
 \le C\sqrt{\ell_g}\norm{f_\perp}.
\]
This is the only logarithmic loss.  The factorization
\(F_j^*=\Gamma B_j^*\) is used in this order; no commutation is assumed.

\emph{Step 2: invert the three-coordinate system.}
All coefficients on the left of
\cref{persistent:eq:three-coordinate-system} are functions of the single
self-adjoint operator \(\mathsf T\).  At a scalar spectral value
\(t\in[1/2,1)\), put \(\gamma_t=\sqrt{1-t^2}\).  The determinant of the
coefficient matrix is
\[
 d_{t,k}(z)=z^3-tz^2-k^2tz+k^2.
\]
Taking the first row of the adjugate gives the exact operator identity
\begin{align}
 d_{\mathsf T,k}(z)x_{\rm slow}
 ={}&(z^2-k^2\mathsf T)b_0-k^2\Gamma b_V-kz\Gamma b_A\notag\\
 &+kz^2\Gamma r_1-k^2z\Gamma r_2.
 \label{persistent:eq:adjugate-row}
\end{align}
The cancellation that produces the final two terms in this form is what
prevents an extra factor \(g^{-1}\).

We use two elementary scalar estimates.  If
\(\abs{z-t}<\gamma_t/16\), then
\[
 \abs{d_{t,k}(z)}\ge\frac{15}{32}\gamma_t^2,\qquad
 \abs{z^2-k^2t}\le4(\rho+\gamma_t).
\]
For the first, write \(z=e^{2iu}\).  The near condition gives
\(\abs u\le\gamma_t/16\), while the real part of
\(e^{-3iu}d_{t,k}(e^{2iu})\) is
\((1+k^2)\{\cos(3u)-t\cos u\}\).
The identity
\(\cos(3u)-t\cos u=(1-t)\cos u-2\sin(2u)\sin u\), together with
\(1-t=\gamma_t^2/(1+t)\), proves the lower bound.  The second estimate
follows from
\(\abs{z^2-k^2t}\le2\abs{z-1}+(1-t)+(1-k^2)\).

On the spectral region
\(\abs{z-t}\ge\gamma_t/16\), the first row of
\cref{persistent:eq:three-coordinate-system} directly controls
\(x_{\rm slow}\).  On its complement, apply the two scalar estimates to
\cref{persistent:eq:adjugate-row}.  The terms involving \(b\) cost at most
\(C(\rho/g^2+g^{-1})\norm b\), while
\cref{persistent:eq:factorized-errors} bounds the two error terms as
follows.  If \(E\) denotes this near spectral region, then
\[
 \norm{\mathbf1_Ed_{\mathsf T,k}(z)^{-1}\Gamma F_j^*q}
 =\norm{\mathbf1_Ed_{\mathsf T,k}(z)^{-1}\Gamma^2B_j^*q}
 \le C\sqrt{\ell_g}\norm q.
\]
Here the lower bound on \(d_{t,k}\) cancels the two powers of
\(\gamma_t\) supplied by \(\Gamma^2\).  Since
\(\Gamma\succeq cgI\) and
\(\rho\asymp g\sqrt{\ell_g}\), both regions yield
\begin{equation}
 \norm{\mathsf Pf}
 \le C\sqrt{\ell_g}
 \left(\norm{f_\perp}+\frac{\norm b}{g}\right).
 \label{persistent:eq:macro-resolvent}
\end{equation}
Here the fast component has not been dropped: on
\([-1/2,1/2)\), \(z-\mathsf T\) has inverse norm at most two, so
\(\norm{x_{\rm fast}}\le2(\norm b+\norm{f_\perp})\).

\emph{Step 3: control everything not tested above.}
\Cref{persistent:eq:dissipation} and
\(\mathsf Kf=zf-b\) imply
\(\rho\norm{f_\perp}^2\le2\norm f\norm b\).
Combine this with
\cref{persistent:eq:macro-resolvent}, use
\(f=\mathsf Pf+f_\perp\), and apply Young's inequality.  The result is
\[
 \norm f\le\frac{C\sqrt{\ell_g}}g\norm b.
\]
Thus \(z-\mathsf K\) is bounded below and has closed range.

It is also onto.  Otherwise some nonzero \(u\) would satisfy
\(\mathsf K^*u=\bar z u\).  Because \(\mathsf K\) is a contraction and
\(\abs z=1\), equality of norms implies
\(\mathsf K\mathsf K^*u=u\), and hence \(\mathsf Ku=zu\).  This
contradicts the lower bound at \(z\), proving the asserted resolvent
estimate.
\end{proof}

The next elementary Hilbert-space lemma converts the resolvent estimate
into a bound on powers.  We retain its short proof because the precise
linear dependence on \(M\) is needed for the final rate.

\begin{lemma}[Unit-circle bound implies geometric decay]
\label{persistent:lem:resolvent-decay}
Let \(Q\) be a contraction on a Hilbert space.  If
\(\sup_{\abs z=1}\norm{(z-Q)^{-1}}\le M\) with \(M\ge1\), then
\(\norm{Q^n}\le2\exp(-cn/M)\).
\end{lemma}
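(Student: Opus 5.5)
The plan is to use the resolvent to damp a high power of \(Q\), so that Neumann-series sums are controlled in operator norm, and then iterate. The engine is the Hilbert-space (Gearhart--Prüss type) identity relating powers to the resolvent on a circle slightly outside the unit circle. First, for \(r=1+1/(2M)\) and \(\abs z=r\), we bound \(\norm{(z-Q)^{-1}}\). Writing \(z=r\zeta\) with \(\abs\zeta=1\),
\[
z-Q=(\zeta-Q)\{I+(r-1)\zeta(\zeta-Q)^{-1}\}.
\]
Since \((r-1)M\le1/2\), a Neumann series gives \(\norm{(z-Q)^{-1}}\le2M\) on \(\abs z=r\). For \(\abs z\ge r\), contractivity of \(Q\) already gives \(\norm{(z-Q)^{-1}}\le 1/(\abs z-1)\).

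Second, we bound \(\sum_n \norm{Q^n x}^2\), and the Hilbert structure enters here through Plancherel. For \(x\) in the space and \(\abs z=r\), we have
\[
(z-Q)^{-1}x=\sum_{n\ge0}z^{-n-1}Q^nx,
\]
so integrating over the circle \(\abs z=r\) with Parseval gives
\[
\sum_n r^{-2n-2}\norm{Q^nx}^2=\frac{1}{2\pi}\int_0^{2\pi}\norm{(re^{i\theta}-Q)^{-1}x}^2\,\d\theta.
\]
This sum cannot exceed \(4M^2\norm x^2\), which is too weak by itself, since we want a bound linear in \(M\). We therefore instead compare the integrand at radius \(r\) with the integrand at a large radius. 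Using the resolvent identity
\[
(z-Q)^{-1}=(z'-Q)^{-1}+(z'-z)(z-Q)^{-1}(z'-Q)^{-1}
\]
with \(z'=2z/r\) (radius \(2\)), we get \(\int\norm{(z'-Q)^{-1}x}^2\d\theta\lesssim\norm x^2\), since there the series is geometrically convergent. The second term is then bounded by \(2M\cdot O(1)\) applied to a vector with square-integrable norm, which gives
\[
\sum_n r^{-2n}\norm{Q^nx}^2\le CM^2\norm x^2
\]
only. This is the main obstacle: getting linear rather than quadratic dependence on \(M\).

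To resolve it, we use that only \(\exp(-cn/M)\) is needed, with \(c\) allowed to be small, so quadratic dependence inside a logarithm suffices. From \(\sum_n\norm{Q^nx}^2\le CM^2\norm x^2\) and monotonicity of \(\norm{Q^nx}\) (contraction), we get \(N\norm{Q^Nx}^2\le CM^2\norm x^2\). Taking \(N=\lceil 4CM^2\rceil\) gives only \(\norm{Q^N}\le1/2\), which yields rate \(1/M^2\) and is again too weak. Instead we exploit the weighted sum directly: \(r^{-2n}\ge1\) is wasteful, and the correct conclusion is
\[
\norm{Q^nx}^2 r^{-2n}\cdot(\text{number of terms})\lesssim M^2\norm x^2 .
\]
Summing the weighted series over \(n\in[N,2N]\) with \(r^{2n}\) growth gives
\[
N\norm{Q^{2N}x}^2\le r^{-2N}CM^2\norm x^2 .
\]
Choosing \(N\asymp M\log M\) forces \(\norm{Q^{2N}}\le1/2\) with \(N=O(M\log M)\). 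The log can be removed by instead applying the Parseval bound at radius \(r=1+1/(2M)\) directly. That identity gives
\[
\sum_n r^{-2n}\norm{Q^nx}^2\le r^2\cdot4M^2\norm x^2 ,
\]
hence \(\norm{Q^n x}\le 2Mr^{n+1}\norm x\), which is the wrong direction, so instead we work with \(Q/r'\) for \(r'<1\). Concretely, we set \(Q'=e^{\delta}Q\) with \(\delta=1/(4M)\). The factorization above shows
\[
\sup_{\abs z=1}\norm{(z-Q')^{-1}}\le 2M,
\]
but \(Q'\) may not be a contraction. Handling this cleanly is the remaining point. The final plan is therefore: prove \(\norm{Q^{N}}\le1/2\) for some \(N\le C'M\) via the Plancherel argument applied to \(Q\) and \(Q^*\) (a two-sided, Gearhart-type bound \(\norm{Q^n}\le CM\) for the rescaled operator \(e^{\delta}Q\)), then iterate by submultiplicativity, and absorb the factor \(2\) into the prefactor of \(\norm{Q^n}\le2\exp(-cn/M)\).
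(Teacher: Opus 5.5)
Your proposal never proves the one nontrivial fact the lemma rests on: that \(\norm{Q^N}\le1/2\) already holds for some \(N=O(M)\). Submultiplicativity is routine once that is known. But every route you sketch reaches the halving time only at \(N\asymp M^2\) or \(N\asymp M\log M\), at best, and the final plan does not repair this.

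\begin{itemize}
\item \emph{Final plan.} Even granting \(\sup_n\norm{(\e^{\delta}Q)^n}\le CM\) with \(\delta=1/(4M)\), you only get \(\norm{Q^n}\le CM\e^{-n/(4M)}\). This falls below \(1/2\) only once \(n\gtrsim M\log(CM)\), so it loses a factor \(\log M\) in the rate and does not give \(2\exp(-cn/M)\) with a universal \(c\).
\item \emph{Two-sided Plancherel.} You can split \(\ip{Q^nx}{y}=\ip{Q^{n-k}x}{(Q^*)^ky}\) and sum over \(k\), or write \(nQ^{n-1}\) as a contour integral of \(z^n(z-Q)^{-2}\) and apply Cauchy--Schwarz. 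Either way each factor is bounded by \((\sum_k\norm{Q^kx}^2)^{1/2}\le M\norm x\), so you get \(n\norm{Q^n}\lesssim M^2\), the quadratic dependence again.
\item \emph{The step over \(n\in[N,2N]\).} The exponent has the wrong sign. At \(r=1+1/(2M)>1\) the weights \(r^{-2n}\) decrease, so you only obtain \(Nr^{-4N}\norm{Q^{2N}x}^2\le CM^2\norm x^2\), which gives no decay.
\item \emph{Radius inside the unit circle.} At radius \(1-1/(2M)\) the Laurent series is legitimate, since the spectrum avoids \(\{1-1/M<\abs z\le1\}\). There Parseval gives \(\norm{Q^N}^2\lesssim M/(\e^{N/M}-1)\), once more \(N\asymp M\log M\).
\end{itemize}

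The missing idea, and the one the paper uses, is to apply the unit-circle bound to a finite trigonometric polynomial with slowly varying weights rather than to the infinite Laurent series. Geometric weights cost an extra factor \(M\) or \(\log M\). Fix \(N\) and let \(a_j=\sin((j+1)\pi/(N+2))\) for \(-1\le j\le N+1\), so that \(a_{-1}=a_{N+1}=0\). Set \(g(z)=\sum_{j=0}^Na_jz^{-j-1}Q^jf\). Then \((z-Q)g(z)=\sum_{j=0}^{N+1}(a_j-a_{j-1})z^{-j}Q^jf\). Integrate \(\norm{g(z)}^2\le M^2\norm{(z-Q)g(z)}^2\) over \(\abs z=1\) and apply Parseval; the sums are finite, so no convergence or spectral-radius issue arises. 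This gives
\[
 \sum_{j=0}^Na_j^2\norm{Q^jf}^2
 \le M^2\sum_{j=0}^{N+1}(a_j-a_{j-1})^2\norm{Q^jf}^2 .
\]
Now use three facts:
\begin{itemize}
\item \(\norm{Q^jf}\ge\norm{Q^Nf}\) on the left, for \(j\le N\);
\item \(\norm{Q^jf}\le\norm f\) on the right;
\item the Dirichlet-eigenvector identity \(\sum_j(a_j-a_{j-1})^2=2\{1-\cos(\pi/(N+2))\}\sum_ja_j^2\).
\end{itemize}
Together they give \(\norm{Q^N}\le\pi M/(N+2)\). Here \(M\) enters only once, because \(z-Q\) acts on the weights as a discrete derivative of relative size \(O(1/N)\). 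Taking \(N_0=\lceil2\pi M\rceil\) gives \(\norm{Q^{N_0}}\le1/2\), and your submultiplicativity step then finishes the proof.
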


\begin{proof}
For \(a_j=\sin((j+1)\pi/(N+2))\), apply the resolvent bound to
\(\sum_{j=0}^Na_jz^{-j-1}Q^jf\) and integrate over the unit circle.
Parseval's identity gives
\[
 \sum_{j=0}^Na_j^2\norm{Q^jf}^2
 \le M^2\sum_{j=0}^{N+1}(a_j-a_{j-1})^2\norm{Q^jf}^2,
 \qquad a_{-1}=a_{N+1}=0.
\]
Contractivity and
\(\sum(a_j-a_{j-1})^2
=2\{1-\cos(\pi/(N+2))\}\sum a_j^2\)
imply \(\norm{Q^N}\le\pi M/(N+2)\).  Taking
\(N_0=\lceil2\pi M\rceil\) gives
\(\norm{Q^{N_0}}\le1/2\); writing \(n=qN_0+r\) proves the claim.
\end{proof}

\begin{proof}[Proof of \cref{persistent:prop:ideal-contraction}]
\Cref{persistent:lem:row-identities,persistent:eq:normalized-transfer,persistent:cor:factorized-errors}
provide the invariance, dissipation,
row identities, and controlled errors used in
\cref{persistent:lem:resolvent}; these ultimately follow from
\cref{persistent:lem:fixed-invariance,persistent:prop:three-side}.
Apply
\cref{persistent:lem:resolvent-decay} with
\(M=C\sqrt{\ell_g}/g\).  This proves \cref{eq:persistent-l2}.
\end{proof}

\begin{proof}[Proof of \cref{thm:persistent-contraction}]
Choose the universal constant in \(n_{\mathrm{blk}}\) sufficiently large that \cref{eq:persistent-l2} gives \(\norm{\mathsf K^{n_{\mathrm{blk}}}}_{L^2_0\to L^2_0}^2\le\e^{-1}\).  If \(r_\nu=\d\nu/\d\widehat\Pi_\eta\), then the density ratio evolves under \(\mathsf K^*\) and \(\chi^2(\nu\Vert\widehat\Pi_\eta)=\norm{r_\nu-1}_2^2\).  Since an operator and its adjoint have the same norm, applying the block estimate \(J\) times proves \cref{eq:persistent-ideal-contraction}.

For the total-variation conclusion, initialize \(Y_0\mid X_0\sim\cN(X_0,\eta I_d)\) and \(P_0\sim\gamma_d\).  The initial and stationary augmented laws have the same conditional law of \((Y,P)\) given \(X\), so augmentation preserves \(D_2\).  Choose the smallest admissible \(n_{\mathrm{blk}}\) and put \(J=\lfloor n/n_{\mathrm{blk}}\rfloor\).  The remaining iterations cannot increase \(\chi^2\) because an invariant Markov kernel is an \(L^2\) contraction.  Combining \cref{eq:persistent-ideal-contraction} with \(\TV\le\frac12\sqrt{\chi^2}\), and enlarging the universal constant, gives \cref{eq:persistent-iterations}; marginalization proves the claim for \(X_n\).
\end{proof}

\subsection{Implementable Algorithm and Warm-Start Accounting}
\label{persistent:sec:implementation}

We begin with the result proved in this subsection and then verify its four
implementation components: the shifted Gaussian reference, a deterministic
iteration cap for the mode solver, a capped event intensity simulated by
Poisson thinning, and the approximate RGO used on the RGO branch.

\begin{theorem}[Implementation theorem]
\label{persistent:thm:finite-compiler}
Let \(D_2(\nu_0\Vert\pi)\le\Delta\), \(0<\eps\le1/2\), and define
\[
 \Lambda_\eps=\Delta+\log\frac{Cd\kappa}{\eps},
 \qquad
 \eta=\frac{c}{\beta(\sqrt{d\Lambda_\eps}+\Lambda_\eps)}.
\]
Put \(g=\sqrt{\alpha\eta}\), \(\ell_g=1+\log(1/g)\), choose
\(\rho\asymp g\sqrt{\ell_g}\), and set
\[
 N=\left\lceil\frac{C\sqrt{\ell_g}}g
       \{\Delta+\log(8/\eps)\}\right\rceil.
\]
Initialize \(X_0\sim\nu_0\),
\(Y_0\mid X_0\sim\cN(X_0,\eta I_d)\), and
\(P_0\sim\gamma_d\), independently conditional on \(X_0\).
There is an implementable algorithm for the persistent iteration in
\cref{sec:persistent}, using \cref{alg:persistent-quarter-turn} as its
accelerated branch, that returns \(X_N\) with
\(\TV(\Law(X_N),\pi)\le\eps\) and
\[
 \E N_{\mathrm{qry}}
 \le\widetilde O\!\left[
 \sqrt\kappa\,(\Delta+\log(8/\eps))
 (\sqrt{d\Lambda_\eps}+\Lambda_\eps)^{1/2}\right],
 \qquad
 \E N_{\mathrm{rgo}}
 \le\widetilde O(\Delta+\log(8/\eps)).
\]
\end{theorem}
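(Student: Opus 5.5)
The plan follows the template of \cref{flux:prop:finite-compiler}, with one structural change. The accelerated branch no longer uses an exact reference. Instead it draws the shifted Gaussian reference \(\widetilde X=y-\eta\nabla V(\widehat x_y)+\sqrt\eta G\). That reference depends only on \(y\) and fresh randomness, so \cref{prop:persistent-main-exactness} still gives exact invariance of \(\widehat\Pi_\eta\). Contraction, however, was proved only for \(\zeta_y=q_y\).

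I would therefore build an intermediate \emph{uncapped shifted-reference comparison chain}. It uses the exact mode-certified shifted reference on accelerated steps, the exact event clock, and exact \(q_y\otimes\gamma_d\) refreshes. Its contraction would be obtained by rerunning \cref{persistent:lem:joint-quarter} with the reference law \(\zeta_y\) in place of \(q_y\). In that proof the forcing term is \(k_{a,e}\). Its second moment now picks up the mismatch between \(\Law(b_y(E))\) and \(\Law(b_y(W))\), namely the bias \(a^\top\{\E b_y(W)-\E b_y(E)\}\) together with a variance term. The mode certificate \(\norm{y-\eta\nabla V(\widehat x_y)-\widehat x_y}\le\sqrt{d\eta}\) places the Gaussian center within \(O(\sqrt{d\eta})\) of the conditional mode. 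Smoothness then bounds the variance of \(\sqrt\eta\nabla V\) under the shifted Gaussian by \(O(s)\) times identity.

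The delicate point is the weighting: the bias must be dominated by \(a^\top D_y a\). The fallback is that \(D_y\succeq a/(1+a)\,I\), together with \(s\sqrt d\) small at the chosen \(\eta\). This makes the extra error \(O(s^2 d)\norm a^2\lesssim (s^2d/g^2)\,a^\top D_ya\), and that factor is absorbed only if \(\eta\) is small enough. If this fails, I would use the separate refinement \cref{cor:persistent-near-gaussian} as a guide and accept a weaker constant. Granting a version of \cref{persistent:prop:three-side} with the same \(C\norm{\Gamma f}\norm{\Gamma^3 f}\) right-hand side, \cref{persistent:lem:resolvent,persistent:lem:resolvent-decay} carry over verbatim. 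By \cref{thm:persistent-contraction}, with \(N\) as stated, the comparison chain is then within \(\eps/2\) in TV, by the same \(\chi^2\) block argument and augmentation preserving \(D_2\).

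Next I would couple the implementable chain to the comparison chain until the first disagreement, with per-iteration budget \(\delta=\eps/(6N)\). There are three disagreement events.

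\emph{Mode cap.} The solver starts at \(y\). The residual at \(y\) is \(\eta\norm{\nabla V(y)}\), which has sub-Gaussian tails under stationarity. Gradient descent on the condition-number-two objective therefore certifies within \(O(\log(\e\kappa\Lambda_\eps))\) queries outside probability \(\e^{-\Delta}\delta^2\). The warm-path Cauchy--Schwarz argument of \cref{flux:lem:clock-coupling} then raises this to at most \(\delta\).

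\emph{Clock cap.} The rate tail of \cref{flux:lem:clock-coupling} must be redone with \(E\) drawn from the shifted Gaussian rather than \(q_y\). Concentration of both \(X\) and \(E\) around a common center within \(C\sqrt{\eta(d+u)}\) still gives the cap \(\bar\lambda=Cs(\sqrt{d\Lambda_\eps}+\Lambda_\eps)=O(1)\) over time \(\pi/2\). Stationarity of \(\widehat\Pi_\eta\) under the comparison chain justifies evaluating the tail at every time.

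\emph{Approximate RGO.} On refresh steps I would use \cref{flux:prop:rgo} with tolerance \(\delta\), maximally coupled; the fresh momentum is drawn exactly. The step-size gate in \cref{flux:prop:rgo} holds because \(\log(1/\delta)\lesssim\Lambda_\eps\).

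A union bound over \(N\) steps gives total disagreement probability at most \(\eps/2\).

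For the query count, each iteration costs \(O(\log(\e\kappa\Lambda_\eps))\) for the mode solve, plus \(1+O(\bar\lambda)\) for the accelerated branch. A refresh costs \(O(1)\) expected queries and occurs with probability \(\rho\). Multiplying by \(N\asymp g^{-1}\sqrt{\ell_g}(\Delta+\log(8/\eps))\), with \(g^{-1}=\sqrt\kappa\,(\sqrt{d\Lambda_\eps}+\Lambda_\eps)^{1/2}/\sqrt c\), gives the stated \(\E N_{\mathrm{qry}}\). Also \(\E N_{\mathrm{rgo}}=\rho N=O(\ell_g(\Delta+\log(8/\eps)))\), which is polylogarithmic.

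The main obstacle is the first step: showing that the shifted Gaussian reference still yields the covariance-weighted quarter-rotation bound. That requires controlling the reference mismatch in the \(D_y\)-weighted norm rather than the Euclidean one.
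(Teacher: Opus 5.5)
Your architecture matches the paper's: an uncapped, exactly $\widehat\Pi_\eta$-invariant shifted-reference comparison chain, a rerun of the quarter-rotation estimate, the unchanged resolvent argument, a first-disagreement coupling for the mode cap, clock cap and approximate refresh via warm-path Cauchy--Schwarz, and the same query and RGO accounting. The gap is the step you flag yourself, and it costs a polynomial factor, not a constant. The forcing term must satisfy $\E k_{a,E}^2\le C\,a^\top D_ya$. A Euclidean bound $\E k_{a,E}^2\le c\norm a^2$ can only certify $\norm{B_j}$ of order $\sqrt c/g$ rather than $\sqrt{\ell_g}$ in \cref{persistent:cor:factorized-errors}. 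That loss then propagates into the unit-circle resolvent bound and the iteration count.

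Your fallback via $D_y\succeq\frac{\alpha\eta}{1+\alpha\eta}I$ turns the bias bound $O(s^2d)\norm a^2$ into the relative factor $s^2d/g^2=\kappa\,\beta\eta\,d=c\kappa d/(\sqrt{d\Lambda_\eps}+\Lambda_\eps)$, which is polynomially large at the prescribed $\eta$. Absorbing it would force $\eta\lesssim1/(\kappa\beta d)$, hence $g^{-1}\gtrsim\kappa\sqrt d$ iterations. Your variance bound ``$O(s)$ times identity'' has the same defect, losing $s/g^2=\kappa$ against $a^\top D_ya$. Using \cref{cor:persistent-near-gaussian} as a guide does not help, because its proof reuses exactly this shifted-reference estimate.

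The missing idea is to keep every term curvature-weighted and to change measure only on a bounded curvature quantity. Work in mode-centred normalized coordinates with $h=\nabla r$, $H=\nabla^2r$, $mI\preceq H\preceq LI$, $m=\alpha\eta$, $L=\beta\eta$. Let $q$ and $\gamma_\delta$ denote the conditional and shifted-Gaussian laws.
\begin{enumerate}
\item Gaussian Poincar\'e and $H^2\preceq LH$ give $\Var_{\gamma_\delta}(a^\top h)\le L\,\E_{\gamma_\delta}[a^\top Ha]$.
\item Since $0\le a^\top Ha\le L\norm a^2$ and $\E_q[a^\top Ha]\ge m\norm a^2$, H\"older gives $\E_{\gamma_\delta}[a^\top Ha]\le\norm{\d\gamma_\delta/\d q}_{L^\tau(q)}(L/m)^{1/\tau}\,\E_q[a^\top Ha]$. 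Taking $\tau=2+\log(L/m)$ makes $(L/m)^{1/\tau}\le\e$. A Gaussian log-Sobolev entropy argument bounds the density-ratio norm by $\e^{C\tau(L^2d+\norm\delta^2)}=O(1)$, provided $\tau L\lesssim1$ and $\tau L^2d\lesssim1$. These hold at the prescribed $\eta$ because $\tau\lesssim\log\kappa\le\Lambda_\eps$ and the certificate gives $\norm\delta\le L\sqrt d$.
\item The Stein identity gives $\E_qH=D+\Cov_q(h)\preceq CD$.
\item The mean bias is $D$-weighted automatically by Cauchy--Schwarz under $q$: $(\E_{\gamma_\delta}a^\top h-\E_qa^\top h)^2\le\chi^2(\gamma_\delta\Vert q)\Var_q(a^\top h)\le C(L^2d+\norm\delta^2)\,L\,a^\top Da$.
\end{enumerate}
Together these give $\E k_{a,E}^2\le CL\,a^\top D_ya$, which is the content of \cref{persistent:lem:gaussian-comparison,persistent:lem:shifted-transfer}. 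With that in hand, the rest of your plan goes through as written.
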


\subsubsection{A Comparison Principle for the Complete Path}

The chain induced by the implementable algorithm is not invariant because it contains truncation and
approximation.  We therefore keep the exact chain as a comparison process
and couple the two chains until their first disagreement.

\begin{lemma}[Warm path comparison]
\label{persistent:lem:warm-path-transfer}
Let an exact Markov kernel preserve \(\pi\), and let
\(\nu\ll\pi\) satisfy \(D_2(\nu\Vert\pi)\le\Delta\).  If \(A\) is an
event determined by a Markov path together with all its internal
randomness, then
\[
 \Pr_\nu(A)\le e^{\Delta/2}\Pr_\pi(A)^{1/2}.
\]
Consequently, if the stationary one-step probability that an exact and an
approximate transition disagree is at most \(a\), their \(N\)-step laws
started from \(\nu\) can be coupled with disagreement probability at most
\(Ne^{\Delta/2}\sqrt a\).
\end{lemma}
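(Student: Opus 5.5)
The plan is to express the path law under \(\nu\) as a reweighting of the path law under \(\pi\), and then apply Cauchy--Schwarz. The key observation is that, conditionally on the initial state, the exact chain together with all of its internal randomness (references, Gaussian noise, clocks, marks) is generated by the same kernels whether it is started from \(\nu\) or from \(\pi\). Hence the joint law \(\mathbb P_\nu\) of the full path has density \(\frac{\d\nu}{\d\pi}(X_0)\) with respect to \(\mathbb P_\pi\). For any event \(A\) determined by the path, this gives
\[
 \Pr_\nu(A)=\E_\pi\!\left[\tfrac{\d\nu}{\d\pi}(X_0)\mathbf 1_A\right]
 \le\left(\E_\pi\!\left[\left(\tfrac{\d\nu}{\d\pi}\right)^2(X_0)\right]\right)^{1/2}\Pr_\pi(A)^{1/2}
 =\e^{D_2(\nu\Vert\pi)/2}\Pr_\pi(A)^{1/2}.
\]
The last equality uses \(\int(\d\nu/\d\pi)^2\,\d\pi=1+\chi^2=\e^{D_2}\). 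Together with \(D_2\le\Delta\), this proves the first claim. Stationarity of the exact kernel is not needed for this step; it is used only in the next one.

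For the coupling claim, I would build both chains on a common probability space. At step \(j\), given the common current state, I use a (measurably selected) coupling of the exact and approximate one-step transitions whose disagreement probability at state \(x\), call it \(\delta(x)\), satisfies \(\E_\pi\delta(X)\le a\). Such couplings exist whenever the one-step laws are close; for example, a maximal coupling gives \(\delta(x)=\TV\), and the common-randomness couplings constructed earlier for the RGO and clock also qualify. The exact chain is run throughout, and the approximate chain copies it until the first disagreement. The two chains then differ only on the union of the events \(A_j\), where \(A_j\) is the event that step \(j\) disagrees given the exact path up to step \(j\). Each \(A_j\) is determined by the exact path plus internal coupling randomness.

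Because the exact kernel preserves \(\pi\), under \(\Pr_\pi\) every exact state \(X_j\) is distributed as \(\pi\), so \(\Pr_\pi(A_j)=\E_\pi\delta(X_j)\le a\). Applying the first claim to each \(A_j\) and taking a union bound over the \(N\) steps gives a disagreement probability of at most \(N\e^{\Delta/2}\sqrt a\). The step I expect to need the most care is the claim that the likelihood ratio of the entire augmented path reduces to the initial density ratio. This requires that the coupling randomness at each step be chosen as a measurable function of the current exact state only, and not depend on the initial law. I would ensure this by fixing the measurable coupling kernels once, before the chains are started.
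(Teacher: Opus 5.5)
Your proposal is correct and follows the paper's argument: the first claim is the same Cauchy--Schwarz step after reweighting the exact path law by \(\frac{\d\nu}{\d\pi}(X_0)\), and the coupling claim is the same union bound over the first disagreement. The only difference is cosmetic. The paper re-applies the one-step bound before each transition, using that the invariant kernel keeps \(D_2(\nu_j\Vert\pi)\le\Delta\). You instead apply the path-level bound from time zero to each event \(A_j\) and use \(X_j\sim\pi\) under \(\Pr_\pi\); both use invariance in the same way.
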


\begin{proof}
For \(R=\d\nu/\d\pi\), conditional Cauchy--Schwarz gives
\(\E_\pi[R\Pr(A\mid X_0)]\le\norm R_2\Pr_\pi(A)^{1/2}\).
An invariant exact kernel cannot increase \(D_2\).  Apply the same bound
before each exact transition and take a union bound over the first
disagreement.
\end{proof}

\subsubsection{Replacing the Ideal Exact-Conditional Reference on the Accelerated Branch}

The ideal contraction proof draws \(E\sim q_y\) on the accelerated branch,
but this draw is used only as an independent reference for the event-driven
motion; it is distinct from the actual refresh on the RGO branch.  The finite
algorithm replaces this ideal reference by a Gaussian centered near the
conditional mode.  The next comparison lemma controls the resulting
normalized shift and is used only to re-establish the local quarter-turn
error bound.

Fix \(y\), let \(x_*(y)\) minimize the conditional potential, and set
\(z=(x-x_*(y))/\sqrt\eta\).  After removing a constant and a linear term,
the normalized conditional density has the form
\[
 q(z)\propto e^{-\norm z^2/2-r(z)},\qquad
 mI\preceq\nabla^2r\preceq LI,
\]
where \(m=\alpha\eta\), \(L=\beta\eta\), and \(\nabla r(0)=0\).

\begin{lemma}[Shifted-Gaussian comparison]
\label{persistent:lem:gaussian-comparison}
Let \(\gamma_\delta=\cN(\delta,I_d)\), and set
\(\tau=2+\log(L/m)\).  If
\(\tau L\le c\) and
\(\tau(L^2d+\norm\delta^2)\le c\), then
\[
 \left\|\frac{\d\gamma_\delta}{\d q}\right\|_{L^\tau(q)}
 \le e^{C\tau(L^2d+\norm\delta^2)},
 \qquad
 \chi^2(\gamma_\delta\Vert q)
 \le C(L^2d+\norm\delta^2).
\]
\end{lemma}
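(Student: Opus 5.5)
The plan is to write the density ratio explicitly and reduce both bounds to moment-generating-function estimates for $r$ under the Gaussian $\gamma_\delta$. Normalize $r(0)=0$; then $\nabla r(0)=0$ and $mI\preceq\nabla^2 r\preceq LI$ give $\tfrac m2\norm z^2\le r(z)\le\tfrac L2\norm z^2$. Writing $Z_q=\int e^{-\norm z^2/2-r}$, the ratio is
\[
 \frac{\d\gamma_\delta}{\d q}(z)=\frac{Z_q}{(2\pi)^{d/2}}\exp\{\delta^\top z-\tfrac12\norm\delta^2+r(z)\}.
\]
The $L^\tau(q)$ norm is therefore controlled by two factors. The first is the normalizer ratio $Z_q/(2\pi)^{d/2}=\E_{\gamma_0}e^{-r}\le1$. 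The second is $\big(\E_q\exp\{\tau(\delta^\top z+r(z))\}\big)^{1/\tau}$.

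For the normalizer, rather than bound it by $1$, I would keep the lower bound $\E_{\gamma_0}e^{-r}\ge e^{-\E_{\gamma_0}r}$ (Jensen) for later cancellation. The key point is that the exponent $\tau r-r$ inside $\E_q$ equals $(\tau-1)r$, so I will rewrite
\[
 \E_q e^{\tau(\delta^\top z+r)}=\frac{\E_{\gamma_0}e^{\tau\delta^\top z+(\tau-1)r}}{\E_{\gamma_0}e^{-r}}.
\]
With $\tau L\le c$, the function $(\tau-1)r$ has Hessian at most $\tau L\le c<1$, so the numerator is a Gaussian integral against a perturbation that is at most a mild quadratic.

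The naive bound $r\le L\norm z^2/2$ gives $(1-\tau L)^{-d/2}\approx e^{\tau Ld/2}$, which is linear in $L$ and too weak. The target $L^2d$ means the first-order term must cancel. To get this, I would center: let $\bar r=\E_{\gamma_0}r$ and use the Gaussian log-Sobolev/Herbst bound for the function $F=\tau\delta^\top z+(\tau-1)(r-\bar r)$. Its gradient satisfies $\norm{\nabla F}\le\tau\norm\delta+\tau L\norm z$. Herbst with a non-Lipschitz but quadratically growing gradient can be handled through the Gaussian Poincaré/LSI bound $\log\E e^{F-\E F}\le\tfrac12\E\norm{\nabla F}^2\cdot(1+o(1))$ when $\tau L$ is small. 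A cleaner route is the exact Gaussian computation for the Hessian-sandwich, using the Brascamp–Lieb-type inequality $\log\E_{\gamma_0}e^{G}-\E G\le\tfrac12\sup$-curvature-corrected $\E\norm{\nabla G}^2$. This yields
\[
 \log\E e^{F}-\E F\le C(\tau^2\norm\delta^2+\tau^2L^2d).
\]
Also $\E F=(\tau-1)\cdot 0$ for the centered part, and the leftover $\bar r$ factors appear as $e^{(\tau-1)\bar r}$ in the numerator and as $\ge e^{-\bar r}$ in the denominator. Raised to the power $1/\tau$ they combine to $e^{\bar r}\cdot e^{-\bar r}$-type cancellation, together with the prefactor bound $Z_q/(2\pi)^{d/2}\le e^{-\E_{\gamma_0} r}\cdot e^{C L^2 d}$. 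The latter is the reverse Jensen step, obtained by the same MGF bound applied to $-(r-\bar r)$. Collecting terms gives $\norm{\d\gamma_\delta/\d q}_{L^\tau(q)}\le e^{C\tau(L^2d+\norm\delta^2)}$.

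For the $\chi^2$ claim, take $\tau\ge2$: $1+\chi^2=\norm{\cdot}_{L^2(q)}^2\le\norm{\cdot}_{L^\tau(q)}^2\le e^{2C\tau(L^2d+\norm\delta^2)}$. That gives a bound with an extra factor $\tau$, so instead I would rerun the argument directly with exponent $2$, which gives $\chi^2\le e^{C(L^2d+\norm\delta^2)}-1\le C'(L^2d+\norm\delta^2)$ under the smallness hypothesis.

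The main obstacle is the second-order cancellation: showing the $\mathcal O(Ld)$ mean term cancels exactly between numerator and normalizer so only $L^2d$ survives. This requires an MGF bound for the non-Lipschitz function $r-\bar r$ with variance proxy $\E\norm{\nabla r}^2\le L^2d$. I would first try the Gaussian LSI with a Herbst argument truncated via $\tau L\le c$. If that fails, I would use interpolation $t\mapsto\log\E_{\gamma_0}e^{t(r-\bar r)}$, differentiating twice and bounding the tilted variance by a tilted Poincaré inequality, which is valid since the tilted measure stays $(1-\tau L)$-strongly log-concave.
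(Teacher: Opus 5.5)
Your proposal is correct, and its core mechanism matches the paper's. You center $r$ at $\bar r=\E_{\gamma_0}r$ and use two-sided moment-generating bounds for $\pm(r-\bar r)$ under the standard Gaussian, so the $O(Ld)$ mean terms cancel; your $\bar r$ exponents $-\tau+(\tau-1)+1$ sum to zero. You then rerun at exponent two for $\chi^2$.

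The routes differ in two places.

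\emph{The shift.} The paper factors $\d\gamma_\delta/\d q=(\d\gamma_\delta/\d\gamma)(\d\gamma/\d q)$ and controls $e^{\tau\delta^\top Z}$ under $q$. This needs one-strong log-concavity of $q$ (subgaussian linear functionals), the mean bound $\norm{\E_qZ}\le L\sqrt d$ from integration by parts, and then Cauchy--Schwarz and Young. You instead rewrite $\E_q[\cdot]$ as a ratio of $\gamma_0$-expectations, which turns the shift into an exactly Gaussian, mean-zero linear term. No moment information about $q$ is then needed; one Cauchy--Schwarz under $\gamma_0$ suffices:
$\E e^{F}\le e^{\tau^2\norm\delta^2}(\E_{\gamma_0} e^{2(\tau-1)(r-\bar r)})^{1/2}$.

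\emph{The MGF engine.} The paper uses Gaussian log-Sobolev together with the self-bounding inequality $\norm{\nabla r}^2\le2Lr$ (a Herbst-type entropy ODE). Your fallback uses $A''(t)=\Var_{\mu_t}(r)$ for the tilt $\mu_t\propto e^{-\norm z^2/2+tr}$. Since $\nabla r(0)=0$, $\mu_t$ is $(1-tL)$-strongly log-concave with mode at the origin. Hence $\E_{\mu_t}\norm Z^2\le d/(1-tL)$ and $\Var_{\mu_t}(r)\le L^2d/(1-tL)^2$, which gives $A(t)\le t^2L^2d/\{2(1-tL)^2\}$, and $A(t)\le t^2L^2d/2$ for $t<0$. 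This loses one harmless factor of $(1-tL)^{-1}$ relative to the paper.

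One caution. Your first-stated justification, $\log\E e^{F-\E F}\le\frac12\E\norm{\nabla F}^2(1+o(1))$ or a ``curvature-corrected Brascamp--Lieb'' inequality, is not a valid general statement for non-Lipschitz $F$. Herbst-type bounds need either a uniform gradient bound or self-bounding structure such as $\norm{\nabla r}^2\le 2Lr$. The proof should therefore be carried by the interpolation argument, or by the paper's self-bounding LSI argument. It is cleanest to split off the linear part $\tau\delta^\top z$ by Cauchy--Schwarz first, so the tilted measures keep their mode at the origin.
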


\begin{proof}
Normalize \(r(0)=0\).  Convexity and \(L\)-smoothness give
\(0\le r(z)\le L\norm z^2/2\) and
\(\norm{\nabla r(z)}^2\le2Lr(z)\).  The Gaussian log-Sobolev inequality
and the standard entropy argument \citep{chewi2023log} imply, for
\(0\le tL<1\),
\[
 \log\E_\gamma e^{t(r-\E_\gamma r)}
 \le\frac{t^2L^2d}{2(1-tL)},\qquad
 \log\E_\gamma e^{-t(r-\E_\gamma r)}
 \le\frac{t^2L^2d}{2}.
\]
For completeness, the second inequality is not obtained by simply changing
the sign in the first.  If
\(A_-(t)=\log\E_\gamma e^{-t(r-\E_\gamma r)}\), the same entropy
calculation gives
\(tA_-'(t)-A_-(t)\le t^2L\{\E_\gamma r-A_-'(t)\}\).
Convexity and \(A_-(0)=A_-'(0)=0\) imply \(A_-'(t)\ge0\), so integrating
\((A_-(t)/t)'\le L\E_\gamma r\le L^2d/2\) proves the displayed
negative-tilt bound.
Applying these bounds to \(\d\gamma/\d q\) gives
\(\log\E_q(\d\gamma/\d q)^\tau\le C\tau^2L^2d\).
For the shift,
\(\d\gamma_\delta/\d\gamma
=\exp\{\delta^\top z-\norm\delta^2/2\}\).
The law \(q\) is one-strongly log-concave, so its linear functionals are
one-subgaussian after centering.  Weak integration by parts gives
\(\E_qZ=-\E_q\nabla r(Z)\), while \(\E_q\norm Z^2\le d\); hence
\(\norm{\E_qZ}\le L\sqrt d\).  Cauchy--Schwarz and Young's inequality
therefore add at most
\(C\tau^2(L^2d+\norm\delta^2)\) to the logarithmic moment.  Taking the
\(\tau\)-th root proves the first claim.  Repeating the calculation at
exponent two and using \(e^u-1\le2u\) in the stated smallness regime proves
the \(\chi^2\) bound.  Mollification and cutoff justify the calculation for
\(C^{1,1}\) potentials.
\end{proof}

Starting from \(y\), run gradient descent on the conditional potential
until it returns a measurable point \(\widehat x_y\) satisfying
\[
 \norm{y-\eta\nabla V(\widehat x_y)-\widehat x_y}
 \le\sqrt{d\eta}.
\]
The accelerated branch then draws \(G\sim\cN(0,I_d)\) and uses
\begin{equation}
 E_y=y-\eta\nabla V(\widehat x_y)+\sqrt\eta\,G.
 \label{persistent:eq:shifted-reference}
\end{equation}
Strong monotonicity of
\(x\mapsto x+\eta\nabla V(x)-y\) shows that the normalized shift of
\cref{persistent:eq:shifted-reference} from \(x_*(y)\) is at most
\(L\sqrt d/(1+\alpha\eta)\).  Thus
\cref{persistent:lem:gaussian-comparison} applies whenever
\(\tau L\lesssim1\) and \(\tau L^2d\lesssim1\).
Because the reference depends only on \(y\) and fresh Gaussian randomness,
\cref{persistent:lem:fixed-invariance} gives exact conditional invariance
before numerical caps are imposed.

\begin{lemma}[Shifted-reference transfer]
\label{persistent:lem:shifted-transfer}
If \(\tau L\lesssim1\) and \(\tau L^2d\lesssim1\), the conditional kernel using
\cref{persistent:eq:shifted-reference} satisfies
\cref{persistent:prop:three-side,persistent:cor:factorized-errors}, with
possibly different universal constants.
\end{lemma}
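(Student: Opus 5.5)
The plan is to re-run the proofs of \cref{persistent:prop:three-side} and \cref{persistent:cor:factorized-errors} and locate the single place where the exact reference \(E\sim q_y\) was used. \Cref{persistent:lem:fixed-invariance} holds for any reference law depending only on \(y\). Hence \(\mathsf S\) remains a \(q_y\otimes\gamma_d\)-invariant contraction with the momentum-reversal adjoint identity. Similarly, \(\mathsf S\mathsf P=\mathsf P\mathsf S=\mathsf P\), the identity \(\mathsf Q\mathsf U\mathsf P=\mathsf V\Gamma\), \cref{persistent:lem:affine-control}, and the graph-closure argument do not involve the reference at all. The only reference-dependent input is \cref{persistent:lem:joint-quarter}, through the forcing moment \(\E k_{a,E}^2\). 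So it suffices to prove that, with \(E=E_y\) from \cref{persistent:eq:shifted-reference} in normalized coordinates \(e\sim\cN(\delta',I_d)\),
\[
 \E k_{a,e}^2\le C\,\frac{L_\eta}{1-L_\eta}\,a^\top D_ya .
\]
Once this holds, the Duhamel identities at \(T=\pi/2\) and contractivity of \(S^{y,e}_s\) give the quarter-rotation estimate with a new constant. The assembly in Step 3 and the dyadic factorization then go through verbatim.

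To bound the forcing moment, decompose \(k_{a,e}\) as before. With \(W\sim q_y^w\), \(P\sim\gamma_d\) and \(e\) independent, average out the momentum (\(Z\) is standard normal given \(W,e\)). This gives
\[
 \E k_{a,e}^2\le C\bigl\{\Var(a^\top b_y(W))+\E\bigl(a^\top b_y(e)-\E a^\top b_y(W)\bigr)^2\bigr\}.
\]
The first term is already controlled by \(\frac{L_\eta}{1-L_\eta}a^\top D_ya\). For the second, write \(\phi=a^\top b_y-\E_{q}a^\top b_y\). By Cauchy--Schwarz or Hölder with the density ratio,
\[
 \E_{\gamma_{\delta}}\phi^2\le\bigl\|\tfrac{\d\gamma_\delta}{\d q}\bigr\|_{L^{\tau}(q)}\,\|\phi^2\|_{L^{\tau'}(q)} .
\]
By \cref{persistent:lem:gaussian-comparison}, under \(\tau L\lesssim1\) and \(\tau L^2d\lesssim1\) (the shift satisfies \(\norm\delta\le L\sqrt d\)), the first factor is \(O(1)\). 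The second factor is a higher moment of a centered Lipschitz-in-gradient function under a one-strongly log-concave law. I plan to control it by reverse Hölder (hypercontractivity) for \(q\): \(\|\phi\|_{L^{2\tau'}(q)}\le C\|\phi\|_{L^2(q)}\) plus a gradient term, since \(\tau'\) is close to 1.

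The main obstacle is that higher moments of \(\phi\) are naturally bounded by \(\E\|\nabla\phi\|^2\le L\,a^\top\E\nabla b_y\,a\), and \(\E\nabla b_y=D_y+\Cov(b_y)\). This bound is still of order \(L\,a^\top D_y a\), which is what we want. More precisely, the Poincaré-type bound \(\Var(\phi)\le \E\|\nabla\phi\|^2\le L\,a^\top\E\nabla b_y\, a\le \frac{L}{1-L}a^\top D_ya\) extends to \(L^p\) by the moment version of Poincaré for log-concave measures, giving \(\|\phi\|_{L^p}\le Cp\,(\E\|\nabla\phi\|^p)^{1/p}\). The risk is that \(\|\nabla\phi\|\) is only pointwise bounded by \(L\norm a\), which does not have the \(D_y\) weighting. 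I would therefore avoid \(L^p\) norms of the gradient. Instead I would use the simpler route \(\E_{\gamma_\delta}\phi^2\le \E_q\phi^2+\sqrt{\chi^2(\gamma_\delta\Vert q)}\,\|\phi^2\|_{L^2(q)}\), and then bound \(\|\phi\|_{L^4(q)}^2\le C\E_q\phi^2+C\,\E_q\|\nabla\phi\|^2\) via the \(L^4\) Poincaré inequality applied to \(\phi^2\)-type arguments. This bounds everything by \(C\frac{L}{1-L}a^\top D_ya\), using the \(\chi^2\) bound \(C(L^2d+\norm\delta^2)\lesssim1\).
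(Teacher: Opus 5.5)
\textbf{Where the argument breaks.} Your reduction matches the paper's. Only the forcing moment in \cref{persistent:lem:joint-quarter} sees the reference. Your split of $\E k_{a,e}^2$ into $\Var_q(a^\top h)$, $\Var_{\gamma_\delta}(a^\top h)$ and the squared mean shift is the paper's decomposition. Cauchy--Schwarz against $\chi^2(\gamma_\delta\Vert q)$ is also how the paper handles the mean shift.

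The gap is the step you finally commit to for $\E_{\gamma_\delta}\phi^2$. It needs $\norm{\phi}_{L^4(q)}^2\le C\E_q\phi^2+C\E_q\norm{\nabla\phi}^2$, and this inequality is false in general. Gaussian and strongly log-concave measures do not embed $W^{1,2}$ into $L^4$: already $\e^{cz^2}$ with $1/8\le c<1/4$ lies in $W^{1,2}(\gamma_1)$ but not in $L^4(\gamma_1)$.

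It also fails for exactly the functions at hand. In $d=1$, take $r''=m$ on $[-R,R]$ and $r''=L$ outside, with $\Pr_q(\abs Z>R)\approx m/L$, so $R^2\approx2\log(L/m)$. Then $D\approx2m$ and $\E_q\phi^2+\E_q\abs{\phi'}^2\lesssim Lm$. The tails, however, give $\E_q\phi^4\gtrsim L^3m/R^4$, so $\norm{\phi}_{L^4(q)}^2/(LD)\gtrsim\sqrt{L/m}/\log(L/m)$.

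The correct $L^4$-Poincar\'e bound $\norm\phi_4\lesssim\norm{\nabla\phi}_4$ does not rescue this. It only gives $\norm\phi_4^2\lesssim L^{3/2}\norm a\,(a^\top Da)^{1/2}$, which after $a^\top Da\gtrsim m\norm a^2$ is a loss of $\sqrt{L/m}=\sqrt\kappa$ against the target $L\,a^\top Da$. The only prefactor available is $\sqrt{\chi^2(\gamma_\delta\Vert q)}\lesssim L\sqrt d\lesssim\tau^{-1/2}$, which cannot absorb a factor polynomial in $L/m$ under the stated hypotheses. Any route through a fixed exponent such as $4$ has this defect.

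\textbf{The missing idea.} You need to work at an exponent within $O(1/\tau)$ of $2$, applied to a pointwise-bounded quantity; this is the only reason $\tau=2+\log(L/m)$ appears.

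The paper applies Gaussian Poincar\'e under the reference law itself. With $H=\nabla^2 r$, this gives $\Var_{\gamma_\delta}(a^\top h)\le\E_{\gamma_\delta}\norm{Ha}^2\le L\,\E_{\gamma_\delta}[a^\top Ha]$. It then transfers $F=a^\top Ha\in[0,L\norm a^2]$ from $\gamma_\delta$ to $q$ by H\"older with the $L^\tau(q)$ density-ratio bound of \cref{persistent:lem:gaussian-comparison}:
$\E_{\gamma_\delta}F\le C(L\norm a^2)^{1/\tau}(\E_qF)^{1-1/\tau}\le C(L/m)^{1/\tau}\E_qF\le C\,a^\top Da$.
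The last steps use $\E_qF\ge m\norm a^2$, $(L/m)^{1/\tau}\le\e$, and $\E_qH=D+\Cov_q(h)\preceq CD$.

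The H\"older route you discarded would also have closed if you had kept the exponent near $2$. Take $p=2\tau/(\tau-1)\in(2,4]$. An $L^p$-Poincar\'e inequality gives $\norm{\phi}_{L^p(q)}^2\lesssim\norm{\nabla\phi}_{L^p(q)}^2\le L\norm{F}_{L^{p/2}(q)}\le L(L/m)^{1/\tau}\E_qF$. The pointwise bound $L\norm a$ on the gradient then costs only a universal factor, precisely because $p-2\asymp1/\tau$.
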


\begin{proof}
Let \(h=\nabla r\), \(\Sigma=\Cov_q(Z)\), and \(D=I-\Sigma\).  For the
reference force \(\mathcal U=h(G+\delta)\), direct expansion of the
Gaussian momentum moments in the proof of
\cref{persistent:lem:joint-quarter} gives
\[
 \E k_{a,E}^2
 =6\Var_q(a^\top h)+5\Var(a^\top\mathcal U)
  +5\{\E_q(a^\top h)-\E(a^\top\mathcal U)\}^2.
\]
The weak integration-by-parts calculation used in
\cref{persistent:lem:affine-control} yields
\(\Var_q(a^\top h)\le CL\,a^\top Da\).

It remains to control the two Gaussian-reference terms.  If
\(H=\nabla^2r\) denotes the weak Hessian, then
\(\E_qH=D+\Cov_q(h)\preceq C D\).  H\"older's inequality,
\cref{persistent:lem:gaussian-comparison}, and
\(0\preceq H\preceq LI\) give
\(\E_{\gamma_\delta}[a^\top Ha]\le C a^\top Da\).
Indeed, \(a^\top Ha\le L\norm a^2\),
\(\E_q[a^\top Ha]\ge m\norm a^2\), and the choice
\(\tau=2+\log(L/m)\) makes the resulting H\"older ratio universal.
Gaussian Poincar\'e therefore implies
\(\Var_{\gamma_\delta}(a^\top h)\le CL\,a^\top Da\).
The \(\chi^2\) part of
\cref{persistent:lem:gaussian-comparison}, followed by Cauchy--Schwarz
under \(q\), bounds the squared difference of means by the same quantity.
Hence the local quarter-turn estimate holds with the shifted reference.
The affine lifting in \cref{persistent:prop:three-side}, momentum parity,
and the factorization proof are unchanged.
\end{proof}

\subsubsection{Mode, Clock, and Refresh Subroutines}

\begin{lemma}[Mode certificate and deterministic cap]
\label{persistent:lem:mode-cap}
Assume \(\beta\eta\le1\).  Gradient descent with step
\((\beta+\eta^{-1})^{-1}\), initialized at \(y\), finds the certificate
used in \cref{persistent:eq:shifted-reference} using at most
\[
 C\left[1+\log_+
 \left(\frac{\sqrt\eta\norm{\nabla V(y)}}{\sqrt d}\right)\right]
\]
oracle queries.  Under the stationary augmented law, stopping after
\[
 J_B=C\left[1+\log(2+\kappa)
       +\log\left(1+\sqrt{B/d}\right)\right]
\]
iterations fails with probability at most \(2e^{-B}\).
The uncapped output is a Borel function of \(y\).
\end{lemma}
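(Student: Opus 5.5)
The proof reduces to standard linear convergence of gradient descent on the conditional potential \(V_y(x)=V(x)+\norm{x-y}^2/(2\eta)\), followed by a stationary tail bound on the initial residual. The Hessian of \(V_y\) lies between \((\alpha+\eta^{-1})I\) and \((\beta+\eta^{-1})I\), so its condition number is \((\beta+\eta^{-1})/(\alpha+\eta^{-1})\le1+\beta\eta\le2\). Gradient descent with step \((\beta+\eta^{-1})^{-1}\) therefore contracts \(\norm{\nabla V_y(x_j)}\) by a universal factor \(\theta<1\) per step. This holds either through the standard linear rate for the distance to the minimizer \(x_*(y)\) combined with the comparison \(\norm{\nabla V_y}\asymp(\beta+\eta^{-1})\norm{x-x_*}\), or directly, since the gradient map \(x\mapsto x-h\nabla V_y(x)\) has Jacobian with norm at most \(1-1/2\).

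The certificate quantity is \(\norm{y-\eta\nabla V(x)-x}=\eta\norm{\nabla V_y(x)}\). At the initialization \(x_0=y\) it equals \(\eta\norm{\nabla V(y)}\). After \(j\) steps it is at most \(C\theta^j\eta\norm{\nabla V(y)}\), with \(C\) a universal constant coming from the condition number. The certificate threshold \(\sqrt{d\eta}\) is therefore met once
\[
 j\ge C\bigl[1+\log_+\bigl(\sqrt\eta\norm{\nabla V(y)}/\sqrt d\bigr)\bigr].
\]
Each step costs one gradient query, and checking the certificate reuses that same query, which gives the first claim.

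For the cap, we bound the tail of \(\sqrt\eta\norm{\nabla V(Y)}/\sqrt d\) under the stationary augmented law, where \(Y=X+\sqrt\eta Z\), \(X\sim\pi\) and \(Z\sim\gamma_d\). Smoothness gives
\[
 \norm{\nabla V(Y)}\le\beta\norm{X-x^\star}+\beta\sqrt\eta\norm Z .
\]
Strong log-concavity gives \(\norm{X-x^\star}\le\sqrt{d/\alpha}+\sqrt{2u/\alpha}\) outside probability \(e^{-u}\), using \(\E\norm{X-x^\star}^2\le d/\alpha\) and Lipschitz concentration. Gaussian concentration gives \(\norm Z\le\sqrt d+\sqrt{2u}\) outside probability \(e^{-u}\). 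Combining these,
\[
 \frac{\sqrt\eta\norm{\nabla V(Y)}}{\sqrt d}\le C\bigl(\sqrt{\beta\eta}\sqrt\kappa+\beta\eta\bigr)\bigl(1+\sqrt{u/d}\bigr)\le C(1+\sqrt\kappa)\bigl(1+\sqrt{u/d}\bigr),
\]
using \(\beta\eta\le1\). Setting \(u=B\) and inserting this into the first claim, the logarithm becomes \(\log(2+\kappa)+\log(1+\sqrt{B/d})\) up to constants. So \(J_B\) iterations suffice outside an event of probability \(2e^{-B}\).

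For measurability, the uncapped output is \(x_{j^*(y)}(y)\), where each iterate \(x_j(y)\) is a continuous function of \(y\) and \(j^*(y)\) is the first index at which a continuous residual satisfies a closed inequality. Hence \(y\mapsto\widehat x_y=\sum_j \mathbf1\{j^*=j\}x_j(y)\) is Borel. Finiteness of \(j^*(y)\) for every \(y\) follows from the first bound.

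The main point needing care is the gradient-norm contraction constant. Taking \(\beta\eta\le1\) rather than \(\le1/2\) still leaves the condition number at most \(2\), so the rate is universal, but it should be checked that the constant in \(\norm{\nabla V_y(x_j)}\le C\theta^j\norm{\nabla V_y(x_0)}\) does not depend on \(\kappa\). It does not, because it depends only on the conditional condition number.
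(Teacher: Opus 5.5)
Your proposal is correct and follows the paper's proof essentially step for step. Both arguments use a conditional condition number of at most two to get a universal per-step reduction of the gradient norm, and hence the pointwise logarithmic query count. Both obtain the stationary tail for \(\sqrt\eta\norm{\nabla V(Y)}/\sqrt d\) from \(Y=X+\sqrt\eta Z\), concentration of \(\norm{X-x^\star}\) under strong log-concavity, and the Gaussian norm tail, and both note that the first certified iterate is a Borel function of \(y\). Your version only makes explicit details the paper leaves implicit, such as the identity \(\norm{y-\eta\nabla V(x)-x}=\eta\norm{\nabla V_y(x)}\) and the explicit contraction constant.
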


\begin{proof}
The conditional potential has Hessian between
\(\alpha+\eta^{-1}\) and \(\beta+\eta^{-1}\), hence condition number at
most two.  One gradient step therefore reduces the gradient norm by a
universal factor, which gives the pointwise logarithmic bound.

For the stationary tail, let \(x^\star\) be the global minimizer of \(V\)
and write \(Y=X+\sqrt\eta Z\), where
\(Z\sim\cN(0,I_d)\).  Strong log-concavity gives
\(\E\norm{X-x^\star}^2\le d/\alpha\) and Gaussian concentration around that
scale.  Combining it with the Gaussian norm tail and
\(\norm{\nabla V(Y)}\le\beta\norm{Y-x^\star}\) gives, outside an event of
probability \(2e^{-B}\),
\[
 \frac{\sqrt\eta\norm{\nabla V(Y)}}{\sqrt d}
 \le2\sqrt\kappa\left(1+\sqrt{2B/d}\right).
\]
Substitution into the pointwise bound gives \(J_B\).  Every finite
gradient-descent iterate and the first iterate meeting the certificate are
Borel functions of \(y\).  Returning the complete pre-reflection state on
cap failure therefore defines a total Markov kernel without violating
conditional independence.
\end{proof}

\begin{lemma}[Clock cap]
\label{persistent:lem:capped-clock}
Suppose the normalized residual potential \(r\) has
\(L\)-Lipschitz gradient with \(0\le L\le1\), and the normalized reference is
\(G+\delta_y\), where \(G\) is standard
Gaussian, \(y\mapsto\delta_y\) is measurable, and
\(\sup_y\norm{\delta_y}\le L\sqrt d\).  For \(B\ge1\), set
\(\bar\lambda_B=CL(\sqrt{dB}+B)\).
Replacing the event rate by its minimum with \(\bar\lambda_B\) changes one
stationary transition with probability at most
\(C(1+L\sqrt d)e^{-cB}\).  The capped flight is simulated by homogeneous
Poisson thinning and uses at most
\(1+(\pi/2)\bar\lambda_B\) expected oracle queries, including the query at
the reference.
\end{lemma}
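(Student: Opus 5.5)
The proof mirrors \cref{flux:lem:clock-coupling}, with the exact reference replaced by the shifted Gaussian reference. First we bound the stationary tail of the rate. Conditionally on \(Y=y\), take the normalized stationary position \(W\sim q_y^w\), an independent reference \(G+\delta_y\), and \(P\sim\gamma_d\) independent of both. The rate is \(\lambda=[P^\top\{\nabla r(W)-\nabla r(G+\delta_y)\}]_+\).

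Given \((W,G)\), the quantity \(P^\top h\) is centered Gaussian with standard deviation \(\norm{h}\), where \(h=\nabla r(W)-\nabla r(G+\delta_y)\). Lipschitz continuity gives \(\norm h\le L\norm{W-G-\delta_y}\).

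\begin{itemize}
\item The conditional law \(q\) is one-strongly log-concave, and its mean satisfies \(\norm{\E_qW}\le L\sqrt d\) by the argument in \cref{persistent:lem:gaussian-comparison}. So \(\norm{W}\le C(\sqrt d+\sqrt u)\) outside probability \(\e^{-u}\).
\item Gaussian norm concentration gives the same bound for \(\norm{G}\).
\item By hypothesis \(\norm{\delta_y}\le L\sqrt d\le\sqrt d\).
\end{itemize}

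Hence \(\norm h\le CL(\sqrt d+\sqrt u)\) outside probability \(2\e^{-u}\). A Gaussian tail for \(P^\top h\) at level \(\sqrt u\) then gives
\[
\Pr\{\lambda>CL(\sqrt{du}+u)\}\le C\e^{-cu}.
\]
Integrating this tail above \(\bar\lambda_B\) yields
\[
\E(\lambda-\bar\lambda_B)_+\le C L(\sqrt d+B+1)\e^{-cB}.
\]

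Next we pass from a single time to the whole flight, which is where I expect the main obstacle. The tail bound above is for one stationary state, but the clock runs over the time interval \([0,\pi/2]\). The fixed-reference process preserves \(q_y^w\otimes\gamma_d\) by \cref{persistent:lem:fixed-invariance}, and after averaging over the reference this invariance holds for the reference mixture. So the law of \((W_t,P_t,\text{reference})\) at each fixed \(t\) is the stationary product law.

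The subtle point is that the reference is held fixed along the path, so one should not apply invariance to the joint law of state and reference. I would instead use, for each fixed \(e\), invariance of \(q\otimes\gamma_d\) under \(S_t^{y,e}\). This requires no independence between the state and the reference at later times: the expectation over the path at time \(t\) is an integral of the stationary law against the fixed-\(e\) kernel, which equals the stationary integral for each \(e\). Hence \(\E\lambda_t\) tails are time-uniform.

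We couple the capped and exact clocks through a common Poisson random measure on \([0,\pi/2]\times\R_+\), with events of the exact process at points below \(\lambda_t\). The two processes agree until the first point lying between \(\bar\lambda_B\) and \(\lambda_t\). By Campbell's formula, the disagreement probability is at most \(\int_0^{\pi/2}\E(\lambda_t-\bar\lambda_B)_+\,\d t\). This is \(\le C(1+L\sqrt d)\e^{-cB}\) after absorbing \(L(B+1)\e^{-cB}\) into the constant, using \(L\le1\) and adjusting \(c\).

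Finally, the query count. Between candidate times, the flight is the explicit harmonic rotation \(\phi_u\), so no queries are needed there. Candidate times are homogeneous Poisson of rate \(\bar\lambda_B\) on an interval of length \(\pi/2\), so the expected number of candidates is \((\pi/2)\bar\lambda_B\). Each candidate costs one gradient query to evaluate \(\lambda_t\), with acceptance probability \(\min\{1,\lambda_t/\bar\lambda_B\}\), and one additional query caches \(\nabla V\) at the reference. This gives the stated bound \(1+(\pi/2)\bar\lambda_B\).
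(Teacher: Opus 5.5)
Your proposal is correct and follows the paper's proof essentially step by step. Both arguments use a uniform-in-$y$ stationary tail bound for the rate (from norm concentration, the Lipschitz bound on $\nabla r$, and a one-dimensional Gaussian tail), make it time-uniform via the fixed-reference invariance of \cref{persistent:lem:fixed-invariance}, bound the failure of a common Poisson random measure coupling by the integrated excess hazard, and count queries by ordinary thinning. Two small remarks, neither affecting correctness: your fixed-$e$ argument in fact shows that $(W_t,P_t)$ is independent of the reference at each fixed $t$, so your first claim that the joint law is the product was right after all; and the ``Campbell'' step is really the compensator formula for the predictable rate $\lambda_{t-}$.
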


\begin{proof}
Conditionally on \(Y=y\), the normalized position is one-strongly
log-concave, while \(G\) and the momentum are Gaussian.  Their norm tails,
the bound \(\norm{\nabla r(w)-\nabla r(e)}\le L\norm{w-e}\), and a
one-dimensional Gaussian tail give, uniformly in \(y\),
\[
 \Pr\{\lambda_t>CL(\sqrt{du}+u)\}\le6e^{-u}.
\]
Invariance from \cref{persistent:lem:fixed-invariance} makes the bound valid
at every time.  Integrating the tail over the quarter-period bounds the
expected excess hazard by \(C(1+L\sqrt d)e^{-cB}\).
Using the same Poisson random measure for the exact and capped clocks
couples the paths until the first excess event.  Conditioning on \(Y=y\)
and then integrating proves the claim for the measurable random center.
Ordinary thinning gives the query bound.
\end{proof}

\begin{proposition}[Approximate RGO]
\label{persistent:prop:rgo-compiler}
There is a universal \(K_{\rm rgo}\) such that the following holds.  Given
\(y\), \(0<\delta<1/2\), and a point \(\widehat x_y\) satisfying
\[
 \norm{y-\eta\nabla V(\widehat x_y)-\widehat x_y}
 \le\sqrt{d\eta},
\]
if
\[
 \eta^{-1}\ge K_{\rm rgo}\beta
 \{\sqrt{d\log(1/\delta)}+\log(1/\delta)\},
\]
the first-order rejection sampler of
\citet[Theorems~3.1 and D.1]{chen2026high} returns
\(\widehat q_y\) with
\(\chi^2(q_y\Vert\widehat q_y)\le\delta^2\) and
\(\chi^2(\widehat q_y\Vert q_y)\le\delta^2\), using at most
\(K_{\rm rgo}\) expected oracle queries after construction of the mode
certificate.
\end{proposition}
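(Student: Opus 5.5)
This statement is word-for-word the same as \cref{flux:prop:rgo}, with the incoming position replaced by the mode certificate \(\widehat x_y\) from \cref{persistent:lem:mode-cap}. The plan is therefore to obtain it directly from that proposition, that is, from the first-order rejection sampler of \citet[Theorems~3.1 and D.1]{chen2026high}. The only work is to check that the hypotheses match and that nothing in the persistent chain changes what the sampler receives.

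\emph{Step 1: rewrite the input in the sampler's normalization.} Fix \(y\) and set \(V_y(x)=V(x)+\norm{x-y}^2/(2\eta)\).
\begin{itemize}
\item Since \(\nabla V_y(x)=\nabla V(x)+(x-y)/\eta\), the certificate \(\norm{y-\eta\nabla V(\widehat x_y)-\widehat x_y}\le\sqrt{d\eta}\) is the same as \(\norm{\nabla V_y(\widehat x_y)}\le\sqrt{d/\eta}\).
\item Strong convexity of \(V_y\) with parameter \(\alpha+\eta^{-1}\) then gives \(\norm{\widehat x_y-x_*(y)}\le\sqrt{d\eta}\). This is the \(O(\sqrt d)\) normalized offset of the proposal center from the true mode that the cited theorems allow.
\item The step-size hypothesis \(\eta^{-1}\ge K_{\rm rgo}\beta\{\sqrt{d\log(1/\delta)}+\log(1/\delta)\}\) is exactly the smoothness-to-curvature regime required there, with \(K_{\rm rgo}\) taken as the larger of the constants in the two cited theorems.
\end{itemize}

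\emph{Step 2: read off the two conclusions.}
\begin{itemize}
\item Theorem~3.1 of the cited work gives the forward bound \(\chi^2(\widehat q_y\Vert q_y)\le\delta^2\) together with an \(O(1)\) expected number of proposals.
\item Its Theorem~D.1 gives the reverse bound \(\chi^2(q_y\Vert\widehat q_y)\le\delta^2\).
\item Each proposal costs a constant number of oracle queries to evaluate the acceptance ratio. The expected query count after the certificate is built is therefore at most a universal \(K_{\rm rgo}\).
\end{itemize}

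\emph{Step 3: check that the persistent setting does not disturb this.} The RGO is called only on refresh iterations, which is where it is needed. Its input depends only on \(y\), through \(\widehat x_y\), and on fresh randomness. The bounds are pointwise in \(y\), so they hold conditionally on the whole pre-row history, which is how \cref{persistent:thm:finite-compiler} uses them.

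\emph{Main obstacle.} The one point needing care is matching conventions with the cited theorems in Step 1. One must confirm that their certificate tolerance is measured in the same \(\sqrt{d\eta}\) normalization, and that both directed \(\chi^2\) bounds hold at the same \(\delta\) under one step-size condition. If their tolerance is stated in normalized coordinates \((x-y)/\sqrt\eta\), dividing the certificate by \(\sqrt\eta\) converts it, at the cost of adjusting \(K_{\rm rgo}\) by a universal factor.
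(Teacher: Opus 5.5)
Your proposal is correct and matches the paper, which gives no separate argument here: exactly as for its twin \cref{flux:prop:rgo}, the result is imported from \citet[Theorems~3.1 and D.1]{chen2026high}. So checking that the certificate (equivalently \(\norm{\nabla V_y(\widehat x_y)}\le\sqrt{d/\eta}\)), the step-size gate, and the \(O(1)\) expected query count match the cited hypotheses is the whole proof. One minor wording slip: the point \(\widehat x\) in \cref{flux:prop:rgo} is also a solver certificate, not the incoming position, which is only where gradient descent starts, so the two statements are literally identical.
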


\subsubsection{Proof of the Implementation Theorem}

\begin{proof}[Proof of \cref{persistent:thm:finite-compiler}]
Let \(A_\eps=\Delta+\log(8/\eps)\) and take
\[
 \delta_{\rm rgo}=\frac{\eps}{8(1+N\rho)},\qquad
 B=C\left\{\Delta+\log\frac{8N}{\eps}
                  +\log(2+d+\kappa)\right\}.
\]
The exact comparison chain uses the measurable shifted reference in
\cref{persistent:eq:shifted-reference}, without mode or clock caps, and an
exact refresh.  \Cref{persistent:lem:shifted-transfer} lets us apply
\cref{persistent:prop:ideal-contraction}; the choice of \(N\) makes its
distance from stationarity at most \(\eps/2\).

The implementable algorithm first reflects \(y\).  It then runs the
\(y\)-initialized mode solver for at most \(J_B\) steps.  On failure it
returns the complete pre-reflection state \((x,y,p)\).  On the refresh
branch it invokes \cref{persistent:prop:rgo-compiler} with tolerance
\(\delta_{\rm rgo}\); otherwise it uses the shifted Gaussian reference and
the clock cap in \cref{persistent:lem:capped-clock}.

Couple both chains until their first disagreement.  The complete error
allocation is
\[
\begin{array}{c|c}
\text{source}&\text{total TV contribution}\\ \hline
\text{ideal mixing}&\eps/2\\
\text{mode-solver cap}&\eps/8\\
\text{event-clock cap}&\eps/8\\
\text{approximate refresh}&N\rho\delta_{\rm rgo}/2\le\eps/16 .
\end{array}
\]
For the mode and clock errors,
\cref{persistent:lem:warm-path-transfer} converts stationary exponential
tails into path probabilities under the warm exact chain.  The RGO error
is uniform in \(y\), so maximal coupling and kernel telescoping apply
directly.  The entries sum to less than \(\eps\), and projection to \(X\)
cannot increase total variation.

We next check the local conditions.  With
\(\tau=2+\log\kappa\) and
\(L=\beta\eta=c/(\sqrt{d\Lambda_\eps}+\Lambda_\eps)\), reducing \(c\)
gives \(\tau L\lesssim1\) and \(\tau L^2d\lesssim1\).
Also \(\log(1/\delta_{\rm rgo})\lesssim\Lambda_\eps\), so the same step
size verifies the gate in \cref{persistent:prop:rgo-compiler}.

The expected number of refresh calls is
\(N\rho\le C\ell_gA_\eps\).  Each iteration uses at most
\[
 C\{J_B+1+L(\sqrt{dB}+B)+\rho K_{\rm rgo}\}
\]
expected oracle queries.  Here \(J_B\) and \(\ell_g\) are logarithmic, and
the definition of \(\eta\) makes \(L(\sqrt{dB}+B)=O(1)\).  Substituting
\(g^2=\alpha\eta\) proves the theorem and hence
\cref{thm:persistent-finite}.
\end{proof}

\subsubsection{Affine Form Used by the Quadratic Baseline}

\begin{lemma}[Affine local compiler]
\label{persistent:lem:affine-compiler}
Suppose a conditional potential is
\[
 \Phi_y(x)=R_y(x)+\frac{\omega}{2}\norm{x-y}^2,\qquad
 m\omega I\preceq\nabla^2R_y\preceq L\omega I,\qquad
 0<m\le L\le1.
\]
Let \(x_*(y)\) denote the unique minimizer of \(\Phi_y\), let
\(G\sim\cN(0,I_d)\), and fix \(0<\delta<1/2\).
Let \(\tau=2+\log(L/m)\), and assume
\(\tau L\lesssim1\) and \(\tau L^2d\lesssim1\).
If a measurable point \(\widehat x_y\) satisfies
\[
 \norm{\omega(\widehat x_y-y)+\nabla R_y(\widehat x_y)}
 \le\sqrt{d\omega},
\]
then the reference
\[
 E_y=y-\omega^{-1}\nabla R_y(\widehat x_y)+\omega^{-1/2}G
\]
satisfies the shifted-Gaussian comparison, the local quarter-turn forcing
estimate, and the clock estimate with local smoothness \(L\).  The
certificate is found from \(y\) using
\[
 C\left[1+\log_+
 \frac{\norm{\nabla R_y(y)}}{\sqrt{d\omega}}\right]
\]
oracle queries.  The RGO gate becomes
\[
 1\ge K_{\rm rgo}L
 \{\sqrt{d\log(1/\delta)}+\log(1/\delta)\}.
\]
\end{lemma}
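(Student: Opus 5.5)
The proof reduces the general conditional potential to the normalized form already treated by a change of variables. Set \(w=\sqrt\omega(x-y)\), so that \(\Phi_y\) becomes \(\norm w^2/2+R_y(y+w/\sqrt\omega)\). The residual \(r(w)=R_y(y+w/\sqrt\omega)\) then has Hessian between \(mI\) and \(LI\). This is exactly the setting of \cref{persistent:lem:gaussian-comparison,persistent:lem:shifted-transfer,persistent:lem:capped-clock}, with \(\eta\) replaced by \(\omega^{-1}\), \(\alpha\eta\) by \(m\), and \(\beta\eta\) by \(L\). The quarter-turn generator \cref{eq:persistent-generator} is written only through \(b_y(w)=\nabla_w r\), which is invariant in form under this substitution. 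Likewise, \(q_y^w\otimes\gamma_d\) is replaced by the normalized \(\Phi_y\)-law tensored with \(\gamma_d\). Hence \cref{persistent:lem:fixed-invariance} and the Duhamel identities in \cref{persistent:lem:joint-quarter} transfer verbatim.

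The first concrete step locates the reference center. The map \(x\mapsto\omega(x-y)+\nabla R_y(x)\) is \((1+m)\omega\)-strongly monotone. The certificate therefore gives \(\norm{\widehat x_y-x_*(y)}\le\sqrt{d/\omega}\). Smoothness of \(R_y\) then places the normalized center \(\sqrt\omega\{-\omega^{-1}\nabla R_y(\widehat x_y)\}\) within \(L\sqrt d\) of the normalized mode shift, by the same computation as the remark after \cref{persistent:eq:shifted-reference}. In normalized coordinates, \(E_y\) is therefore \(G+\delta_y\) with \(\norm{\delta_y}\lesssim L\sqrt d\), uniformly and measurably in \(y\).

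Under \(\tau L\lesssim1\) and \(\tau L^2d\lesssim1\), \cref{persistent:lem:gaussian-comparison} gives the \(L^\tau\) and \(\chi^2\) comparisons. The forcing expansion in the proof of \cref{persistent:lem:shifted-transfer} then gives \(\E k_{a,E}^2\lesssim L\,a^\top Da\), which is the local quarter-turn estimate. The clock estimate is \cref{persistent:lem:capped-clock} with these \(L,\delta_y\).

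For the query count, run gradient descent on \(\Phi_y\) from \(y\) with step \(((1+L)\omega)^{-1}\). The condition number is at most \(1+L\le2\), so the gradient norm contracts by a universal factor per step. The initial gradient is \(\nabla R_y(y)\), giving \(C[1+\log_+(\norm{\nabla R_y(y)}/\sqrt{d\omega})]\) queries. The RGO gate follows by rewriting the gate of \cref{persistent:prop:rgo-compiler} in normalized units, where \(\beta\eta\) becomes \(L\). I expect the main obstacle to be confirming that the cited rejection sampler of \citet{chen2026high} is stated for a general composite potential with quadratic part \(\omega\). If it is stated only for \(V+\norm{\cdot}^2/(2\eta)\), I would instead apply it to \(R_y\) with \(\eta=\omega^{-1}\) and smoothness \(L\omega\), which yields the same gate.
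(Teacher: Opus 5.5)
Your proposal is correct and follows essentially the same route as the paper. You rescale by \(\sqrt\omega\) so the residual has Hessian between \(mI\) and \(LI\), and you use strong monotonicity of \(x\mapsto\omega(x-y)+\nabla R_y(x)\) to bound the normalized reference shift by \(L\sqrt d\) (the paper's bound is \(L\sqrt d/(1+m)\)). You then invoke the shifted-Gaussian comparison, the forcing part of the shifted-transfer proof, the clock lemma, condition-number-two gradient descent from \(y\), and the RGO compiler, each with \(\beta\eta\) replaced by \(L\).

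The differences are cosmetic. You center coordinates at \(y\) rather than at \(x_*(y)\), although you measure the shift against the mode anyway. You also spell out why the invariance and Duhamel identities carry over, which the paper leaves implicit here.
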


\begin{proof}
In the coordinate \(z=\sqrt\omega(x-x_*(y))\), the residual has Hessian
between \(mI\) and \(LI\).  Strong monotonicity of
\(x\mapsto\omega(x-y)+\nabla R_y(x)\) bounds the normalized Gaussian shift
by \(L\sqrt d/(1+m)\).  Thus
\cref{persistent:lem:gaussian-comparison,persistent:lem:capped-clock}
apply after replacing \(\beta\eta\) by \(L\).
The proof of \cref{persistent:lem:shifted-transfer} up to its local
quarter-turn estimate applies in these normalized coordinates; lifting
that estimate to the global reflected chain requires the separate scaling
calculation in \cref{persistent:sec:alpha-baseline}.
The condition number of \(\Phi_y\) is at most
\((1+L)/(1+m)\le2\), giving the displayed logarithmic mode cost.
Finally, applying \cref{persistent:prop:rgo-compiler} after the affine
change of variables gives exactly the stated RGO gate.
\end{proof}

\subsection{Quadratic-Baseline Refinement}
\label{persistent:sec:alpha-baseline}

\begin{corollary}[Quadratic-baseline refinement]
\label{cor:persistent-near-gaussian}
For \(0<\eps\le1/2\), if \(\alpha\) is known and the initial position law
\(\nu_0\) satisfies \(D_2(\nu_0\Vert\pi)=O(1)\), a quadratic-baseline
implementable algorithm under the same augmented initialization attains TV error
\(\eps\) with
\(\widetilde O(1+\sqrt{\kappa-1}\,d^{1/4})\) expected oracle queries and
\(\operatorname{polylog}(d,\kappa,1/\eps)\) expected RGO calls.
\end{corollary}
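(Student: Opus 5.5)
The plan is to absorb the known quadratic $\tfrac{\alpha}{2}\norm{x-x_\star}^2$, or more simply $\tfrac{\alpha}{2}\norm{x}^2$ after translation, into the harmonic part of the conditional motion. Then only the non-quadratic residual $R(x)=V(x)-\tfrac\alpha2\norm x^2$ drives events, and the analysis of \cref{sec:persistent} can be rerun with effective smoothness $\beta-\alpha$ in place of $\beta$.

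\emph{Rescaling the conditional.} With the augmented target $\Pi_\eta$ unchanged, the conditional potential reads
\[
\Phi_y(x)=R(x)+\tfrac\alpha2\norm x^2+\tfrac1{2\eta}\norm{x-y}^2
=R(x)+\tfrac{\omega}{2}\norm{x-y_\alpha}^2+\text{const},
\]
where $\omega=\alpha+\eta^{-1}$ and $y_\alpha=y/(1+\alpha\eta)$. This is exactly the form of \cref{persistent:lem:affine-compiler}. The residual Hessian, measured in units of $\omega$, lies between $0$ and $(\beta-\alpha)/\omega$. To meet the requirement $m>0$ of that lemma, I would keep a small portion $\alpha/2$ of the quadratic in $R$. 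This gives $m\asymp\alpha/\omega$ and $L\asymp(\beta-\alpha)/\omega+\alpha/\omega$, and it costs only constants.

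The accelerated branch then runs \cref{alg:persistent-quarter-turn} with harmonic frequency $\sqrt{\omega}$ and time scaled so that it is still a quarter-period. Its reference is $E_y$ from \cref{persistent:lem:affine-compiler}. Exactness follows from the proof of \cref{persistent:lem:fixed-invariance}, since that argument only uses a harmonic flow together with a residual of the form $\Psi$.

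\emph{Contraction.} The outer-chain analysis (\cref{persistent:lem:outer-geometry}, \cref{persistent:lem:affine-control}, \cref{persistent:lem:resolvent}) depends on the conditional only through $\Sigma_y$, $D_y$, and the covariance identities. I would redo these in $\sqrt\omega$-normalized coordinates. The essential scaling check is that the transfer error in \cref{persistent:lem:joint-quarter} is now $C L\, a^\top D_y a$ with $L\asymp(\beta-\alpha)\eta$ rather than $\beta\eta$. The spectral gap, however, is still governed by $g=\sqrt{\alpha\eta}$, because it comes from $\mathsf T$ and the $Y$-marginal and is unchanged. Hence \cref{persistent:prop:ideal-contraction} continues to hold with $\rho\asymp g\sqrt{\ell_g}$, and $\widetilde O(1/g)$ iterations suffice from an $O(1)$ warm start.

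I expect the main obstacle to be this recalibration of the quarter-turn in \cref{persistent:lem:affine-compiler}, where the text explicitly says that separate scaling is needed. The difficulty is that the reflection couples $y$ rather than $y_\alpha$, so the position signal $L_f$ produced by reflection is no longer centered at the harmonic center. I would try first to show that the discrepancy is an affine term of size $\alpha\eta$ times the signal, which the refresh absorbs because $\alpha\eta\le g\rho$.

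\emph{Complexity.} The gate of \cref{persistent:lem:affine-compiler}, $L(\sqrt{d\Lambda}+\Lambda)\lesssim1$, allows $\eta^{-1}\asymp\max\{\alpha,(\beta-\alpha)\sqrt{d\Lambda}\}$. The iteration count is $\widetilde O(1/\sqrt{\alpha\eta})=\widetilde O(1+\sqrt{(\kappa-1)}\,d^{1/4})$. Per-iteration costs are polylogarithmic: the mode cap is governed by a condition number at most $2$, the clock uses $O(1)$ queries, and the refresh has probability $\rho$. The RGO count is $N\rho=\widetilde O(1)$, exactly as in the proof of \cref{persistent:thm:finite-compiler}. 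Finally, the error allocation and the warm-path comparison (\cref{persistent:lem:warm-path-transfer}) carry over verbatim.
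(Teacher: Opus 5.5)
\textbf{Gap: the retained strong-convexity margin.} Your choice of margin removes exactly the improvement the corollary claims. If a fixed fraction $\alpha/2$ of the quadratic stays in $R$, the normalized residual smoothness is $L=(\beta-\alpha/2)/\omega\ge\alpha/(2\omega)$. So $L\asymp\beta/\omega$, which is the same order as $\beta\eta$ in the unrefined analysis, and it does not shrink as $\kappa\to1$.

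The local requirements of the affine compiler are $\tau L\lesssim1$, $\tau L^2d\lesssim1$, the RGO gate, and the clock cost $L(\sqrt{dB}+B)=O(1)$. They force $\alpha/\omega\lesssim\theta:=c/(\sqrt{d\Lambda_\eps}+\Lambda_\eps)$, hence $\alpha\eta\lesssim\theta$. The horizon $1/\sqrt{\alpha\eta}$ is then at least of order $(d\Lambda_\eps)^{1/4}$ even at $\kappa=1$, so you only recover $\widetilde O(\sqrt\kappa\,d^{1/4})$. Your later formula $\eta^{-1}\asymp\max\{\alpha,(\beta-\alpha)\sqrt{d\Lambda}\}$ tacitly uses $L\asymp(\beta-\alpha)\eta$, which contradicts the $L$ you derived.

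The missing idea is that the margin must itself be of order $\theta$. This is affordable because the local lemmas see $m$ only through $\tau=2+\log(L/m)$. The paper takes $R_y(x)=V(x)-\tfrac{\lambda\alpha}{2}\norm{x-y}^2$ with $\lambda=1-\delta_0$ and $\delta_0=C_0\theta$. It fixes $\eta$ by $\alpha\eta=\theta/\{s+(C_0-1)\theta+C_0\theta^2\}$, where $s=\kappa-1$. This choice does three things:
\begin{itemize}
\item it makes $(\beta-\lambda\alpha)/\omega_\eta=\theta$ exactly;
\item it keeps $\beta\eta\le c_0$, which is still needed because the outer-geometry and affine-approximation lemmas involve the original $\beta\eta$ (your proposal does not track this);
\item it gives $L_\star/m_\star=1+s/(C_0\theta)$, so $\tau_\star\lesssim\Lambda_\eps$.
\end{itemize}
Then $1/(\alpha\eta)=s/\theta+C_0-1+C_0\theta$ yields the $1+\sqrt{\kappa-1}\,d^{1/4}$ horizon.

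\textbf{The $y$ versus $y_\alpha$ obstacle is not real.} Your proposed remedy, refresh absorbing an $\alpha\eta$-sized affine discrepancy, is also not how errors enter the resolvent argument. There they must factor as $B_j\Gamma$ with $\norm{B_j}\lesssim\sqrt{\ell_g}$.

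Centering the subtracted quadratic at $y$, as the paper does, keeps the harmonic part $\tfrac{\omega_\eta}{2}\norm{x-y}^2$ centered at $y$. Even with your centering, $L_f$ is centered at the conditional mean, not at the harmonic center, which already depends on the reference. The forcing computation uses only the integration-by-parts identity $\E[Z+b(Z)\mid y]=0$, which holds in any normalized frame.

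What does need checking is the rescaling:
\begin{itemize}
\item the momentum observable becomes $M_f^{(\alpha)}=2\omega^{-1/2}a_f^\top P$;
\item the local deficit obeys $I-\omega\eta\Sigma_y\preceq\omega\eta D_y$ because $\omega\eta\ge1$, so the quarter-turn error is $C\theta\eta\mathcal D_f$;
\item $\omega\eta\asymp1$ bounds the reflection defect of $M_f^{(\alpha)}$ by $C\eta\mathcal R_f$.
\end{itemize}
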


We prove \cref{cor:persistent-near-gaussian} by changing only the quadratic
part integrated exactly by the conditional motion.  The outer reflection
and its gap are unchanged.

The parameter choice below is easiest to read from its design goals.  Write
\(s=\kappa-1\) for the curvature above the known \(\alpha\)-quadratic
baseline, and let \(\theta\) be the desired normalized smoothness of the
residual force.  We choose \(\lambda=1-\delta_0\) and \(\eta\) so that
\begin{enumerate}[leftmargin=*,label=(\roman*)]
\item subtracting the \(\lambda\alpha\)-quadratic leaves residual strong
      convexity \(\delta_0\alpha>0\);
\item the residual smoothness divided by the new harmonic curvature is
      exactly \(\theta\);
\item the original small-step condition remains valid, so the outer
      reflection gap is still governed by \(g=\sqrt{\alpha\eta}\).
\end{enumerate}
The displayed formulas are chosen to meet these requirements; \(C_0\)
provides fixed slack in the first one.

Let \(s=\kappa-1\) and, for a sufficiently small universal \(c\), set
\[
 \Lambda_\eps=\Delta+\log\frac{Cd\kappa}{\eps},
 \qquad
 \theta=\frac{c}{\sqrt{d\Lambda_\eps}+\Lambda_\eps}.
\]
Choose a sufficiently large universal \(C_0\), put
\(\delta_0=C_0\theta\) and \(\lambda=1-\delta_0\), and select \(\eta\) by

\begin{equation}
 \alpha\eta
 =\frac{\theta}
 {s+(C_0-1)\theta+C_0\theta^2}.
 \label{persistent:eq:baseline-step}
\end{equation}
Choosing \(C_0\) first and then reducing \(c\) ensures
\[
 0<\lambda<1,\qquad \beta\eta\le c_0,\qquad \theta\le c_0,
\]
where \(c_0\) is the universal small-step constant in
\cref{thm:persistent-contraction}.  To see the middle inequality, split
into \(s\ge\theta\) and \(s<\theta\) in
\(\beta\eta=(s+1)\theta/
\{s+(C_0-1)\theta+C_0\theta^2\}\).

For the implementable algorithm, put
\[
 g=\sqrt{\alpha\eta},\quad \ell_g=1+\log(1/g),\quad
 \rho\asymp g\sqrt{\ell_g},\quad A_\eps=\Delta+\log(8/\eps),
\]
and choose
\[
 N=\left\lceil\frac{C\sqrt{\ell_g}}gA_\eps\right\rceil,
 \quad \delta_{\rm rgo}=\frac{\eps}{8(1+N\rho)},\quad
 B=C\left\{\Delta+\log\frac{8N}{\eps}+\log(2+d+\kappa)\right\}.
\]
These are the same mixing horizon and error budgets used in
\cref{persistent:thm:finite-compiler}, evaluated at the baseline step
size in \cref{persistent:eq:baseline-step}.

For fixed \(y\), split the conditional potential as
\[
 V(x)+\frac{\norm{x-y}^2}{2\eta}
 =R_y(x)+\frac{\omega_\eta}{2}\norm{x-y}^2,
 \quad
 R_y(x)=V(x)-\frac{\lambda\alpha}{2}\norm{x-y}^2,
 \quad
 \omega_\eta=\lambda\alpha+\eta^{-1}.
\]
The residual curvature satisfies
\[
 \delta_0\alpha I\preceq\nabla^2R_y
 \preceq(\beta-\lambda\alpha)I,
 \qquad
 \frac{\beta-\lambda\alpha}{\omega_\eta}=\theta,
 \qquad \omega_\eta\eta\asymp1.
\]
Thus the harmonic motion integrates both the proximal quadratic and the
known \(\lambda\alpha\)-quadratic baseline; collisions see only the
remaining curvature.

\paragraph{Local validity and implementation.}
In the coordinate \(z=\sqrt{\omega_\eta}(x-y)\), use the same
fixed-reference
process as in \cref{eq:persistent-generator}, with
\[
 b_y^{\rm res}(z)=\omega_\eta^{-1/2}
 \nabla R_y(y+\omega_\eta^{-1/2}z)
\]
in place of \(b_y\).  \Cref{persistent:lem:fixed-invariance} applies after
this affine change of variables, so the process still preserves the
original conditional law.  Its normalized residual curvatures are
\[
 m_\star=\frac{\delta_0\alpha}{\omega_\eta},
 \qquad L_\star=\theta.
\]
If \(\tau_\star=2+\log(L_\star/m_\star)\), then
\[
 \frac{L_\star}{m_\star}
 =\frac{\omega_\eta}{C_0\alpha}
 =1+\frac{s}{C_0\theta},
 \qquad \tau_\star\lesssim\Lambda_\eps.
\]
Consequently \(\tau_\star L_\star\lesssim1\) and
\(\tau_\star L_\star^2d\lesssim1\), after reducing \(c\).
The definition above also gives
\(\log(1/\delta_{\rm rgo})\le C\Lambda_\eps\).  Hence
\[
 K_{\rm rgo}\theta
 \{\sqrt{d\log(1/\delta_{\rm rgo})}
    +\log(1/\delta_{\rm rgo})\}\le1
\]
after another universal reduction of \(c\).  This is exactly the affine
RGO gate in \cref{persistent:lem:affine-compiler}.

The reference and its certificate are
\[
 \norm{\omega_\eta(\widehat x_y-y)+\nabla R_y(\widehat x_y)}
 \le\sqrt{d\omega_\eta},
 \qquad
 E_y=y-\omega_\eta^{-1}\nabla R_y(\widehat x_y)
     +\omega_\eta^{-1/2}G.
\]
In the remainder of this subsection, \(\mathsf S\) denotes the rescaled
conditional quarter-turn averaged over this shifted-Gaussian reference.
\Cref{persistent:lem:affine-compiler} therefore supplies the shifted
Gaussian comparison, event-clock bound, mode solver, and RGO gate with
local smoothness \(L_\star=\theta\).  The stationary mode cap remains
logarithmic: since \(\nabla R_Y(Y)=\nabla V(Y)\) and
\(\omega_\eta\ge\eta^{-1}\),
\[
 \frac{\eta}{d}\E\norm{\nabla V(Y)}^2
 \le\kappa(\beta\eta)+(\beta\eta)^2\le\kappa+1.
\]
The concentration argument in
\cref{persistent:lem:mode-cap} then applies without a polynomial query
factor.

\paragraph{Transfer estimate after rescaling.}
It remains to verify that the ideal contraction theorem sees the same
position--momentum geometry.  Let
\[
 \Sigma_y=\eta^{-1}\Cov(X\mid Y=y),
 \qquad
 D_y=I-\Sigma_y,
 \qquad
 D_z(y)=I-\omega_\eta\eta\,\Sigma_y.
\]
Thus \(\Sigma_y,D_y\) are exactly the conditional covariance quantities
defined in \cref{persistent:sec:graph}, while \(D_z\) is their rescaled
counterpart.  Because \(\omega_\eta\eta\ge1\),
\[
 D_z(y)
 =\omega_\eta\eta D_y-(\omega_\eta\eta-1)I
 \preceq\omega_\eta\eta D_y.
\]
The correctly scaled momentum observable is
\[
 M_f^{(\alpha)}
 =\frac{2}{\sqrt{\omega_\eta}}a_f(Y)^\top P.
\]
Applying the local quarter-turn estimate in the \(z\)-coordinate yields
\[
 \norm{\mathsf SL_f-M_f^{(\alpha)}}^2
 +\norm{\mathsf SM_f^{(\alpha)}+L_f}^2
 \le C\theta\eta\mathcal D_f.
\]
Its defect under the outer reflection is at most
\(C\eta\mathcal R_f\), and
\(\norm{M_f^{(\alpha)}}^2\le C\eta\norm{\nabla f}^2\).
Combining these bounds with
\cref{persistent:eq:affine-control,persistent:lem:outer-geometry} proves
the same position--momentum transfer estimate
\cref{persistent:eq:three-side}.  Momentum parity is unchanged, so
\cref{persistent:cor:factorized-errors,persistent:lem:resolvent} apply
with the same universal constants.  Notice that the outer gap is still
\(g=\sqrt{\alpha\eta}\); only the local collision scale changes from
\(\beta\eta\) to \(\theta\).

\paragraph{Complexity.}
From \cref{persistent:eq:baseline-step},
\[
 \frac1{\alpha\eta}
 =\frac{s}{\theta}+C_0-1+C_0\theta,
\]
and hence
\[
 \frac1{\sqrt{\alpha\eta}}
 \le C\left(1+\frac{\sqrt{\kappa-1}}{\sqrt\theta}\right)
 =\widetilde O\!\left(1+\sqrt{\kappa-1}\,d^{1/4}\right).
\]
Also \(\theta\sqrt d=O(1)\), so the clock uses only
\(\widetilde O(1)\) expected oracle queries per iteration.
The ideal contraction bound and our choice of \(N\) give TV error at most
\(\eps/2\).  With the explicit \(B\) above,
\cref{persistent:lem:warm-path-transfer,persistent:lem:mode-cap,persistent:lem:capped-clock}
bound the two cap errors by \(\eps/8\) each.
Uniform maximal coupling for the approximate refresh contributes at most
\(N\rho\delta_{\rm rgo}/2\le\eps/16\).  Thus the complete error is below
\(\eps\).  The per-iteration mode and clock costs are polylogarithmic,
while \(N\rho\) is polylogarithmic, so the same accounting gives
\[
 \TV(\Law(X_N),\pi)\le\eps,\qquad
 \E N_{\mathrm{qry}}
 =\widetilde O\!\left(1+\sqrt{\kappa-1}\,d^{1/4}\right),
 \qquad
 \E N_{\mathrm{rgo}}=\operatorname{polylog}(d,\kappa,1/\eps)
\]
when \(\Delta=O(1)\).  This proves
\cref{cor:persistent-near-gaussian}.

\section{Proof of the Warm-Start Wrapper}
\label{app:warm-start-removal}

This appendix proves both parts of \cref{thm:warm-removal-distributional}.
For the finite-divergence part, we first state two implementable samplers for the
localized conditionals and a capped interface for the final reflected
sampler.  We then show that the ideal coarse proximal chain reduces a finite
\(D_2\) budget to a constant, couple those steps to an algorithmic chain at
adaptive observation centers, optimize the coarse step size, and transfer the
final guarantee through the coupling.  The last subsection proves the
supplied-point part by constructing the required finite \(D_2\) budget from
one point.

The proof uses two external posterior-sampling results, stated precisely in
\cref{prop:warm-removal-posteriors}, and the standard exact-proximal
hypercontractive inequality stated in
\cref{lem:warm-removal-hypercontractivity}.  All remaining steps---including
the random observation centers, the distinction between ideal coarse and
algorithmic laws, the final error transfer, and the parameter
optimization---are proved below.

At a high level, the wrapper compares two routes.  The direct route applies a
reflected sampler immediately to the initial law.  The coarse route first
constructs two endpoints in parallel for the analysis: an ideal coarse endpoint
\(\nu_m^\dagger\), which is proved to be constant-\(D_2\)-warm, and an
algorithmic endpoint \(\widehat\nu_m\), which is close to
\(\nu_m^\dagger\) in total variation.  The same capped reflected kernel is
then applied to both endpoints.  Only \(\widehat\nu_m\) is generated by the
algorithm; \(\nu_m^\dagger\) is the ideal coarse endpoint used to prove convergence.

\subsection{Posterior Samplers Used by the Wrapper}

At a coarse step size \(u>0\), an observation \(y\) defines the
conditional density
\[
 q_y(x)\propto\exp\!\left\{-V(x)-\frac{\norm{x-y}^2}{2u}\right\}.
\]
Its strong-convexity and smoothness parameters are
\(m_u=\alpha+u^{-1}\) and \(L_u=\beta+u^{-1}\), so its condition number is
\(K_u=L_u/m_u\).

We use two ways to sample this conditional.  The \emph{regular interface}
first constructs a constant-warm law and then applies one of our reflected
samplers.  The \emph{harmonic-base interface} instead starts from a localized
Gaussian and invokes exact-ULD/FORS.  We abbreviate the latter by
``HB'' because its reference is the quadratic, or harmonic, part of the
conditional potential; HB does not mean a heat-bath update.  The regular
interface is advantageous in the intermediate- and large-dimensional
regimes, whereas the HB interface supplies the small-dimensional endpoint.

We will repeatedly turn expected-work routines into total kernels with a
deterministic work limit.  The following elementary observation records why
this does not introduce a polynomial dependence on the requested accuracy.

\begin{lemma}[Logarithmic work truncation]
\label{lem:warm-removal-truncation}
Suppose a routine has a deterministic number of stages and every
random-length operation is one of the following: a geometric trial count
with uniformly positive success probability; a Poisson count with known
mean \(\Lambda\); or a linearly convergent optimization for which
\[
 N(z)\le C\{1+\log(1+H(z)/a)\},\qquad \E H(Z)^2\le M^2
\]
under the comparison law relevant to that call.  For every \(\xi>0\), all
such operations can be stopped at deterministic limits so that the chance
of any stop is at most \(\xi\).  The limits add only logarithmic factors to
the original work, apart from the already charged Poisson mean \(\Lambda\).
Returning a fixed fallback output after a stop changes the output law by at
most \(\xi\) in total variation.
\end{lemma}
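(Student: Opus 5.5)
The plan is to treat the three kinds of random-length operations separately. For each one I pick a deterministic cap at which its individual stopping probability is at most \(\xi/S\), where \(S\) is the deterministic number of stages, times the maximal number of such operations per stage. A union bound over all operations then bounds the probability of any stop by \(\xi\). The total-variation claim is a coupling statement. Run the capped and uncapped routines with the same randomness. They agree on the event that no cap is reached. Returning a fixed fallback output changes only paths inside the stop event, so \(\TV\le\Pr(\text{stop})\le\xi\).

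\emph{Geometric trials.} If each trial succeeds with probability at least \(p_0>0\), uniformly, then \(\Pr(N>k)\le(1-p_0)^k\). The cap \(k=\lceil\log(S/\xi)/p_0\rceil\) is therefore enough. Since \(p_0\) is a universal constant, this is \(O(\log(S/\xi))\) times the mean work.

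\emph{Poisson counts with mean \(\Lambda\).} A Chernoff bound gives \(\Pr(N>2\Lambda+t)\le\e^{-t/3}\), so the cap \(2\Lambda+3\log(S/\xi)\) suffices. The \(2\Lambda\) part is the already charged Poisson mean, and the rest is additive and logarithmic.

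\emph{Optimization calls.} Markov's inequality on \(H^2\) gives \(\Pr(H(Z)>M\sqrt{S/\xi})\le\xi/S\). On the complement, \(N(Z)\le C\{1+\log(1+M\sqrt{S/\xi}/a)\}\). I cap the optimizer at this deterministic number, which is logarithmic in \(M/a\) and \(S/\xi\).

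The step I expect to be the main obstacle is the optimization bound, because the tail is only controlled under the \emph{comparison law} and not under the law the capped routine actually sees. The way I would handle it is to apply the Markov bound under the comparison law along the coupled uncapped path. Up to the first stop, the capped and uncapped executions coincide, so the first stop event is the same event in both. Its probability can therefore be computed in the uncapped (comparison) process, and the union bound needs no separate estimate under the capped law.

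Finally, the logarithmic factors multiply the deterministic stage count. Since the mean work per stage was already at least a constant, each cap is at most a \(\log(S M/(a\xi))\) multiple of the original expected work, plus \(\Lambda\) for the Poisson terms, as claimed.
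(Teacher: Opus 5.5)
Your proposal is correct and follows essentially the same route as the paper. Both arguments cap each operation using its own tail bound: the geometric tail, Poisson concentration, and Markov's inequality on \(H^2\) combined with the logarithmic dependence of \(N\) on \(H\). Both then split \(\xi\) over the deterministic number of operations, take a union bound, and couple the capped and uncapped routines until the first stop to get the total-variation claim. Your Poisson cap \(2\Lambda+3\log(S/\xi)\) is slightly cruder than the paper's \(\Lambda+C\{\sqrt{\Lambda\log(1/\xi)}+\log(1/\xi)\}\), but the two agree up to constants. Your remark that the first-stop event can be evaluated under the uncapped comparison process makes explicit a point the paper leaves implicit.
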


\begin{proof}
Geometric tails require \(O(\log(1/\xi))\) trials.  A Poisson variable of
mean \(\Lambda\) is at most
\(\Lambda+C\{\sqrt{\Lambda\log(1/\xi)}+\log(1/\xi)\}\) except with
probability \(\xi\).  Finally, Markov's inequality gives
\(H(Z)\le M/\sqrt\xi\) outside an event of probability \(\xi\); because
\(N\) depends logarithmically on \(H\), this increases its limit by only
\(O(\log(1/\xi))\).  Divide \(\xi\) by the deterministic number of
operations and use a union bound.  Coupling the capped and uncapped routines
until the first stop proves the total-variation statement.
\end{proof}

\begin{proposition}[Finite posterior interfaces]
\label{prop:warm-removal-posteriors}
Fix a law \(\nu\) with \(D_2(\nu\Vert\pi)\le B\), draw
\(X\sim\nu\) and, independently, a standard Gaussian vector
\(G\sim\cN(0,I_d)\), where \(I_d\) is the \(d\)-dimensional identity; set
\(Y=X+\sqrt uG\), and denote the law of \(Y\) by
\(\lambda\).  Suppose \(u\le1/\alpha\).  For every \(0<\delta<1/2\),
each of the two interfaces above gives a jointly measurable, work-capped
kernel \(y\mapsto\widehat P_y\) such that
\begin{equation}
 \int\TV(\widehat P_y,q_y)\,\lambda(\d y)\le\delta.
 \label{eq:warm-removal-posterior-error}
\end{equation}
For the regular interface, every call uses at most
\begin{equation}
 C_{\rm reg}(K_u,d)
 :=K_u^{7/6}d^{1/6}+K_u^{1/2}d^{1/4}
 \label{eq:warm-removal-regular-posterior}
\end{equation}
oracle queries up to factors logarithmic in \(d,K_u,B,1/\delta\).  For the
HB interface, the corresponding bound is
\begin{equation}
 C_{\rm HB}(K_u,d):=1+K_u^{2/3}d^{1/3}
 \label{eq:warm-removal-harmonic-posterior}
\end{equation}
oracle queries, again up to the same type of logarithmic factors.
\end{proposition}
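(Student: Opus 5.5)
The plan is to treat each interface as a two-stage routine for the single localized target \(q_y\), certify its output error pointwise in \(y\) on a high-probability set of observations, and integrate against \(\lambda\). The key preliminary fact is that \(q_y\) is \(m_u\)-strongly log-concave and \(L_u\)-smooth with condition number \(K_u\le1+\beta u\), so every earlier result applies after replacing \((\alpha,\beta,\kappa)\) by \((m_u,L_u,K_u)\). Both interfaces start from a localized Gaussian. A capped gradient-descent solve, initialized at \(y\), returns an approximate mode \(\widehat x_y\) of the conditional potential \(V_y(x)=V(x)+\norm{x-y}^2/(2u)\), and we put \(\mu_y=\cN(\widehat x_y,L_u^{-1}I_d)\). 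By the same localization estimate used for the supplied-point part (the bound \(D_2\le1+\tfrac d2\log(2\kappa)\) for \(\cN(x_{\rm ref},(2\beta)^{-1}I_d)\)), \(\mu_y\) satisfies \(D_2(\mu_y\Vert q_y)\le C\,d\log K_u\) once the gradient certificate \(\norm{\nabla V_y(\widehat x_y)}\le m_u/\sqrt{L_u}\) holds. The optimization cost is \(O(\sqrt{K_u}\log(\cdot))\) accelerated steps, with the log depending on \(\norm{\nabla V_y(y)}=\norm{\nabla V(y)}\). This quantity has a controlled second moment under \(\lambda\): \(\lambda\) is the \(u\)-smoothing of \(\nu\), and Cauchy--Schwarz with \(D_2(\nu\Vert\pi)\le B\) transfers stationary moment bounds at the price \(\e^{B/2}\), which enters only logarithmically.

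For the \emph{HB interface}, we invoke exact-ULD/FORS \citep{chen2026exact} from \(\mu_y\). Its guarantee from a Gaussian start with \(D_{2q}=O(d\log K_u)\) gives fixed-order R\'enyi, hence TV, accuracy \(\delta/3\) at cost \(\widetilde O(K_u^{2/3}d^{1/3})\). Adding the mode cost gives \eqref{eq:warm-removal-harmonic-posterior}; the constant term covers the regime \(K_u=O(1)\).

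For the \emph{regular interface}, we run smoothed Picard HMC \citep{chen2026smoothed} from \(\mu_y\) only until it reaches a constant-\(D_2\) warm law. Its cost in this regime is \(\widetilde O(K_u^{7/6}d^{1/6})\), and this is exactly the warm-up part of the SPHMC row in \cref{tab:proximal-comparison}. We then apply either reflected sampler to \(q_y\) with \(\Delta=O(1)\): \cref{flux:thm:complexity} or \cref{thm:persistent-finite}, with \((\alpha,\beta)\to(m_u,L_u)\), gives TV error \(\delta/3\) in \(\widetilde O(K_u^{1/2}d^{1/4})\) expected queries. Summing gives \eqref{eq:warm-removal-regular-posterior}.

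All random-length pieces are then made deterministic by \cref{lem:warm-removal-truncation} with \(\xi=\delta/3\): rejection trials, Poisson clocks, and the mode solve, whose \(H\) is \(\norm{\nabla V(Y)}\) with second moment bounded as above. A fallback output after a stop costs at most \(\delta/3\) in TV. This converts expected work into worst-case work up to logarithmic factors and yields a total, jointly measurable kernel, since every stage is a Borel function of \(y\) and the fresh randomness. The triangle inequality over the three error sources gives \eqref{eq:warm-removal-posterior-error}.

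The main obstacle is that the external guarantees are stated for a fixed target and start, whereas here the start \(\mu_y\) is random through \(y\sim\lambda\), and the mode certificate or the warmness can fail for atypical \(y\). I would handle this by defining a good set \(G\) of observations on which the certificate is reached within the cap. On \(G\), the pointwise bounds above apply uniformly. The complement has \(\lambda(G^c)\le\e^{B/2}\lambda_\pi(G^c)^{1/2}\), where \(\lambda_\pi\) is the law of \(Y\) under \(\pi\). Stationary Gaussian concentration makes \(\lambda_\pi(G^c)\) exponentially small in the cap, so choosing the cap of order \(B+\log(1/\delta)\) keeps \(\int_{G^c}\TV\,\d\lambda\le\delta/3\) while adding only a logarithmic cost.
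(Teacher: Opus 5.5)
Your architecture matches the paper's: a certified localized Gaussian start fed to exact-ULD/FORS for the HB interface, and an SPHMC warm-up followed by a reflected finalizer for the regular interface. You also use the $\e^{B/2}$ Cauchy--Schwarz transfer from the stationary observation law to control the random center, and \cref{lem:warm-removal-truncation} to get deterministic work and a Borel kernel. The gap is in the regular-interface handoff. You run SPHMC ``until it reaches a constant-$D_2$ warm law'' and then invoke \cref{flux:thm:complexity} or \cref{thm:persistent-finite} with $\Delta=O(1)$ on its output. The imported result \citep[Theorem~7.1(ii)]{chen2026smoothed} does not make the implementable output warm. It supplies an analysis-only comparison law $\mu_y^\dagger$ with $D_2(\mu_y^\dagger\Vert q_y)=O(1)$, while the algorithmic law $\widehat\mu_y$ is only close to $\mu_y^\dagger$ in total variation. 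Total-variation closeness gives no control of $D_2$. Both pillars of the reflected samplers' accuracy proofs---$\chi^2$ contraction of the ideal chain and the warm-path Cauchy--Schwarz bound on cap failures---use the $D_2$ of the actual input. So those theorems cannot be applied to $\widehat\mu_y$ as you do.

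The missing idea is the common-kernel transfer that the paper packages as \cref{prop:warm-removal-capped-finalizer}. Fix one capped reflected kernel $\widehat K$ whose caps are calibrated against warm inputs, so that $\TV(\mu_y^\dagger\widehat K,q_y)$ is small and its work bound is deterministic for \emph{every} input. Then apply it to $\widehat\mu_y$ and use data processing:
\[
 \TV(\widehat\mu_y\widehat K,q_y)
 \le\TV(\widehat\mu_y,\mu_y^\dagger)+\TV(\mu_y^\dagger\widehat K,q_y).
\]
Your truncation step already yields such a kernel, so the repair is short. It must still be made explicit, and your error budget needs a slot for $\TV(\widehat\mu_y,\mu_y^\dagger)$.

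Two smaller points. First, the localization estimate you cite is an $L^\infty$ density-ratio bound that needs Gaussian variance strictly below $1/L_u$; the paper uses $(2L_u)^{-1}$. At variance exactly $L_u^{-1}$ the quadratic terms in that argument cancel and an unbounded linear term remains. Your R\'enyi bound still follows by direct Gaussian integration: $D_r(\mu_y\Vert q_y)\le\frac d2\log K_u+\frac r2L_u\norm{\widehat x_y-x_y^*}^2$, with $x_y^*$ the minimizer of $V_y$. Under the certificate this is $O(1+d\log K_u)$, not $Cd\log K_u$. Second, the mode solve must be accelerated, as your cost count assumes. Plain gradient descent costs $\widetilde O(K_u)$, which $1+K_u^{2/3}d^{1/3}$ does not absorb when $K_u>d$.
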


\begin{proof}
\paragraph{Regular interface.}
After rescaling a
potential whose condition number is \(K\),
\citet[Theorem~7.1(ii)]{chen2026smoothed} constructs the output law of an
implementable algorithm and an analysis-only comparison law that are
arbitrarily close in total
variation, with the latter having constant order-two R\'enyi divergence
from the target.  At constant target divergence its query cost is
\(\widetilde O(K^{7/6}d^{1/6}+K)\); the second term is absorbed by the
first for \(K,d\ge1\).  Applying either reflected finalizer from
this paper to the comparison law and the same capped Markov kernel to the
algorithmic output law costs another \(\widetilde O(K^{1/2}d^{1/4})\).  Total-
variation data processing and the triangle inequality give
\cref{eq:warm-removal-regular-posterior}.

\paragraph{HB interface.}
First find a point \(z_y\) satisfying the
observable test
\(\norm{\nabla\Phi_y(z_y)}\le m_u/\sqrt{L_u}\), where
\(\Phi_y(x)=V(x)+\norm{x-y}^2/(2u)\).  Strong convexity then gives
\(\sqrt{L_u}\norm{z_y-x_y^*}\le1\), where \(x_y^*\) minimizes
\(\Phi_y\).  The Gaussian
\(G_y=\cN(z_y,(2L_u)^{-1}I_d)\) then obeys
\(D_q(G_y\Vert q_y)=O_q(d\log(\e K_u))\) for every fixed \(q>1\), by the
usual quadratic sandwich and completion of squares.  Applying
\citet[Theorem~3.2(ii)]{chen2026exact} with output order two, initialize
phase space from \(G_y\) tensored with stationary Gaussian momentum.  The
source theorem's initial budget is then
\(d+D_4(G_y\Vert q_y)=\widetilde O(d)\).  Set its R\'enyi tolerance to
\(O(\delta^2)\), project the output to position, and use
\(\TV^2\le D_2/2\).  This gives
\cref{eq:warm-removal-harmonic-posterior}.  Its high-probability query bound
also includes rejected trials.

\paragraph{Random centers and work caps.}
It remains to justify localization at the random center \(y\).  The
conditional objective has gradient \(\nabla V(y)\) at \(y\).  For the
regular interface, the observable test
\(\norm{\nabla\Phi_y(z)}\le\sqrt{2m_ud}\) implies
\(\Phi_y(z)-\min\Phi_y\le d\), as required by the imported warmifier; for
the HB interface we use the test in the preceding paragraph.  Accelerated
optimization started at \(y\) reaches either certificate in
\(\widetilde O(\sqrt{K_u})\) queries, with only logarithmic dependence on
\(1+\norm{y-x_*}\).  This cost is absorbed by both displayed bounds.  Under
the law \(\lambda\), the tail needed to cap this search uniformly follows
from \cref{lem:warm-removal-random-centers} below.  On cap failure the
kernel returns a fixed point; the cap probability is included in
\cref{eq:warm-removal-posterior-error}.
The cap depends only on the known parameters
\(\alpha,\beta,d,B,\delta\), not on the unknown law \(\lambda\).

We next verify the work cap rather than inferring it from an expected-query
statement.  The recursive warmifier has a deterministic number of stages.
Its proof controls the second moments of every center passed to its
gradient-descent subroutines, while its pointwise bound for each such solve
is logarithmic in the center gradient; see
\citet[Theorems~6.1(ii) and~6.5, their proofs, and
Lemmas~D.3--D.4]{chen2026smoothed}.  Thus these
solves satisfy the third case of \cref{lem:warm-removal-truncation}.  Its
remaining rejection loops have geometric tails once their observable mode
certificate holds; this is stated for the RGO component in
\citet[Theorems~3.1 and~D.1]{chen2026high}.  The appended reflected
finalizer in the regular interface has only geometric rejection counts and
Poisson event counts.
The exact-ULD/FORS interface already gives a high-probability query bound.
Applying \cref{lem:warm-removal-truncation} makes the displayed work bounds
deterministic up to their logarithmic factors and charges every stop to
\cref{eq:warm-removal-posterior-error}.  Finally, finite optimization iterates,
Gaussian draws, acceptance tests, and capped stopping rules are Borel
functions of \(y\) and the random seed.  Thus \(\widehat P_y\) is a genuine
Markov kernel rather than a pointwise collection of samplers.
\end{proof}

We next isolate the final-stage property needed by the wrapper.  Stating it
as a capped Markov-kernel interface is important: after the algorithmic and
comparison paths separate, the final sampler must still have a deterministic
work limit.

\begin{proposition}[Capped reflected finalizer]
\label{prop:warm-removal-capped-finalizer}
Fix \(B\ge1\) and \(0<\eps,\xi<1/2\).  Either reflected sampler admits an
implementable realization as a total Markov kernel
\(\widehat K_{B,\eps,\xi}\) such that,
for every \(\nu\) satisfying \(D_2(\nu\Vert\pi)\le B\),
\(\TV(\nu\widehat K_{B,\eps,\xi},\pi)\le\eps+\xi\).  Up to logarithmic
factors, the kernel uses at most
\begin{equation}
 \widetilde O\!\left(
 \sqrt\kappa B(\sqrt{dB}+B)^{1/2}\right)
 \label{eq:warm-removal-finalizer}
\end{equation}
oracle queries.  In particular, constant \(B\) costs
\(\widetilde O(\sqrt\kappa d^{1/4})\).  The same kernel may be applied to a
second, arbitrary input law: its deterministic work bound remains valid,
and total-variation distance between the two inputs cannot increase.
\end{proposition}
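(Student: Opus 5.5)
The plan is to build \(\widehat K_{B,\eps,\xi}\) from the implementable samplers already analyzed: the flux version via \cref{flux:cor:general-warmness}, or the persistent version via \cref{persistent:thm:finite-compiler} with \(\Delta=B\). All parameters are fixed in advance from \((\alpha,\beta,d,B,\eps)\): step size \(\eta=c/\{\beta(\sqrt{dL_B}+L_B)\}\), number of rows \(M\) or \(N\), refresh probability \(\rho\), clock cap \(\bar\lambda\), center or mode cap, and RGO tolerance. None of them depends on the input law. The kernel on \(x\) first draws \(Y_0\sim\cN(x,\eta I_d)\) and, for the persistent version, \(P_0\sim\gamma_d\). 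It then runs the fixed number of rows and outputs the position. This defines a total Markov kernel, since every row is a Borel function of the state and fresh randomness, and cap failures return a fixed state.

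Next I will make the work deterministic. The remaining random-length pieces are the geometric rejection counts inside the approximate RGO (\cref{flux:prop:rgo}) and the Poisson candidate counts with known mean \(T_\gamma\bar\lambda\) or \((\pi/2)\bar\lambda\). The center and mode solvers already have deterministic caps. I will apply \cref{lem:warm-removal-truncation} with total stopping probability \(\xi\), spreading it over the deterministic number of operations. A stop returns a fixed fallback point. This adds only logarithmic factors to the per-row work, so the total deterministic bound is the expected-query bound of \cref{flux:cor:general-warmness}, namely \(\widetilde O(\sqrt\kappa B(\sqrt{dB}+B)^{1/2})\), which is \cref{eq:warm-removal-finalizer}. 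With constant \(B\) this gives \(\widetilde O(\sqrt\kappa d^{1/4})\).

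For accuracy, fix \(\nu\) with \(D_2(\nu\Vert\pi)\le B\). Without truncation, the earlier theorems give \(\TV\le\eps\); there the coupling error to the ideal chain was allocated using the warm-path lemma \cref{persistent:lem:warm-path-transfer} with \(\Delta=B\). \Cref{lem:warm-removal-truncation} couples the truncated and untruncated kernels to agree except with probability \(\xi\), and the triangle inequality gives \(\eps+\xi\).

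For a second, arbitrary input law \(\mu\), the work bound holds trivially because it is deterministic and pointwise in the starting state. Data processing for a Markov kernel gives \(\TV(\mu\widehat K,\nu\widehat K)\le\TV(\mu,\nu)\).

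The main obstacle is that the earlier cap-failure analyses, and the stationary tail bounds feeding \cref{lem:warm-removal-truncation}, are stated as expectations under the warm comparison chain, whereas a deterministic bound must hold for every input. My plan is to require warmness only for the accuracy claim, since the truncation stop probabilities are charged there under \(\nu\). For the work claim I will use only the hard caps. The caps are chosen from known parameters, and their failure probabilities are bounded under the warm law via the warm-path lemma. Under an arbitrary law those failures simply cost accuracy, not work, which is all the statement needs.
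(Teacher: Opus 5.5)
Your proposal is correct and follows essentially the same route as the paper. You invoke the uncapped guarantees (\cref{flux:cor:general-warmness}, or \cref{persistent:thm:finite-compiler} with \(\Delta=B\)), use \cref{lem:warm-removal-truncation} to cap the geometric rejection counts and Poisson candidate counts at an extra \(\xi\) in total variation, and get the last assertion from data processing for a common kernel. The only difference is cosmetic: you rely on the hard center and mode caps already built into the implementable algorithms, whereas the paper routes the mode solves through the third case of the truncation lemma. Your version makes it slightly more transparent that the deterministic work bound does not depend on the input law.
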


\begin{proof}
The uncapped guarantees are \cref{flux:cor:general-warmness} and
\cref{eq:persistent-general-complexity}.  The implementable algorithms
have a fixed outer horizon.  Their mode solves have logarithmic pointwise
work bounds and warm-path second-moment control, their rejection counts are
geometric, and their event proposal counts are Poisson with the means
already included in their query-complexity bounds.  Hence
\cref{lem:warm-removal-truncation} applies.  For any additional error budget
\(\xi\), each implementation can therefore be replaced by a total Markov
kernel with a deterministic work bound of the same polynomial order and with at most
\(\xi\) additional total-variation error from every \(D_2\)-warm input.
For the flux method the required estimates are in
\cref{flux:prop:finite-compiler,flux:lem:clock-coupling}; for the
persistent method it follows from
\cref{persistent:lem:warm-path-transfer,persistent:lem:mode-cap,persistent:lem:capped-clock}
together with the same elementary Poisson and
geometric tail bounds.  We use this capped version below so that the query bound remains valid even if the algorithmic and comparison paths have
already separated.
The final assertion is the data-processing inequality for a common Markov
kernel.
\end{proof}

\subsection{Ideal Coarse Proximal Contraction}

Let \(S_u\) denote one exact proximal Gibbs update: observe
\(Y=X+\sqrt uG\), then draw \(X'\sim q_Y\).  The kernel is
\(\pi\)-reversible.  The following standard hypercontractive form makes its
warm-start effect explicit.

\begin{lemma}[Exact proximal warmification]
\label{lem:warm-removal-hypercontractivity}
For every density \(f=\d\nu/\d\pi\), every \(p>1\), and every \(u>0\),
\[
 \norm{S_uf}_{1+(1+\alpha u)^2(p-1)}\le\norm{f}_p.
\]
Consequently, if \(G_m=\prod_{j=1}^m(1+\alpha u_j)\), then
\begin{equation}
 D_2(\nu S_{u_1}\cdots S_{u_m}\Vert\pi)
 \le\frac{2D_2(\nu\Vert\pi)}{1+G_m^2}.
 \label{eq:warm-removal-d2-conversion}
\end{equation}
\end{lemma}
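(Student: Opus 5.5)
The plan is to factor one exact proximal update into its two Gaussian-channel halves and prove an order-improving (hypercontractive) estimate for each half separately. Write \(\pi^Y=\pi*\cN(0,uI_d)\) for the \(Y\)-marginal. Because \(S_u\) is \(\pi\)-reversible, the density ratio of \(\nu S_u\) with respect to \(\pi\) is \(S_uf\), and it factors as \(S_uf=\mathsf A^*\mathsf Af\), where
\[
\mathsf Af(y)=\E[f(X)\mid Y=y],\qquad \mathsf A^*g(x)=\E[g(Y)\mid X=x].
\]
Here \(\mathsf Af=\d(\nu*\cN(0,uI_d))/\d\pi^Y\) is the ratio after the forward channel, and \(\mathsf A^*\) gives the ratio after the backward (RGO) channel. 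It therefore suffices to show two bounds, one for each half:
\[
\norm{\mathsf Af}_{L^{q}(\pi^Y)}\le\norm f_{L^p(\pi)},\qquad q-1=(1+\alpha u)(p-1),
\]
\[
\norm{\mathsf A^*g}_{L^{r}(\pi)}\le\norm g_{L^q(\pi^Y)},\qquad r-1=(1+\alpha u)(q-1).
\]
Composing them gives \(r-1=(1+\alpha u)^2(p-1)\), which is the claimed exponent.

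\emph{Forward half.} I would interpolate along the heat flow \(\pi_t=\pi*\cN(0,tI_d)\) for \(t\in[0,u]\), transporting \(\nu_t\) by the same flow. Each \(\pi_t\) satisfies a log-Sobolev inequality with constant \(C_t=\alpha^{-1}+t\), since convolution with a Gaussian adds the constants. Along the joint heat flow, the ratio \(f_t=\d\nu_t/\d\pi_t\) evolves by a \(\pi_t\)-symmetric diffusion whose Dirichlet form is \(\tfrac12\int\norm{\nabla f_t}^2\,\d\pi_t\). Differentiating \(\log\norm{f_t}_{q(t)}\) and applying Gross's argument with the time-dependent LSI gives monotonicity, provided
\[
\frac{q'(t)}{q(t)-1}=\frac{1}{C_t}.
\]
Integrating, \(q(u)-1=(p-1)(1+\alpha u)\).

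\emph{Backward half.} I would run the reverse-time SDE from \(\pi^Y\) back to \(\pi\). This is the backward heat flow, whose marginals are the same \(\pi_t\) traversed in reverse, with conditional laws given by the RGO. Its generator has the same Dirichlet form \(\tfrac12\int\norm{\nabla\cdot}^2\,\d\pi_t\) relative to \(\pi_t\), and the reversed ratio evolves accordingly. The same Gross computation, with LSI constants \(C_t\) now decreasing from \(\alpha^{-1}+u\) to \(\alpha^{-1}\), again multiplies \(q-1\) by
\[
\exp\int_0^u\frac{\d t}{C_t}=1+\alpha u.
\]

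\emph{Main obstacle.} I expect the backward half to be the hardest step: justifying the reverse-time Dirichlet-form identity and the differentiation of \(L^{q(t)}\) norms rigorously. I would first work with smooth, compactly supported perturbations of \(f\) and mollified \(V\), then pass to the limit by monotone convergence. Alternatively, I would cite the simultaneous forward/backward heat-flow Rényi analysis of \citet{chen2022improved}, Theorem~3, which performs exactly this two-sided computation.

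\emph{Consequence.} Iterating the one-step bound gives \(\norm{f_m}_r\le\norm f_p\) with \(r-1=G_m^2(p-1)\), where \(f_m\) is the density ratio after \(m\) steps. Choose \(p-1=G_m^{-2}\), so that \(r=2\). Using \(D_s=\tfrac{s}{s-1}\log\norm{\cdot}_s\),
\[
D_2(\nu S_{u_1}\cdots S_{u_m}\Vert\pi)=2\log\norm{f_m}_2\le2\log\norm f_p=\frac{2(p-1)}{p}D_p(\nu\Vert\pi).
\]
Here \(\tfrac{p-1}{p}=\tfrac1{1+G_m^2}\). Since \(p\le2\) and Rényi divergence is monotone in its order, \(D_p\le D_2\), which yields \cref{eq:warm-removal-d2-conversion}.
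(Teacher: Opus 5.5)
Your argument is correct. It shares the paper's skeleton: the same factorization of one Gibbs row into the observation channel and the RGO channel, the same multiplication of the two gains $c=1+\alpha u$ on the excess exponent, and the same final conversion. Your appeal to monotonicity of $D_s$ in $s$ is equivalent to the paper's interpolation between $L^1(\pi)$ and $L^2(\pi)$.

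The difference is in how the two half-step estimates are obtained. The paper imports the estimate for your forward map $\mathsf A$ from \citet[Theorem~1.2]{kook2025zeroth} and gets the RGO estimate by H\"older duality. You instead prove the forward estimate directly by Gross's argument with $C_t=\alpha^{-1}+t$, which makes the lemma self-contained. You then prove the backward estimate by a second Gross computation along the reverse-time flow. That second computation does work, because $\mu_s$ and $\pi_{u-s}$ solve the same Fokker--Planck equation and the drift terms cancel, but it is unnecessary. $\mathsf A^*$ is the adjoint of $\mathsf A$ between $L^2(\pi)$ and $L^2(\pi^Y)$, and $\abs{\mathsf Af}\le\mathsf A\abs f$ extends your bound from densities to all of $L^p(\pi)$. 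Hence the contraction $\mathsf A:L^p(\pi)\to L^q(\pi^Y)$ with $q-1=c(p-1)$ dualizes to $\mathsf A^*:L^{q'}(\pi^Y)\to L^{p'}(\pi)$ with $p'-1=1/(p-1)=c(q'-1)$, primes denoting H\"older conjugates. This is exactly your backward bound with the same gain, for every input exponent, so the step you flag as the main obstacle costs two lines.

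Two smaller points. First, along the simultaneous heat flow the ratio actually satisfies $\partial_tf_t=\tfrac12\Delta f_t+\nabla\log\pi_t\cdot\nabla f_t$, which is not the $\pi_t$-symmetric generator. What Gross's argument needs is $\tfrac{\d}{\d t}\int f_t^q\,\d\pi_t=-\tfrac{q(q-1)}2\int f_t^{q-2}\norm{\nabla f_t}^2\,\d\pi_t$, and this does hold because the $\partial_t\pi_t$ term cancels the extra drift. Second, your fallback citation would not suffice. The paper uses \citet[Theorem~3]{chen2022improved} as a fixed-order R\'enyi contraction. At order two such a contraction gives only $D_2(\nu\Vert\pi)/G_m\ge2D_2(\nu\Vert\pi)/(1+G_m^2)$, with no gain in the exponent. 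Only its reverse-flow machinery carries over to your argument.
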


\begin{proof}
Let \(\pi_u\) be the observation marginal and define
\(A_u^*f(y)=\E[f(X)\mid Y=y]\), which maps density ratios on \(\pi\) to
density ratios on \(\pi_u\).  Its adjoint is
\(A_ug(x)=\E[g(Y)\mid X=x]\), and one Gibbs row has density operator
\(S_u=A_uA_u^*\).  Put \(c_u=1+\alpha u\).  The two-measure heat-flow
inequality of \citet[Theorem~1.2]{kook2025zeroth} states that
\(A_u^*:L^{p_0}(\pi)\to L^{p_1}(\pi_u)\) is a contraction whenever
\(p_1-1=c_u(p_0-1)\).  Duality gives the analogous gain for \(A_u\):
\(A_u:L^{p_1}(\pi_u)\to L^{p_2}(\pi)\) is a contraction when
\(p_2-1=c_u(p_1-1)\).  To see the direction of this second relation, the
conjugate-exponent form of the first one is
\(p_0'-1=c_u(p_1'-1)\), where a prime denotes the H\"older conjugate.

Thus the exponent flow through one Gibbs row is
\[
 L^{p_0}(\pi)
 \xrightarrow{\ A_u^*\ }
 L^{p_1}(\pi_u)
 \xrightarrow{\ A_u\ }
 L^{p_2}(\pi),
 \qquad
 p_2-1=c_u^2(p_0-1).
\]
The two excess-exponent gains therefore multiply and prove the first claim.
After iteration it gives
\(\norm{S_{u_1}\cdots S_{u_m}f}_2\le\norm f_{1+G_m^{-2}}\).

For \(p=1+G_m^{-2}\), interpolation between \(L^1(\pi)\) and
\(L^2(\pi)\), using \(\norm f_1=1\), gives
\(\log\norm f_p\le (p-1)D_2(\nu\Vert\pi)/p\).  Since the output
order-two divergence is twice the logarithm of its \(L^2\) density norm,
\cref{eq:warm-removal-d2-conversion} follows.
\end{proof}

Set \(r=\beta u\) and restrict \(1\le r\le\kappa\).  Then
\(\alpha u=r/\kappa\), and
\(\log(1+r/\kappa)\asymp r/\kappa\).  Therefore
\cref{eq:warm-removal-d2-conversion} makes the ideal coarse endpoint
constant-\(D_2\) warm after
\begin{equation}
 m=O\!\left(\frac\kappa r\log(\e B)\right)
 \label{eq:warm-removal-row-count}
\end{equation}
equal-step-size updates.  Notice also that their conditional condition number
is
\begin{equation}
 K_u=\frac{\beta+u^{-1}}{\alpha+u^{-1}}
     =\frac{\kappa(1+r)}{\kappa+r}\le1+r\le2r.
 \label{eq:warm-removal-conditional-kappa}
\end{equation}

\subsection{Random Centers and the Algorithmic Path}

The posterior centers in the preceding chain are adaptive: the next center
is sampled from the current position.  We now verify that the conditional
interfaces can be applied without assuming that these centers are fixed or
uniformly bounded.

\begin{lemma}[Random-center and path accounting]
\label{lem:warm-removal-random-centers}
Let \(\nu_0\) satisfy \(D_2(\nu_0\Vert\pi)\le B\), and run \(m\) exact
proximal updates with step sizes at most \(1/\alpha\).  Then every position law
\(\nu_j\) satisfies
\[
 \E_{\nu_j}\norm{X-x_*}^2\le\frac{C(d+B)}\alpha.
\]
Moreover, all observation-dependent mode searches along the path can be
capped so that the probability of any cap failure is at most \(\delta\),
while their query cost changes only by a factor logarithmic in
\(d,\kappa,B,m,1/\delta\).
\end{lemma}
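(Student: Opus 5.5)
The proof has two parts: a moment bound that holds uniformly along the exact path, and a capping argument driven by that bound.

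\emph{Moment bound.} I would first transfer a stationary moment bound to the warm laws \(\nu_j\). Each exact proximal update \(S_u\) is \(\pi\)-reversible, so data processing gives \(D_2(\nu_j\Vert\pi)\le D_2(\nu_0\Vert\pi)\le B\) for every \(j\). Under \(\pi\), strong log-concavity gives Gaussian concentration of \(\norm{X-x_*}\) around a scale at most \(\sqrt{d/\alpha}\). Concretely, for \(\lambda\) small relative to \(\alpha\),
\[
 \E_\pi \e^{\lambda\alpha\norm{X-x_*}^2}\le \e^{C\lambda d}.
\]
I would then apply the Donsker--Varadhan (or Rényi--Hölder) change of measure. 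Cauchy--Schwarz with \(\norm{\d\nu_j/\d\pi}_{L^2(\pi)}=\e^{D_2/2}\le \e^{B/2}\) gives
\[
 \E_{\nu_j}\e^{\lambda\alpha\norm{X-x_*}^2/2}\le \e^{B/2}\,\e^{C\lambda d/2}.
\]
Jensen's inequality then yields \(\E_{\nu_j}\norm{X-x_*}^2\le C(d+B)/\alpha\). The same argument also gives exponential tails of \(\norm{X-x_*}\) under \(\nu_j\), at level \(\sqrt{(d+B+t)/\alpha}\) outside probability \(\e^{-t}\).

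\emph{Observation centers.} The center is \(Y=X+\sqrt{u}G\) with \(u\le 1/\alpha\). Combining the previous bound with the Gaussian norm tail gives \(\E\norm{Y-x_*}^2\le C(d+B)/\alpha\). Smoothness then gives the gradient bound
\[
 \norm{\nabla\Phi_Y(Y)}=\norm{\nabla V(Y)}\le\beta\norm{Y-x_*},
\]
so \(\E\norm{\nabla V(Y)}^2\le C\beta^2(d+B)/\alpha\).

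\emph{Capping the mode searches.} Each search (gradient descent or accelerated descent on the conditional objective, started at \(Y\)) has a pointwise bound of the form \(N\le C_u\{1+\log(1+H/a)\}\). Here \(H=\norm{\nabla V(Y)}\), \(a\) is the certificate threshold (\(\sqrt{2m_ud}\) or \(m_u/\sqrt{L_u}\)), and \(C_u\) is at most \(\widetilde O(\sqrt{K_u})\). This is exactly the third case of \cref{lem:warm-removal-truncation}, with \(M^2=C\beta^2(d+B)/\alpha\).

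I would set the per-search failure budget to \(\delta/m\) and apply Markov's inequality. This caps \(H\) at \(M\sqrt{m/\delta}\) and hence caps \(N\) at
\[
 C_u\bigl\{1+\log\bigl(1+M\sqrt{m/\delta}/a\bigr)\bigr\}.
\]
Since \(M/a\) is polynomial in \(d,\kappa,B\) (using \(m_u\ge\alpha\)), the cap is the original cost times a factor logarithmic in \(d,\kappa,B,m,1/\delta\). A union bound over the \(m\) steps gives total failure probability at most \(\delta\). The cap depends only on \(\alpha,\beta,d,B,m,\delta\), which settles measurability and makes the kernel total.

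\emph{Main obstacle.} The hard step is that the algorithmic path is not the exact path: its centers come from approximate posterior draws, so their laws need not satisfy the warm bound. I would handle this by stating the moment bound only for the exact comparison path \(\nu_j\). The algorithmic path is then coupled to it until the first disagreement, whose probability is charged to \cref{eq:warm-removal-posterior-error}. On the agreement event the centers coincide, so the capped failure probability computed under the exact laws is the relevant one, and the cap itself is deterministic regardless of agreement.
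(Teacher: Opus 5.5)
Your proof is correct, but it reaches both conclusions by a different route than the paper.

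\emph{Moment bound.} The paper passes from $D_2(\nu_j\Vert\pi)\le B$ to $\KL(\nu_j\Vert\pi)\le B$. It then applies Talagrand's $T_2$ inequality to get $W_2^2(\nu_j,\pi)\le 2B/\alpha$, and combines an optimal coupling with $\E_\pi\norm{X-x_*}^2\le d/\alpha$. You instead use the sub-Gaussian moment generating function of $\norm{X-x_*}^2$ under $\pi$ and a Cauchy--Schwarz change of measure with $\norm{\d\nu_j/\d\pi}_{L^2(\pi)}=\e^{D_2/2}$. This gives the same bound and, as a by-product, exponential tails under each $\nu_j$.

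\emph{Caps.} The paper works at the level of the whole adaptive path. The warm and stationary path laws have likelihood ratio $\frac{\d\nu_0}{\d\pi}(X_0)$, so path events transfer via $\mathbb P_{\nu_0}^{(m)}(E)\le\e^{B/2}\mathbb P_\pi^{(m)}(E)^{1/2}$. The paper then picks a threshold with stationary failure probability $\e^{-B}\delta^2/(4m^2)$, using stationary exponential tails of $\norm{Y-x_*}$. You observe that a union bound over the $m$ searches needs only the marginal law of each center. Markov's inequality on $\E\norm{\nabla V(Y_j)}^2$ with per-search budget $\delta/m$ then suffices, which is exactly the third case of \cref{lem:warm-removal-truncation}.

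\emph{What each buys.} Your argument is more elementary, since it needs neither $T_2$ nor the path-likelihood comparison. The price is that your threshold on $H$ scales like $\sqrt{m/\delta}$ rather than $\sqrt{\log(m/\delta)}$. The iteration cap therefore picks up an additive $\tfrac12\log(m/\delta)$ instead of a $\log\log$ term, which is still within the claimed logarithmic factor; your own MGF tails would recover the sharper threshold if wanted. The paper's path inequality is more general, because it controls arbitrary events of the adaptive path. It is the same device reused in \cref{flux:lem:clock-coupling,persistent:lem:warm-path-transfer}, but this lemma does not need it.

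\emph{A small gap in your parenthetical.} For the HB certificate $a=m_u/\sqrt{L_u}$, the bound $m_u\ge\alpha$ alone does not control $M/a$. You also need $m_u/\sqrt{L_u}=\sqrt{m_u}\,K_u^{-1/2}\ge\sqrt{\alpha/\kappa}$, which holds because $K_u\le\kappa$. This gives $M/a\le C\kappa^{3/2}\sqrt{d+B}$, so your conclusion stands.
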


\begin{proof}
Every exact proximal kernel preserves \(\pi\), so data processing gives
\(D_2(\nu_j\Vert\pi)\le B\).  Hence
\(\KL(\nu_j\Vert\pi)\le B\), and the \(T_2\) inequality for an
\(\alpha\)-strongly log-concave law gives
\(W_2^2(\nu_j,\pi)\le2B/\alpha\).  Combining an optimal coupling with
\(\E_\pi\norm{X-x_*}^2\le d/\alpha\) proves the moment bound.  Since
\(Y=X+\sqrt uG\) and \(u\le1/\alpha\), the same order of control holds for
the observation.

For the tail statement, let \(\mathbb P_{\nu_0}^{(m)}\) be the law of the
entire exact path, including its observations, and let
\(\mathbb P_\pi^{(m)}\) be the law of the same path started at stationarity.
Because their transition kernels agree,
\[
 \frac{\d\mathbb P_{\nu_0}^{(m)}}
      {\d\mathbb P_\pi^{(m)}}
 =\frac{\d\nu_0}{\d\pi}(X_0),
 \qquad
 \mathbb P_{\nu_0}^{(m)}(E)
 \le\e^{B/2}\mathbb P_\pi^{(m)}(E)^{1/2}
\]
for every path event \(E\), where the second inequality is
Cauchy--Schwarz.  Under the stationary path, strong log-concavity and the
independent Gaussian observation give exponential tails for
\(\norm{Y-x_*}\).  Because the conditional mode solver converges
geometrically and its stopping test is observable, a threshold with
stationary failure probability at most
\(\e^{-B}\delta^2/(4m^2)\) increases its iteration limit only
logarithmically.  A union bound followed by the displayed path inequality
makes the probability of any cap failure at most \(\delta\).  This bound is
for the whole adaptive path, so no independence between successive centers
is used, and its threshold is determined entirely by the displayed problem
parameters.
\end{proof}

\begin{lemma}[Finite approximation of the coarse path]
\label{lem:warm-removal-coarse-compiler}
Let \(\nu_m^\dagger=\nu_0S_u^m\) be the ideal coarse endpoint with \(m\) as in
\cref{eq:warm-removal-row-count}.  Using either posterior interface in
\cref{prop:warm-removal-posteriors}, there is an algorithmic endpoint
\(\widehat\nu_m\) such that
\(\TV(\widehat\nu_m,\nu_m^\dagger)\le\delta\).  Its number of oracle
queries is deterministically bounded, up to logarithmic factors, by
\(mC_{\rm reg}(K_u,d)\) or \(mC_{\rm HB}(K_u,d)\), respectively.
\end{lemma}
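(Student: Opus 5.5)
The plan is a step-by-step coupling of the algorithmic coarse path with the ideal path $X_0\to Y_1\to X_1\to\cdots\to X_m$, run until the first disagreement. At each step both chains use the same Gaussian draw $G_j$, so they share the observation $Y_{j+1}=X_j+\sqrt u\,G_j$ whenever their positions agree. The algorithmic chain then draws $X_{j+1}\sim\widehat P_{Y_{j+1}}$, while the ideal chain draws $X_{j+1}\sim q_{Y_{j+1}}$. We couple these two draws maximally, conditionally on $Y_{j+1}$. Since the kernels $y\mapsto\widehat P_y$ from \cref{prop:warm-removal-posteriors} are jointly measurable, we can select these maximal couplings measurably in $y$, exactly as in the flux implementation proof. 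The probability that the chains first separate at step $j$ is then at most $\int\TV(\widehat P_y,q_y)\,\lambda_j(\d y)$, where $\lambda_j$ is the law of the ideal observation $Y_{j+1}$. By a union bound, $\TV(\widehat\nu_m,\nu_m^\dagger)\le\sum_{j<m}\int\TV(\widehat P_y,q_y)\,\lambda_j(\d y)$.

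Next we apply \cref{prop:warm-removal-posteriors} at every step with tolerance $\delta/m$. Its hypothesis asks that $\lambda_j$ be the law of $X+\sqrt uG$ with $D_2(\Law(X)\Vert\pi)\le B$. This holds because the ideal position law is $\nu_j=\nu_0S_u^j$, and by data processing under the $\pi$-invariant kernel $S_u$ we get $D_2(\nu_j\Vert\pi)\le B$. Also $u=r/\beta\le1/\alpha$ because $r\le\kappa$. Summing gives total error at most $\delta$. Each call costs $C_{\rm reg}(K_u,d)$ or $C_{\rm HB}(K_u,d)$ queries, up to factors logarithmic in $d,K_u,B,m/\delta$, and $\log m$ is itself logarithmic by \cref{eq:warm-removal-row-count}. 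So the deterministic total is $m$ times the per-call bound.

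The main obstacle is that the per-call work cap must stay deterministic and must be calibrated to the whole adaptive path, not to one fixed center. The centers $Y_{j+1}$ depend on previous outputs, and after the chains separate the algorithmic centers follow a law we do not control. I would handle this in two parts. First, the caps inside $\widehat P_y$ depend only on the known parameters (as stated in \cref{prop:warm-removal-posteriors}), so the worst-case work bound holds for every input $y$, including after separation. Second, the cap-failure probabilities are already included in \cref{eq:warm-removal-posterior-error}, and they are evaluated under the ideal observation laws $\lambda_j$, which is the only place we need them. To make the tail bound hold uniformly over the $m$ adaptive steps, I would invoke \cref{lem:warm-removal-random-centers} with failure budget $\delta/2$, using its warm-path Cauchy--Schwarz bound so that no independence between successive centers is required. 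The remaining $\delta/2$ goes to the sampling errors, i.e., each call uses tolerance $\delta/(2m)$.
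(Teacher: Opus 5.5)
Your proposal is correct and follows the paper's proof essentially step for step. Both arguments use shared Gaussians $G_j$ and a measurable maximal coupling at the common observation. Both bound the first-mismatch probability in each row by the unconditional ideal-path average $\E\,\TV(\widehat P_{Y_j},q_{Y_j})\le\delta/m$, which avoids conditioning on agreement. Both use work caps that depend only on known parameters, so the deterministic query bound survives separation, and both invoke \cref{lem:warm-removal-random-centers} to justify choosing all the adaptive caps at once.

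The one difference is your separate $\delta/2$ reservation for cap failures. It is harmless but redundant, because \cref{eq:warm-removal-posterior-error} already charges all mode and work caps to the per-row budget.
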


\begin{proof}
The row-wise coupling has the following form as long as the two paths have
not yet disagreed:
\[
\begin{array}{ccccc}
 X_{j-1}^\dagger
 &\xrightarrow{\;Y_j^\dagger=X_{j-1}^\dagger+\sqrt uG_j\;}
 &Y_j^\dagger
 &\xrightarrow{\;q_{Y_j^\dagger}\;}
 &X_j^\dagger,\\[0.25em]
 \widehat X_{j-1}
 &\xrightarrow{\;\widehat Y_j=\widehat X_{j-1}+\sqrt uG_j\;}
 &\widehat Y_j
 &\xrightarrow{\;\widehat P_{j,\widehat Y_j}\;}
 &\widehat X_j.
\end{array}
\]
The shared Gaussian \(G_j\) makes
\(Y_j^\dagger=\widehat Y_j\) whenever
\(X_{j-1}^\dagger=\widehat X_{j-1}\).  At that common observation, we use a
maximal coupling of the two conditional draws.

At row \(j\), apply \cref{prop:warm-removal-posteriors} with total error
budget \(\delta/m\); this budget includes the posterior approximation and
all of its mode and work caps.  Couple the exact and approximate
conditional draws maximally whenever the two paths have agreed so far.  To
avoid conditioning on the agreement event, first generate the entire ideal coarse
path and let \(e_j(y)=\TV(\widehat P_{j,y},q_y)\).  The probability
that the first mismatch occurs in row \(j\) is at most
\(\E[1_{\text{no earlier mismatch}}e_j(Y_j)]\), which is bounded by
the unconditional exact-path average
\(\E e_j(Y_j)\le\delta/m\).  Summing over the rows gives total mismatch at
most \(\delta\).  The simultaneous choice of all random-center caps is
justified by \cref{lem:warm-removal-random-centers}.

This is ordinary total-variation telescoping; a Hellinger chain rule is not
needed because the requested accuracy affects both posterior interfaces
only logarithmically.  Crucially, every call is work-capped.  Thus the
query bound remains valid after the first coupling disagreement, even
though the later algorithmic centers no longer have the exact center law.
The ideal coarse endpoint \(\nu_m^\dagger\), rather than the algorithmic endpoint,
is the law to which \cref{eq:warm-removal-d2-conversion} is applied.
\end{proof}

\subsection{Cost Optimization and Final Error Transfer}

\begin{proof}[Proof of the finite-divergence part of
\cref{thm:warm-removal-distributional}]
The optimization selects one of the following three coarse-stage
implementations.  The table states the resulting choices; the algebra below
derives them.
\begin{center}
\small
\begin{tabular}{@{}llll@{}}
\toprule
Regime & Interface & \(r\) & Coarse-stage cost \\
\midrule
\(d\le\kappa^{16/7}\)
& HB & \(r=\kappa\)
& \(\widetilde O(\kappa^{2/3}d^{1/3}\log(\e B))\) \\
\(\kappa^{16/7}<d<\kappa^8\)
& Regular & \(r=d^{1/8}\)
& \(\widetilde O(\kappa d^{3/16}\log(\e B))\) \\
\(d\ge\kappa^8\)
& Regular & \(r=\kappa\)
& \(\widetilde O(\sqrt\kappa d^{1/4}\log(\e B))\) \\
\bottomrule
\end{tabular}
\end{center}

Fix \(1\le r\le\kappa\), set \(u=r/\beta\), and use the regular posterior
interface.  Multiplying \cref{eq:warm-removal-regular-posterior} by the row
count in \cref{eq:warm-removal-row-count}, and using
\cref{eq:warm-removal-conditional-kappa}, gives the coarse-stage query complexity
\begin{equation*}
 Q_{\rm reg}(r)
 =\widetilde O\!\left(
 \kappa\{r^{-1}+r^{1/6}d^{1/6}+r^{-1/2}d^{1/4}\}
 \log(\e B)\right).
\end{equation*}
The \(r^{-1}\) term records the constant work of each row.

If \(d\le\kappa^8\), the admissible choice \(r=d^{1/8}\) balances the
last two terms and gives
\(\widetilde O(\kappa d^{3/16}\log(\e B))\).  If
\(d\ge\kappa^8\), choose \(r=\kappa\).  The two nonconstant terms are then
\(\kappa^{7/6}d^{1/6}\) and \(\sqrt\kappa d^{1/4}\); the first is no
larger than the second precisely when \(d\ge\kappa^8\).

For the remaining endpoint, again choose \(r=\kappa\), but use the HB
interface in
\cref{eq:warm-removal-harmonic-posterior}.  There are only
\(O(\log(\e B))\) rows and \(K_u=(\kappa+1)/2\), so this route costs
\(\widetilde O(\kappa^{2/3}d^{1/3}\log(\e B))\).  The harmonic and
regular-interface bounds cross at \(d=\kappa^{16/7}\), while the two regular choices
cross at \(d=\kappa^8\).  Selecting the cheapest route gives exactly
\(\widetilde O(\Psi(\kappa,d)\log(\e B))\).

We now close the distributional argument.  Choose the constants in
\cref{eq:warm-removal-row-count} so that
\(D_2(\nu_m^\dagger\Vert\pi)\le1\), and use
\cref{lem:warm-removal-coarse-compiler} with \(\delta=\eps/4\).  Lift both
\(\nu_m^\dagger\) and \(\widehat\nu_m\) through the same Gaussian
augmentation required by the selected finalizer, and append the same
independent Gaussian momentum when needed.  The lifted ideal coarse law has
the same density ratio relative to stationarity, while total-variation
data processing preserves the \(\eps/4\) discrepancy.

Invoke \cref{prop:warm-removal-capped-finalizer} with final-stage convergence
tolerance \(\eps/4\) and work-cap budget \(\eps/4\), and apply the resulting
single kernel to both augmented laws.  Its output from the ideal coarse law is
within \(\eps/2\) of stationarity.  Because applying the same Markov kernel
cannot increase total-variation distance, the two final outputs remain
within \(\eps/4\) of one another.  The triangle inequality therefore puts
the algorithmic output within \(3\eps/4\) of stationarity.  The finalizer's
\(\widetilde O(\sqrt\kappa d^{1/4})\) query cost is bounded by every branch
of \(\Psi\) in its selected regime.  This proves the second term in the
minimum of \cref{eq:warm-removal-distributional-cost}.  Alternatively,
skip the coarse stage and apply the general-budget bound
\cref{eq:warm-removal-finalizer} directly to \(\nu_0\); this proves the
first term.

Finally, every branch of \(\Psi\) is at most
\(\kappa^{6/7}d^{1/4}\).  For the first branch the ratio is
\(\kappa^{-4/21}d^{1/12}\le1\) because
\(d\le\kappa^{16/7}\).  For the second it is
\(\kappa^{1/7}d^{-1/16}\le1\) because
\(d\ge\kappa^{16/7}\), and the third follows from \(\kappa\ge1\).
This proves the claimed envelope.
\end{proof}

\subsection{Initialization from One Point}

Let \(x_*\) be the minimizer of \(V\), let
\(\log_+(t)=\max\{1,\log t\}\), and recall the localization budget
\[
 Q_{\rm loc}:=\sqrt\kappa\,
 \log_+\!\left[\kappa\{2+\sqrt\beta\norm{x_0-x_*}\}\right].
\]
The quantity \(Q_{\rm loc}\) appears only in the query bound: the algorithm
does not know \(x_*\) and stops using the observable gradient test stated
below.

\begin{lemma}[Observable localization]
\label{lem:warm-removal-localization}
Starting from \(x_0\), accelerated optimization followed by one explicit
Gaussian draw produces a law \(q_0\) such that
\[
 D_2(q_0\Vert\pi)\le1+\frac d2\log(2\kappa)
\]
using \(\widetilde O(Q_{\rm loc})\) oracle queries.
\end{lemma}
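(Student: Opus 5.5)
The plan has two parts: an optimization phase that certifies a point close to $x_*$, followed by an explicit Gaussian-versus-$\pi$ Rényi computation. For the optimization phase, I would run Nesterov's accelerated gradient method from $x_0$ on $V$ and stop at the first iterate $x_{\mathrm{ref}}$ that passes the observable test $\norm{\nabla V(x_{\mathrm{ref}})}\le\alpha/\sqrt\beta$. Strong convexity gives $\norm{\nabla V(x)}\ge\alpha\norm{x-x_*}$, so the certificate forces $\sqrt\beta\norm{x_{\mathrm{ref}}-x_*}\le1$.

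For the cost bound I would use the standard linear rate for accelerated methods: $V(x_k)-V(x_*)\le 2(1-\kappa^{-1/2})^k(V(x_0)-V(x_*))$, possibly via the restarted variant. Combining this with $\norm{\nabla V}^2\le2\beta(V-V(x_*))$ and $V(x_0)-V(x_*)\le\frac\beta2\norm{x_0-x_*}^2$ shows that the test holds once
\[
(1-\kappa^{-1/2})^k\,\beta^2\norm{x_0-x_*}^2\lesssim\alpha^2/\beta .
\]
This happens after $O\bigl(\sqrt\kappa\log[\kappa(2+\sqrt\beta\norm{x_0-x_*})]\bigr)=O(Q_{\mathrm{loc}})$ iterations. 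Each iteration uses one gradient query, and one more query evaluates the test; the $\log_+$ floor absorbs the case where $x_0$ is already certified.

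Next I would draw once from $q_0=\cN(x_{\mathrm{ref}},(2\beta)^{-1}I_d)$ and bound $D_2(q_0\Vert\pi)$ by writing
\[
\int q_0^2/\pi=Z\int q_0(x)^2\e^{V(x)}\d x,\qquad Z=\int\e^{-V}.
\]
With $V(x_*)=0$, I would use the sandwich $\frac\alpha2\norm{x-x_*}^2\le V(x)\le\frac\beta2\norm{x-x_*}^2$. For the normalizer, $Z\le(2\pi/\alpha)^{d/2}$. For the other factor, $q_0^2\propto\exp(-2\beta\norm{x-x_{\mathrm{ref}}}^2)$ and $\e^{V}\le\exp(\frac\beta2\norm{x-x_*}^2)$. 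Completing the square in $\beta(-2\norm{x-x_{\mathrm{ref}}}^2+\frac12\norm{x-x_*}^2)$ yields a Gaussian integral with precision $3\beta$, multiplied by the factor $\exp(\frac{2}{3}\beta\norm{x_{\mathrm{ref}}-x_*}^2)\le\e^{2/3}$. Tracking the normalizing constants of $q_0$ gives
\[
\int q_0^2/\pi\le \e^{2/3}\,(2\pi/\alpha)^{d/2}\,(\beta/\pi)^{d}\,(\pi/(3\beta))^{d/2}.
\]
Taking logarithms, this equals $\frac23+\frac d2\log\frac{2\beta}{3\alpha}\le1+\frac d2\log(2\kappa)$, which is exactly the claimed bound (with slack).

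The step I expect to be the main obstacle is this constant bookkeeping in the completed square: the variance $(2\beta)^{-1}$ is chosen so the quadratic stays integrable despite the $+\frac\beta2$ growth of $V$, and the exact coefficients have to come out at or below the stated bound. If a coefficient is slightly off, I would use a cruder bound via $V\le\frac\beta2\norm{\cdot}^2$ with an extra $\e^{1/3}$ slack, since $\frac d2\log(2\kappa)$ leaves room. A secondary concern is that the algorithm never uses $x_*$: only the stopping time, and hence the query bound, depends on it, while the $D_2$ bound holds on the certificate event deterministically.
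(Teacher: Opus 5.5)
Your proof is correct and follows the paper's structure. Both use the accelerated-gradient certificate $\norm{\nabla V(x_{\rm ref})}\le\alpha/\sqrt\beta$, the Gaussian $\cN(x_{\rm ref},(2\beta)^{-1}I_d)$, and the quadratic sandwich with $Z\le(2\pi/\alpha)^{d/2}$.

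The one difference is that the paper bounds the density ratio uniformly instead of integrating it. The exponent $\beta\{\frac12\norm{x-x_*}^2-\norm{x-x_{\rm ref}}^2\}$ is concave with maximum $\beta\norm{x_{\rm ref}-x_*}^2\le1$. This gives $\norm{\d q_0/\d\pi}_\infty\le\e(2\kappa)^{d/2}$, and $D_2\le\log\norm{\d q_0/\d\pi}_\infty$ then yields exactly the stated constant (it also bounds every $D_q$ at once).

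Your direct computation of $\int q_0^2/\pi$ works only at order two but is slightly sharper. There is one arithmetic slip. The Gaussian integral with precision $3\beta$ is $(2\pi/(3\beta))^{d/2}$, not $(\pi/(3\beta))^{d/2}$. The correct value is therefore $\frac23+\frac d2\log\frac{4\kappa}{3}$, not $\frac23+\frac d2\log\frac{2\kappa}{3}$. This is still below $1+\frac d2\log(2\kappa)$, so your conclusion is unaffected.
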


\begin{proof}
Run accelerated gradient descent until the observable certificate
\(\norm{\nabla V(x_{\rm ref})}\le\alpha/\sqrt\beta\) holds.  The standard
strongly-convex accelerated rate and
\(V(x_0)-V(x_*)\le\beta\norm{x_0-x_*}^2/2\) give the stated
\(Q_{\rm loc}\) bound.  Strong convexity then implies
\(\sqrt\beta\norm{x_{\rm ref}-x_*}\le1\).

Draw \(q_0=\cN(x_{\rm ref},(2\beta)^{-1}I_d)\).  Strong convexity bounds
the target normalizer by
\(Z\le\e^{-V(x_*)}(2\pi/\alpha)^{d/2}\), while smoothness gives
\(V(x)\le V(x_*)+\beta\norm{x-x_*}^2/2\).  Completing the square yields
\[
 \left\|\frac{\d q_0}{\d\pi}\right\|_\infty
 \le\e(2\kappa)^{d/2}.
\]
An \(L^\infty\) density-ratio bound by \(M\) implies
\(D_2\le\log M\), proving the result.
\end{proof}

\begin{proof}[Proof of the supplied-point part of
\cref{thm:warm-removal-distributional}]
Apply \cref{lem:warm-removal-localization}, followed by
\cref{thm:warm-removal-distributional} with
\(B=1+(d/2)\log(2\kappa)\).  Since \(\log(\e B)\) is polylogarithmic in
\(d\) and \(\kappa\), the coarse-wrapper branch costs
\(\widetilde O(\Psi(\kappa,d))\).  Adding the one-time localization cost
proves the first inequality in \cref{eq:warm-removal-cold-cost}; the
envelope proved above gives the second.
\end{proof}

\end{document}